\documentclass[11pt]{article}
\pdfoutput=1
\usepackage[margin=1in]{geometry}
\usepackage{amsmath,amssymb,amsthm}
\usepackage{booktabs}
\usepackage{graphicx}
\usepackage[hidelinks]{hyperref}
\usepackage{xcolor}


\newtheorem{proposition}{Proposition}
\newtheorem{definition}{Definition}

\title{In-Cell Learning: Language Models That Update Their Own\\
Weights in Sequence Without Changing the File They Ship}
\author{
  Zifeng Liu$^{1,2,3,4}$ \quad Yaxin Lu$^{1}$ \quad Xuanhan Wu$^{5}$ \quad
  Zhiyong Du$^{6}$ \\[0.35em]
  Yiming Mao$^{7}$ \quad Zhenhe Wang$^{8}$ \quad Wenqi Shi$^{1}$ \quad
  Zhengkun Jing$^{9}$ \quad Linwei Liu$^{10}$
  \\[0.8em]
  {\footnotesize $^{1}$Big Data and Artificial Intelligence Center,
   The Third Affiliated Hospital of Sun Yat-sen University}\\
  {\footnotesize $^{2}$Institute for Frontier Interdisciplinary Research in Health
   Sciences and Technology, Sun Yat-sen University}\\
  {\footnotesize $^{3}$Sun Yat-sen University Institute of Artificial Intelligence
   \qquad
   $^{6}$School of Business, Sun Yat-sen University}\\
  {\footnotesize $^{4}$Guangdong Engineering Research Center of Medical Artificial
   Intelligence Multimodal System}\\
  {\footnotesize $^{5}$Paul Merage School of Business, University of California,
   Irvine, Irvine, CA, USA}\\
  {\footnotesize $^{7}$School of Computer Science,
   China University of Geosciences (Wuhan)}\\
  {\footnotesize $^{8}$School of Public Health, Sun Yat-sen University
   \qquad
   $^{9}$Hospital of Stomatology,
   Sun Yat-sen University}\\
  {\footnotesize $^{10}$School of Pharmacy,
   Guangdong Pharmaceutical University}\\[0.3em]
  {\normalsize Correspondence: \texttt{liuzf@mail.sysu.edu.cn}}
}
\date{Technical Report, version 3 --- \today}

\begin{document}
\maketitle

\begin{abstract}
A 4-bit weight is a cell, not a point. A released language model is frozen
twice over. Teaching it something new
risks what it already knew, and every way of writing knowledge into its
weights---full fine-tuning, adapter merging, closed-form editing---returns a
different file from the one a benchmark report, a certification and a fleet
of devices refer to by its exact bits. We show that the second freeze is
avoidable, and that lifting it turns a static release into a model that keeps
learning after it ships.
A 4-bit release stores each weight as an integer code and a shared scale, and
leaves between the stored values an interval the quantizer discards. We call
\emph{in-cell learning} the paradigm in which new knowledge is written only
into that interval, so that re-quantizing the served weights reproduces the
released codes and scales exactly; the guarantee is checked in the integer
domain on every weight, the update is a separate file that can be withdrawn
by subtraction, and drift is bounded by radii the grid fixes before training.
\textbf{CellFill} realizes the paradigm by training a bounded low-rank
position inside each cell of the released file. On published NF4 and W4A16
releases of Qwen3 and Gemma it wrote $83$--$99\%$ of a corpus of real facts,
verified absent on the six releases where the prior was measured (the three
largest were not), and re-quantization returned the stored code on every
constrained weight. On our own NF4 quantization of a 27B hybrid model the
constrained set was $2.4\times10^{10}$ weights, the largest we ran.
What arrives behaves as knowledge rather than recitation: facts written in
separate sentences chained at inference on a composition the corpus never
states, and on PopQA's long tail the fill answered $78$--$88\%$ of the
questions the released model could not, from the weights alone.
Around that write we assemble the cycle a deployed model would run---three
of its five stages wired into the sequence, the saturation scan and the task
suite measured outside it.
\emph{Saturation} is predictable in advance: injecting eight Python libraries
one at a time, the gain was flat at $+10.7$ points wherever the released
model's prior accuracy fell below $28\%$ and collapsed to $+2.7$ on the one
library it already knew at $31.1\%$---with the cells barely touched and the
training loss falling normally, so a successful optimization is no evidence
of a successful injection. \emph{Rehearsal} recovers what a sequence forgets:
replaying a sample of the earlier tasks' own material alongside each new one
carried the first task's retention from $19.3\%$ to $84.8\%$ after five later
updates---the single change with the largest effect we measured. Against the
absolute ceiling a deployed system cannot reach, training on all $3{,}000$
facts at once and recalling $61.2\%$ of them, the rehearsed sequence recovers
about half ($31.0\%$ over the same facts) where sequential updating alone
recovers a fifth. What makes an unbroken run finite is the geometry: each update consumes a
constant fraction of the room the last one left ($0.78$--$0.85$ per task
across four matched sequences at 1.7B; $0.64$--$0.72$ and falling within
both 8B sequences), so the room decays geometrically and lifetime
capacity is bounded at $231$--$339\%$ of a first task over that band, and is
$292\%$ at the sequence whose $\beta$ we quote. The bound is anchor-dependent
and we report the exception rather than smooth it: on a published release the
same recipe absorbs \emph{more} per unit of room as the sequence deepens. \emph{Consolidation}
is what lifts that bound---the operator re-quantizes, which restores the
room and changes between $1.1$ and $34.6\%$ of the 4-bit codes across the
recipes we ran---$21.4$ then $26.1\%$ at the two consolidations of a single
six-task cycle at 1.7B, $33.4$ then $34.6\%$ at 8B, and under $2.3\%$
throughout when the link is scaled to $s{=}10$ and the fill barely leaves the
anchor. Which parameterization writes matters as much as how long: under the same
cells and the same check, a bounded dense fill absorbs $2.4$--$3.5\times$
more than rank 64 at matched capability cost, and the scale drift a renewal
inherits obeys a measured law---growth affine in how far the fill moved
the weights ($7.33\,\mathbb{E}|t|{+}0.63$, $r{=}0.991$ across nine arms),
its zero point pinned by the projection identity (an empty consolidation is
a bitwise fixed point) and its intercept the law's bend near that origin. Within a cycle the price of a major version grows with the depth of
the sequence it interrupts.
Run as one loop---six tasks, rehearsing throughout and consolidating every
second task---the cycle leaves no task below $84.9\%$ of what it first learned
and returns the shipped code on all $1.4\times10^{9}$ constrained weights at
every one of the six folds, where an unconsolidated sequence of the same
length put one weight outside its cell. A matched arm with the replay of
earlier tasks switched off separates the contributions: the identity survives
without replay, that arm being clean at every fold too, while the knowledge
does not---summed recall falls from $2.117$ to $0.878$ without it.
The same loop at 8B, on unsloth's published NF4 release, now carries two
seeds: they leave no task below $94.0$ and $92.8\%$ of what it first
learned, five of the ten earlier-task ends land above $100\%$ (rehearsal
having returned more than the fill lost), and both seeds return the
vendor's code on all $6.9\times10^{9}$ constrained weights at every one
of the six folds.
Retention is not the whole picture and we give both halves: what a task
learns when it is new falls from $0.761$ to $0.376$ across the first seed
and $0.818$ to $0.447$ across the second, so the artifact does not forget
and does lose plasticity, and in both seeds summed recall ends at or
above where it started ($3.539$ against $3.483$; $3.750$ against
$3.707$).
It also isolates what a major version costs the model that was released. A
consolidation re-quantizes the anchors and zeroes the fill, so the artifact
at that moment is the next 4-bit file with nothing written into it: against
the version it replaces it moves $-1.3$, $-2.0$, $-1.5$ and $+1.3$ points on
ARC-e, ARC-c, HellaSwag and WinoGrande---each within one standard error, one
of them upward. Any single such event is inside the noise; five of them, from
three runs at two scales, are not. Pooled, a major version costs
$0.80\pm0.24$ points of mean suite accuracy and every one of the five is
negative, and the cost does not scale with how much of the file changed:
$1.2\%$ of codes changed costs as much as $34.6\%$. What the sequence does cost the pretrained model is paid by
writing, not by re-quantizing, and it is concentrated: over the whole cycle
three of those four suites end $3.0$ to $4.7$ points down and HellaSwag ends
$13.7$ down, $8.2$ of which are gone after the first task alone.
What this buys is therefore a release discipline rather than a permanent
file. Minor versions reproduce the shipped bits exactly and are revocable by
subtraction; a consolidation is a major version, and it is declared,
measured and verifiable rather than silent. Forgetting is priced here, not
eliminated, and what the contract buys is a long but bounded run of updates
on a file that does not change.
The unit a deployed model learns in, on this account, is not the checkpoint
but the cell: room enough to write verified new knowledge today, and---in
prospect---to change how a served model behaves and to
let it keep improving itself while its file stays the file.
A platform that takes documents in and returns a verified fill, the archived
result files behind every table, and the scripts and \TeX\ sources that
generate every table and figure are at
\url{https://github.com/sumsliu/in-cell-learning}.
\end{abstract}

\section{Introduction}\label{sec:introduction}
\begin{figure}[!t]\centering
\includegraphics[width=\linewidth]{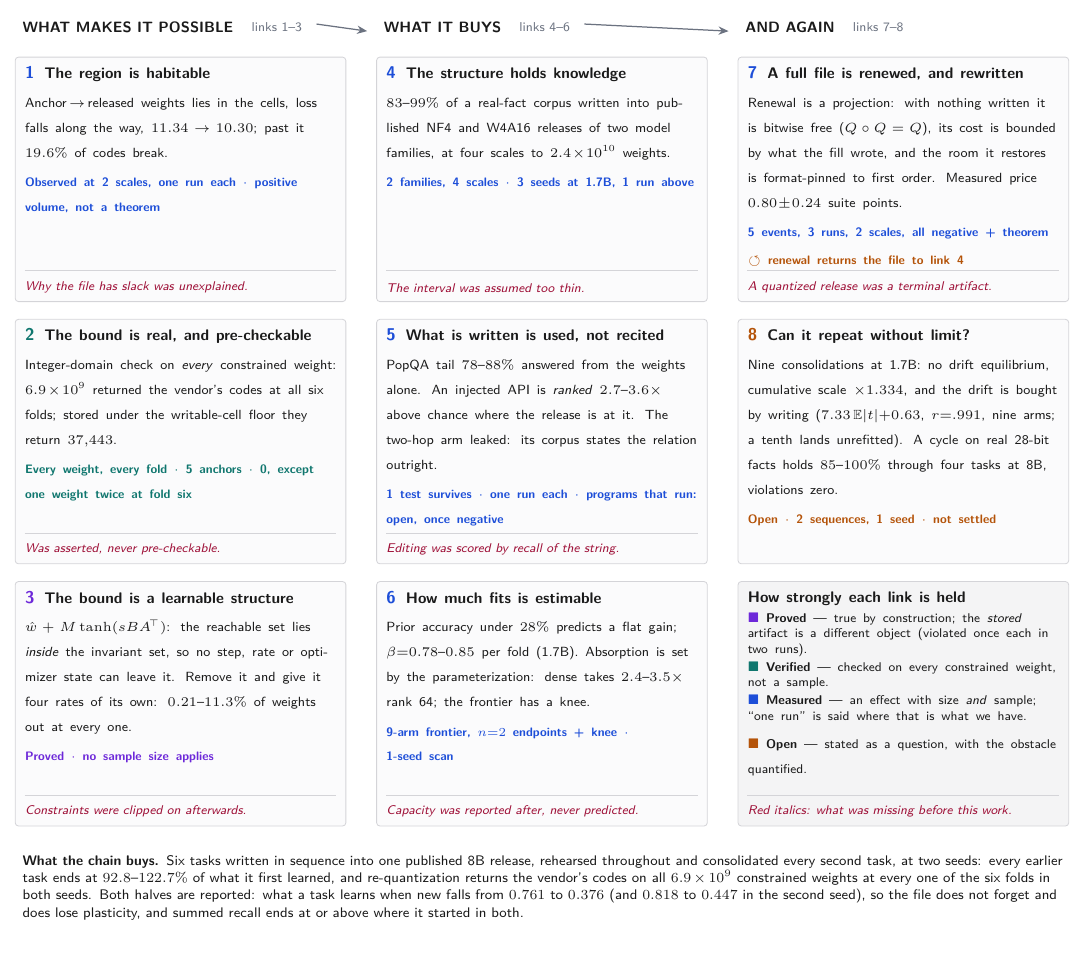}
\caption{\textbf{The argument, and what each link fills in.} The claim that a
released file can keep learning is a chain of eight, and every link is
answerable on its own; red italics name what was missing before this work.
One link is a proof---the reachable set of the parameterization is contained
in the invariant set, so training cannot leave it---and six are measurements. The
eighth is open, and is drawn as open. Each box points at the section that
carries it: \S\ref{sec:geometry}, \S\ref{sec:theory} with
\S\ref{sec:floor}, \S\ref{sec:methods}, \S\ref{sec:results},
\S\ref{sec:usable}, \S\ref{sec:knowable}, \S\ref{sec:loop8b}, and
\S\ref{sec:limitations}. The figures that follow each take one link and show
its mechanism or its measurement: Fig.~\ref{fig:overview} the write itself,
Fig.~\ref{fig:geometry} the habitable region of link one,
Fig.~\ref{fig:frontier} the learning--forgetting frontier of links four and
five, and Fig.~\ref{fig:lifecycle} the loop of link seven.}
\label{fig:argument}
\end{figure}

\begin{figure}[!t]\centering
\includegraphics[width=\linewidth]{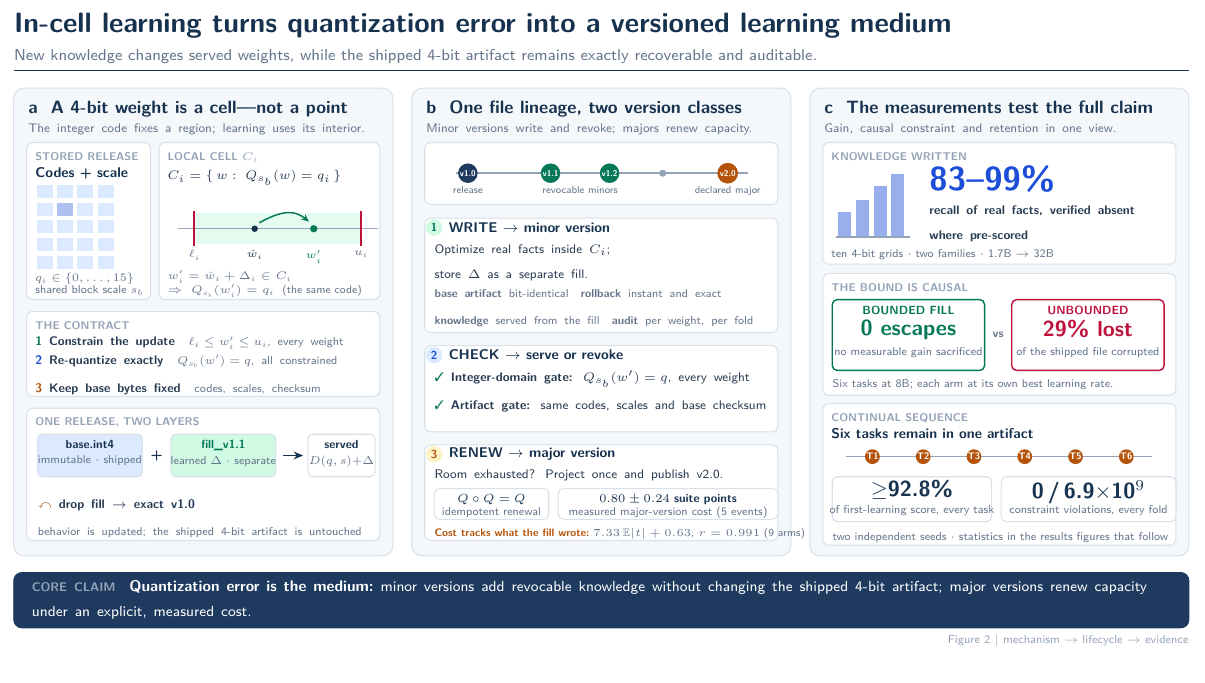}
\caption{\textbf{In-cell learning: the file, the lifecycle, the evidence.}
\textbf{a}, A 4-bit release stores an integer code and a block scale per
weight; between the values the grid can represent lies the interval the
quantizer discards. The in-cell fill moves each served weight only inside
that interval, so re-quantization returns the shipped bytes exactly
(\S\ref{sec:theory}), and the update ships as a separate file that
subtraction revokes.
\textbf{b}, The lifecycle this enables: minor versions write real facts
into the room and are checked in the integer domain on every constrained
weight; a major version renews the room through one projection
($Q\circ Q=Q$, \S\ref{sec:driftlaw}), priced at $0.80\pm0.24$ suite
points over five measured events, with scale drift set by what the fill
wrote ($7.33\,\mathbb{E}|t|{+}0.63$, $r=0.991$, nine arms).
\textbf{c}, The headline measurements: real facts, verified absent where
pre-scored, enter ten 4-bit grids of two families at $83$--$99\%$ recall
(\S\ref{sec:scale}); removing the bound corrupts $29\%$ of the file for
no measurable functional gain (\S\ref{sec:loop8b}); the full cycle holds
every task at or above $92.8\%$ of first learning across two seeds at
zero violations on all $6.9\times10^{9}$ constrained weights
(\S\ref{sec:loop8b}).}
\label{fig:overview}
\end{figure}

The binding cost of continual learning in deployment is not GPU time; it is
loss of certification. A 4-bit checkpoint is released once and depended upon
many times. Between release and retirement it accumulates a benchmark report, a
red-team review, integration tests, downstream caches, and increasingly an
external sign-off, and every one of those artifacts refers to a specific set of
integer codes and group scales. When the operator later needs the model to know
something it did not know at release---a corrected specification, a new API, a
changed policy---every available mechanism (full fine-tuning, adapter merging,
closed-form editing) returns a \emph{different} checkpoint, and the whole
apparatus above must be rebuilt against it.

We show that a deployed model can learn while the released artifact changes
not at all. Every merge in this paper re-quantizes to the shipped codes---up
to $2.4\times10^{10}$ constrained weights at once---with two exceptions we put
on record because they show what breaks the identity and what repairs it: one
weight of $1.4\times10^{9}$ left its cell at the sixth consecutive fold in
each of the two $r{=}64$, $s{=}40$ arms of a six-task sequence driven deep
into the bounded link's saturation (\S\ref{sec:rehearse-old}), and the same
six tasks run as a cycle that consolidates on a schedule return the shipped code at every fold
(\S\ref{sec:loop}). Quantization error is normally treated as damage to be
minimized. A 4-bit weight is a cell, not a point; we instead treat the
interval between a weight's dequantized
anchor and the boundaries of its rounding cell---the \emph{dequantization
gap}---as addressable storage (Fig.~\ref{fig:overview}). That reframing is
the engine of everything below: a cell is capacity, and the parameterization
that writes into it optimizes whatever loss it is given---what we write here
is verified factual knowledge, and nothing in the geometry is specific to
facts, so the same interior is in principle room for a changed behavior or a
self-directed update. With the codes
and scales of the release
frozen, the update is written only into a full-precision residual confined
to the interior of each cell. Re-quantization under those frozen scales then
returns the released codes bit for bit at any point in the model's life
(Prop.~\ref{prop:inv}); the guarantee is checked by integer comparison
rather than argued, and it must be stated against a frozen $(a,s)$ rather
than against re-running a quantizer, since absmax scales are data-dependent
and a single in-cell move can silently reassign an untouched neighbor
(\S\ref{app:frozen}). Dropping the residual restores the release exactly,
and the excursion is bounded per weight by radii the grid supplies for free
(Prop.~\ref{prop:budget}).

What this buys is narrower than it may first appear, and we state it
precisely. The served model is $\hat W+\delta$, and it is a different
function from $\hat W$: its cross-domain perplexity moves with every unit of
knowledge absorbed, so we do \emph{not} claim that the release's validation
transfers to the updated model. What is inherited is the \emph{reference}.
The certified configuration remains recoverable bit-for-bit by truncation at
any time, the update is exactly revocable, and its drift is bounded by a
budget whose radii are known before training begins. An update becomes an
auditable increment against a fixed baseline rather than an opaque
replacement of it.

\textbf{Empirically}, cross-domain perplexity rises with absorption along the
projected paths, but CellFill at rank 64 sits off that curve rather than
further along it, absorbing more than projected dense for substantially less
cost (Table~\ref{tab:frontier}). The bounded reparameterization moves the
trade-off rather than traversing it. The cost of knowledge is nonetheless
real: the in-domain ``free lunch'' this method appears to offer is an
artifact of rehearsal sharing a corpus with the metric
(\S\ref{sec:rehearsal}), and every headline number here is quoted against
LAMBADA, which the rehearsal never touches. Three findings then cut against
expectation.
First, the constraint is not only a tax: against the natural null hypothesis
of serving the same adapter unmerged---which preserves the artifact
trivially, by never touching it---projecting the update into the cells
\emph{reduces} cross-domain forgetting in all sixteen runs that recorded
both stages and converged, by $0.97$ to $456.7$ perplexity points, with the
effect largest exactly where drift is worst---and we report the boundary of that
effect too: on an update whose own optimization has already blown up,
projection amplifies the damage instead of repairing it
(\S\ref{sec:regularizer}). Second, the mechanism survives scale and
architecture: on a 27B hybrid model whose blocks are gated linear attention
rather than softmax attention, all $2.435\times10^{10}$ constrained weights
across 496 matrices verify bit-identical after merging, and matched recall
($25.4\%$ at 27B, single seed, against $24.0\!\pm\!1.7\%$ for the 1.7B
clip-merge arm) costs $+1.29$ LAMBADA points over its own anchor where the
1.7B arm pays $+3.07$ over its own, that figure being a two-seed mean as in
Table~\ref{tab:frontier} (\S\ref{sec:scale}). Third, knowledge is most
cheaply written into the gate and up projections at \emph{all} depths
($130$ absorbed bits per million trainable parameters, and $911$ bits per
point of cross-domain perplexity given up), rather than into
\texttt{down\_proj} ($72$); the early and middle MLP partitions absorbed
nothing distinguishable from the guessing floor at all, so we report them as
non-absorbing rather than as a low yield---contra the localization prescriptions of
ROME and MEMIT~\cite{rome,memit}, a discrepancy we read as a difference
between \emph{editing} an existing association and \emph{writing} a new one
under a norm constraint (\S\ref{sec:where}). Absorption grows with exponent
$\approx\!0.8$ from $10^3$ to $10^4$ presented facts at roughly $1\%$
cell-space utilization: in this range the regime is optimization-limited,
not capacity-limited (\S\ref{sec:results}).

\textbf{The argument in eight steps.} The claim that a released file can keep
learning is not one assertion but a chain, and each link is answerable on its
own.

\emph{One, the region is habitable.} The set of weights that keep the model's
capability is not a surface of measure zero around the released point; at this
scale it has interior, and the interval the quantizer discards sits inside it.
We establish this the way an existence claim about interior is
established---by exhibiting a full-dimensional box and measuring the loss over
it. The box is $\mathcal C=\prod_i C_i$, and every cell has positive width
(the degenerate blocks of \S\ref{sec:floor} are frozen, not zero-width). Along
the segment from the 4-bit anchor to the released checkpoint's own weights,
which lies inside the box ($0\%$ of weights leave their cell at half way, $0.61\%$ at the
end), perplexity falls monotonically the whole way, $11.34\to10.30$ in-domain
and $28.54\to26.02$ on LAMBADA; past the original weights both reverse and
$19.6\%$ of codes break. Under uniform random fills the cost at a quarter
amplitude is $11.34\to11.39$. The box is not uniformly benign, and we say so
because it matters later: a diverged update projected into it stays
diverged---every weight inside its cell and perplexity at $2.1\times10^{6}$
(\S\ref{sec:regularizer}). What the measurements establish is therefore positive
volume of low-loss configurations around the anchor, at sampled points rather
than as a theorem, and that the safe region and the useful region coincide
where the anchor sits (\S\ref{sec:geometry}).

\emph{Two, the bound is real, and its precondition is computable before
training.} Confinement to that box is checkable in the integer domain on
every weight rather than sampled: $6.9\times10^{9}$ weights returned their
vendor's codes at every one of six folds at 8B (\S\ref{sec:theory},
\S\ref{sec:loop8b}). It holds under a condition we had to discover---the
anchor's storage precision must resolve the margin---and the condition is not
a caveat but a check: it is a floor on the block scale, $3.70\times10^{-5}$
for fp16, that a script evaluates on the released file before a single step
is taken (\S\ref{sec:floor}).

\emph{Three, the bound is a learnable structure, not a constraint applied
afterwards.} This is the step we prove rather than measure, and it is what
makes the rest mechanical. CellFill does not train freely and then clip; it
reparameterizes the weight as $\hat w_i + M_i\tanh(s\,(BA^{\!\top})_i)$, so
the reachable set is \emph{contained in} the invariant set and no sequence of
gradient steps, no learning rate and no optimizer state can leave it---%
invariance holds at every step of training, not only at the end
(Prop.~\ref{prop:inv} with the storable-floor mask of \S\ref{sec:floor},
\S\ref{sec:methods}). Containment, not equality: the reachable set is
$\prod_i(\hat w_i-M_i,\,\hat w_i+M_i)$ and the invariant set is
$\prod_i[\ell_i,u_i]$, which coincide only where the anchor is centred in its
cell---two of NF4's sixteen codes, and those only by the capping convention.
Containment is all any argument here uses. Clipping is the alternative and it is worse for the reason step one
exhibited: the box contains diverged configurations, and projection is only
the Euclidean map onto it (Prop.~\ref{prop:proj}), so it returns the nearest
in-box point to an update that has already left---which on our runs is an
in-box point at perplexity $2.1\times10^{6}$. The model that trains and the
model that ships are then two different models. Removing the structure and changing nothing
else is not a small perturbation: at the same rank, and at four learning rates
of its own, the unconstrained parameterization never leaves the released codes
recoverable---from $0.21\%$ of weights outside their cells where it has
learned almost nothing to $11.26\%$ where its perplexity has begun to go
(\S\ref{sec:loop8b}).

\emph{Four, the structure can be written into.} Confinement would be
worthless if the interval were too thin to hold anything; on published
releases it holds $83$--$99\%$ of a corpus of real facts
(\S\ref{sec:results}).

\emph{Five, what is written becomes knowledge the model can use, not a string
it can recite.} Three tests that a memorized surface form would fail. Facts
written in separate sentences chain at inference on a two-hop composition the
corpus never states, at roughly twice the model's rate on its own pretrained
facts of the same form (\S\ref{sec:usable}). On PopQA's long tail the fill
answers $78$--$88\%$ of the questions the released model could not, from the
weights alone and with no retrieval. And an injected library API is used
rather than quoted in the weaker of the two senses we can measure: a served
model ranks the true parameter name at $2.7$ to $3.6\times$ the chance floor
where the released model is at or below it, while the instruction-following
usage rule stays at the floor for every served model below 27B, so whether
ranked use converts into programs that run is open and is listed as open
(\S\ref{sec:usable}).

\emph{Six, how much can be written is estimable---in the weak sense, and we
say which.} Two estimators survive contact with the archive: the released
model's own prior accuracy predicts saturation, the gain being flat wherever
that prior is below $28\%$ and collapsing above it, and the room decays by a
measured constant fraction per fold on self-quantized anchors, which bounds
lifetime capacity. Two do not, and we report them as failures rather than
smoothing them: a diagonal-Fisher budget is off by up to $383\times$ and a
data-processing bound is loose by five orders of magnitude. A third
estimator emerged from the parameterization frontier: absorption is set by
the fill's parameterization before it is set by time, a bounded dense fill
taking $2.4$--$3.5\times$ more than rank 64 under the same cells and the
same invariance, on a frontier with a knee (\S\ref{sec:bfrontier}). What
this buys is a target-specific scan run before writing, not a formula read
off the file (\S\ref{sec:where}, \S\ref{sec:knowable}).

\emph{Seven, an exhausted file can be renewed, and the renewed file accepts
writing again.} Re-quantizing restores the room and issues a new artifact,
and that step costs the pretrained model $0.80\pm0.24$ points of mean suite
accuracy---small, consistently negative over the five events in the archive,
and independent of how much of the file changed. Renewal is also now a
theorem rather than only a measurement: the consolidation map is a bounded
projection ($Q\circ Q=Q$), so an empty major version is bitwise free and
each version's cost is bounded by what its fill wrote
(Prop.~\ref{prop:consproj}). The task written immediately after each
consolidation absorbs normally, which is the direct measurement that the new
artifact's cells are writable---it happens twice per cycle at both scales we
ran (\S\ref{sec:loop}, \S\ref{sec:loop8b}).

\emph{Eight, whether that can be repeated without limit, we do not settle.}
One turn of the loop is measured and so is the next; what we cannot yet say
is whether the sequence has a fixed point or diverges, and the reason to ask
is now a law rather than a hunch. Across nine consolidations in one sequence
the block scale compounds to $\times1.334$ with no equilibrium reached, and
the drift is bought by writing---growth
$=7.3\,\mathbb E|t|$, where $\mathbb E|t|$ is the mean fraction of its room
the fill moved each weight, at $r=0.991$ over nine arms, anchored at zero by
the projection identity (\S\ref{sec:driftlaw}). A cycle on real facts of
$28.1$~bits each holds $85$--$100\%$ of every earlier task through four
tasks at 8B with zero violations, and is in flight. This is the question the
paper leaves open, and \S\ref{sec:limitations} states what would settle
it.

\textbf{Contributions.}
(1)~A reframing of the quantized release from compression artifact to
\emph{fixed reference frame} for continued learning, with a formal update
contract---bitwise artifact invariance, exact revocability, and a drift budget exact to
second order and reported with its measured miscalibration---and the
elementary propositions that establish it
(\S\ref{sec:theory}).
(2)~A law of plasticity under sequential folding: when each task is written
into the room the last one left, the room decays geometrically and lifetime
capacity without re-quantization is finite (Proposition~\ref{prop:decay}).
The decay is measured---$\beta$ in $0.78$--$0.85$ over four matched
sequences on three hosts \emph{at 1.7B}, higher for recipes that use less of
the room, and $0.706$--$0.715$, $0.693$--$0.695$, $0.640$--$0.644$ across
two 8B sequences, where it
is not constant (\S\ref{sec:loop8b})---and its mechanism is a different one from the loss of plasticity that
afflicts unconstrained continual learning~\cite{plasticity}: here the space
runs out geometrically and by construction, as it does for
parameter-isolation methods~\cite{packnet}---though the room is not the whole
story, since a fixed-cell control in which it never decays still loses three
quarters of what the folding sequence loses (\S\ref{sec:sequential},
\S\ref{sec:loop}). The law is a property of the
folding operator and not of the quantization grid: a simulation on the
frozen artifact reproduces $\beta$ on equal-width symmetric cells, which
corrects our first explanation and transfers the law to uniform integer
grids (\S\ref{sec:sequential}). Room that is partitioned among writers in
advance, or kept as a revocable increment on a fixed anchor, is not consumed
in sequence (\S\ref{sec:fusion}).
(3)~Three training paths that realize the constraint, including CellFill, a
bounded reparameterization whose invariance holds by construction at every
step so that the trained and shipped models coincide (\S\ref{sec:methods}).
(4)~A measurement methodology for knowledge capacity in bits---synthetic
corpora with certain novelty and exactly countable information content,
a declared guessing floor, and cross-domain forgetting metrics disjoint from
the rehearsal corpus (\S\ref{sec:setup})---and the map of \emph{where} in a
network new knowledge is cheapest to write that it makes possible
(\S\ref{sec:where}).
(5)~An empirical study across methods, replay recipes, radii, corpus sizes,
scales, and architectures that locates the learning--forgetting frontier of
invariant updates, and reports two negative results we consider load-bearing:
a fixed rehearsal buffer is memorized and is worse than no rehearsal at all,
and the diagonal-Fisher budget of Prop.~\ref{prop:budget} is a scaling law,
not a numerical certificate (\S\ref{sec:results}, \S\ref{sec:geometry}).
(6)~A nested refinement code turning updates into prefix-compatible
$(4{+}k)$-bit checkpoints whose 4-bit truncation is always the original
release (\S\ref{sec:codec}).
(7)~The cycle run as a loop rather than stage by stage, at 1.7B and at 8B: six
tasks written in sequence into one published release, rehearsed throughout and
consolidated every second task, with retention of $94\%$ or better on every
earlier task at 8B and the vendor's codes returned on all $6.9\times10^{9}$
constrained weights at every fold---and, within it, the measurement that
isolates what issuing a major version costs the pretrained model, which is
$0.80\pm0.24$ points of mean suite accuracy
(\S\ref{sec:loop}, \S\ref{sec:loop8b}).

\section{Setting and Guarantees}\label{sec:theory}

\begin{definition}[Frozen artifact]
A blockwise quantizer with frozen group scales $s\in\mathbb{R}^{N/g}_{>0}$
and level table $L$ maps codes $a\in\{0,\dots,15\}^N$ to anchors
$\hat w_i = s_{g(i)}\,L[a_i]$. The artifact is $\mathcal A=(a,s)$. Two sets are needed and the
difference between them does the work. The \emph{decision region} $D_i$ of
weight $i$ is the set of values that round to $a_i$ under frozen $s$: the
interval between the midpoints to the neighbouring levels, half-open at one
end and half-infinite for the two outer codes. The \emph{cell} $C_i$ is what
an implementation can actually write into---$D_i$ capped at the level table's
ends and shrunk by a margin $\mu$ of its width on each side,
\[
C_i=[\ell_i,u_i],\qquad
\ell_i=\mathrm{lo}_i+\mu W_i,\quad u_i=\mathrm{hi}_i-\mu W_i,
\]
with $W_i$ the capped width and $\mu=1\%$ throughout. $C_i$ is a nonempty
compact interval lying strictly inside $D_i$ with clearance $\mu W_i$ at each
end; we write $\Delta_i=u_i-\ell_i=(1-2\mu)W_i$ for its writable width and
$M_i=\min(\hat w_i-\ell_i,\;u_i-\hat w_i)$ for the \emph{room}, the distance
from the anchor to the nearer wall, which is what a bounded update may spend
and is not $\Delta_i/2$ unless the anchor is centred. The invariant set is $\mathcal C=\prod_i C_i$, and every proposition
below is stated over it rather than over $\prod_i D_i$, which is neither
bounded nor closed and on which the clamp of Prop.~\ref{prop:proj} would not
be defined.
\end{definition}

\begin{proposition}[Bitwise invariance]\label{prop:inv}
If $w'\in\mathcal C$, re-quantization under frozen scales returns exactly
$a$; and if $w'$ is stored in any format whose rounding error is smaller than
$\mu W_i$ per coordinate, its stored representation does too. Consequently any
composition of updates whose images lie in $\mathcal C$ preserves the artifact
bit-for-bit.

The content is in the margin and in the word \emph{frozen}, not in the
rounding. Over the decision regions themselves the statement would be a
tautology---the set of values that round to $a$ rounds to $a$---and it would
not cover the two cases an implementation produces: a clipped merge lands
weights exactly on $\partial\mathcal C$, and a checkpoint stores them at
finite precision. The margin buys both, which is why $\mathcal C$ and not
$\prod_i D_i$ is the set the guarantee is stated over. Freezing buys the rest:
re-running a quantizer recomputes the absmax and can reassign an untouched
neighbour, and Appendix~\ref{app:frozen} exhibits a two-weight instance.

The rounding clause is not decoration, and it is not satisfied by every
format at every scale. Write $\omega=\min_a(\mathrm{hi}_a-\mathrm{lo}_a)$
for the narrowest normalized cell ($\omega=0.0805$ for NF4) and let a block
carry scale $s$, so $W_i\ge\omega s$. The clause asks
$\mu\,\omega\,s>\tfrac12\,\mathrm{ulp}(w)$ for every weight in the
block. In a float's normal range $\tfrac12\,\mathrm{ulp}(w)\le u\,|w|$ with $u$
the unit roundoff, and since the capped cells put $|w|\le 1.139\,s$ the scale
cancels: the question is decided once for the whole model, and fp16 clears it
by $\mu\omega/(1.139\,u)=1.45$. In the subnormal range
the spacing is a constant $\eta$ rather than $2u|w|$, the cancellation fails,
and what remains is a floor on the block scale,
\[
  s \;>\; \frac{\eta}{2\mu\omega},
\]
equal to $3.70\times10^{-5}$ for fp16 at $\mu=1\%$. A block below it cannot
be written invariantly at that storage precision no matter what the optimizer
does. Weights in such blocks are given zero room and never move;
\S\ref{sec:floor} reports how many there are at each scale and what
excluding them costs.
\end{proposition}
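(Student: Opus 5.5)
The plan has three parts, matching the three clauses of the statement: exact re-quantization on $\mathcal C$, robustness of that to storage rounding, and closure under composition. All of it is carried out per coordinate under the frozen scales $s$ and the frozen level table $L$. Because the scales are frozen, the quantizer acting on weight $i$ is the fixed map $Q_i(x)=\arg\min_k |x-s_{g(i)}L[k]|$, with ties broken by the half-open convention of the definition. It does not depend on the other weights in the block. This is exactly where freezing is used: recomputing an absmax scale would couple the coordinates, and the clause would fail as in Appendix~\ref{app:frozen}.

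\emph{Step 1: exact re-quantization on $\mathcal C$.} Fix $i$. By the definition, $C_i=[\ell_i,u_i]\subset D_i$, and $D_i$ is exactly $Q_i^{-1}(a_i)$: the interval between the midpoints to the neighbouring levels, extended to infinity for the two outer codes. So if $w'_i\in C_i$, then $Q_i(w'_i)=a_i$. The half-open endpoint convention never matters here, because $C_i$ keeps clearance $\mu W_i>0$ from each finite endpoint of $D_i$. Applying this to every $i$ gives $Q(w')=a$ codewise, with the scales untouched because they are never recomputed. Hence the artifact $(a,s)$ is returned.

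\emph{Step 2: storage rounding.} Let $\mathrm{fl}(w'_i)$ be the stored value, and assume $|\mathrm{fl}(w'_i)-w'_i|<\mu W_i$. Since $w'_i\in[\ell_i,u_i]=[\mathrm{lo}_i+\mu W_i,\ \mathrm{hi}_i-\mu W_i]$, we get $\mathrm{lo}_i<\mathrm{fl}(w'_i)<\mathrm{hi}_i$, so the stored value lies in the open interior of $D_i$. For the two outer codes, one side of that inequality becomes the cap at the end of the level table, and $D_i$ is unbounded on that side, so the conclusion still holds. Step~1 then applies to the stored value. One point needs care: when the capped width $W_i$ differs from the width of $D_i$, the margin must be measured against the boundary of $D_i$, not against the cap. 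I expect this to go through because capping only shrinks the interval toward the anchor.

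\emph{Step 3: composition.} Suppose $w^{(0)}=\hat w$ and each update $w^{(k)}$ lies in $\mathcal C$. Steps~1 and~2 apply to each iterate separately, and they refer only to the fixed $(a,s)$, never to the previous iterate. Induction on $k$ then gives bit-for-bit preservation. Nothing here compounds, since invariance is a statement about membership in a fixed set.

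The remark after the proposition, namely the fp16 floor, is a verification of the rounding hypothesis and not part of the core argument. In the normal range, $\tfrac12\mathrm{ulp}(w)\le u|w|\le 1.139\,u\,s$ and $\mu W_i\ge\mu\omega s$, so it suffices that $\mu\omega/(1.139u)>1$, which is an arithmetic check. In the subnormal range the spacing is the constant $\eta$, and the requirement becomes $s>\eta/(2\mu\omega)$. The main obstacle I anticipate is not any single inequality. It is stating Step~2 precisely: the strict inequality must absorb round-to-nearest ties, and the outer cells must be handled so that the margin is still well-defined after capping.
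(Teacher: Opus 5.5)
Your proposal is correct and follows the paper's own argument: the paper calls the result immediate from the decision regions, uses the margin to give clearance $\mu W_i$ from every finite round-to-nearest boundary (its Fact~A.1) so that neither ties nor sub-margin storage rounding can change a code, and uses frozen $(a,s)$ to make the quantizer act coordinate by coordinate, exactly as your Steps~1--3 do. One small numerical point you inherited from the statement: on the margined capped cells $|w|$ can reach about $1.149\,s$ at code~0 (the $1+\lambda_0$ of Prop.~\ref{prop:consproj}(iii)), not $1.139\,s$, so the fp16 normal-range ratio is about $1.43$ rather than $1.45$, which still exceeds one.
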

\begin{proof}
Immediate from the definition of the decision regions. The content of the
proposition is the precise statement of what must be frozen: recomputing
absmax scales after training silently reassigns entire blocks (we exhibit a
two-weight counterexample in the appendix), so invariance must be defined
against frozen $(a,s)$, never against re-running a quantizer.
\end{proof}

\begin{proposition}[Projection optimality]\label{prop:proj}
Coordinatewise clipping to $\mathcal C$ is the Euclidean projection
$P_{\mathcal C}$; clip-merge returns the invariant model nearest to any
proposed update, and projected SGD on $\mathcal C$ is proximal descent on the
box indicator.
\end{proposition}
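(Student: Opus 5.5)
The plan is to prove the three clauses in order, each reducing to the product structure of $\mathcal C=\prod_i[\ell_i,u_i]$ fixed in the Definition. The one fact needed throughout is that $\mathcal C$ is a nonempty, closed, convex box: each $C_i$ is a nonempty compact interval, and a product of these is compact and convex. Given this, the Euclidean projection $P_{\mathcal C}(v)=\arg\min_{w\in\mathcal C}\|w-v\|_2^2$ exists and is unique by the standard Hilbert projection theorem. No quantizer structure enters; the margin $\mu$ matters only because it makes $\mathcal C$ closed, unlike $\prod_i D_i$.

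\textbf{Step 1 (clipping is the projection).} I would use separability. Write
\[
\|w-v\|_2^2=\sum_i (w_i-v_i)^2 ,
\]
where the constraint $w\in\mathcal C$ decouples into $w_i\in[\ell_i,u_i]$. The minimization therefore splits into $N$ independent one-dimensional problems $\min_{w_i\in[\ell_i,u_i]}(w_i-v_i)^2$. Each is a convex parabola on an interval, so its minimizer is $v_i$ when $v_i$ lies in the interval, and otherwise the nearer endpoint. That is exactly $\mathrm{clip}(v_i,\ell_i,u_i)=\min(\max(v_i,\ell_i),u_i)$. Uniqueness follows from strict convexity of each term.

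\textbf{Step 2 (clip-merge is nearest).} A proposed update gives a full-precision weight $v=\hat w+\delta$. By Prop.~\ref{prop:inv}, the invariant models, meaning those whose re-quantization under frozen $(a,s)$ returns $a$ with the margin guarantee, are exactly the points of $\mathcal C$ in the sense the paper adopts. Clip-merge outputs $P_{\mathcal C}(v)$, so Step 1 says it is the nearest such model in $\ell_2$. I would add that the same coordinatewise argument gives nearness in every $\ell_p$, $1\le p\le\infty$, since each coordinate is minimized separately. This remark is cheap and worth stating.

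\textbf{Step 3 (projected SGD is proximal descent).} Let $\iota_{\mathcal C}$ be the indicator, equal to $0$ on $\mathcal C$ and $+\infty$ off it. By definition,
\[
\mathrm{prox}_{\eta\iota_{\mathcal C}}(v)=\arg\min_w \Bigl\{\iota_{\mathcal C}(w)+\tfrac{1}{2\eta}\|w-v\|^2\Bigr\}=P_{\mathcal C}(v)
\]
for every $\eta>0$. Hence the projected step $w^{t+1}=P_{\mathcal C}(w^t-\eta g^t)$ is literally the proximal-gradient step on $f+\iota_{\mathcal C}$, with a stochastic gradient $g^t$ in place of $\nabla f$.

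\textbf{Main obstacle.} The only delicate point is the scope of the claims rather than any computation. The identification holds for plain SGD with a scalar step size in the Euclidean metric. For Adam or other preconditioned optimizers, the coordinatewise clip remains the projection in the diagonal metric $\|\cdot\|_D$, because the box is separable and clipping commutes with diagonal scaling. It is not the projection for a non-diagonal preconditioner. I would state this boundary explicitly. I would also note that the proposition concerns the box $\mathcal C$, not the loss: it says nothing about the loss at the projected point. This matches the paper's later observation that projecting a diverged update yields an in-box point that is still diverged.
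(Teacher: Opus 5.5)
Your argument is correct and is essentially the paper's own appendix proof of Prop.~\ref{prop:proj}: separability of $\|y-z\|_2^2$ over the product box gives the coordinatewise clamp as the unique projection, and the identity $\operatorname{prox}_{\gamma\iota_{\mathcal C}}=P_{\mathcal C}$ for every $\gamma>0$ makes the projected step forward--backward splitting on $f+\iota_{\mathcal C}$. Your Adam remark is, if anything, sharper than the paper's caveat: for a separable box and a diagonal preconditioner the clamp is also the $D$-metric projection, so projected Adam is a variable-metric proximal step, though the paper's point that the classical rates do not transfer verbatim (stochastic gradients, momentum, a time-varying metric) is unaffected.
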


\begin{proposition}[Nested refinement code]\label{prop:codec}
For in-cell position $x_i\in[0,1)$ let $r^{(k)}_i=\lfloor 2^k x_i\rfloor$ with
sub-cell-center reconstruction. Then (i) $r^{(k)}=r^{(k+j)}\!\gg\!j$;
(ii) the reconstruction error is at most $\mathrm{width}_i/2^{k+1}$;
(iii) reconstructions are strictly interior, so every truncation depth
satisfies Proposition~\ref{prop:inv}.
\end{proposition}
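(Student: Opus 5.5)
The plan is to prove (i)--(iii) directly from the definition of $r^{(k)}$, and then reduce (iii) to Proposition~\ref{prop:inv}. Throughout, take the in-cell position $x_i\in[0,1)$ relative to an interval $[\mathrm{lo}'_i,\mathrm{lo}'_i+\mathrm{width}_i)$ that lies inside the cell $C_i$. This interval is $C_i$ itself, with $\mathrm{width}_i=\Delta_i$. Write the level-$k$ reconstruction as $\tilde w^{(k)}_i=\mathrm{lo}'_i+\mathrm{width}_i\,(r^{(k)}_i+\tfrac12)/2^k$, the centre of the sub-cell of index $r^{(k)}_i$ among $2^k$ equal sub-cells.

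\textbf{Prefix property (i).} Fix $y=2^{k+j}x_i\in[0,2^{k+j})$. The identity to show is $\lfloor y/2^j\rfloor=\lfloor\lfloor y\rfloor/2^j\rfloor$. This is the standard nested-floor identity $\lfloor\lfloor y\rfloor/m\rfloor=\lfloor y/m\rfloor$ for a positive integer $m$. To prove it, let $n=\lfloor y/m\rfloor$. Then $nm\le y<(n+1)m$. Since $nm$ and $(n+1)m$ are integers, this gives $nm\le\lfloor y\rfloor<(n+1)m$. Because $r^{(k+j)}_i$ is a nonnegative integer, dividing by $2^j$ and taking the floor is exactly the right shift $\gg j$.

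\textbf{Error bound (ii).} By definition of the floor, $r^{(k)}_i\le 2^k x_i<r^{(k)}_i+1$. So $x_i$ lies in the sub-interval $[r^{(k)}_i/2^k,(r^{(k)}_i+1)/2^k)$, which has normalized length $2^{-k}$. Its centre $(r^{(k)}_i+\tfrac12)/2^k$ is within half that length of any of its points, so $|x_i-(r^{(k)}_i+\tfrac12)/2^k|\le 2^{-(k+1)}$. Multiplying by $\mathrm{width}_i$ gives the claimed bound on $|w_i-\tilde w^{(k)}_i|$, where $w_i=\mathrm{lo}'_i+\mathrm{width}_i x_i$.

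\textbf{Interiority (iii).} Since $0\le r^{(k)}_i\le 2^k-1$, the normalized reconstruction lies in $[2^{-(k+1)},1-2^{-(k+1)}]\subset(0,1)$. For $k=0$ this includes the single centre value $\tfrac12$. So every reconstruction lies strictly inside $[\mathrm{lo}'_i,\mathrm{lo}'_i+\mathrm{width}_i]\subseteq C_i$, and the full reconstructed vector lies in $\mathcal C$ at every depth $k$. Proposition~\ref{prop:inv} then gives that re-quantization under the frozen $(a,s)$ returns $a$.

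\textbf{Main obstacle.} Nothing here is hard; the delicate step is only bookkeeping. Interiority must be stated relative to $\mathcal C$, not $\prod_i D_i$. We then inherit the margin $\mu W_i$, so the storage-rounding clause of Proposition~\ref{prop:inv} still applies to the stored reconstruction, provided the block scale clears the fp16 floor of $3.70\times10^{-5}$. Degenerate blocks below that floor have zero room; for them we take the reconstruction to be the anchor, which trivially satisfies the claim.
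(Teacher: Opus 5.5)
Your proof is correct and follows the paper's own argument. For (i) you use the nested-floor identity $\lfloor\lfloor y\rfloor/m\rfloor=\lfloor y/m\rfloor$ with $y=2^{k+j}x$ and $m=2^j$, deriving it from $n=\lfloor y/m\rfloor$ where the paper starts from $q=\lfloor\lfloor y\rfloor/m\rfloor$, which is an inessential variation. For (ii) you use the sub-cell-centre bound, and for (iii) you feed the range $[2^{-(k+1)},1-2^{-(k+1)}]$ of the normalized reconstruction into Proposition~\ref{prop:inv}, exactly as the paper does. The paper additionally handles the boundary case $x=1$ of the implemented clamped codec $r^{(k)}=\min(\lfloor 2^k x\rfloor,2^k-1)$, which arises when a clip-merged weight sits exactly on $\partial\mathcal C$ and is covered because the saturated all-ones code is itself shift-compatible; you restrict to $x\in[0,1)$, which is faithful to the statement as written.
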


\begin{proposition}[Capacity identity]\label{prop:capacity}
An update message $m=(\text{mask},r^{(k)})$ has length
$|m|\le kN_t+\log_2\binom{N}{N_t}$ bits, and for any fact set $F$ and
updated model $W'=f(\hat W,m)$, the data-processing
inequality~\cite{cover} gives
$I(F;W')\le H(m)\le|m|$. Absorbed knowledge is bounded by shipped bits as a
theorem, not an estimate.
\end{proposition}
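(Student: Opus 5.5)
The proof has two parts: a counting bound on the message length $|m|$, then the data-processing chain for $I(F;W')$.

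\textbf{Length bound.} The message consists of the update mask and the refinement codes. A mask that selects $N_t$ of the $N$ weights is a subset of size $N_t$. Once $N_t$ is fixed, an enumerative index into the $\binom{N}{N_t}$ such subsets identifies it in $\lceil\log_2\binom{N}{N_t}\rceil$ bits. For each selected weight, Proposition~\ref{prop:codec} defines $r^{(k)}_i=\lfloor 2^k x_i\rfloor\in\{0,\dots,2^k-1\}$. This is exactly $k$ bits, so all selected weights together cost $kN_t$ bits. The ceiling in the mask term and the cost of signalling $N_t$ are each at most a few bits. I would either absorb them by fixing $N_t$ as part of the protocol, or state the inequality up to this additive $O(\log N)$ overhead. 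I expect this to be the only place where the stated inequality needs care, since it is written without the ceiling.

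\textbf{Mutual-information bound.} The argument is a Markov chain. The facts $F$ enter the updated model only through training, which produces $m$. The anchor $\hat W$ is part of the frozen artifact $\mathcal A=(a,s)$ and is independent of $F$. Given $\hat W$, the map $f$ is deterministic. Hence $F\to m\to W'=f(\hat W,m)$ is a Markov chain, with $\hat W$ treated as a fixed parameter, or conditioned on if it is random and independent of $F$. The data-processing inequality~\cite{cover} then gives $I(F;W')\le I(F;m)$. Next, $I(F;m)=H(m)-H(m\mid F)\le H(m)$, because conditional entropy of a discrete variable is nonnegative. Finally, $H(m)\le\log_2|\mathcal M|$, where $\mathcal M$ is the message space. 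For a fixed-length code this is at most $|m|$. If $N_t$ varies, I would use the prefix-free bound: $H(m)\le\mathbb E|m|\le\max|m|$, by Kraft's inequality.

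\textbf{Main obstacle.} The main obstacle is modelling rather than algebra. The inequality holds only if the shipped update really is the discrete $m$. It fails if the update is the full-precision residual, or if training randomness correlated with $F$ leaks through some other channel. I would therefore state explicitly that $W'$ is reconstructed solely from $(\hat W,m)$ via the sub-cell-centre decoder of Proposition~\ref{prop:codec}, and that $\hat W$ is independent of $F$. With those assumptions made explicit, every step above is standard. The closing sentence of the statement, that absorbed knowledge is bounded by the shipped bits, is then just this chain read in words.
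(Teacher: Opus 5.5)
Your proposal is correct and is essentially the paper's proof: enumerative coding of the mask plus $k$ bits per written weight (with the same ceiling and $O(\log N)$ caveat the paper records for a data-dependent mask), the chain $F\to m\to W'$ justified by $\hat W$ being fixed before $F$ exists, and then $I(F;W')\le I(F;m)\le H(m)\le|m|$ for a fixed-length or prefix-free code. The paper also separates out a mask declared in advance, for which the $\log_2\binom{N}{N_t}$ term is not paid at all; and your own step $H(m)\le\log_2|\mathcal M|$ already gives the stated bound without the ceiling when $N_t$ is fixed.
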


\begin{proposition}[Forgetting budget]\label{prop:budget}
For $|\delta_i|\le\rho\,\Delta_i/2$,
$\mathbb E_x\,\mathrm{KL}\!\left(p_{\hat W}\,\|\,p_{\hat W+\delta}\right)
=\tfrac12\,\delta^\top F\,\delta+O(\|\delta\|^3),$
so to second order the drift of any update confined to that box is controlled
by the Fisher form on radii the quantization grid supplies for free, and
scales as $\rho^2$. Which updates the hypothesis covers is worth stating
exactly, because the three paths differ. CellFill moves inside the symmetric
box $|\delta_i|\le\mathrm{room}_i=\min(\hat w_i-\ell_i,\,u_i-\hat w_i)$, and
since $\mathrm{room}_i\le\Delta_i/2$ (Appendix~\ref{app:proofs}) the
proposition applies to it directly. The projected paths do not: clip-merge and
projected dense clamp to the full cell $[\ell_i,u_i]$, so a weight may travel
up to $\max(\hat w_i-\ell_i,\,u_i-\hat w_i)$, which exceeds $\Delta_i/2$
wherever the cell is asymmetric about its anchor---which, for the non-uniform
NF4 level table, is fourteen of the sixteen codes (the two extremes are
symmetric only because capping mirrors their inner half-gap). For those paths the proposition bounds
the symmetric part of the excursion and not the whole of it.
\end{proposition}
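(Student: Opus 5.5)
The plan is to Taylor-expand the expected KL divergence around $\delta=0$, then read off the box-specific claims. Write $\phi(\delta)=\mathbb E_x\,\mathrm{KL}(p_{\hat W}\|p_{\hat W+\delta})$. I will assume the log-likelihood is $C^3$ in the weights on a neighbourhood of $\hat W$ that contains the box, which holds for the smooth activations and softmax used here.

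\textbf{Zeroth and first order.} First, $\phi(0)=0$ because the KL of a distribution with itself vanishes. Second, $\nabla\phi(0)=0$: the gradient equals $-\mathbb E_x\mathbb E_{y\sim p_{\hat W}}\nabla\log p_{\hat W}(y|x)$, and the score has zero mean under its own distribution, so this is zero.

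\textbf{Second order.} The Hessian at zero is $-\mathbb E\,\nabla^2\log p$. By the standard identity (differentiate $\sum_y p=1$ twice) this equals the Fisher matrix $F=\mathbb E\,\nabla\log p\,\nabla\log p^\top$. Taylor's theorem with remainder then gives $\phi(\delta)=\tfrac12\delta^\top F\delta+R(\delta)$ with $|R|\le \tfrac16\sup\|\nabla^3\phi\|\,\|\delta\|^3$. The supremum is taken over the compact box, which is bounded because each $\Delta_i$ is finite (the cells are capped). This gives the displayed identity.

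\textbf{$\rho^2$ scaling.} Substitute $\delta=\rho\,u$ with $|u_i|\le\Delta_i/2$. The quadratic term is then $\rho^2\,\tfrac12 u^\top Fu$, and its supremum over the fixed box scales exactly as $\rho^2$. The remainder is $O(\rho^3)$ uniformly in $u$, since $\|u\|$ is bounded by $\tfrac12\|\Delta\|$, a constant fixed by the grid before training.

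\textbf{Coverage of the three paths.} This needs the inequality $\mathrm{room}_i\le\Delta_i/2$. It follows from $\min(p,q)\le (p+q)/2$ with $p=\hat w_i-\ell_i$, $q=u_i-\hat w_i$, and $p+q=\Delta_i$; it requires $\hat w_i\in C_i$, i.e.\ the margin does not swallow the anchor. Hence CellFill's reachable box $|\delta_i|<M_i$ lies inside the $\rho=1$ box, and the proposition applies to it.

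For clip-merge and projected dense, the reachable displacement is $[\ell_i-\hat w_i,\,u_i-\hat w_i]$, which has maximum extent $\max(p,q)$. This exceeds $\Delta_i/2$ exactly when $p\neq q$. Asymmetry of NF4's non-uniform levels is a finite check over the sixteen codes: the two end codes become symmetric only through the capping convention, and all interior codes have unequal neighbouring gaps. For these paths I will split $\delta=\delta^{\mathrm{sym}}+\delta^{\mathrm{res}}$, with $\delta^{\mathrm{sym}}_i$ clipped to $\pm\Delta_i/2$. The expansion controls only the symmetric part, which matches the statement's caveat.

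\textbf{Main obstacle.} The $O(\|\delta\|^3)$ constant is where the work lies. It is not uniform unless the third derivative is bounded on the box, and since the statement makes no numerical claim, I will present it as a local asymptotic bound rather than a certificate. This is consistent with the paper's later finding that the Fisher budget works as a scaling law but not as a numerical bound. A minor additional point is that $F$ should be the Fisher of the full conditional output distribution, so that the identity between the Hessian and the Fisher holds exactly rather than only empirically.
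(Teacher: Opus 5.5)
Your proposal is correct and follows essentially the same route as the paper's appendix proof: the gradient vanishes by the zero-mean score, the Hessian equals the Fisher by differentiating $\sum_y p=1$ twice, Taylor's theorem with remainder is applied on the compact (capped) box, $\rho^2$ comes from degree-two homogeneity of $\tfrac12\delta^\top F\delta$, and $\mathrm{room}_i\le\Delta_i/2$ follows from $\min(p,q)\le(p+q)/2$ with $p+q=\Delta_i$. One small tightening: your own assumption that $\log p$ is $C^3$ on a neighbourhood of the compact box already bounds the third derivative there, so the remainder constant \emph{is} uniform over the box, as the paper states; what is missing is a measured value for that constant, not its existence.
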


We stress what this does and does not give. The expression is exact to
second order, but evaluating it requires the full $F$, and replacing $F$ by
its diagonal gives two different surrogates that must not be confused. The
supremum over the loose box is $B_F(\rho)=\tfrac{\rho^2}{8}\sum_i
F_{ii}\Delta_i^2$; the \emph{mean} over a uniform fill of the true cells is
$\bar B_F(\rho)=\tfrac{\rho^2}{6}\sum_i F_{ii}\,\mathrm{room}_i^2$
(Appendix~\ref{app:proofs}), and it is $\bar B_F$ that our geometry sweep
evaluates. Neither is a usable numerical bound: measured against random
in-cell perturbations, $\bar B_F$ underestimates the true drift by
$15.9\times$ at $\rho{=}0.125$ rising to $383\times$ at $\rho{=}1$, and since
$\mathrm{room}_i\le\Delta_i/2$ gives $B_F\ge3\bar B_F$, the supremum form
underestimates by at most $128\times$---two orders of magnitude either way.
\S\ref{sec:geometry} reports the measurement and the Limitations section the
two explanations our data cannot separate. We therefore treat $\rho$ as a
calibrated dial rather than an a-priori certificate, and report the measured
curve.

\begin{proposition}[Room contraction under folding]\label{prop:decay}
Write $a=\hat w-\ell$ and $b=u-\hat w$ for a weight's distances to the two
walls of its cell, $W=a+b$ for the cell's fixed width, $d=a-b$ for its
asymmetry and $r=\min(a,b)$ for the inradius about its current anchor. Let a
bounded update move the weight to $w'=\hat w+r\,t$ with $|t|<1$, as CellFill
does with $t=\tanh(s\,BA^{\!\top})$, and fold $w'$ into the anchor for the
next task. Then the surviving inradius is exactly
\[
r' \;=\; \min\!\big(a+rt,\;b-rt\big)
      \;=\; \frac{W}{2}-\frac{|d+2rt|}{2},
\qquad\text{while}\qquad r=\frac{W}{2}-\frac{|d|}{2},
\]
and three things follow. \emph{(i)} $r'\ge r(1-|t|)$, with equality when the
update moves toward the nearer wall and always when the cell is symmetric.
\emph{(ii)} A single fold can \emph{increase} the room: moving toward the far
wall recentres the anchor, and for $a=1$, $b=10$, $t=+\tfrac12$ the inradius
goes from $1$ to $\tfrac32$. A lower bound on $r'$ therefore does not by
itself make the room decay, and the finiteness claim needs the next part.
\emph{(iii)} If the sign of $t$ is symmetric, convexity of $|\cdot|$ gives
$\mathbb E|d+2rt|\ge|d|$ and hence
\[
\mathbb E[r']\;\le\;r ,
\]
with equality exactly when $|d|\ge 2r|t|$---when the update never changes
which wall is nearer---and with the maximal contraction
$\mathbb E[r']=r(1-|t|)$ at a symmetric cell. The room is therefore
non-increasing in expectation under folding, and it contracts fastest where
the anchor sits centred in its cell.
\end{proposition}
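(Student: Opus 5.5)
The plan is to do everything with the identity $\min(x,y)=\tfrac12(x+y)-\tfrac12|x-y|$. First I compute the new wall distances after the fold. The cell $[\ell,u]$ is fixed, because the codes and scales are frozen and $\mu W$ is unchanged. So the new distances are $a'=w'-\ell=a+rt$ and $b'=u-w'=b-rt$. Their sum is still $W$, and their difference is $a'-b'=d+2rt$. Since $|t|<1$ and $r\le\min(a,b)$, both $a'$ and $b'$ are positive, so $w'$ stays in $C$ and the new inradius is well defined. The identity then gives $r'=\tfrac W2-\tfrac{|d+2rt|}2$ directly, and the same computation with $t=0$ gives $r=\tfrac W2-\tfrac{|d|}2$.

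For part (i) I would use the triangle inequality, $|d+2rt|\le|d|+2r|t|$. This gives $r'\ge r-r|t|=r(1-|t|)$. Equality holds exactly when $d$ and $t$ have the same sign or one of them vanishes. If $d>0$ then $a>b$, so the nearer wall is $u$, and moving toward it means $t>0$. The symmetric case $d=0$ is always an equality.

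Part (ii) is a single substitution. With $a=1$, $b=10$ we get $W=11$, $d=-9$ and $r=1$. Taking $t=\tfrac12$ gives $|{-9}+1|=8$, so $r'=\tfrac{11-8}{2}=\tfrac32$.

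For part (iii) I read ``the sign of $t$ is symmetric'' as saying that $t$ and $-t$ are equally likely, conditionally on the cell state $(a,b)$. Then
\[
\mathbb E|d+2rt|=\mathbb E\,\tfrac12\big(|d+2r|t||+|d-2r|t||\big)\ge|d|
\]
by the triangle inequality, or equivalently by Jensen applied to $|\cdot|$ at the symmetric pair. Substituting this into the formula for $r'$ gives $\mathbb E[r']\le r$.

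For the equality case, $\tfrac12(|d+x|+|d-x|)=\max(|d|,x)$ for $x=2r|t|\ge0$. So equality holds almost surely exactly when $|d|\ge2r|t|$. At $d=0$ this average is $x$, so $\mathbb E[r']=r-r\,\mathbb E|t|$. This matches the stated $r(1-|t|)$ when $|t|$ is deterministic, and is $r(1-\mathbb E|t|)$ in general.

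To justify ``contracts fastest at a centred anchor,'' I would note that the expected loss of room is $\tfrac12\mathbb E[\max(|d|,x)-|d|]$, which is non-increasing in $|d|$.

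The main obstacle is interpretive rather than computational. The conditioning assumption has to be stated precisely, so that the expectation is taken given the cell state and not over a random $d$ correlated with $t$. The equality statements also need the right form when $|t|$ is random; I would state them with $\mathbb E|t|$.
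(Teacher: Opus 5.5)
Your proof is correct and follows the paper's own argument: the identity $\min(p,q)=\tfrac{p+q}{2}-\tfrac{|p-q|}{2}$ with $p+q=W$ fixed because the walls do not move, the triangle inequality for (i), direct substitution for (ii), and Jensen (your symmetrization) for (iii). The one difference is in the equality cases of (iii): the paper settles them only for a two-point law $t=\pm\tau$, while your identity $\tfrac12(|d+x|+|d-x|)=\max(|d|,x)$ covers any symmetric law, giving the equality condition almost surely and the centred-cell contraction as $r(1-\mathbb E|t|)$.
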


Part (iii) is what the sequential results rest on, and it predicts their
shape rather than only bounding it. A released grid starts with many
near-centred anchors, so the first fold contracts hard; every fold after it
leaves the positions off-centre, $|d|$ grows, and the contraction weakens
toward its equality case. That is what \S\ref{sec:sequential} measures---the
first fold's ratio is $0.61$ and the ratios from the second fold on sit at
$0.80$--$0.84$---and what the grid simulation there finds when it varies the
level table and changes nothing after the first fold. The geometric law
itself is empirical: writing $\bar r_k$ for the mean inradius after $k$ tasks,
\[
\bar r_k \;\approx\; \bar r_0\,\beta^{\,k},
\qquad \beta\in[0.78,0.85]\ \text{over four matched 1.7B sequences,}
\]
which is well above the per-weight edge $1-\mathbb E|t|\approx0.5$ that (i)
allows, so the inequality is satisfied with slack everywhere we have measured
and is not the operative constraint. If a task's absorption is proportional
to the room available to it, $a_k=c\,\bar r_{k-1}$, then $a_k=a_1\beta^{\,k-1}$
and the knowledge absorbable without ever changing the artifact is finite,
$\sum_{k\ge1}a_k=a_1/(1-\beta)$. That proportionality is an assumption and
\S\ref{sec:loop} reports it holding on one anchor and failing on another, so
the finiteness conclusion is carried by measurement and not by the
proposition.

Two consequences are worth separating. First, the characteristic failure is
not a violated bound but an exhausted one: the sequence becomes unable to
learn while every check still passes, which is a failure mode a
constraint-violation check cannot detect. (The bound is not inviolable in
practice---one weight of $1.4\times10^{9}$ left its cell at the sixth fold of
each of two sequences, \S\ref{sec:rehearse-old}---but that is a numerical
event at the margin, not the mode this proposition describes.)
Second, per-task throughput $a_k\to0$, so a policy that never re-quantizes
has asymptotically zero learning rate; consolidating every $m$ tasks
restores $\bar r$ and yields a positive steady state
$a_1(1-\beta^m)/\big(m(1-\beta)\big)$, decreasing in $m$. Consolidation is
therefore not a fallback for when something goes wrong but a structural
requirement of learning in a bounded space, and $m$ is the dial trading
learning throughput against how often the released artifact changes.

\begin{proposition}[A consolidation is a bounded projection]\label{prop:consproj}
Let $Q$ be the consolidation map as implemented: per block, $s'=\max_i|w_i|$,
$c'=\mathrm{RTN}(w/s')$, $\hat w'=L_{c'}s'$, and write $H_c$ for the margined
half-width of cell $c$. Then (i)~$Q\circ Q=Q$: a consolidation that follows
no writing returns the same codes and scales, bitwise. (ii)~For every weight
whose code is unchanged the snap error decomposes exactly,
$w-\hat w' = M\,t + L_{c}\,(s-s')$: a fill term and a scale-drag term.
(iii)~$|s'|\le(1+\lambda_0)|s|$ with $\lambda_0=0.1489$, attained at code~0
and a property of the capping convention, not of NF4's levels. (iv)~The room
the consolidation restores is $|s'|\,\mathbb E[H_{c'}]$, pinned by the format
to first order. Sketches in \S\ref{app:proofs}; every clause is verified
numerically against the shipped quantizer in \S\ref{sec:driftlaw}.
\end{proposition}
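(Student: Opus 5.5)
The plan is to treat the four clauses separately. All of them reduce to bookkeeping on a single block, using three facts: the anchor before consolidation is $\hat w_i=L_{a_i}s$; a CellFill weight is $w_i=\hat w_i+M_i t_i$ with $|t_i|<1$; and the level table is normalized so that $\pm1\in L$, as NF4's is.

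For \emph{(i)}, I apply $Q$ twice. After one application every weight in the block equals $L_{c'_i}s'$. The weight that attained the absmax has $w/s'=\pm1$, and this rounds to the level $\pm1$. So $\max_i|L_{c'_i}s'|=s'$, and the second pass recomputes $s''=s'$. Then $\hat w'_i/s''=L_{c'_i}$ is a level exactly, RTN returns $c'_i$, and codes and scales are bitwise unchanged. This is the step I expect to be the real obstacle, because the argument is exact only in real arithmetic. In storage, $L_c s'/s'$ need not reproduce $L_c$ to the last bit. Two things must therefore be checked. First, the floating-point error of the division is far below half the smallest level gap, so RTN cannot flip. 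Second, the absmax weight is stored as exactly $\pm s'$, which holds because $L=\pm1$ times $s'$ is exact. I would state both as a remark and defer their confirmation to the numerical check in \S\ref{sec:driftlaw}. A sign convention is also needed for the absmax ($s'>0$ taken as $\max|w|$), which the clause writes as $|s'|$.

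For \emph{(ii)}, the code is unchanged, so $\hat w'=L_c s'$ while $w=L_c s+Mt$. Subtracting gives $w-\hat w'=Mt+L_c(s-s')$ identically. No approximation enters, so the proof is one line.

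For \emph{(iii)}, I bound $s'=\max_i|w_i|\le\max_i(|\hat w_i|+M_i)\le s\,\max_c(|L_c|+H_c)$, using that the room never exceeds the margined half-width of its cell. I then evaluate $|L_c|+H_c$ code by code. For an interior code, $|L_c|$ plus half the gap to its outer neighbour, shrunk by the margin, stays strictly below that neighbour's magnitude, hence below $1$. Only the two outer codes can exceed $1$. Their cells are bounded solely by the capping convention, which mirrors the inner half-gap. At code~0 this gives $\lambda_0=(1-2\mu)(L_1-L_0)/2=0.98\times0.1519=0.1489$, and code~15 gives the smaller value $0.98\times(1-0.7230)/2$. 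The constant is thus set by the mirrored cap, and without the cap the outer cell would be unbounded. That justifies the clause's attribution to the capping convention rather than to NF4's levels.

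For \emph{(iv)}, I use that a fresh anchor sits at a level, so its room is $s'H_{c'}$ by the definition of $H$. Averaging over the block gives $s'\,\mathbb E[H_{c'}]$, and this is exact given the new code distribution. The statement that this is ``pinned by the format to first order'' follows because the distribution of $c'$ differs from that of the released codes only at flipped codes. Those are a fraction controlled by (ii) and (iii), so the restored room differs from $s\,\mathbb E[H_a]$ only at first order in $s'-s$ and in the flip fraction.
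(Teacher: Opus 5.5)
Your treatment of (i)--(iii) and of the exact half of (iv) follows the paper's own sketch. Idempotence holds because the absmax weight snaps to $\pm1$ and every fresh anchor is a level. The decomposition in (ii) comes by substitution. Clause (iii) comes from $|w_i|\le(|L_{c_i}|+H_{c_i})\,|s|$ maximized over codes. And $M_i'=|s'|\,H_{c_i'}$ because the new anchor is the level itself. You observe that an interior code's $|L_c|+H_c$ stays below the midpoint to its outer neighbour, so only the two capped cells can exceed $1$; that is a cleaner substitute for enumerating sixteen codes. Your value $\lambda_0=(1-2\mu)(L_1-L_0)/2=0.1489$ is right. Note, though, that this formula contains NF4's outermost gap, so what it shows about the cap is only that the cap makes the bound finite. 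That is also all the paper's sketch claims.

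The step that does not hold up is your justification of ``pinned by the format to first order''. You make the correction small by making the flip fraction small, ``controlled by (ii) and (iii)''. Neither clause controls it. Clause (ii) is stated only for weights whose code is unchanged, so it says nothing about which codes flip. Clause (iii) permits a scale increase of up to $14.9\%$, enough to carry many normalized positions across a decision boundary. At the standard recipe a consolidation changes $19$--$35\%$ of the codes, so a correction linear in that fraction is not small.

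What keeps $\mathbb E[H_{c'}]$ near its old value is that the code \emph{histogram}, not the individual codes, is fixed to first order. It is set by the format and by the weight population measured against its new absmax, and flips in opposite directions largely cancel in it. That is how the paper reads the clause. The factor $|s'|$ carries the drift exactly, $\mathbb E[H_{c'}]$ is the format-pinned factor, and the deviation is the systematic re-binning of mass toward the narrow central cells as $|s'|$ grows. \S\ref{sec:driftlaw} predicts that deviation without a fitted parameter. A bound of the form (flip fraction)$\times\max_{c,c''}|H_c-H_{c''}|$ would neither size it nor sign it.

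For the same reason, you should not compare the restored room with $s\,\mathbb E[H_a]$ and call the gap first order in $s'-s$. That puts the main effect, room growing in proportion to $|s'|$, into the error term. The pinning statement concerns the factor that multiplies $|s'|$, not the room itself.
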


Clause~(i) is what makes repeated renewal safe to reason about: the map is a
projection, so each version's injection is bounded by what the fill wrote
since the last version ($0.145\,s$ per weight at most), not by how many
versions preceded it. Clause~(iv) carries a measured second-order correction:
scale growth re-bins weight mass toward the central, narrow cells, so
$\mathbb E[H_{c'}]$ falls as $|s'|$ grows; \S\ref{sec:driftlaw} measures the
deficit and predicts it from the level table with no fitted parameter.

\section{Methods}\label{sec:methods}

\subsection{CellFill: training the position inside the cell}\label{sec:cellfill}
\begin{figure}[t]\centering
\includegraphics[width=\linewidth]{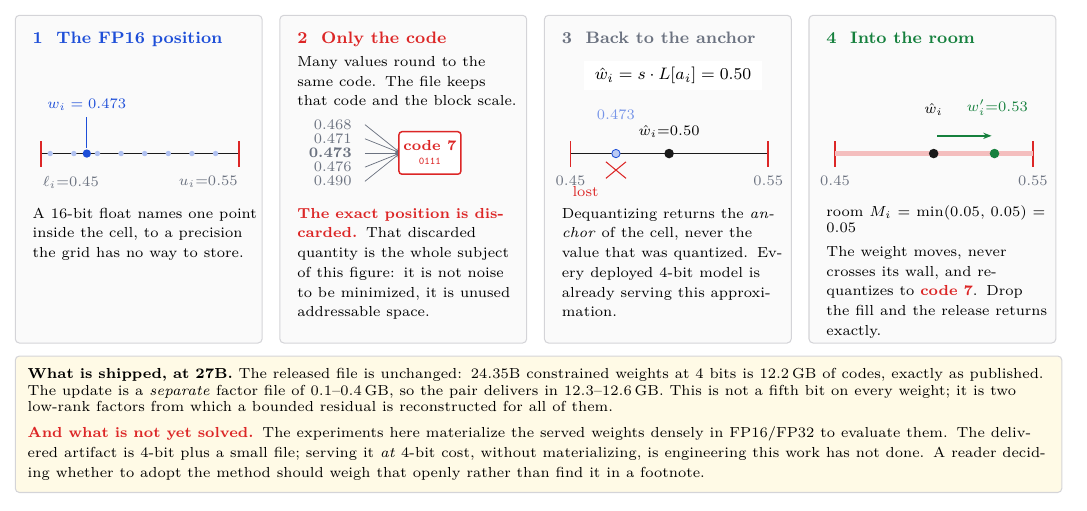}
\caption{Where the update lives, followed through one weight. The released
file stores a code and a block scale; the quantizer discarded where inside
the cell the original weight was, and that discarded interval is the only
place CellFill writes. The bounded link keeps the served weight strictly
inside the cell, so re-quantizing it returns the code that was shipped---the
identity is a property of the parametrization rather than of a constraint
applied afterwards. Serving the fill is not free: the update is a separate
file that must be applied to the released weights before the model runs.}
\label{fig:mechanism}
\end{figure}

What is new here is the object that is trained. A 4-bit release stores,
for every weight, a code $a_i\in\{0,\dots,15\}$ and, for every block of
$g$ weights, a scale $s_{g(i)}$; the served value is the anchor
$\hat w_i=s_{g(i)}L[a_i]$, and the quantizer's rounding discards where
inside the cell $C_i$ the original weight was. CellFill never trains
$\hat w_i$, $a_i$ or $s_{g(i)}$. It trains the discarded quantity---the
\emph{position} of the weight inside its cell---and nothing else, so that
the stored bits are invariant because the trainable object cannot reach
them, rather than because a constraint is enforced after the fact.
Figure~\ref{fig:mechanism} follows one weight through the construction,
which has five parts.

\textbf{The cell and its room.} Under the shipped scale, the cell of code
$a_i$ is the interval between the midpoints to its neighbouring levels
(the outer two cells capped at the level table's ends), shrunk by a margin
$\mu$ of its width on each side so that stored values stay off the
decision boundaries; $\mu=1\%$ throughout. Its \emph{room} is the distance
from the anchor to the nearer wall,
$M_i=\min(\hat w_i-\ell_i,\;u_i-\hat w_i)$. NF4 anchors are not centred
(level 1 sits $14\%$ of a cell off centre), so $M_i$ is a property of the
code and the scale, rebuilt exactly from the sixteen-entry table and the
block's scale, and the scale is used with its sign: a double-quantized
release dequantizes the absmax of near-zero blocks to small negative
numbers, which flip the cell rather than collapse it (Qwen3-8B: $864{,}116$
weights). $M_i=0$ freezes a weight.

\textbf{The fill.} For a matrix with anchors $\hat W\in\mathbb R^{m\times n}$
and room $M$, the served weight is
\[
W \;=\; \hat W + M\odot \tanh\!\big(s\,B A^{\!\top}\big),
\qquad A\in\mathbb R^{r\times n},\; B\in\mathbb R^{m\times r},
\]
with $B=0$ at initialization (the fill starts at the anchor), $A$ random,
and $s$ a fixed slope ($40$ unless stated; $10$ in the repaired recipe of
\S\ref{sec:capacity-xdom}). Every entry of $\tanh(\cdot)$ lies in
$(-1,1)$, so $|W_{ij}-\hat W_{ij}|<M_{ij}$ for every weight no matter what
the optimizer does: invariance is a property of the parametrization, not
of the training run. $A$ and $B$ are the only trainable parameters;
embeddings, norms and the head are frozen; at $r{=}64$ they number
$r(m+n)$ per matrix, $69.7$M at 1.7B, $5\%$ of the weights they move.
The elementwise $\tanh$ and the elementwise $M$ both break the low-rank
structure of $BA^{\!\top}$, so the fill is full-rank: in a $512\times512$
probe at $r{=}16$ the product has rank $16$ and the fill rank $512$
(stable rank $110$). A rank-$r$ fill is therefore not a rank-$r$ update,
which is the most likely reason it exceeds projected dense training at
$20\times$ fewer parameters (\S\ref{sec:efficiency}). The link can be
replaced: a hard bound with a straight-through gradient, a $\tanh$ whose
backward derivative is floored, or the softsign, all keep the same
reachable set and differ only in where gradient survives
(\S\ref{sec:capacity-xdom}).

\textbf{The forward pass.} Each 4-bit linear layer computes its output as
the release would, $\hat W x$ through the vendor's kernel, plus
$(M\odot\tanh(sBA^{\!\top}))\,x$, so the release is never dequantized into
a trainable copy. The fill is materialized per layer in bf16 during
training and recomputed in the backward pass when memory requires
(checkpointing, from 8B up); $M$ is kept as a dense bf16 buffer up to 8B
and rebuilt per forward pass from the codes and the table above that,
where a dense $M$ would be $48.7$~GB at 27B. The final merge recomputes
the fill in fp32.

\textbf{Training.} AdamW at $10^{-3}$, batch 16, sequences of at most 96
tokens, 24 epochs over the corpus, with $10\%$ of every epoch's batches
drawn fresh from WikiText-103's training split---rehearsal that is never
repeated, because a fixed buffer is memorized and sharpens the model onto
itself (\S\ref{sec:rehearsal}). There is no projection step and no
clamping: the optimizer moves $A$ and $B$ freely and the weights stay
inside their cells by construction. Two optional terms change what the
fill is allowed to do: a distillation term that keeps the served
distribution unchanged on a neutral pool (an \emph{inert} fill,
\S\ref{sec:fusion}), and a share $\rho<1$ of the room (a \emph{reserved}
fill) so that $K$ reserved fills sum inside the cell.

\textbf{Materialization, verification, serving.} After training, the
served weights $W=\hat W+M\odot\tanh(sBA^{\!\top})$ are computed in fp32
and every weight is re-binned under the shipped scales: the code it
receives must equal the stored code. This is the check of
Proposition~\ref{prop:inv}, run on every merge in this paper and reported
as a count of violations. It is zero on every run in this paper except two,
a single weight at the sixth fold of each of two long sequences
(\S\ref{sec:rehearse-old}); the count is the evidence, and a run that fails
it fails. Zero is not automatic: it presupposes the rounding clause of
Prop.~\ref{prop:inv}, which an anchor can violate, and \S\ref{sec:floor}
gives the pre-flight check that decides whether a given anchor admits it. What is shipped is the
unchanged 4-bit file and a \emph{fill}---the factors $A$ and $B$, $0.1$--$0.4$~GB
at 27--31B---or, where bandwidth matters, the fill's positions coded as
refinement bits on top of the 4-bit codes (\S\ref{sec:codec}). Revoking
an update discards the fill; the served model is the release again.
Sequential updates \emph{fold}: the current position becomes the next
update's anchor, the room shrinks to the nearer wall, and a fresh fill is
trained (\S\ref{sec:sequential}); several writers share a release by
reserving room, by training inert fills, or by rounds that fold the
average of their fills (\S\ref{sec:fusion}).

\subsection{Two projected paths}\label{sec:paths}
Any procedure that confines learning to the cells is \emph{in-cell
learning}, and two others were run as references. \textbf{A (clip-merge):}
train LoRA~\cite{lora} on the quantized base, materialize the delta
layerwise and set $W'=P_{\mathcal C}(\hat W+\Delta)$---cheapest, optimal in
weights rather than in loss (Prop.~\ref{prop:proj}); \textbf{A+} continues
from $W'$ with projected dense fine-tuning, so the optimizer sees the walls
and re-routes the clipped-away knowledge. \textbf{B (projected dense):}
constraint-aware from step zero, every weight free inside its cell, the
reference solution of the constrained problem. The clip rate of path A is a
computable diagnostic of the divergence between $P_{\mathcal C}(\arg\min f)$
and $\arg\min_{\mathcal C}f$: below $\sim$1\% the paths agree; above
$\sim$5\%, escalate to A+ or B. CellFill is not a variant of any of them,
nor of the PEFT family: DoRA~\cite{dora} and AdaLoRA~\cite{adalora} still
produce an unbounded weight delta added to a frozen base, and QA-LoRA~\cite{qalora}
and LoftQ~\cite{loftq} re-quantize after merging, so their served files
are new files. CellFill's served file is the old one. Two PEFT families do
leave the codes untouched---prefix tuning, whose update lives in the KV
cache, and LN tuning, whose parameters a 4-bit release does not quantize---
and \S\ref{sec:baselines} measures what they can carry.

\section{Experimental setup}\label{sec:setup}
Synthetic biography facts (novelty certain by construction; $21.7$ bits per
fact of countable attribute entropy; three cloze probes per fact), fresh
per-epoch rehearsal from wikitext-train (a fixed replay buffer repeated
across epochs is itself memorized and \emph{raises} test perplexity from
24.6 to 172---a negative result we document), forgetting measured by
wikitext-test and cross-domain LAMBADA perplexity.
Unless stated otherwise the base model is Qwen3-1.7B-Base; the scale ladder
(0.6B--27B, including a hybrid linear-attention model) and the second model
family are reported in \S\ref{sec:scale}.
Quantization: NF4, blocksize 64, double quantization, frozen scales;
invariance is asserted per layer, in the integer domain, on every merge.
Real new knowledge is scored on \emph{NewFacts~v2}, a frozen benchmark whose
files are content-hashed into the loader so that silent drift is an error:
three cells, of which the two lottery cells carry exactly $28.12$ and
$28.11$ bits per fact by combinatorics and differ by $0.03\%$---the one
comparison in this paper matched by construction---and released 1.7B and 8B
anchors score $0.0000$ on all 468 probes, measured per item. Its composition
instrument (v2.1) was rebuilt after the leak reported in
\S\ref{sec:usable}: two disjoint sentence families per drug, verified at
build time never to co-state the composed relation, with both hop directions
taught so the probe isolates chaining.

Each fact carries $21.7$ bits of attribute entropy, but the three cloze
probes cover only city, occupation and employer: $11.9$ bits. Two
denominators therefore appear, and which one is in use is stated wherever
they do: capacity read off recall is denominated in the probed $11.9$ bits,
while the log-probability bit accounting of Table~\ref{tab:capacity-bits}
scores all five attributes and is denominated in $21.7$. Two probes continue the training
phrasing and one paraphrases it, so we report recall per probe kind as well
as pooled. A model that has learned the answer vocabulary but no
name--attribute bindings scores $6.5\%$ under greedy decoding; that is the
floor against which small recall numbers should be read.

\section{Results}\label{sec:results}
The results follow the deployed life of one released file, and are grouped by
the five stages of the cycle the title names.
\S\ref{res:contract} establishes what a minor version \emph{is}---an update
that re-quantizes to the shipped codes---and prices it: what the constraint
costs in recall, where the writable region lies, and what it gives back.
\emph{Write} (\S\ref{res:write}) asks what one such version can put into a
released model: real knowledge the model verifiably lacked, written where in
the network, on whose file, and against what the alternatives achieve.
\emph{Saturate} (\S\ref{res:saturate}) asks what stops one update and what
caps its return---the bounded link saturates before the cells fill, and a
scan of what the model already knows predicts, before training, how much an
injection will give back.
\emph{Validate} (\S\ref{res:validate}) is the gate: whether the written
knowledge can be used, and whether the served model is still the model the
release was evaluated as.
\emph{Consolidate and rehearse} (\S\ref{res:lifelong}) runs updates in
sequence---room consumed at a measured geometric rate, rehearsal recovering
most of what the sequence forgets, consolidation renewing the room at a
stated price in changed codes---and \S\ref{sec:loop} runs the whole cycle as
a loop over six tasks and three turns. \S\ref{sec:loop} is also where we say
which stages that loop did \emph{not} include.

\begin{figure}[t]\centering
\includegraphics[width=\linewidth]{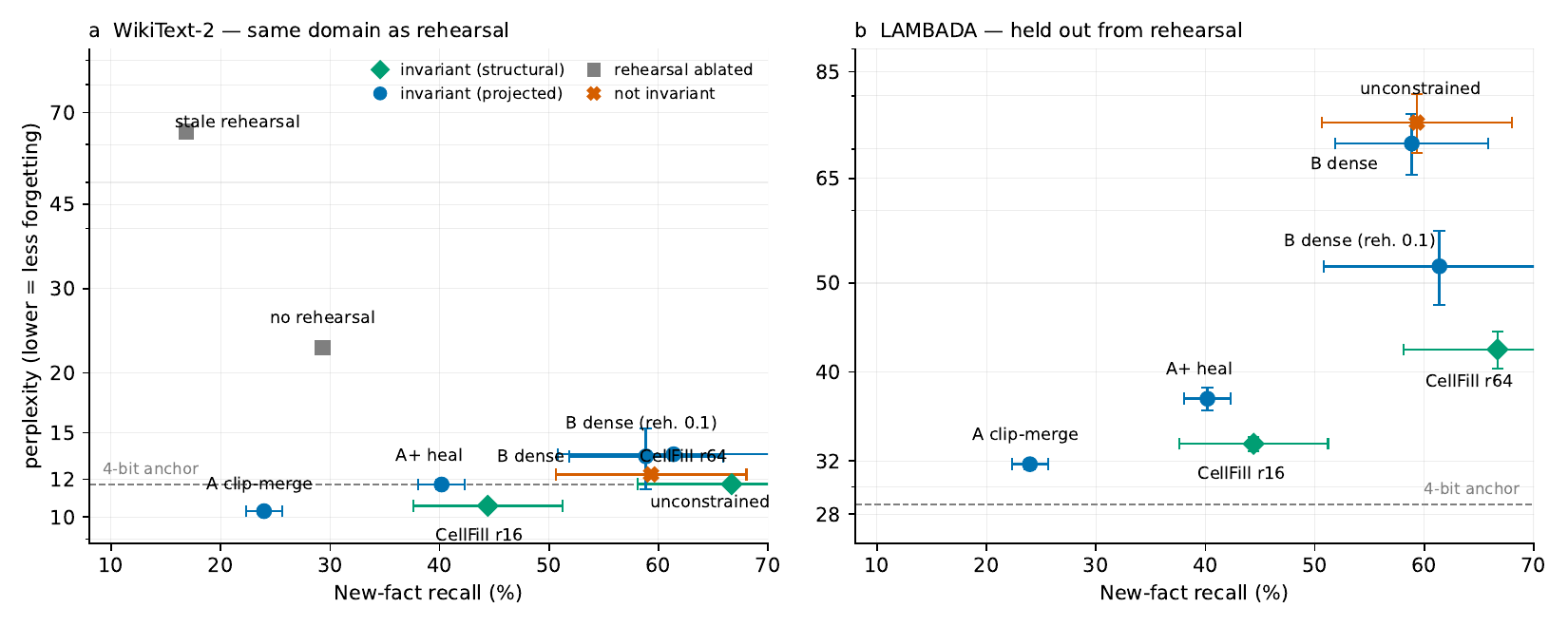}
\caption{The invariant learning--forgetting frontier (Qwen3-1.7B, 1{,}000
synthetic facts, 24 epochs; log perplexity axes, error bars $\pm$1 std over
three seeds). Every diamond, circle and square re-quantizes bit-identically
to the released 4-bit artifact; the cross does not.
\textbf{The two panels disagree, and that disagreement is the point.}
Rehearsal is drawn from WikiText-train, so in (a) every method sits at or
below the anchor and in-cell learning looks free; LAMBADA in (b) never saw
the rehearsal corpus, and there every method is far above it. A reader who
measures only (a) concludes that nothing was given up. The two gray squares
in (a) ablate rehearsal: omitting it doubles perplexity, and re-using a
fixed buffer across epochs is far worse still ($63.8$), because the buffer
itself gets memorized. Note also that ordering by recall is not ordering by
cost: CellFill at rank 64 absorbs \emph{more} than projected dense while
adding $13.7$ points of cross-domain perplexity over the anchor against
dense's $23.5$ at matched rehearsal, so it sits below and to the right of the
projected arm rather than further along the same curve.}
\label{fig:frontier}
\end{figure}

\subsection{The contract: what a minor version is, and what it costs}\label{res:contract}
\subsubsection{The invariant frontier (Qwen3-1.7B, 1k facts, 24 epochs)}\label{sec:frontier}
Learning and forgetting trade off, and the first question is where a
bit-identical method sits on that trade. Table~\ref{tab:frontier} and
Figure~\ref{fig:frontier} place every path we run on the same axes: how much
of a 1{,}000-fact corpus each absorbs, what each gives up on an in-domain and
a held-out perplexity, and whether the released 4-bit file survives.
Every \checkmark\ row re-quantizes to the shipped codes over all
$1.409\times10^9$ constrained weights; the one row that does not is the
unconstrained control, and it is there to price the constraint rather than to
compete. CellFill at $r{=}64$ does not sit further along the projected
methods' curve---it sits off it, absorbing more than projected dense while
giving up less than half again as much held-out perplexity---which is the
observation the rest of \S\ref{res:contract} takes apart.

\begin{table}[t]\centering\small
\setlength{\tabcolsep}{4pt}
\begin{tabular}{llccccrc}
\toprule
method & invariance & reh. & recall & WikiText-2 & LAMBADA ppl$\downarrow$ &bits/pt & seeds \\
\midrule
4-bit anchor & (reference) & --- & --- & 11.71 & 28.67 & --- & --- \\
original fp32 & (reference) & --- & --- & 10.66 & 26.13 & --- & --- \\
\midrule
A: clip-merge & \checkmark & 0.1 & $24.0\!\pm\!1.7$\% & $10.29\!\pm\!0.02$ & $31.74\!\pm\!0.37$ & $917\!\pm\!183$ & 3/2 \\
CellFill $r{=}16$ & \checkmark\ (structural) & 0.1 & $44.4\!\pm\!6.8$\% & $10.56\!\pm\!0.04$ & $33.40\!\pm\!0.59$ & $1125\!\pm\!205$ & 3 \\
A+: clip-merge $\to$ heal & \checkmark & 0.1 & $40.2\!\pm\!2.1$\% & $11.69\!\pm\!0.27$ & $37.40\!\pm\!1.09$ & $568\!\pm\!88$ & 3/2 \\
CellFill $r{=}64$ & \checkmark\ (structural) & 0.1 & $66.7\!\pm\!8.6$\% & $11.71\!\pm\!0.08$ & $42.33\!\pm\!1.99$ & $583\!\pm\!59$ & 3 \\
B: projected dense & \checkmark & 0.1 & $61.4\!\pm\!10.6$\% & $13.53\!\pm\!0.05$ & $52.15\!\pm\!4.82$ & $312\!\pm\!10$ & 2 \\
B: projected dense & \checkmark & 0.3 & $58.9\!\pm\!7.0$\% & $13.39\!\pm\!1.94$ & $70.98\!\pm\!5.43$ & $165\!\pm\!48$ & 3/2 \\
unconstrained dense & $\times$ & 0.3 & $59.3\!\pm\!8.7$\% & $12.27\!\pm\!0.13$ & $74.83\!\pm\!5.57$ & $151\!\pm\!48$ & 3/2 \\
\bottomrule
\end{tabular}

\caption{The invariant learning--forgetting frontier (Qwen3-1.7B, 1{,}000
facts, 24 epochs). Every \checkmark\ row re-quantizes bit-identically to the
released artifact---asserted per layer, in the integer domain, over all
$1.409\times10^9$ constrained weights, on every merge. Cells are
mean$\pm$std; the last column gives the number of seeds (recall/LAMBADA,
where they differ because the cross-domain harness was added after the first
seed). \textbf{The two perplexity columns disagree by construction}:
rehearsal is drawn from WikiText-train, so WikiText-test flatters every
method, while LAMBADA is untouched by rehearsal and prices the true cost.
Rows A and A+ are the same runs before and after healing. The \emph{reh.}\ column gives the fresh-rehearsal fraction: the bottom two rows were run at $0.3$ because the paired test of \S\ref{sec:cost} needs the constrained and unconstrained arms at the same fraction as each other; the matched $0.1$ dense row directly above them is the one to compare against A, A+ and CellFill. \emph{bits/pt} is
the efficiency measure of \S\ref{sec:efficiency}: probed bits of knowledge
absorbed per point of LAMBADA perplexity given up. This table is
generated from the archived result files by
\texttt{scripts/make\_tables.py}.}
\label{tab:frontier}
\end{table}

\subsubsection{Cost is not proportional to absorption: methods differ in
efficiency}\label{sec:efficiency}
It would be convenient if cross-domain damage simply tracked how much a
method absorbs. It does not, and the gap between methods is large. Ranking
the invariant paths by \emph{bits absorbed per point of cross-domain
perplexity} (Table~\ref{tab:frontier}, \emph{bits/pt}) gives
three tiers, not a five-way ordering. The figures below are computed per
seed and then averaged, at a common rehearsal fraction of $0.1$, which is
why they differ by a few points from the table's own \emph{bits/pt} column,
where the ratio is taken between the columns' means: clip-merge
$917\!\pm\!183$ and CellFill $r{=}16$ $1125\!\pm\!205$; A+ $568\!\pm\!88$ and
CellFill $r{=}64$ $583\!\pm\!59$; projected dense $312\!\pm\!10$. The tiers
separate by $3.2\times$ and that separation is far outside the spreads. The
ordering \emph{within} a tier is not resolved and we do not claim it---indeed
A+ and CellFill $r{=}64$ change places depending on whether the ratio is
taken per seed or between means, which is precisely why we now take it per
seed. What the tiers show is that the exchange rate is \emph{not} the recall
ordering: projected dense at rehearsal $0.3$ has near the highest recall and
the worst exchange rate, CellFill $r{=}16$ nearly the reverse, and CellFill
$r{=}64$ is the only arm that is near the top of both.

This is the quantity a deployment actually optimizes, and it separates two
regimes that raw recall conflates. Bounded low-rank paths stay near the
anchor and buy knowledge cheaply but cannot buy much of it; full-rank paths
buy a lot at a steeply rising price. The efficient operating points are in
the middle tier: CellFill at rank 64 absorbs $1.5\times$ what CellFill
$r{=}16$ does at roughly half its efficiency, while projected dense at
the same rehearsal fraction absorbs $8\%$ \emph{less} for $1.7\times$ the
cross-domain damage ($23.5$ points against $13.7$) and without the structural
guarantee. That comparison is a tier crossing and survives the spreads; the
choice between A+ and CellFill $r{=}64$ inside the middle tier does not, and
should be made on the guarantee rather than on these numbers. The bounded
reparameterization therefore does not trade recall for its guarantee against
the projected dense path: on these three seeds it is ahead on both.

\subsubsection{What does exact invariance cost?}\label{sec:cost}
The obvious worry is that forbidding the $0.54\%$ of weights that an
unconstrained run moves out of its cell from going where the optimizer
wants must cost accuracy. We measure it directly: identical dense training,
identical seeds, the only difference being whether the projection is
applied after each step. Across three paired seeds the constrained arm
recovers $58.9\!\pm\!7.0\%$ of facts and the unconstrained arm
$59.3\!\pm\!8.7\%$; the paired difference is $-0.47$ recall points with a
95\% interval of $[-5.0, +4.0]$ ($t=-0.45$, $\mathrm{df}=2$). We therefore cannot exclude a cost of five recall points, nor a gain of
four; the point estimate is indistinguishable from zero, while the unconstrained arm
moves $7.5$--$7.8$ million weights ($0.54\%$) out of their cells per run
and destroys the artifact.

On the strict axis the constrained arm is ahead: LAMBADA $70.98$ versus
$74.83$ (seeds 1--2 only; seed 0 predates the cross-domain harness, so
this pair is $n{=}2$ where the recall test above is $n{=}3$). Constraining the update is not merely affordable here; it is
weakly better, for the reason developed in \S\ref{sec:regularizer}.
The paired analysis is what we report: a single-seed reading of the same
runs gives ``1.8 recall points'' of cost, which is an artifact of $n{=}1$
against a between-seed standard deviation of $7$--$9$ points.

\paragraph{What it costs to run, against the thing it replaces.}
The unconstrained arm above is the closest control we have to full-parameter
fine-tuning of the released model: every one of the $1.409\times10^9$
constrained weights is trainable and free to move. Priced against it,
CellFill at rank 64 trains $69{,}730{,}304$ parameters---$20\times$
fewer---in $12.0$ minutes against $20.1$, with the Adam state falling from
$10.5$\,GB to $0.52$\,GB, and it recovers $66.7\!\pm\!8.6\%$ of the facts,
nominally more than the control's $59.3\%$ though the spreads overlap, so we
claim only that the constrained path does not cost recall. It gives up little
more than half as much cross-domain ability in the process (LAMBADA
$42.33\!\pm\!1.99$ against $74.83\!\pm\!5.57$). The two arms also differ in
kind at the end: the control produces a new checkpoint that replaces the
release, while the constrained run ships an increment that re-quantizes to
the release bit-for-bit and can be dropped to recover it exactly. The
comparison is therefore not only cheaper---it returns a different kind of
object.

\subsubsection{Which anchors admit an invariant fill}\label{sec:floor}

The rounding clause of Prop.~\ref{prop:inv} is a hypothesis about the
anchor, not about the method, and it is the one place where a released
checkpoint can refuse the contract outright. Our implementation stores
anchors in fp16---the choice that makes a sequence fit on one card at
27B---so the floor derived after Prop.~\ref{prop:inv} applies:
a block of scale $s<3.70\times10^{-5}$ puts its weights in the format's
subnormal range, where the spacing no longer shrinks with $s$ while the cell
does, and the $\mu W_i$ clearance is consumed by the write-back.

Scanning every block of four anchors against that floor:

\begin{center}
\begin{tabular}{lrrr}
\toprule
anchor & smallest served scale & blocks below & weights frozen \\
\midrule
Qwen3-1.7B-Base NF4 & $2.754\times10^{-3}$ & $0/2.20\times10^{7}$ & $0$ \\
Qwen3-4B-Base NF4 & $1.882\times10^{-5}$ & $1/5.68\times10^{7}$ & $64$ \\
Qwen3-8B-Base NF4 & $9.872\times10^{-7}$ & $4253/1.09\times10^{8}$ & $272{,}192$ \\
Qwen3.8-27B NF4 (ours) & $1.700\times10^{-4}$ & $0/3.80\times10^{8}$ & $0$ \\
Qwen3-32B NF4 & $1.110\times10^{-6}$ & $37/4.88\times10^{8}$ & $2{,}368$ \\
\bottomrule
\end{tabular}
\end{center}

The column that matters is the first, and it is not ordered by size: the
27B anchor, $2.435\times10^{10}$ constrained weights, clears the floor by
$4.6\times$, while the 8B release carries served scales $37\times$ beneath it and the 32B release carries thirty-seven such blocks out
of $4.9\times10^{8}$. Where they sit is more regular than how many there are:
at 8B they are confined to \texttt{layers.\{1,2,3\}.mlp.gate\_proj} and at 32B
to \texttt{layers.\{1,2,5\}.mlp.gate\_proj}, in both cases scattered over
near-dead output rows in runs of one or two rather than in contiguous
regions.

The first column says \emph{served} scale and means it. Every anchor here
ships with double quantization on---blocksize $64$, the scale vector itself
coded at $8$ bits in groups of $256$---and in this regime the served scale
stops tracking the weights. Turned off, the recovered scale is bit-exactly
$\max|w|$ over each run of $64$: at 1.7B, over all $2.20\times10^{7}$ blocks,
the two agree to $0.000000$. Turned on, they do not. The 8B release serves
$15{,}903$ \emph{negative} scales, and a maximum of absolute values is never
negative. The 27B is the case where both sides can be computed, its dense
release being on disk: $574$ blocks whose true $\max|w|$ falls under the
floor are served scales of $1.70$ to $4.61\times10^{-4}$, so none of them is
flagged, while a run of $48$ consecutive blocks in
\texttt{layers.9.linear\_attn.in\_proj\_qkv} collapses onto the single
constant $-4.606\times10^{-4}$ though their true scales range over
$1.5\times10^{-5}$ to $1.3\times10^{-4}$. The second level cannot resolve
this range, and its error runs both ways: it pushes some blocks under the
floor that are not, and lifts others out of sight that are.

None of that reaches the guarantee, because the served scale is the one the
guarantee is stated over. It is what \texttt{bitsandbytes} dequantizes with,
what the cells are built from, and what re-quantization compares against, so
a fill computed against it is inside the cells that will actually be used.
It does mean the question ``does this anchor admit an invariant fill'' is not
answerable from the base weights. Two things decide it and only one is in the
weights: the checkpoint decides whether blocks exist whose $\max|w|$ falls
orders below their layer's typical block, and the quantization run decides
what scale is then served for such a block. Neither is recoverable from the
other, and the check has to read the file that will ship.

The consequence is measurable, and we found it by running into it. Every
sequence in this paper before the 8B one is at 1.7B, whose served scales
clear the floor by $74\times$ everywhere. Two of them are not clean: one
weight of $1.4\times10^{9}$ leaves its cell at the sixth fold of each of the
two unconsolidated $r{=}64$, $s{=}40$ arms (\S\ref{sec:rehearse-old}). That
is a different event, and the floor cannot explain it---at $74\times$
clearance no 1.7B block is near the storage limit. The first 8B
sequence reported $19{,}791$ violated weights after its first fold and then
failed at the second. The cause is the storage and nothing else: with the
identical fold, identical anchors and identical fill,

\begin{center}
\begin{tabular}{lr}
\toprule
anchors stored & violations after one fold \\
\midrule
fp16 & $37{,}443$ \\
fp32 & $0$ \\
\bottomrule
\end{tabular}
\end{center}

over $6.946\times10^{9}$ weights, and all $37{,}443$ sit in blocks with
$s\le2.869\times10^{-5}$---beneath the $3.70\times10^{-5}$ floor, with none
above it. Neither the anchor extraction nor the fill is implicated: the
stored nibbles and the codes we assign disagree in zero places, no anchor
lies outside its own cell, and saturating the link without folding violates
nowhere.

Enforcing the floor---zero room for weights the storage cannot hold---costs
almost nothing, because the blocks it excludes are the ones whose cells were
negligible to begin with:

\begin{center}
\begin{tabular}{lr}
\toprule
Qwen3-8B-Base, $\mu=1\%$, fp16 anchors & \\
\midrule
weights frozen & $272{,}192$ of $6.946\times10^{9}$ ($3.9\times10^{-5}$) \\
room given up & $7.19\times10^{-9}$ of the total \\
violations after one fold & $0$ \\
violations after a second fold & $0$ \\
\bottomrule
\end{tabular}
\end{center}

Seven parts per billion of the writable room, because a block at
$s\approx6\times10^{-6}$ has cells four orders of magnitude narrower than the
median block's. The check is cheap enough to run before a sequence rather
than after it, and we do: \texttt{check\_anchor\_floor.py} reports a
candidate anchor's smallest block scale against the floor of whatever
precision the anchors will be stored at.

Two ways out exist for an anchor that fails, and both cost more than
freezing. Storing anchors in fp32 removes the floor entirely and doubles the
anchor memory---$48.7\rightarrow97$~GB at 27B, which is what makes it
unavailable at the scales where it would matter. Widening $\mu$ trades room
for clearance uniformly and does not help: the floor falls only as
$1/\mu$, so reaching the 8B minimum of $9.872\times10^{-7}$ would take
$\mu\approx38\%$, which spends three quarters of every cell to rescue
$3.9\times10^{-5}$ of the weights.

\subsubsection{Geometry of the safe region}\label{sec:geometry}
\begin{figure}[t]\centering
\includegraphics[width=\linewidth]{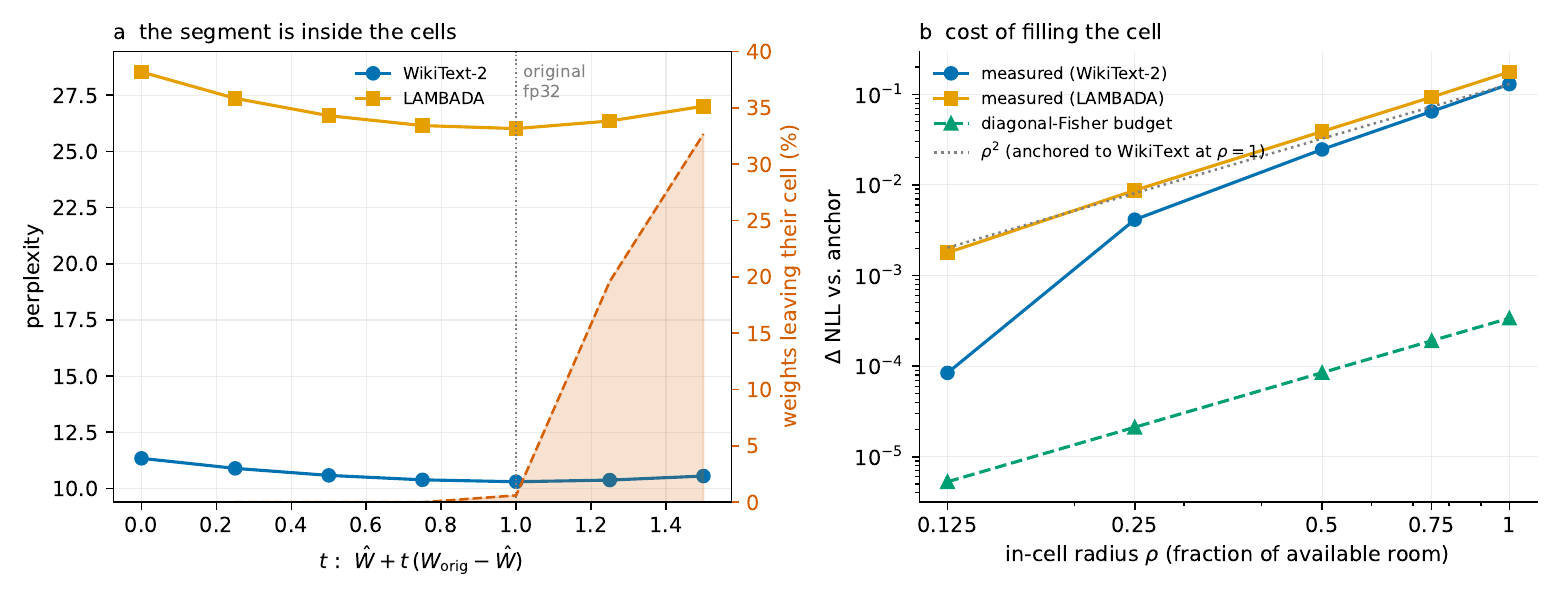}
\caption{(a) Walking from the 4-bit anchor to the released checkpoint's own
weights and past them. Those weights ship in bf16 and the walk is computed in
fp32, so nothing is lost along the segment and the endpoint is the file the
quantizer was given, not an fp32 model. Loss falls monotonically along the
whole segment while essentially no weight leaves its cell; beyond the
released weights, loss turns up and codes break catastrophically
($19.6\%$ escaped at $t{=}1.25$).
(b) Cost of random in-cell perturbation at radius $\rho$, two seeds. The
diagonal-Fisher budget tracks the wrong magnitude by up to $383\times$; the
measured exponent is 2.43 against a predicted 2.}
\label{fig:geometry}
\end{figure}

How much room is there, and what does using it cost? Two measurements
answer this directly (Fig.~\ref{fig:geometry}).

First, the segment from the anchor $\hat W$ to the original weights
$W_{\mathrm{orig}}$ lies inside the cells---by the RTN property, and
confirmed by code assignment: $0\%$ of weights escape up to $t{=}0.5$ and
$0.61\%$ at $t{=}1$ (those are original weights sitting inside the 1\%
safety margin). Along it, perplexity falls monotonically ($11.34\to10.30$
in-domain, $28.54\to26.02$ on LAMBADA): the dequantization gap is not a
loss barrier but a downhill corridor. Extrapolating past $t{=}1$ reverses
both: loss rises and escapes jump to $19.6\%$ then $32.7\%$. The safe
region and the useful region coincide, which is the geometric reason this
method works at all.

Second, filling the cells with \emph{random} content costs
$11.34\to12.9$ in-domain and $28.5\to34.1$ cross-domain at $\rho{=}1$,
but only $11.34\to11.39$ at $\rho{=}0.25$. This is not a pricing of the
radius knob, which is how it reads and is wrong: the random fill draws
$u\sim U(-1,1)$ with mean square $1/3$, a trained fill has a different
distribution the archive does not record (the logged $\mathbb{E}|t|$ of
$0.44$--$0.52$ bounds its mean square between $0.20$ and $0.52$), and damage
is quadratic in the displacement to leading order
(Proposition~\ref{prop:budget}; the measured exponent here is $2.4$). The
two should be matched on Fisher-weighted displacement, not on $\rho$ or
$\mathbb{E}|t|$. What the measurement does establish is that the cells can
be filled to a quarter of their radius with \emph{no} structure at all for
$0.05$ perplexity in-domain ($0.25$ on LAMBADA), which bounds from below how
much room the update has before anything it does becomes visible.

\textbf{The grid is not a curvature estimate, and that is what makes it
auditable.} It is natural to read the cells as more than a box: absmax scaling
gives a narrow cell to a small weight, small weights are often the sensitive
ones, so staying inside the cell might amount to following the local curvature
for free. That is a claim about a correlation and it can be measured. Taking
the diagonal Fisher on the released model and the room from its frozen codes
and scales, the rank correlation between the two over $2\times10^7$ sampled
weights is $-0.175$, and $-0.109$ on average over the 196 matrices
individually, where it ranges from $-0.317$ to $+0.133$
(\texttt{experiments/exp\_curvature\_cells.py}). The sign is the one the
reading wants---sorted into deciles by room, the narrowest tenth of cells
carries $5.3\times$ the mean Fisher of the widest, and the trend falls
$12\times$ from the first decile to the ninth before turning back---but a grid
that approximated curvature would show a strong rank correlation rather than
this, and on some matrices the sign reverses outright. Together with the
budget's $15.9$--$383\times$ underestimate above, the honest description is
that the release hands out a per-weight trust region which is free,
data-independent and weakly aligned with curvature in the favourable
direction, and which is not a curvature estimate. The independence is what
makes the guarantee checkable: a radius derived from curvature would depend on
calibration data and could not be recomputed by a third party from the shipped
file alone.

Repeating the whole measurement on Qwen3-4B replicates the structure and
sharpens one conclusion while weakening another. The segment behaves
identically---monotone descent from anchor to original weights
($8.99\to8.46$), $0.81\%$ of weights outside their cells at $t{=}1$, then
$19.7\%$ and $32.7\%$ at $t{=}1.25$ and $1.5$, within a tenth of a point of
the 1.7B escape fractions, so the geometry of the safe region appears not to
depend much on scale. What does depend on scale is the price of using it:
filling the cells at $\rho{=}1$ costs the 4B model $+6.8\%$ in-domain and
$+6.1\%$ cross-domain against $+13.6\%$ and $+19.4\%$ at 1.7B---roughly half
---which is the same direction as the 27B absorption results
(\S\ref{sec:scale}). Against that, the exponent we measure is $1.7$--$1.9$
at 4B where it was $2.43$ at 1.7B: the quadratic prediction of
Proposition~\ref{prop:budget} brackets both, but the deviation does not
replicate even in sign, and we draw no conclusion from it. The diagonal
Fisher underestimates by $278\times$ at 4B against $383\times$ at 1.7B, so
that failure is not a small-model artifact either.

\subsubsection{The constraint is not only a tax: it reduces cross-domain
forgetting}\label{sec:regularizer}
The natural null hypothesis for this entire paper is ``why not simply keep
the adapter unmerged?''\ --- serving $\hat W$ plus an unconstrained LoRA
costs nothing and preserves the artifact trivially, by never touching it.
Comparing the two is a single-intervention experiment: identical training,
identical adapter, the only difference being whether the update is projected
into the cells. On held-out LAMBADA the projected model is better in
\emph{every} run whose optimization converged---all sixteen of
Table~\ref{tab:regularizer}, spanning five model sizes from $0.6$B to $27$B,
two families and both attention mechanisms:

\begin{table}[t]\centering\small
\setlength{\tabcolsep}{4pt}
\begin{tabular}{lcccccc}
\toprule
 & recall & WikiText & LAMBADA & release & survives & worst-case \\
 & (507) & ppl & ppl $\downarrow$ & bits & merging & drift \\
\midrule
released anchor (4-bit) & $1.6\%$ & $11.71$ & $28.67$ & --- & --- & --- \\
\midrule
LoRA merged, $\eta{=}2\!\times\!10^{-4}$ & $90.9\pm2.9$ & $11.26\pm0.01$ & $\mathbf{35.5\pm1.8}$ & changed & n/a & unbounded \\
LoRA merged, $\eta{=}10^{-3}$ & $0.0\pm0.0$ & $>10^{3}$ & $>10^{4}$ & changed & n/a & unbounded \\
LoRA served unmerged & $\mathbf{95.5\pm1.5}$ & $11.55\pm0.03$ & $44.1\pm3.3$ & unchanged & \textbf{no} & unbounded \\
\midrule
CellFill, $\eta{=}10^{-3}$ & $93.3\pm2.6$ & $11.89\pm0.08$ & $43.6\pm2.6$ & \textbf{unchanged} & \textbf{yes} & \textbf{known before training} \\
\bottomrule
\end{tabular}
\caption{The null hypothesis, priced. One corpus (1{,}164 facts), one base
(Qwen3-1.7B-Base), rank $64$, $24$ epochs, $10\%$ replay, each method at its
own best learning rate, all scored on the same $507$ probes. Every row except
the divergent one is three seeds, mean$\pm$sd, and every row was produced by
one scoring function---the arms were re-run for this table rather than
collected from the archive, because the archive's arms were scored by two
different versions of it. The $\eta{=}10^{-3}$ LoRA row is the one exception
and predates the re-run; reproduced since under today's code it diverges
again on three seeds of three, at two to three and a half orders of
magnitude above the anchor against the archived four---the direction is
not a quantity a scoring change moves, and both runs are reported. Read the first three columns and no method wins them
all: recall separates no pair of the three arms at this sample size---the
largest gap, merged against unmerged LoRA, is $t{=}2.5$ on four degrees of
freedom---LoRA merged
gives up the least cross-domain ability, and the unmerged adapter---the other
configuration that leaves the released bits alone---gives up as much as
CellFill does. Read the last three and the columns are about a different
thing. ``Survives merging'' is the operational difference: an in-cell update
can be merged into the weights and re-quantization still returns the released
codes, so the serving stack is not asked to promise that it will never merge;
an adapter merged is a different file, which is what the second row shows
happening at the learning rate the first row cannot use.}
\label{tab:vslora}
\end{table}

The comparison the null hypothesis asks for is in Table~\ref{tab:vslora}, and
we report it the way it came out. Recall does not separate the three arms:
CellFill $93.3\pm2.6$, a merged LoRA $90.9\pm2.9$, an adapter served unmerged
$95.5\pm1.5$, three seeds each over the same $507$ probes. What does separate
them is cross-domain cost, and there CellFill is behind the merged adapter by
$8.2$ points of LAMBADA perplexity ($43.6\pm2.6$ against $35.5\pm1.8$,
$t{=}4.5$ on four degrees of freedom). That is the honest disadvantage in this
table and we do not qualify it.

What it is a disadvantage \emph{of} is the next question, and the third row
answers it. Serving the adapter unmerged is the other way to leave the
released bits untouched, and it costs $44.1\pm3.3$---indistinguishable from
CellFill's $43.6\pm2.6$ ($t{=}-0.2$). The two configurations that preserve the
released file pay the same cross-domain price; the one that does not preserve
it pays less. Inside LoRA the same trade appears as an internal one: merging
the adapter moves recall from $95.5$ to $90.9$ and LAMBADA from $44.1$ to
$35.5$, so the merge degrades the update and recovers cross-domain ability in
one motion. What this table prices is therefore the cost of writing this much
into a served file, and not a tax charged by the cell bound.

Two retractions belong here rather than in a footnote. An earlier draft
described the trade as equal recall bought with cross-domain cost; that was an
artifact of scoring the arms on different probe sets---$580$ for LoRA, $507$
for CellFill---and restricting both to the smaller set removed it. A later
draft then said that a merged LoRA dominates the first three columns, at
$84.9\%$ against $74.0\%$ recall. That was one CellFill seed, scored by a
superseded version of the scoring function, set against three LoRA seeds. With
three seeds on every arm and one scorer, the recall gap is gone and its sign
is reversed, though not significantly (\S\ref{sec:retracted}).

Two facts sit beside that one. The first is the row below it: at
$\eta{=}10^{-3}$, the learning rate CellFill uses on this corpus, LoRA
diverges on three seeds out of three in the archived run (four orders of
magnitude above the anchor) and on three of three again under today's
code (two to three and a half orders). Its advantage therefore lives inside a window that has to be
found by tuning, and the same shape appears again in a different
parameterization at a different scale---at 8B, a raw low-rank fill with the
cell bound removed has no setting between $10^{-5}$ and $5\times10^{-4}$ that
leaves zero weights outside their cells, the escaped fraction running from
$0.21\%$ to $11.26\%$ monotonically in the learning rate, while recall does
not rise with it---the sweep's best sits at $2\times10^{-4}$ and falls beyond,
so the rate keeps buying escape without buying recall
(\S\ref{sec:loop8b}). Two independent instances of one trade.

The second is what the last three columns are for, and it is the answer to the
null hypothesis rather than a consolation for the first three. Serving an
adapter unmerged leaves the released bits alone, as ours does, and on this
corpus it recalls more. What it cannot do is survive being merged. The moment
an operator folds it into the weights---to remove a per-token matmul, to ship
a single file, to hand a fleet one artifact instead of two---the codes change
and the released model is gone. An in-cell update is merged into the weights
by construction and the file that results re-quantizes to the codes it
shipped with, checked on every constrained weight. The $12.8$ LAMBADA points
are the price of not having to promise that no one will ever merge.

\begin{table}[t]\centering\small
\setlength{\tabcolsep}{4pt}
\resizebox{\linewidth}{!}{
\begin{tabular}{llrrrr}
\toprule
run & model & anchor & unmerged & projected & $\Delta$ \\
\midrule
\texttt{exp22\_100k} & Qwen3-1.7B & 28.67 & 503.03 & 46.37 & $-456.65$ \\
\texttt{exp26\_lora\_lr1e-3} & Qwen3-1.7B & 28.67 & 311.08 & 39.81 & $-271.27$ \\
\texttt{exp17\_aplus10k\_xdom} & Qwen3-1.7B & 28.67 & 108.63 & 36.84 & $-71.79$ \\
\texttt{exp34\_real\_aplus} & Qwen3-1.7B\,\footnotesize(real) & 28.67 & 51.59 & 36.90 & $-14.70$ \\
\texttt{exp8\_e96\_qwen3-1.7b} & Qwen3-1.7B & 28.67 & 46.33 & 33.71 & $-12.62$ \\
\texttt{exp51\_lora4b\_lr1e-4\_s0} & Qwen3-4B-bnb-4bit\,\footnotesize(real) & 24.39 & 42.37 & 30.20 & $-12.18$ \\
\texttt{exp51\_lora\_2e4\_allaug\_s0} & Qwen3-1.7B\,\footnotesize(real) & 28.71 & 47.72 & 37.46 & $-10.26$ \\
\texttt{exp12b\_27b\_e24} & Qwen3.8-27B & 20.03 & 30.92 & 21.25 & $-9.66$ \\
\texttt{exp51\_lora\_lr2e-4\_s0} & Qwen3-1.7B\,\footnotesize(real) & 28.70 & 46.14 & 36.52 & $-9.62$ \\
\texttt{exp51\_lora\_lr2e-4\_s2} & Qwen3-1.7B\,\footnotesize(real) & 28.70 & 44.17 & 35.25 & $-8.91$ \\
\texttt{exp51\_lora\_2e4\_allaug\_s2} & Qwen3-1.7B\,\footnotesize(real) & 28.71 & 43.38 & 34.87 & $-8.51$ \\
\texttt{exp51\_lora\_lr2e-4\_s1} & Qwen3-1.7B\,\footnotesize(real) & 28.70 & 42.27 & 34.61 & $-7.65$ \\
\texttt{exp51\_lora\_2e4\_allaug\_s1} & Qwen3-1.7B\,\footnotesize(real) & 28.71 & 41.30 & 34.04 & $-7.25$ \\
\texttt{exp20\_aplus\_0p6} & Qwen3-0.6B & 38.99 & 47.38 & 42.74 & $-4.64$ \\
\texttt{exp51\_lora4b\_lr5e-5\_s0} & Qwen3-4B-bnb-4bit\,\footnotesize(real) & 24.39 & 31.07 & 27.78 & $-3.29$ \\
\texttt{exp25\_mistral\_a\_lr5e-5} & Mistral-7B-v0.3 & 14.00 & 17.66 & 15.58 & $-2.09$ \\
\texttt{exp1c\_s1} & Qwen3-1.7B & 28.67 & 33.90 & 32.00 & $-1.90$ \\
\texttt{exp1c\_s2} & Qwen3-1.7B & 28.67 & 32.75 & 31.47 & $-1.28$ \\
\texttt{exp12\_27b\_e8} & Qwen3.8-27B & 20.03 & 22.29 & 21.32 & $-0.97$ \\
\midrule
\multicolumn{6}{l}{\footnotesize\emph{unmerged optimization already diverged ($>100\times$ anchor):}} \\
\texttt{exp23\_mistral\_a} & Mistral-7B-v0.3 & 14.00 & 7512.28 & 3165.85 & $-4346.43$ \\
\texttt{exp51\_lora\_lr1e-3\_s0} & Qwen3-1.7B\,\footnotesize(real) & 28.70 & 6168.10 & 6217.47 & $+49.36$ \\
\texttt{exp51\_lora\_lr1e-3\_s2} & Qwen3-1.7B\,\footnotesize(real) & 28.70 & 7115.55 & 7446.50 & $+330.95$ \\
\texttt{exp51\_lora4b\_lr2e-4\_s0} & Qwen3-4B-bnb-4bit\,\footnotesize(real) & 24.39 & 9875.85 & 11307.56 & $+1431.70$ \\
\texttt{exp51\_lora\_lr1e-3\_s1} & Qwen3-1.7B\,\footnotesize(real) & 28.70 & 8547.63 & 16899.51 & $+8351.88$ \\
\texttt{exp26\_lora\_lr3e-3} & Qwen3-1.7B & 28.67 & 6420.07 & 115458.72 & $+109038.65$ \\
\texttt{exp51\_lora4b\_lr1e-3\_s0} & Qwen3-4B-bnb-4bit\,\footnotesize(real) & 24.39 & 7093.20 & 2083988.78 & $+2076895.58$ \\
\bottomrule
\end{tabular}
}
\caption{The paper's answer to its own strongest objection---why not simply
serve the adapter unmerged? Every archived run that recorded LAMBADA at both
stages, enumerated from the archive rather than selected: the same training,
the same adapter, the only difference being whether the update is projected
into the cells. Above the rule are the sixteen runs whose optimization
converged, spanning five model sizes, two families and both attention
mechanisms; projection is better on every one. Below it are the seven whose
unmerged optimization had already exceeded $100\times$ its anchor, where
projection is a trust region rather than a repair and helps only once.}
\label{tab:regularizer}
\end{table}

The box acts as a per-weight trust region whose radii the quantization grid
supplies for free, and clipping pulls the update back toward the anchor
along exactly the coordinates that moved farthest. The effect is largest
where drift is largest: $-456.7$ perplexity points on the $10^5$-fact run,
whose unmerged adapter reaches LAMBADA $503.0$ against $46.4$ after
projection, $-271.3$ on a rank-16 LoRA at lr $10^{-3}$ and $-71.8$ on the
$10^4$-fact A+ run, against $-0.97$ on the gentlest---so it is
self-strengthening exactly when it is most needed.
The seven rows below the rule bound the claim. All are runs whose
\emph{unmerged} optimization had already exceeded $100\times$ its anchor, so
none tests whether projection regularizes training; they test what projection
does to a diverged update, and the answer is not stable. On Mistral it
removes more than half the damage ($7512\to3166$); on the other six it makes
matters worse, by $49$ points at the mildest and by six orders of magnitude
at the worst ($7093\to2.1\times10^{6}$ on 4B at lr $10^{-3}$), because
clipping the weights of an incoherent update---$70.2\%$ of them on plain LoRA
at $3\times10^{-3}$---leaves an update that is incoherent \emph{and}
truncated, retaining $18.2\%$ of its norm. Projection is a trust
region, not a repair: it constrains a converging optimization and cannot
rescue one that has already failed. This is the
robustness argument for the constraint that survives cross-domain
scrutiny---and unlike the in-domain perplexity gains of
\S\ref{sec:rehearsal}, it is attributable to the constraint itself, since
projection is the only variable that differs. The in-domain metric agrees at
a second operating point: with facts-only training and no rehearsal at all,
the unmerged adapter degrades WikiText from $11.71$ to $24.60$ while the
clipped merge scores $22.59$, so what forgetting there is belongs to the
training distribution and not to the constraint, which reduces it. Kernel and
dtype numerics are excluded throughout by an anchor-rebuild control
(bitsandbytes 4-bit $11.727$ against an fp32 rebuild of the same anchors at
$11.710$).

\subsubsection{Post-hoc projection vs.\ constraint-aware training}\label{sec:posthoc}
The clip rate diagnoses the gap between $P_{\mathcal C}(\arg\min f)$ and
$\arg\min_{\mathcal C} f$ (Prop.~\ref{prop:proj}). At 8 epochs the LoRA
delta clips at $0.14\%$ and projection is free; at 24 epochs it clips at
$2.4$--$4.6\%$, costing 3.7 recall points without rehearsal and
$1.9$--$2.4$ with it; at $10^4$ facts it clips at $30.4\%$
and path A collapses to 4.4\% recall---while four epochs of projected
healing recover 19.1\%, \emph{above} the unconstrained rank-16 adapter
itself (14.7\%): dense in-cell degrees of freedom compensate the low-rank
bottleneck. Halving $\rho$ costs only $13\%$ of healed recall ($31.5\to27.3\%$,
$n{=}1$), which is what makes $\rho$ a usable dial; how the damage itself
scales is measured directly in \S\ref{sec:geometry}, where halving $\rho$
shrinks measured $\Delta$NLL by a factor $0.19$---an exponent of $2.4$,
not $2$.
Two further ablations locate the binding constraint. Quadrupling exposures
(96 epochs, rank 16) moves healed recall only $37.9\!\to\!40.5\%$ while the
clip rate quadruples: more optimization time is not what is missing.
Rank is. At matched rehearsal, raising CellFill's rank from 16 to 64 lifts
recall from $44.4\!\pm\!6.8\%$ to $66.7\!\pm\!8.6\%$ (three seeds each) while
cross-domain perplexity rises from $33.40\!\pm\!0.59$ to
$42.33\!\pm\!1.99$---so rank buys knowledge, and buys it at a worsening
exchange rate ($1125\!\pm\!205$ down to $583\!\pm\!59$ bits per point).
The rank-16 point is a single seed against a rank 64 spread of $8.6$ recall
points, so the size of the rank effect is not resolved here; its direction
is. Capacity in this regime
is rank-limited rather than time-limited, and the rank knob is a position on
the efficiency curve rather than a free improvement.

We report this ablation twice because the first version of it was wrong in
an instructive way. The rank-16 and rank 64 runs were originally compared
across different rehearsal fractions ($0.3$ vs.\ $0.1$), which inflated the
apparent rank effect from $+29.8$ to $+41.1$ recall points---and the
paper's own \S\ref{sec:rehearsal} supplies the evidence that rehearsal
fraction matters that much. Rerunning rank 16 at matched rehearsal also
moved its cross-domain perplexity from $45.81$ to $33.06$, i.e.\ the
confound was distorting both axes at once. Comparisons in
Table~\ref{tab:frontier} now carry their rehearsal fraction explicitly.

\subsection{Write: what one minor version puts into a released file}\label{res:write}
\subsubsection{Real knowledge, verified absent}\label{sec:realcorpus}
Everything above injects synthetic biographies. They buy a guarantee no real
corpus can: novelty is certain because the facts are generated. They also
invite the reading that the result is a property of templated text, so we
repeat the experiment on real knowledge and establish the same guarantee by
measurement instead.

\textbf{Absence, not recency.} We assembled 291 dated facts from eight
domains: FDA novel drug approvals (108), Powerball draws (99), Rocket Lab
launches (35), Formula One races (28), the Abel Prize and 2026 Fields Medals
(11), and smaller sets from science, film and news. Each is probed
zero-shot on the released model before any training. It is tempting to call
these post-cutoff facts, and an earlier version of this section did: FDA
approvals from 2024--2026 score $0/324$ while household drugs
(Tylenol/acetaminophen, Ozempic/semaglutide) score $53\%$, which looks like a
boundary in time. It is not one. Rocket Lab missions score $0/35$ after 2024
and $0/14$ before it, back to 2018; Abel Prize recipients score $0/5$ back to
2016, Andrew Wiles included. What separates the controls that pass from those
that fail is fame, not recency---a 1.7B base model holds very little of this
kind of fact at any date. So we claim only what we measure: these facts are
absent from the model, verified per domain with a positive control from that
same domain, because a zero with no control is indistinguishable from a
broken prompt format.

\textbf{Every probe is a generalization test.} Two of the three synthetic
probes are near-verbatim continuations of their training sentence
(\S\ref{sec:setup}); none of the real ones is. A check over the merged corpus
asserts that no training sentence begins with a probe prompt---it caught 52
that did, in the award and mission templates, which would have manufactured a
domain effect out of probe construction alone.

\begin{center}\small
\begin{tabular}{lrrrr}
\toprule
domain & probes & guess floor & base model & after injection \\
\midrule
medicine (FDA) & 324 & 0.3\% & 0.0\% & 94.1\% \\
lottery (Powerball) & 99 & 1.0\% & 0.0\% & 100.0\% \\
technology (launches) & 35 & 5.9\% & 0.0\% & 100.0\% \\
sports (F1) & 28 & 20.0\% & 21.4\% & 100.0\% \\
awards (Abel/Fields) & 11 & 12.5\% & 18.2\% & 72.7\% \\
science & 4 & 25.0\% & 0.0\% & 100.0\% \\
film (Oscars) & 4 & 25.0\% & 0.0\% & 100.0\% \\
news & 2 & 50.0\% & 0.0\% & 100.0\% \\
\midrule
all & 507 & --- & 1.6\% & 95.7\% \\
\bottomrule
\end{tabular}
\end{center}

Recall goes from $1.6\%$ to $95.7\%$ across 507 probes, at a cost of
$28.67\to48.29$ LAMBADA and $11.71\to11.93$ WikiText, with $1.54\%$ of the
available cell space used and the artifact bit-identical, in $11.2$ minutes.
The floors are reported per domain because they span two orders of magnitude:
28 Formula One races share five winners, so the base model's $21.4\%$ there is
guessing and only the injected $100\%$ is not. The four smallest domains are
listed for coverage and their rates should not be quoted.

\textbf{The Powerball domain is the strictest test in this paper, and it
recovers the bit accounting the rest of the real corpus gives up.} A draw is
one of $\binom{69}{5}\times26=292{,}201{,}338$ equally likely outcomes, so each
carries exactly $28.12$ bits and the 99 draws are $2{,}784$ bits of
incompressible information. Nothing semantic helps: no prior makes
$10\text{--}21\text{--}58\text{--}61\text{--}64$ more plausible than any other
draw. All 99 are recalled exactly.

That result nearly did not survive its own measurement. Under a fixed
16-token generation budget the domain read $0/99$, and the natural reading
---that semantically arbitrary knowledge is qualitatively harder---was a
finding we were prepared to report. The model was in fact emitting
\texttt{\ldots\ Powerball 1} where \texttt{\ldots\ Powerball 11} was
expected: the budget cut the last token off a 25-token answer and prefix
matching scored the truncation as a miss. The budget is now taken from the
longest answer present. Exact-match recall cannot distinguish a near miss
from a failure, and the same trap is available to anyone scoring long-form
recall against a constant.

\textbf{On a published artifact.} Every anchor so far was a file we
quantized ourselves. The claim is about files other people ship, so we
repeat the injection on unsloth's published NF4 releases of Qwen3-1.7B, 4B
and 8B, loaded exactly as published---codes, block scales and double
quantization untouched---and checked against the originals with
\texttt{experiments/verify\_official\_anchor.py}. The first attempt scored
$88.8\%$ on the 507 probes against $95.7\%$ on our own anchor ($78.6\%$
over all 580 probes, composition included, which is the number in our
log), and the gap was not the anchor: padded the same way, the two models'
first-batch losses agree to $0.001$ nats ($3.531$ against $3.532$). The
release's tokenizer is configured for left padding, which puts a label---the
sentence's first word---on the final pad token, so every training sequence
carries a spurious ``predict the opening word from padding'' term ($3.53$
nats right-padded against $3.87$ left-padded, same release, same batch).
With training padded on the right regardless, the published 1.7B anchor
reaches $94.3\pm2.2\%$ on the 507 probes over three seeds ($96.8\%$ for
the seed that the other anchors were run with), the 4B release
$96.9\pm0.4\%$ over two and the 8B release $97.6\%$ on its first seed
($96.3\pm2.1\%$ over the seeds run), each from a base below $2\%$
(Table~\ref{tab:official} in \S\ref{sec:scale}). We report the detour
because it is the kind of defect that produces a real-looking effect---a
published artifact ``harder to write into'' than our own---with nothing
behind it but a tokenizer flag.

\subsubsection{Editing methods: verbatim recall does not generalize}\label{sec:baselines}
Recall in both editing baselines depended on the probe having been supplied
as an edit, GRACE~\cite{grace} falling from $91.1\%$ to $2.0\%$ and
MEMIT~\cite{memit} from $22.7\%$ to $9.5\%$ once the edits were restricted to
the 108 ingredient probes and all 507 were scored
(Table~\ref{tab:baselines}). CellFill $r{=}64$, trained on declarative
sentences and scored on the same 507 probes, reached $95.7\%$. GRACE holds
its edits in a codebook that a served call must consult, whereas the served
CellFill model pays a forward pass and nothing else. CellFill paid for its
recall in cross-domain perplexity instead, and the two costs must be read
against different anchors: GRACE runs on the unquantized model, whose LAMBADA
is $26.13$, and leaves that number untouched to sixteen digits because it
never writes to a weight; CellFill runs on the 4-bit release, whose LAMBADA
is $28.67$, and takes it to $48.29$. The comparable quantities are therefore
$+0.0$ and $+19.6$ points over each method's own starting model, not
$26.13$ against $48.29$.

The two editors ran on the same facts and were scored with the same ruler.
MEMIT solves for a closed-form update to a band of MLP matrices; GRACE, the
closest existing analogue on revocability, keys a codebook of adapter values
on the edit prompts and consults it at inference. The comparison is
asymmetric by construction and the asymmetry favours the editors. They take
(prompt, target) pairs, so under ``verbatim'' each is handed every probe
prompt as an edit (one edit per probe, $507$ of $507$, which is training on
the test set) and scored on exactly what it was given. Under ``generalize''
only the 108 ingredient probes are edited and all 507 are scored, which is
the condition CellFill is always in. MEMIT is a batch method and is run as one: applied
sequentially over 108 edits it destroys the model (WikiText $20{,}348$,
archived as the record of the misuse). The EasyEdit~\cite{easyedit}
harness restores the original weights after scoring by default; our first
batch run reported the untouched model's recall and perplexity after two
hours of solving for deltas it discarded, and that record is archived too.

\begin{table}[t]\centering\small
\setlength{\tabcolsep}{4pt}
\resizebox{\linewidth}{!}{
\begin{tabular}{lrrrll}
\toprule
method & recall & WikiText-2 & LAMBADA ppl$\downarrow$ & artifact & at inference \\
\midrule
MEMIT, verbatim & 22.7\% & 10.74 & 26.48 & no & no \\
MEMIT, generalize & 9.5\% & 10.69 & 26.23 & no & no \\
GRACE, verbatim & 91.1\% & 10.66 & 26.13 & yes & codebook \\
GRACE, generalize & 2.0\% & 10.66 & 26.13 & yes & codebook \\
CellFill $r{=}64$, 4 phrasings & 95.7\% & 11.93 & 48.29 & yes & no \\
\bottomrule
\end{tabular}
}
\caption{Editing baselines on the real corpus, Qwen3-1.7B-Base. ``artifact''
is whether the released 4-bit file survives the update; ``at inference'' is
what a served call pays beyond a forward pass. Perplexities are the
repository's own measurement on the same corpora as every other table.}
\label{tab:baselines}
\end{table}

The aggregate hides the shape of the result, so Table~\ref{tab:baseldom}
breaks GRACE's strongest condition out by domain against CellFill's
ordinary one.

\begin{table}[t]\centering\small
\setlength{\tabcolsep}{3pt}
\resizebox{\linewidth}{!}{
\begin{tabular}{lrrrrrrrr}
\toprule
method & awards & film & lottery & medicine & news & science & sports & technology \\
\midrule
GRACE, one edit per probe (train-on-test) & 18\% & 100\% & 100\% & 100\% & 0\% & 0\% & 79\% & 34\% \\
CellFill $r{=}64$, declarative corpus, paraphrased probes & 73\% & 100\% & 100\% & 94\% & 100\% & 100\% & 100\% & 100\% \\
\bottomrule
\end{tabular}
}
\caption{Per-domain recall: GRACE handed every probe as an edit, against
CellFill trained on declarative sentences and probed on paraphrases and
reversals.}
\label{tab:baseldom}
\end{table}

GRACE with every probe as an edit reaches $91.1\%$ with the weights
untouched---its WikiText is the released model's to sixteen digits---and
still scores $0\%$ on two domains and $18\%$ on a third even though every
one of those probes was handed to it as an edit: GRACE keys its codebook on
the edit prompt, prompts that share a template land close together, and a
later edit inside an existing key's ball displaces the earlier one. Under
the honest condition its recall is the untouched model's to the last digit,
$2.0\%$: the codebook never fires on a prompt it was not keyed on. CellFill,
in its honest condition, beats the train-on-test GRACE on five of the eight
domains, ties it at $100\%$ on two, and trails it only on medicine ($94\%$
against $100\%$). The contract axes are where the methods differ in kind
rather than degree: GRACE keeps the release intact by adding a structure at
inference; CellFill keeps it intact and adds nothing, and what it writes is
a bounded displacement that a later writer can add to
(\S\ref{sec:fusion}); whether two GRACE codebooks merge is not tested here.

\textbf{Two other PEFT families leave the codes intact, and neither can
carry the corpus.} LoRA is the natural comparison because its update lives in
the same place CellFill's does---on the weight matrix---but it is the wrong
test of the contract, since an unmerged adapter preserves the artifact
trivially. The sharper question is what happens to methods whose update lives
somewhere the 4-bit codes do not reach at all. Two exist. Prefix
tuning~\cite{prefix} trains key/value states prepended at attention and never
writes to a weight; LN tuning trains only the LayerNorm parameters, which a
4-bit release leaves unquantized. Both therefore ship the released integer
codes unchanged, by a completely different route from in-cell learning. On
the 291-fact real corpus and the published NF4 Qwen3-1.7B, with the learning
rate swept per method rather than borrowed from LoRA, both cap at $1.8\%$
recall---LN at $3\times10^{-3}$ with $57{,}344$ trainable parameters and
WikiText $11.69$, prefix at $10^{-2}$ with $3.67$M trainable and WikiText
$12.13$---and the cap holds across a $30\times$ range of rates, so it is a
property of where the update lives and not of tuning. CellFill on the same
corpus and anchor reaches $94.3\pm2.2\%$. Preserving the codes is therefore
not by itself the achievement; preserving them \emph{and} writing a corpus
into the weights is. (IA3~\cite{ia3} belongs in this comparison and is
reported as not run: it raises on a dtype mismatch inside the fused attention
kernel, and still raises on the eager path.)

\textbf{LoRA recalled the facts only at a lower learning rate.} We trained
a LoRA adapter of the same rank on the same corpus, with the same
rehearsal and three seeds per configuration, at $2\times10^{-4}$ (its
default) and at the $10^{-3}$ CellFill uses throughout. The CellFill
arm is on unsloth's published NF4 release; the LoRA arm is on our own
bitsandbytes quantization of the same model, which agrees with the release
to $0.001$ nats on the first batch (\S\ref{sec:realcorpus}).

\begin{table}[t]\centering\small
\setlength{\tabcolsep}{4pt}
\resizebox{\linewidth}{!}{
\begin{tabular}{lrrrrl}
\toprule
method & recall & \multicolumn{2}{c}{WikiText} & LAMBADA ppl$\downarrow$ &artifact \\
 & served & unmerged & served & served & \\
\midrule
LoRA $r{=}64$, $\eta{=}2\times10^{-4}$ & $90.9\!\pm\!2.9$\% & 11.55 & 11.26 & 35.46 & 8.2\% of weights clipped \\
LoRA $r{=}64$, $\eta{=}10^{-3}$ & $0.0\!\pm\!0.0$\% & 796.48 & 4,084.89 & 10,187.83 & 61.7\% of weights clipped \\
CellFill $r{=}64$, $\eta{=}10^{-3}$ & $94.3\!\pm\!2.2$\% & --- & 11.91 & 43.22 & bit-identical \\
\quad 4B: LoRA $r{=}64$, $\eta{=}10^{-3}$ & 0.0\% & 789.02 & 3,217,228.00 & 2,083,988.78 & 54.6\% of weights clipped \\
\quad 4B: LoRA $r{=}64$, $\eta{=}2\times10^{-4}$ & 0.0\% & 711.59 & 4,943.06 & 11,307.56 & 27.0\% of weights clipped \\
\quad 4B: LoRA $r{=}64$, $\eta{=}10^{-4}$ & 97.6\% & 9.38 & 9.17 & 30.20 & 8.0\% of weights clipped \\
\quad 4B: LoRA $r{=}64$, $\eta{=}5\times10^{-5}$ & 94.9\% & 8.53 & 8.52 & 27.78 & 1.2\% of weights clipped \\
\quad 4B: CellFill $r{=}64$, $\eta{=}10^{-3}$ & $96.9\!\pm\!0.4$\% & --- & 9.93 & 42.34 & bit-identical \\
\bottomrule
\end{tabular}
}
\caption{LoRA against CellFill on the 291-fact real corpus and Qwen3-1.7B
in NF4: CellFill on unsloth's published release, LoRA on our own
bitsandbytes quantization of the same model ($r{=}64$, $\alpha{=}128$,
rehearsal $0.1$, 24 epochs, three seeds, recall over the 507
non-composition probes). ``unmerged'' is the adapter's own perplexity
served on top of the 4-bit base; ``served'' is the model actually shipped.
LoRA's served model is the adapter folded into the release and projected
back into the cells, which is what the clipped fraction counts.}
\label{tab:lorareal}
\end{table}

At $\eta=10^{-3}$, the rate CellFill uses throughout, CellFill recalled
$94.3\pm2.2\%$ of the 507 probes and left the released 4-bit file
bit-identical, while LoRA at the same rate and rank recalled
$0.0\pm0.0\%$ (Table~\ref{tab:lorareal}). LoRA's served model gave a
WikiText perplexity of $4{,}084.89$ and a LAMBADA perplexity of
$10{,}187.83$, and folding the adapter into the release clipped $61.7\%$
of the constrained weights back into their cells. LoRA reached
$84.9\pm3.4\%$ only at $2\times10^{-4}$, and at that rate the merge still
clipped $8.5\%$ of the constrained weights, so what is served is a clipped
version of the adapter rather than the release, and every evaluation of
that release has to be repeated.

With the same corpus, rate and rank, LoRA did not merely do worse: it
stopped being a language model. This is
the sharpest statement the paper can make about \emph{why} a per-weight
trust region is worth having, and it is not about invariance at all: the
quantization grid hands out a bounded, per-weight region for free, and
optimizing inside it converts the failure mode of aggressive updating from a
cliff into a gradient. A method that cannot diverge can be run at learning
rates that a method that can diverge cannot. Whether that alone accounts
for the gap at $2\times10^{-4}$ would need CellFill at that rate on this
corpus, which we have not run; what the table shows is that the rate
CellFill uses is one LoRA cannot survive.

\subsubsection{Knowledge in the weights or in the prompt}\label{sec:rag}
Retrieval is the better instrument on the PopQA tail when its index is
clean, and the worse one when the index is chunked as a production store
chunks it; the fill does not move between the two conditions, which are
identical in their questions. Retrieval-augmented generation leaves the
release untouched and puts the
knowledge in the context; a wiki the model consults is the same idea with
a human index. Both preserve the artifact trivially, so the comparison is
about what each costs and where each fails, and it has to be made on facts
the model does not already have. PopQA~\cite{popqa} is 14k questions over
Wikidata triples, each tagged with its subject's Wikipedia page views. We
take the tail: subjects under 300 monthly views, the guessable relations
dropped (a team's sport is in its name; colours; ``capital of'', whose
listed answers are noisy), one fact per subject, the 2{,}000 most obscure.
Each fact is stated in three template sentences with the object written as
PopQA writes it, and \emph{these sentences are the documents}: the
retriever indexes exactly the text the fill was trained on, so neither
side sees a different corpus. The probe is PopQA's own question in a fixed
four-shot format, scored by PopQA's rule (an alias appears in the answer
line; whole-word, aliases of at least three characters). The index is the
6{,}000 sentences plus 200{,}000 paragraphs of WikiText-103, which is what
an index of anything real looks like; a second condition (\emph{chunks})
puts each sentence in the middle of one of those paragraphs, as a chunked
document store would. Retrieval is BM25 and a dense encoder
(bge-small-en-v1.5); the oracle arm hands the model the sentences that
state the answer; the fill is the released model with the facts written
into its cells; the released model alone is the floor, and the probes it
misses---$84\%$ of them---are the realistic case, reported separately
(\texttt{experiments/build\_popqa.py}, \texttt{experiments/exp\_rag.py}).

\begin{table}[t]\centering\small
\setlength{\tabcolsep}{3.5pt}
\begin{tabular}{lllrrrrr}
\toprule
model & index & arm & recall & on unknown & answer in context & tokens/probe & s/probe \\
\midrule
Qwen3-1.7B & sentences & none & 15.7\% & 0.0\% & 0.0\% & 75 & 0.092 \\
Qwen3-1.7B & sentences & oracle & 98.4\% & 98.1\% & 100.0\% & 226 & 0.108 \\
Qwen3-1.7B & sentences & bm25@1 & 97.1\% & 96.6\% & 98.0\% & 92 & 0.094 \\
Qwen3-1.7B & sentences & bm25@5 & 97.7\% & 97.3\% & 99.4\% & 397 & 0.132 \\
Qwen3-1.7B & sentences & dense@1 & 98.4\% & 98.0\% & 99.6\% & 90 & 0.093 \\
Qwen3-1.7B & sentences & dense@5 & 98.7\% & 98.5\% & 100.0\% & 236 & 0.114 \\
Qwen3-1.7B & sentences & fill & 81.8\% & 78.5\% & 0.0\% & 75 & 0.207 \\
Qwen3-1.7B & sentences & fill+bm25@1 & 98.3\% & 98.0\% & 98.0\% & 92 & 0.214 \\
Qwen3-1.7B & sentences & fill+bm25@5 & 97.2\% & 96.9\% & 99.4\% & 397 & 0.281 \\
Qwen3-1.7B & chunks & none & 15.8\% & 0.0\% & 0.0\% & 75 & 0.173 \\
Qwen3-1.7B & chunks & oracle & 97.5\% & 97.4\% & 100.0\% & 754 & 0.239 \\
Qwen3-1.7B & chunks & bm25@1 & 63.4\% & 57.5\% & 63.7\% & 212 & 0.192 \\
Qwen3-1.7B & chunks & bm25@5 & 73.8\% & 69.9\% & 79.6\% & 827 & 0.244 \\
Qwen3-1.7B & chunks & dense@1 & 31.5\% & 23.9\% & 29.4\% & 199 & 0.189 \\
Qwen3-1.7B & chunks & dense@5 & 40.5\% & 33.4\% & 49.2\% & 686 & 0.228 \\
Qwen3-1.7B & chunks & fill & 81.7\% & 78.3\% & 0.0\% & 75 & 0.413 \\
Qwen3-1.7B & chunks & fill+bm25@1 & 78.5\% & 74.8\% & 63.7\% & 212 & 0.428 \\
Qwen3-1.7B & chunks & fill+bm25@5 & 81.4\% & 78.4\% & 79.6\% & 827 & 0.520 \\
Qwen3-4B & sentences & none & 16.4\% & 0.0\% & 0.0\% & 75 & 0.233 \\
Qwen3-4B & sentences & oracle & 98.8\% & 98.6\% & 100.0\% & 226 & 0.272 \\
Qwen3-4B & sentences & bm25@1 & 97.3\% & 96.8\% & 98.0\% & 92 & 0.237 \\
Qwen3-4B & sentences & bm25@5 & 98.3\% & 98.1\% & 99.4\% & 397 & 0.328 \\
Qwen3-4B & sentences & dense@1 & 98.9\% & 98.7\% & 99.6\% & 90 & 0.239 \\
Qwen3-4B & sentences & dense@5 & 98.9\% & 98.9\% & 100.0\% & 236 & 0.283 \\
Qwen3-4B & sentences & fill & 89.8\% & 87.9\% & 0.0\% & 75 & 0.721 \\
Qwen3-4B & sentences & fill+bm25@1 & 98.7\% & 98.4\% & 98.0\% & 92 & 0.733 \\
Qwen3-4B & sentences & fill+bm25@5 & 99.0\% & 98.7\% & 99.4\% & 397 & 0.925 \\
Qwen3-4B & chunks & none & 16.4\% & 0.0\% & 0.0\% & 75 & 0.237 \\
Qwen3-4B & chunks & oracle & 98.0\% & 97.7\% & 100.0\% & 754 & 0.382 \\
Qwen3-4B & chunks & bm25@1 & 63.2\% & 57.4\% & 63.7\% & 212 & 0.273 \\
Qwen3-4B & chunks & bm25@5 & 74.9\% & 70.7\% & 79.6\% & 827 & 0.390 \\
Qwen3-4B & chunks & dense@1 & 30.0\% & 23.6\% & 29.4\% & 199 & 0.263 \\
Qwen3-4B & chunks & dense@5 & 41.6\% & 35.2\% & 49.2\% & 686 & 0.354 \\
Qwen3-4B & chunks & fill & 89.8\% & 87.9\% & 0.0\% & 75 & 0.712 \\
Qwen3-4B & chunks & fill+bm25@1 & 85.4\% & 82.6\% & 63.7\% & 212 & 0.787 \\
Qwen3-4B & chunks & fill+bm25@5 & 92.1\% & 90.6\% & 79.6\% & 827 & 1.060 \\
\bottomrule
\end{tabular}

\caption{The PopQA tail (2{,}000 facts the released models answer at
$16\%$), the same facts in the prompt or in the weights. ``on unknown'' is
recall on the probes the released model missed; ``answer in context'' is
the retriever's hit rate; tokens and seconds are per probe. The
\emph{chunks} index holds each fact inside a WikiText-103 paragraph.}
\label{tab:popqa}
\end{table}

With a clean sentence index, retrieval does not miss: BM25 at $k{=}1$ finds
the answer for $98\%$ of probes and the model answers $97\%$, and dense
retrieval at $k{=}5$ answers $98.7\%$ at 1.7B and $98.9\%$ at 4B. The fill
alone answers $82\%$ at 1.7B and $90\%$ at 4B with no index and a
75-token prompt, against $90$--$827$ tokens for the retrieval arms;
fill and retrieval together reach $98$--$99\%$. With the chunked index,
the retrievers miss: BM25 finds the answer in its top five for $80\%$ of
probes and the model answers $74\%$ of them, dense retrieval finds $49\%$
and answers $41\%$, and the oracle---the right chunks handed over---still
costs $754$ tokens per probe. The fill is unaffected ($82$ and $90\%$,
above every retrieval arm at both sizes); adding BM25 to it helps at 4B
($92\%$) and hurts at 1.7B, where a wrong chunk in the context pulls a
known answer off. On the repository's own corpus, with the same
200{,}000 distractors, the composition probes show the multi-hop case: a
retriever has to find both sentences, and BM25 at $k{=}3$ drops from $86\%$
of compositions without distractors to $68\%$ with them, while the injected
model with a single retrieved sentence composes $96\%$ of them---the
context supplies the first hop's key and the weights supply the second.
The honest summary is not that retrieval misses; it is that a
single-sentence index does not, a chunked one does, multi-hop does, and
the weights cost an order of magnitude fewer tokens per question and no
index at all, while combining the two is better than either where the
model is large enough to ignore a bad chunk (Fig.~\ref{fig:wvp}).

\begin{figure}[t]\centering
\includegraphics[width=\linewidth]{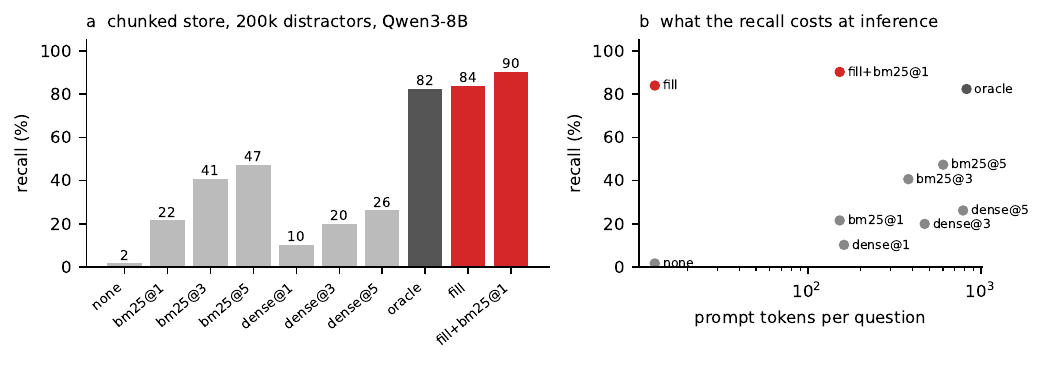}
\caption{Knowledge in the weights against knowledge in the prompt at 8B,
chunked store with 200{,}000 distractors. \textbf{a}, Recall by arm: the
fill (red) with no index beats every retrieval arm and the oracle
passage. \textbf{b}, The same arms by prompt tokens per question.}
\label{fig:wvp}
\end{figure}

\subsubsection{Where the fill is written decides what kind of knowledge arrives}
\label{sec:where}
On the real corpus, a fill restricted to the attention matrices recalled
$79.8\pm3.4\%$ of the 507 probes and answered $1.4\pm1.4\%$ of the 73
two-hop composition probes, an interval that contains zero. A fill
restricted to the MLP recalled $90.0\pm0.8\%$ and kept composition at
$13.2\pm4.8\%$; a fill free to write into every matrix reached
$94.6\pm1.9\%$ and $9.2\pm2.8\%$ (Table~\ref{tab:structure}). The choice of
target matrices therefore sets what kind of knowledge arrives, not only how
much.

That coarse three-way split says which structure to prefer but not why, and
it prices nothing per unit of budget. The sweep below asks the finer
question on synthetic facts, where novelty is certain and the partitions can
be made disjoint: we partition the constrained
matrices and let CellFill ($r{=}64$) write into one partition at a time,
freezing the rest---structural invariance means no clipping confound enters
the comparison. Normalization matters: dividing absorbed bits by
\emph{constrained} weights asks which substrate holds the most, while
dividing by \emph{trainable} parameters asks where a fixed low-rank budget
is best spent. The two disagree, so we give both, plus the price in
cross-domain perplexity.

\begin{center}\small
\begin{tabular}{lrrrrrr}
\toprule
partition & trainable & recall & own anchor & LAMBADA & bits/M$_{\text{train}}$ & bits/pt \\
\midrule
gate+up (all layers) & 29.4M & 32.1\% & 26.82 & 31.01 & \textbf{130} & \textbf{911} \\
MLP (all layers) & 44.0M & 43.7\% & 27.92 & 36.66 & 118 & 594 \\
attention (all layers) & 25.7M & 25.3\% & 26.92 & 31.19 & 117 & 705 \\
MLP layers 19-27 & 14.2M & 13.5\% & 27.48 & 34.09 & 113 & 243 \\
down\_proj (all layers) & 14.7M & 8.9\% & 27.33 & 32.36 & 72 & 210 \\
MLP layers 0-8 & 14.2M & 5.8\% & 26.45 & 27.98 & 49 & 456 \\
MLP layers 9-18 & 15.7M & 6.0\% & 26.33 & 27.90 & 46 & 455 \\
\bottomrule
\end{tabular}
\end{center}
Each partition run quantizes only its own matrices and leaves the rest of
the model at original precision, so the seven runs do \emph{not} share a
baseline; every row is therefore priced against the anchor archived with it.
(An earlier version of this analysis priced all seven against the
fully-quantized model's anchor of 28.67, which made the two smallest
partitions appear to cost nothing. They do not.)

Three findings, two of which cut against the standard picture.

\textbf{Module type barely matters per unit of budget.} gate+up (130),
full MLP (118) and attention (117) fall within 11\% of each other. Measured
per \emph{constrained} weight, attention appears $74\%$ better than MLP---
but that gap is entirely an artifact of attention's smaller matrices under
a per-matrix rank budget, and it disappears under the normalization an
implementer actually pays. We report this explicitly because the wrong
normalization here yields a confident and false headline.

\textbf{Depth dominates module type, but only one end of it is measurable.}
Late MLP layers absorb $113$ bits per million trainable parameters against
$49$ and $46$ for early and middle. We do not report that as a
$2.3\text{--}2.5\times$ ratio, because the two denominators are not
measurements: early and middle MLP recall $5.8\%$ and $6.0\%$, both at or
below the $6.5\%$ marginal-guess floor of \S\ref{sec:limitations}. What the
partition sweep establishes is one-sided --- late MLP absorbs, early and
middle did not absorb anything distinguishable from guessing under this
budget --- and a ratio computed against a floor would put a number on the
difference that the data cannot support. This runs against the localization results of
ROME and MEMIT~\cite{rome,memit}, which place factual associations in
\emph{middle} MLP layers, and against the prescription to write through
\texttt{down\_proj} (worst non-degenerate partition here, 72). We read the
discrepancy as a genuine difference of task rather than a contradiction:
that literature locates where an existing association can be \emph{found and
edited}, whereas we measure where a \emph{new} association is most cheaply
\emph{written} under a norm constraint. The middle layers are not idle---they
show the highest cell saturation of any partition (16.6\%), i.e.\ the
optimizer pushes hardest there and gets the least back.

A second, independent line agrees, and it comes from a different model and a
different quantity. Aggregating the archived per-matrix fill statistics of the
27B run (\texttt{fillstats\_27b\_all}, 496 matrices over 64 layers) by
layer, the fill energy is close to uniform in depth: the four
depth quartiles hold $23.5$, $24.7$, $24.6$ and $27.2\%$ of it, the heaviest
single layer $1.93\%$ and the lightest $1.39\%$. An unconstrained optimizer
given the whole depth does not concentrate the update anywhere; it spreads it.
Read with the partition sweep, which says any single depth band recovers at
most a third of what the full depth does, this closes off the most tempting
engineering shortcut available here---train only the upper layers and cache
activations below them---because the knowledge is not there to be trained.

\textbf{The best target is the same under both prices.} Ranking by damage
rather than by budget---bits absorbed per point of LAMBADA given up---keeps
gate+up on top ($911$) and keeps \texttt{down\_proj} at the bottom ($210$),
with attention ($705$) ahead of full MLP ($594$). Early and middle layers
are not free, as an incorrect shared-anchor calculation initially suggested;
they degrade their own anchors by $1.5$ points, which is little only because
they absorb little. A practitioner optimizing absorbed knowledge per unit of
forgetting should write to the gate and up projections across all depths---
not to \texttt{down\_proj}, and not to the middle of the network.

\textbf{On the real corpus, the composition survives in the MLP and not in attention.}
The partition sweep above used synthetic facts and measured absorption.
Table~\ref{tab:structure} repeats the coarsest split on the real corpus and
the published anchor, and adds the composition probes of
\S\ref{sec:usable}, which is the question the sweep could not ask.

\begin{table}[t]\centering\small
\setlength{\tabcolsep}{4pt}
\begin{tabular}{lrrrrr}
\toprule
fill restricted to & recall (507) & composition & WikiText & LAMBADA ppl$\downarrow$ &saturation \\
\midrule
all matrices (196) & $94.6\!\pm\!1.9$\% & $9.2\!\pm\!2.8$\% & $11.91\!\pm\!0.05$ & $45.0\!\pm\!3.9$ & 1.64\% \\
attention only (112) & $79.8\!\pm\!3.4$\% & $1.4\!\pm\!1.4$\% & $10.51\!\pm\!0.02$ & $32.0\!\pm\!0.7$ & 3.98\% \\
MLP only (84) & $90.0\!\pm\!0.8$\% & $13.2\!\pm\!4.8$\% & $10.84\!\pm\!0.05$ & $36.4\!\pm\!1.8$ & 2.11\% \\
\bottomrule
\end{tabular}

\caption{The fill restricted to one structure, real corpus, published NF4
Qwen3-1.7B, $r{=}64$. Composition is the raw rate on the 73 two-hop
probes. Cells with $\pm$ are over three seeds.}
\label{tab:structure}
\end{table}

Both restrictions cost less than writing everywhere, and attention costs the
least: LAMBADA $32.0\pm0.7$ for attention and $36.4\pm1.8$ for the MLP
against $45.0\pm3.9$ for the unrestricted fill, WikiText $10.51\pm0.02$ and
$10.84\pm0.05$ against $11.91\pm0.05$. The cheapest row is also the one
without composition; the MLP alone carries most of what the full model
carries, the composition included.
Where the fill's energy lands when every matrix is writable
(Fig.~\ref{fig:fillmap}) says the same thing from the other side: across
the eight-domain corpus and each of five single-domain corpora, the gate and
up projections take $53$--$63\%$ of the fill energy, attention $21$--$26\%$
and \texttt{down\_proj} $16$--$23\%$, and the last quarter of the depth
takes the largest share in every corpus. What differs between domains is
small next to what they share; drug approvals and lottery draws are
written into the same place.

\begin{figure}[t]\centering
\includegraphics[width=0.9\linewidth]{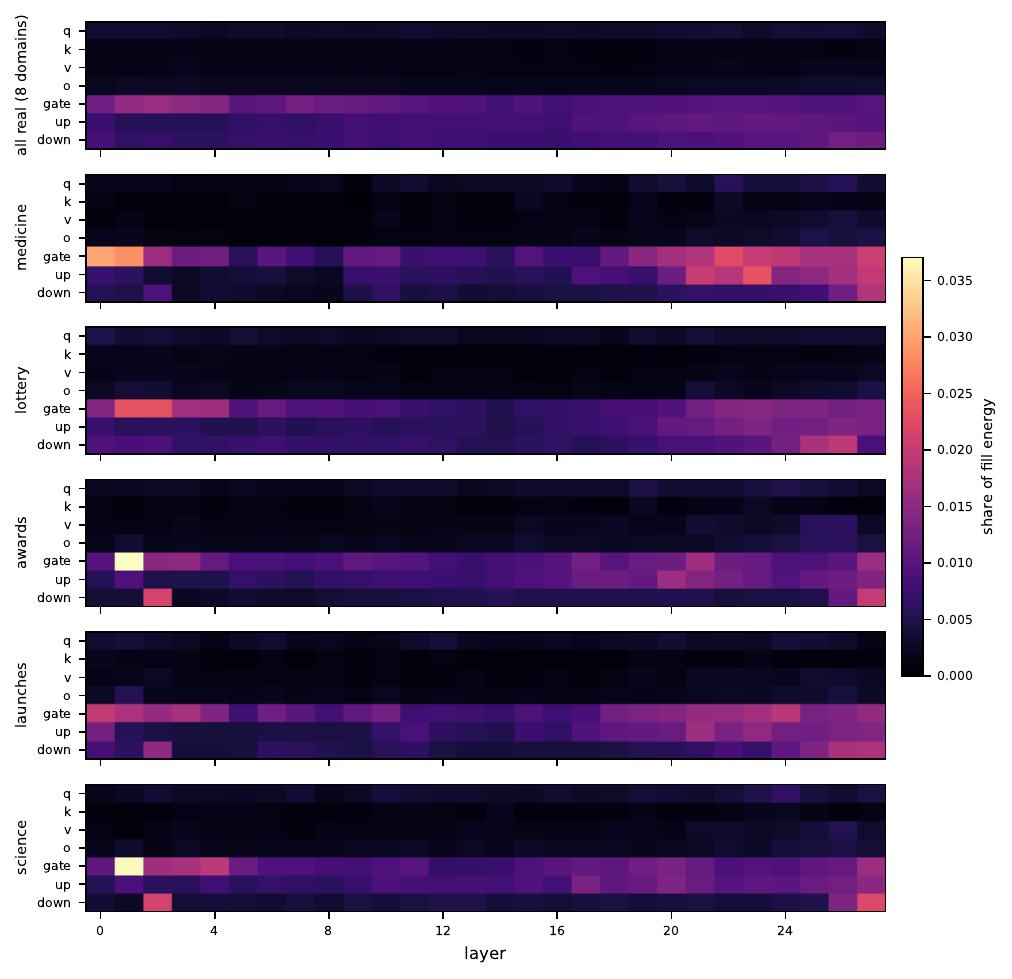}
\caption{Share of the fill's energy by layer and matrix type, one panel per
corpus (the eight-domain corpus and five single-domain corpora), Qwen3-1.7B
published anchor, $r{=}64$. Each panel is normalised to its own total.}
\label{fig:fillmap}
\end{figure}

\subsubsection{Scale, architecture, and family}\label{sec:scale}
The mechanism touches only quantized linear layers, so it should be
indifferent to what surrounds them. We test that on a $12\times$ parameter
ladder within one family, on a second family with a different tokenizer and
initialization, and on a hybrid model whose blocks are gated linear
attention rather than softmax attention.

\textbf{Published anchors.} Table~\ref{tab:official} gathers every run on a
release quantized by someone else, on the real corpus of
\S\ref{sec:realcorpus}, and Figure~\ref{fig:releases} draws its recall
column---one bar per release, including the GGUF file \texttt{llama.cpp}
serves:
the bitsandbytes NF4 releases from unsloth, Google's quantization-aware
W4A16 release of Gemma~4 (a uniform int4 grid with a scale per 32 weights,
written into with the cell arithmetic of \S\ref{sec:theory} applied to
integer cells), and RedHat's W4A16 release of Qwen3-4B, the same model as
the NF4 row on a second grid. ``base'' is the release before any update.
Two things the table shows that the NF4 rows alone could not. The cell
arithmetic carries to a uniform integer grid unchanged: the W4A16 releases
take the corpus at $90.5$ and $93.9\%$ against $94.3$ and $96.9\%$ for the
NF4 releases at the same two sizes, and RedHat's W4A16 Qwen3-1.7B is a worse
quantization to begin with (WikiText $19.5$ against NF4's $11.7$), which the
fill partly repairs. We do \emph{not} read the gap between the two grids as
the grid's effect. The releases do not share a parent: comparing the tensors
neither format quantizes, the unsloth NF4 releases reproduce
\texttt{Qwen3-1.7B-Base} exactly (embedding and all $113$ normalization
tensors identical to the last bit), while RedHat's differ from them by $17$
to $28\%$ of the embedding's range, so they are quantizations of the
post-trained checkpoints rather than of the base ones. Grid and post-training
are confounded in that comparison and only a matched pair would separate
them. And an
instruction-tuned release behaves differently as an anchor: Gemma-4-E2B's
QAT release, evaluated as a plain language model, has a WikiText perplexity
in the hundreds that the fill brings to $17$, so its perplexity ratios sit
below one and mean something else---training on declarative text teaches a
chat-tuned model to continue plain text---and the recall column is the one
to read ($82.9\pm1.0\%$ across two hosts). The 27B rerun with the signed-scale cells completed at $96.4\%$
on the 507 probes with the stored code returned on all $2.4\times10^{10}$
constrained weights; its row is in the table.

\begin{figure}[t]\centering
\includegraphics[width=\linewidth]{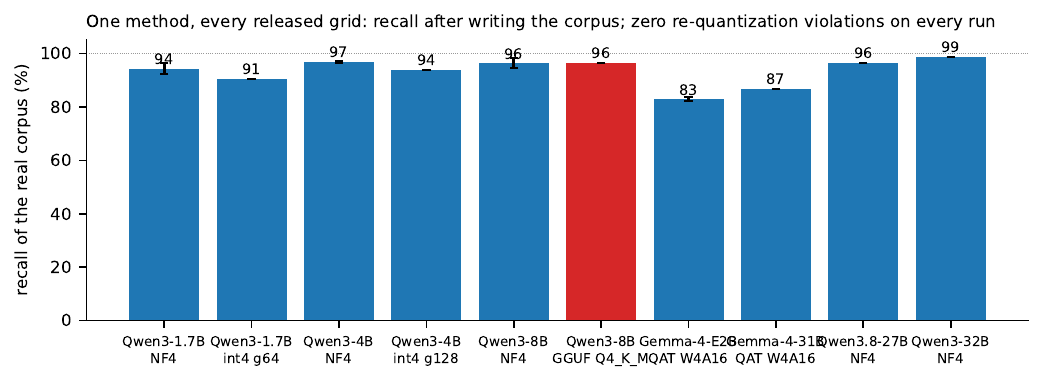}
\caption{Every released grid we wrote into, one bar per release: recall
of the real corpus after the update, with the release bit-identical on
every run (GGUF in red---the file \texttt{llama.cpp} serves). Error bars
span seeds where more than one was run.}\label{fig:releases}
\end{figure}

\begin{table}[t]\centering\small
\setlength{\tabcolsep}{4pt}
\resizebox{\linewidth}{!}{
\begin{tabular}{lrrrrrr}
\toprule
released 4-bit anchor & base & recall & WikiText & LAMBADA ppl$\downarrow$ &saturation & min \\
 & & (507 probes) & ratio & ratio & & \\
\midrule
Qwen3-1.7B (NF4) & 1.4\% & $94.3\!\pm\!2.2$\% & 1.017 & 1.51 & 1.65\% & 14 \\
Qwen3-1.7B (W4A16, uniform int4 g64, asym.) & 0.5\% & 90.5\% & 0.714 & 1.31 & 19.48\% & 83 \\
Qwen3-4B (NF4) & 0.3\% & $96.9\!\pm\!0.4$\% & 1.070 & 1.96 & 2.00\% & 30 \\
Qwen3-4B (W4A16, uniform int4 g128) & 0.2\% & 93.9\% & 0.724 & 1.99 & 2.10\% & 64 \\
Qwen3-8B (NF4) & 1.7\% & $96.3\!\pm\!2.1$\% & 1.014 & 1.90 & 46.53\% & 231 \\
Gemma-4-E2B-it (QAT W4A16 g32) & 0.0\% & $82.9\!\pm\!1.0$\% & 0.057 & 0.23 & 13.57\% & 94 \\
Qwen3.8-27B (self-quantized NF4, stage 4) & -- & 96.4\% & -- & -- & 60.35\% & 676 \\
Gemma-4-31B-it (QAT W4A16 g32) & -- & 86.8\% & -- & -- & 62.63\% & 851 \\
Qwen3-32B (NF4, stage 4) & -- & 98.8\% & -- & -- & 56.67\% & 735 \\
\bottomrule
\end{tabular}
}
\caption{Real knowledge (291 facts, 507 probes) written into published
4-bit releases, each loaded exactly as published and bit-identical after
the update. WikiText and LAMBADA are the served model's perplexity as a
ratio to the release's own; saturation is the fraction of weights whose
fill reached $99\%$ of its cell half-width. Cells are mean$\pm$std over the
seed count in parentheses where more than one seed was run.}
\label{tab:official}
\end{table}

\begin{table}[t]\centering\small
\setlength{\tabcolsep}{4pt}
\begin{tabular}{llrrrr}
\toprule
model & attention & recall & WikiText & LAMBADA & bits/pt \\
\midrule
Qwen3-0.6B  & softmax & 63.8\% & $16.62\!\to\!16.58$ & $38.99\!\to\!59.47$ & 371 \\
Qwen3-1.7B  & softmax & 66.7\% & $11.71\!\to\!11.71$ & $28.67\!\to\!42.33$ & 581 \\
Qwen3-4B    & softmax & \textbf{86.8\%} & $\;\,9.33\!\to\!\;\,9.97$ & $24.41\!\to\!39.35$ & 691 \\
Qwen3-8B    & softmax & 75.9\% & $\;\,8.44\!\to\!\;\,8.42$ & $22.34\!\to\!31.03$ & 1039 \\
Mistral-7B  & softmax & 71.6\% & $\;\,6.62\!\to\!\;\,6.51$ & $14.00\!\to\!21.11$ & 1197 \\
\midrule
Qwen3.8-27B & linear  & 72.1\% & $\;\,7.44\!\to\!\;\,7.27$ & $20.03\!\to\!21.25$ & \textbf{6981} \\
\bottomrule
\end{tabular}
\caption{The matched scale ladder on the synthetic corpus, from $0.6$B to
$27$B and across two attention mechanisms and two families. ``bits/pt'' is
knowledge absorbed per point of cross-domain perplexity, the efficiency
measure of \S\ref{sec:efficiency}. Raw recall is not monotone in scale;
efficiency is.}
\label{tab:ladder}
\end{table}

The first five rows of Table~\ref{tab:ladder} are CellFill $r{=}64$ under
one recipe; the last is clip-merge at 24 epochs, because the first CellFill
implementation did not fit on one 80\,GB A100 at 27B. The obstacle was an
implementation choice and not the method: each wrapped layer stored its
per-weight room $M$ densely, which over $2.435\times10^{10}$ constrained
weights is $48.7$\,GB in bf16 on top of the $12.2$\,GB of 4-bit weights, and
neither gradient checkpointing of the fill (which suffices at 8B) nor
restricting the fill to \texttt{gate\_proj} and \texttt{up\_proj} closes a
$61$\,GB gap. $M$ is a deterministic function of the frozen codes and scales,
so it can be recomputed per block instead of stored; with that change
CellFill does run at 27B, and Table~\ref{tab:official} carries the result
($96.4\%$ on the real corpus, every code returned over all
$2.435\times10^{10}$ constrained weights). The ladder row is left as
clip-merge because the ladder is a matched sweep and re-running the other
five under the new memory path would change more than the 27B point.
Raw recall is
\emph{not} monotone in scale---4B absorbs more than 8B ($86.8\%$ against
$75.9\%$)---and we do not have an explanation we can defend; a larger model
has more to protect, but we have not tested that. Efficiency measured as knowledge per point of cross-domain perplexity rises
steeply and cleanly with scale, $\eta \propto N^{0.42}$ across the five
matched runs---but most of that slope is not ours. Writing
$\Delta=\log(\mathrm{ppl}_1/\mathrm{ppl}_0)$ for the \emph{relative}
cross-domain damage and differentiating
$\log\eta=\log K-\log\mathrm{ppl}_0-\log(e^{\Delta}-1)$ in $\log N$ gives
\[
a_\eta \;=\; a_K \;-\; a_{\mathrm{ppl}_0} \;-\; a_{(e^{\Delta}-1)} ,
\]
and on our ladder $a_K=+0.09$ and $a_\Delta=-0.03$ are both near zero while
$a_{\mathrm{ppl}_0}=-0.30$: the identity closes ($0.09+0.30+0.03=0.42$) and
$71\%$ of the exponent is the base models' own perplexity scaling, inherited
by any metric with perplexity points in its denominator. A point of
perplexity simply means more at a model whose perplexity is lower.

The scale-invariant statement is the one worth making. Knowledge per
\emph{nat} of relative damage, $K/\Delta$, varies by $1.5\times$ across the
ladder (17{,}973 to 27{,}472, cv $17\%$) where $\eta$ varies by $3.2\times$,
and the relative damage itself is flat at $\Delta=0.40$ (cv $15\%$) across a
$12\times$ range in parameters and two model families. What in-cell learning
holds roughly fixed with scale is not the absolute price of knowledge but
the \emph{fraction} of the anchor's cross-domain ability it costs. The 27B point sits far above the $\eta$ trend, but it is a different path,
a different attention mechanism and a single seed, so we quote it as an
observation rather than part of any fit.

We resist the reading this invites. It is tempting to say the price of
in-cell learning falls as models grow---the models actually shipped as 4-bit
artifacts are the large ones---but our own scale-invariant metric does not
show that: $\Delta$ is flat at $0.40$ (cv $15\%$) and $K/\Delta$ spans only
$1.5\times$ with cv $17\%$ across a $12\times$ parameter range. The falling
price is carried by the 27B point, which the paragraph above excludes from
the fit for three separate reasons. A point cannot be excluded from a fit and
then be allowed to carry the conclusion. What the ladder supports is that the
relative cost does \emph{not} rise with scale, which is the property a
deployment needs; whether it falls is unresolved here. The 27B run also shows the difference
is not only quantity: at 1.7B recall is dominated by the probe that
continues the training sentence almost verbatim (city $65.9\%$ against
company $17.7\%$ at $10^4$ facts), while at 27B the paraphrased probe
reaches $53.5\%$ and the gap between probe kinds narrows sharply
(city $74.8\%$, occupation $88.0\%$).

\paragraph{A divergence, and what it says about the bound.}
Our first Mistral-7B run did not work at all: transplanting the 1.7B
hyperparameters ($r{=}16$, lr $2\times10^{-4}$) drove plain LoRA to
perplexity $2311$ \emph{before any merging}, with a $39.6\%$ clip rate and
only $40\%$ of the update norm surviving projection---all signatures of a
diverged optimization, faithfully reproduced by clip-merge. We report it
because the archived file is public and because the contrast is
informative: CellFill on the same model at \emph{five times} that learning
rate ($10^{-3}$) trained cleanly to the $71.6\%$ in the table. This is what
the parameterization predicts. Since the fill is $M\odot\tanh(\cdot)$ with
$|\tanh|<1$, the update cannot leave the cell however large the gradients
become, so the bound is a stability property and not only an invariance
property. Lowering the learning rate to $5\times10^{-5}$ makes plain LoRA work on
Mistral, and work well---$34.0\%$ recall at $2557$ bits per point, the second
most efficient run in this paper---so the divergence was a transplanted
hyperparameter and nothing more. What survives is the margin: the bounded
parameterization was stable at a learning rate $20\times$ larger than the one
plain LoRA needed, which is the practical form the guarantee takes. 

\paragraph{The controlled version of that observation.}
The anecdote above confounds two things, so we ran the matched sweep: one
model (Qwen3-1.7B), one seed, rank $16$, rehearsal $0.1$, $24$ epochs, and
only the parameterization and the learning rate varying.

\begin{center}\small
\begin{tabular}{llrrrrl}
\toprule
lr & parameterization & \multicolumn{2}{c}{perplexity} & recall & LAMBADA ppl$\downarrow$ & bound engaged \\
& & unmerged & served & & & \\
\midrule
$10^{-3}$ & LoRA $\to$ clip-merge & 34.30 & 13.35 & 6.3\% & 39.81 & 63.3\% clipped \\
$10^{-3}$ & CellFill & --- & 10.51 & 34.0\% & 32.85 & 0.1\% saturated \\
\midrule
$3\times10^{-3}$ & LoRA $\to$ clip-merge & 1,272.42 & 20,580.44 & 0.0\% & 115,458.72 & 70.2\% clipped \\
$3\times10^{-3}$ & CellFill & --- & 11.82 & 54.2\% & 39.55 & 47.5\% saturated \\
\bottomrule
\end{tabular}
\end{center}

At $10^{-3}$ plain LoRA has already damaged the model before anything is
merged---WikiText $34.30$ against an anchor of $11.71$, LAMBADA $311.1$
against $28.67$---and clip-merge, which must clip $63.3\%$ of the weights and
keeps $37.5\%$ of the update norm, delivers $6.3\%$ recall. Tripling the
learning rate destroys it outright. CellFill is stable at both settings and
\emph{better} at the larger one: recall $34.0\to54.2\%$, for WikiText
$10.51\to11.82$ and LAMBADA $32.85\to39.55$. The mechanism is in the last
column. The excess gradient never leaves the cells; it is absorbed by
saturation, which rises from $0.1\%$ to $47.5\%$ of coordinates pressed
against $|\tanh|\to1$. Divergence is a cliff and saturation is a ceiling, and the parameterization
converts the first into the second. This is the paper's most direct evidence
on catastrophic forgetting, in the sense the term was coined
for~\cite{mccloskey}---an abrupt collapse of prior capability rather than a
graded cost: at $3\times10^{-3}$
the LoRA arm loses everything it had---WikiText $2.1\times10^{4}$, LAMBADA
$1.2\times10^{5}$, recall $0$---while the bounded arm at the identical
learning rate, seed and rank keeps WikiText within $0.11$ of the anchor and
returns its best recall of the sweep.

The claim this licenses has to be made in three parts, because the strongest
version of it is false and this paper contains the counterexample. First,
collapse \emph{by way of unbounded drift} is structurally unavailable: no
gradient can move a weight past a wall the grid fixed before training began,
so the failure mode in which the served weights wander arbitrarily far from
the release cannot occur. Second, that is a statement about the weights and
not about the function, and staying inside the cells does not by itself make a
configuration safe, and this paper contains a stark counterexample. The last
row of Table~\ref{tab:regularizer} is a clip-merged Qwen3-4B---every weight
inside its cell, the released codes intact, the identity satisfied---whose
WikiText perplexity is $3.2\times10^{6}$ against its anchor's $9.33$, whose
LAMBADA is $2.1\times10^{6}$ against $24.39$, and whose recall is zero on all
nine domains. A guarantee about the artifact is not a guarantee about the
function, and the archive says so by five orders of magnitude. Third, and empirically, bounded \emph{training}
has not produced a collapse in any run in this paper, including the matched
pair above where the unbounded arm fails by two to four orders of
magnitude, across the archived and reproduced runs, at the same learning
rate. The third is the claim a deployer cares about and it is
evidence rather than a theorem: it is a property of the link and of the
optimization under it, not of the box's geometry. What remains in every case
is a graded cost, and \S\ref{sec:cost} and \S\ref{sec:regularizer} measure it. Notably, the higher learning rate at rank $16$
does not substitute for quadrupling the rank: $54.2\%$ at
$r{=}16$ against $66.7\!\pm\!8.6\%$ for $r{=}64$ at $10^{-3}$, though at a
comparable exchange rate ($592$ against $583\!\pm\!59$ bits/pt). An earlier
version of this paper read these as substitutable, on a single rank 64 seed
that turned out to be the worst of three.

Two caveats. This is one seed per cell, so we report the direction, which is
large, and not the size. And the CellFill $10^{-3}$ cell is a repeat of the
configuration in Table~\ref{tab:frontier}, whose seed-0 run recorded
$36.9\%$: the same code, the same seed and the same hyperparameters on
different GPUs (RTX 4090 against A100) differ by $2.9$ recall points, since we do not force
deterministic kernels. That gap is a lower bound on the noise in every $n{=}1$
comparison in this paper, and we have not subtracted it anywhere.

\subsection{Saturate: what stops one update, and what caps its return}\label{res:saturate}
\subsubsection{Capacity is limited by the optimizer, not by the cell space}
\label{sec:capacity-xdom}
CellFill at $r{=}64$ on Qwen3-1.7B took the 291-fact real corpus to
$95.7\%$ recall over 507 probes while using $1.54\%$ of the available cell
space (\S\ref{sec:realcorpus}), and absorption on the two projection paths
had not flattened by $10^4$ synthetic facts at $0.74$--$1.03\%$ cell-space
utilization (Fig.~\ref{fig:capacity}). Storage was not the binding
constraint at either scale; the optimization under the bound was. The
frontier of \S\ref{sec:bfrontier} sharpens this: the optimizer binds through
its parameterization, and granting it one bounded degree of freedom per
weight moves the ceiling further than the rank ladder ever did.

\begin{figure}[t]\centering
\includegraphics[width=0.55\linewidth]{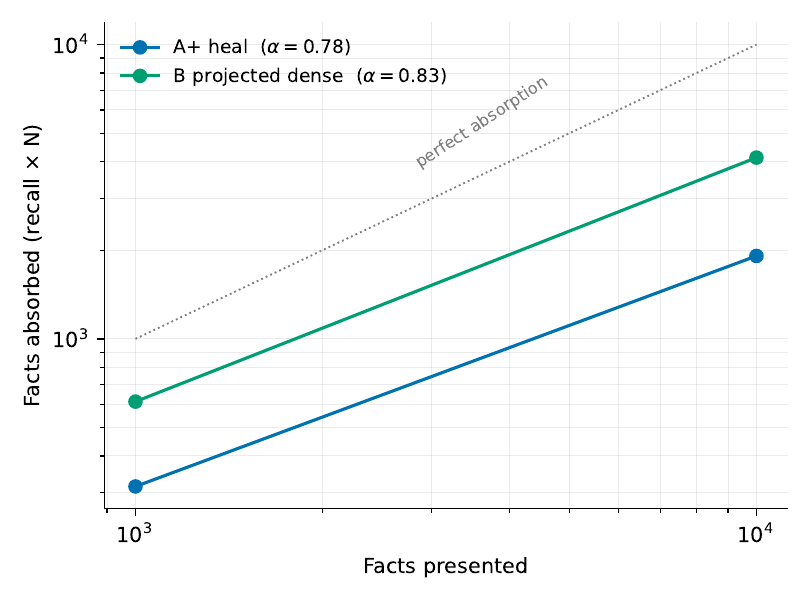}
\caption{Absorbed vs.\ presented facts (log--log), two points per path.
Both invariant paths scale with exponent $\approx\!0.8$ at $0.74$--$1.03\%$
cell-space utilization. Two points cannot resolve a saturation knee; the
claim is only that absorption has not flattened by $10^4$ facts.}
\label{fig:capacity}
\end{figure}

\textbf{With the paper's own method.} The series above was measured on the
two projection paths at $r{=}16$ and rehearsal $0.3$;
Table~\ref{tab:capacity-cellfill} is the same fact counts with CellFill at
$r{=}64$ and rehearsal $0.1$, and it raises a different candidate for
\emph{what} limits capacity. From $10^3$ to $10^4$ facts, absorbed facts grow $667\to3{,}538$
($\alpha\approx0.72$) while the saturated fraction of the fill goes from
$3.7\%$ to $78.4\%$: at $10^4$ facts three quarters of the coordinates have
their $\tanh$ pinned past $0.99$, where the fill is a sign and the gradient
is below $2\times10^{-2}$. Past that the series collapses rather than
bends: $3\times10^4$ facts absorb $3.7\%$ and $10^5$ absorb $3.6\%$ at
$90$--$92\%$ saturation. Whether the knee is the cells filling or the link
dying is a question the bounded parameterization makes askable, and
Table~\ref{tab:capacity-bits} asks it at $10^4$ facts with one knob moved
at a time. Four ways of keeping gradient alive at the same bound and the
same reachable set---a straight-through hard-tanh, a tanh whose backward
derivative is floored at $0.05$, the softsign link whose derivative decays
polynomially, and the same tanh at a quarter of the slope ($s{=}10$)---take
absorption from $22$--$24\%$ (two seeds on this card) to $36$--$39\%$.
Rank moves it the other way past $r{=}32$: $r{=}16$, $32$, $64$ and
$128$ absorb $31$, $44$, $23$ and $13\%$ at $37$, $62$, $78$ and $88\%$
saturation---saturation rises monotonically with rank, and beyond
$r{=}32$ more parameters store less. Narrowing the fill's share of the
cell costs absorption without relieving the link: $23.8\%$ at full width,
$21.1\%$ at a half and $5.6\%$ at a quarter, while the share of coordinates
past $|t|{=}0.99$ \emph{rises} across the three, $78.2$, $82.2$ and $85.8\%$.
A narrower box does not bring the walls closer in the sense that matters---it
scales down the displacement the link is asked to produce, so the link works
harder for less. Two of these saturations need care and the table's column
does not separate them: that column is computed on the \emph{scaled}
displacement, which at width $\tfrac12$ cannot exceed $\tfrac12$ and so reads
$0.0\%$ by construction, while the figures above come from the per-epoch
liveness probe on the unscaled link, which is the quantity the argument
needs. Softsign's $0.0\%$ in the same column is genuine. Four paraphrases of each fact at the same number
of exposures add nine points.
The knee at $10^4$ is therefore the link's dead gradient first and the
cells second: a coordinate at its wall cannot move whatever the gradient,
but most of the coordinates that stopped learning were not at a wall they
needed to be at.

\textbf{Capacity in bits.} Exact-match recall is a threshold on the argmax
and swung by ten points between seeds at $10^3$ facts; a law should be
written in information. For every attribute $a$ of every fact the model's
log-probability of the true value, renormalized over the generator's
vocabulary $V_a$, gives the residual uncertainty $H(a\mid\text{model})$
and the bits stored, $\log_2|V_a|-H(a\mid\text{model})$, clipped at
zero; the released model scored the same way with its fills switched off
is the floor, and the law is written in the excess over it
(\texttt{experiments/eval\_bits.py}; the bit-complexity measure
of~\cite{physics33} on cloze prompts). At $10^4$ facts the fills hold
$4$--$7$ bits of each fact's $21.7$, $4\times10^4$--$7\times10^4$ bits in
all, which is $0.6$--$3\times10^{-3}$ bits per trainable parameter and
$3$--$5\times10^{-5}$ per written cell: three orders of magnitude below
the two bits per parameter a model stores at full
precision~\cite{physics33}. The parametrization is nowhere near its
information capacity; what limits it is the optimization under the bound,
which is what the links repair.

\begin{table}[t]\centering\small
\setlength{\tabcolsep}{4pt}
\resizebox{\linewidth}{!}{
\begin{tabular}{llrrrrrr}
\toprule
facts & fill & bits/fact & total bits & bits/param & bits/cell & recall & saturation \\
\midrule
3{,}000 & r16 & 11.24 of 21.7 & 3.37e+04 & 0.0019 & 2.4e-05 & 37.6\% & 4.5\% \\
3{,}000 & r256 & 11.15 of 21.7 & 3.35e+04 & 0.0001 & 2.4e-05 & 37.8\% & 78.1\% \\
10{,}000 & r16 & 4.97 of 21.7 & 4.97e+04 & 0.0029 & 3.5e-05 & 31.1\% & 36.9\% \\
10{,}000 & r32 & 4.98 of 21.7 & 4.98e+04 & 0.0014 & 3.5e-05 & 44.2\% & 61.7\% \\
10{,}000 & r64 (s1) & 2.28 of 21.7 & 2.28e+04 & 0.0003 & 1.6e-05 & 23.8\% & 78.2\% \\
10{,}000 & r128 & 0.54 of 21.7 & 5.41e+03 & 0.0000 & 3.8e-06 & 12.7\% & 87.9\% \\
10{,}000 & r64, width 1/2 & 2.41 of 21.7 & 2.41e+04 & 0.0003 & 1.7e-05 & 21.1\% & 0.0\% \\
10{,}000 & r64, hardtanh-STE & 4.41 of 21.7 & 4.41e+04 & 0.0006 & 3.1e-05 & 39.4\% & 84.5\% \\
10{,}000 & r64, tanh floor 0.05 & 4.22 of 21.7 & 4.22e+04 & 0.0006 & 3.0e-05 & 36.3\% & 77.0\% \\
10{,}000 & r64, softsign & 5.93 of 21.7 & 5.93e+04 & 0.0009 & 4.2e-05 & 36.1\% & 0.0\% \\
10{,}000 & r64, $s{=}10$ & 6.70 of 21.7 & 6.70e+04 & 0.0010 & 4.8e-05 & 38.9\% & 28.2\% \\
10{,}000 & r32, $s{=}10$ & 6.50 of 21.7 & 6.50e+04 & 0.0019 & 4.6e-05 & 26.5\% & 6.7\% \\
10{,}000 & r64, 4 paraphrases $\times$ 6 epochs & 1.86 of 21.7 & 1.86e+04 & 0.0003 & 1.3e-05 & 31.1\% & 78.5\% \\
10{,}000 & r64, 4 paraphrases $\times$ 24 epochs & 18.18 of 21.7 & 1.82e+05 & 0.0026 & 1.3e-04 & 91.7\% & 90.6\% \\
10{,}000 & r64, Qwen3-4B & 1.30 of 21.7 & 1.30e+04 & 0.0001 & 3.6e-06 & 27.2\% & 87.9\% \\
\bottomrule
\end{tabular}
}
\caption{What sets the stored information, one knob at a time
(Qwen3-1.7B, synthetic facts, 24 exposures unless stated). Bits are the
excess over the released model's priors, summed over the five attributes;
``bits/param'' is per trainable parameter of the fill; ``saturation'' is
the share of coordinates past $|t|{=}0.99$ at the end of training, computed on
the \emph{scaled} displacement: for the reduced-width rows that quantity is
bounded by the width and reads $0.0\%$ by construction, and the text quotes
the per-epoch liveness probe on the unscaled link instead. Softsign's $0.0\%$
is genuine. The
$r{=}64$, $s{=}40$ tanh baseline on this card absorbs $22$--$24\%$ (two
seeds).}
\label{tab:capacity-bits}
\end{table}

\paragraph{The budget above is spent over every matrix, which is the wrong
default.} Restricting the same real-corpus fill to the MLP gives up $4.6$
points of recall and buys back a point of WikiText, nine points of LAMBADA
and more composition than writing everywhere;
\S\ref{sec:where} gives the three-way split and the seven-partition sweep
behind it. What matters for the capacity accounting here is that the totals
in Table~\ref{tab:capacity-bits} are not a property of the model alone. They
are a property of a target set, and a better target set moves all three axes
at once.

Read together with the partition sweep of \S\ref{sec:where}, the prescription
is narrow and it is the same under both prices. Write to the gate and up
projections, across all depths: they lead on absorbed bits per unit of
trainable budget ($130$) and on bits per point of cross-domain perplexity
given up ($911$), and they sit inside the MLP, which is where the composable
form of the knowledge appears. Do not write to \texttt{down\_proj} ($72$ and
$210$ on the same two measures, the worst non-degenerate partition), and do
not spend the budget on the early or middle third of the network, whose
recall under this budget---$5.8\%$ and $6.0\%$---does not separate from the
marginal-guess floor even though their cells are the most saturated of any
partition. That last pair is the one result here that runs against the
localization literature~\cite{rome,memit}, and \S\ref{sec:where} argues it is
a difference of task---where an association can be found and edited is not
where a new one is most cheaply written---rather than a contradiction.

\begin{table}[t]\centering\small
\setlength{\tabcolsep}{4pt}
\begin{tabular}{lrrrrr}
\toprule
facts & recall & absorbed & LAMBADA ratio & saturation & min \\
\midrule
1{,}000 & $66.7\!\pm\!8.6$\% & 667 & 1.48 & 3.68\% & 11 \\
3{,}000 & 61.4\% & 1,843 & 2.09 & 41.87\% & 80 \\
10{,}000 & 35.4\% & 3,538 & 1.82 & 78.41\% & 128 \\
30{,}000 & 3.7\% & 1,106 & 1.94 & 90.28\% & 601 \\
100{,}000 & 3.6\% & 3,627 & 2.20 & 92.06\% & 599 \\
\bottomrule
\end{tabular}

\caption{Capacity with CellFill ($r{=}64$, Qwen3-1.7B, synthetic facts).
``absorbed'' is recall times facts presented; ``saturation'' is the
fraction of fill coordinates with $|\tanh|>0.99$. The $10^3$ row is the
mean over three seeds (std shown for recall); the $10^4$ row is a single
run.}
\label{tab:capacity-cellfill}
\end{table}

On the projection paths, from $10^3$ to $10^4$ presented facts, absorbed
facts grow $315\!\to\!1913$ for A+ ($\alpha\!\approx\!0.78$) and
$612\!\to\!4130$ for projected dense ($\alpha\!\approx\!0.83$), with
cell-space use of $0.74$--$1.03\%$: that regime is
optimization-limited (training loss $1.43$ at $10^4$ facts, 24 exposures),
not capacity-limited---five orders of magnitude below the $kN_t$ shipped-bit
ceiling of
Prop.~\ref{prop:capacity} and far below the $\sim$2-bit/parameter
full-precision ceiling of~\cite{physics33}. At $10^4$ facts the projected
dense path absorbs $4{,}130$ facts ($4130\times11.9\approx49$ kbit of
probed attribute entropy) under exact invariance, on a single consumer GPU
in 3.3 hours. Two points cannot resolve a knee, so we claim only that none
is visible in this range.

The cross-domain measurement at $10^4$ facts, which we added after the
audit described in \S\ref{sec:rehearsal}, is the most severe result in this
paper. Projected dense training absorbs $41.6$ kbit while WikiText
perplexity stays \emph{below} the 4-bit anchor ($11.73\!\to\!11.69$)---and
LAMBADA rises from $28.70$ to $129.74$, a factor of $4.5$. A practitioner
watching in-domain perplexity would conclude nothing had gone wrong. The
recipe matters enormously here: rank 64 healing with $10\%$ rehearsal
absorbs $73.4$ kbit at the same scale---$76\%$ more knowledge---for
$64.43$ rather than $129.74$ ($2054$ bits per point against $412$). At
$10^4$ facts, method choice is worth a factor of five in efficiency, far
more than it is worth at $10^3$.

We also observed a reproducibility issue worth recording: two runs with
identical configuration and seed at $10^4$ facts differed by 6.4 recall
points ($41.3\%$ vs.\ $34.9\%$), far beyond probe-sampling error. Dense
training at this scale is not reproducible to better than a few points on
our stack, so the capacity exponents above should be read with that
uncertainty.
The $10^5$-fact point is archived (\texttt{exp22\_100k}) and is reported
here for what it is: with the clip-merge path, rank 64 and 12 epochs, the
healed model recalls $5.44\%$ and the unmerged adapter $6.02\%$, both below
the $6.5\%$ guessing floor, with the adapter's LAMBADA at $503$---the
optimization fails before projection enters. The interference knee lies
between $10^4$ and $10^5$ for that path. CellFill reaches the same
territory: Table~\ref{tab:capacity-cellfill} runs to $10^5$ facts, where
recall is $3.6\%$ at $92.1\%$ saturation---below the guessing floor, with
the link almost entirely against the walls.

\subsubsection{Absorption is set by the parameterization, and priced in
capability}\label{sec:bfrontier}

\begin{figure}[t]\centering
\includegraphics[width=0.72\linewidth]{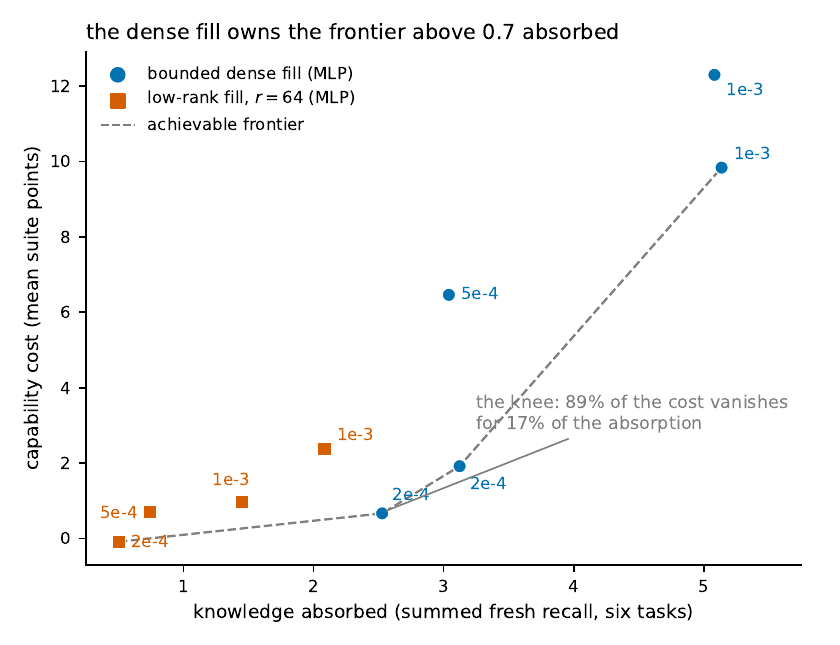}
\caption{\textbf{The parameterization frontier.} Eight arms, one recipe: six
tasks on Qwen3-1.7B, MLP placement, rehearsal with a held-out half, each
point one sequence (the $10^{-3}$ endpoints are two seeds each); capability
cost is the anchor-minus-final mean over ARC-e, ARC-c, HellaSwag and
WinoGrande at 600 items per suite. The bounded
dense fill and the rank 64 fill do not trade along one line. The dense curve
has a knee between $5\times10^{-4}$ and $2\times10^{-4}$, where $89\%$ of the
capability cost vanishes for $17\%$ of the absorption, and every low-rank
arm, including both seeds at its own working rate, lies inside the frontier.
The knee carries a second seed, run later, that lands beside the first.
Invariance violations are zero on all nine arms. Generated from the
archived products by \texttt{scripts/fig\_bfrontier.py}.}
\label{fig:bfrontier}
\end{figure}

The capacity ceiling above is an optimizer ceiling, and the optimizer works
through a parameterization. Replacing the rank 64 factors with a bounded
dense tensor---$M\odot\tanh(sZ)$, one free parameter per constrained MLP
weight, the cells and the invariance check untouched---moves the ceiling by
more than the rank ladder ever did. Six-task sequences letter-identical up to
that one flag absorb $5.08$--$5.14$ summed fresh recall against
$1.45$--$2.08$ for rank 64 (two seeds each, \texttt{exp132}), with no
plasticity collapse: the dense arm's sixth task learns $90$--$119\%$ of its
first, the rank arm's $40$--$49\%$.

The price column keeps the result honest. At the shared learning rate the
dense arm pays $9.8$--$12.3$ points of mean suite accuracy against
$1.0$--$2.4$; absorption is bought, not conjured. But the trade is not
linear. Sweeping the dense arm's rate down (Fig.~\ref{fig:bfrontier})
finds a knee: at $2\times10^{-4}$ it still absorbs $2.53$---more than
either rank seed at the rank arm's own best rate---while its capability cost
falls to $0.7$ points, within the suite noise. The fair-play sweep on the
rank arm does not restore the balance: its absorption halves for about one
point of relief, and at $2\times10^{-4}$ it absorbs $0.50$, having nearly
stopped learning. Above $0.7$ summed absorption the achievable frontier is
dense at every point we measured.

Low rank, on this evidence, is not the safer choice but the smaller one: at
matched capability cost the dense parameterization writes three times the
knowledge. The curve points are single runs and the statement carries its
scale (1.7B, MLP placement); the champion point's second seed and an 8B
confirmation on real facts are in flight (\S\ref{sec:limitations}).

\subsubsection{A second limit: what the model already knows cannot be
written}\label{sec:prior}
The capacity limit of \S\ref{sec:capacity-xdom} is a limit on writing.
There is a second one, of a different kind, on what writing achieves.

We measured what a released model already knows about a set of Python
libraries before any fill, by ranking a correct call against distractor
signatures under the released weights, and then injected each library
separately with one fill per library at identical settings on Qwen3-4B-Base.
Nine arms were launched and the eight of them that completed are in
Table~\ref{tab:prior}. Across the
seven libraries whose prior accuracy fell below $28\%$, the gain was
$+10.7$ points with a standard deviation of $1.0$---flat, and independent
of where in that range the library started. On \texttt{requests}, the one
library the model already knew at $31.1\%$, the same recipe returned
$+2.7$.

\begin{table}[t]\centering\small
\begin{tabular}{lrrrrr}
\toprule
library & prior & after & gain & chance & end sat.\\
\midrule
httpx       & 12.7\% & 22.7\% & $+10.0$ & 11.6\% & 0.0013\%\\
json        & 14.3\% & 23.8\% & $+9.5$  & 9.6\%  & 0.0000\%\\
dataclasses & 18.8\% & 31.2\% & $+12.5$ & 9.6\%  & 0.0000\%\\
rich        & 21.8\% & 33.1\% & $+11.3$ & 11.5\% & 0.4089\%\\
pydantic    & 23.6\% & 33.1\% & $+9.6$  & 12.1\% & 0.0481\%\\
click       & 23.7\% & 35.3\% & $+11.5$ & 12.2\% & 0.0285\%\\
argparse    & 24.1\% & 34.5\% & $+10.3$ & 11.1\% & 0.0000\%\\
\midrule
requests    & 31.1\% & 33.8\% & $+2.7$  & 15.3\% & 0.0009\%\\
\bottomrule
\end{tabular}
\caption{One fill per library, Qwen3-4B-Base, rank 64, 24 epochs, scored by
log-probability ranking against distractor signatures. The gain is flat
below a prior of roughly $28\%$ and collapses above it. End saturation is
the share of the fill sitting against the cell walls at the last epoch.
Baseline and served are measured on the same card for each row, because
these probe sets are small enough that the cross-machine spread is
comparable to the smallest effect here: \texttt{json}'s prior reads $9.5\%$
on one card and $14.3\%$ on another for the same corpus and model. The
placement of a library relative to the $28\%$ threshold should be read with
that spread in mind.}
\label{tab:prior}
\end{table}

The collapse is not the limit of \S\ref{sec:capacity-xdom} seen from
another side, and we checked: end-of-run saturation on \texttt{requests}
was $0.0009\%$, three orders below \texttt{rich}'s $0.4089\%$, and its
training loss fell by $2.43$, the second largest drop in the set. The cells
were nowhere near full, the link was still passing gradient, and the fill
demonstrably moved. It simply did not change the answer. Across the eight libraries
of Table~\ref{tab:prior}, and computed from the values printed there so a
reader can check them, saturation correlates $+0.07$ with prior accuracy and
$+0.24$ with gain---neither carries the effect---while prior accuracy against
gain is $-0.57$.

Nor is it a common ceiling. Injection did not carry every library to one
level: the spread across libraries narrowed by only $17\%$ (standard
deviation $5.5$ to $4.6$ points), and post-injection accuracy relative to
each library's own chance line still ranges from $1.96\times$ to
$3.26\times$.

Two consequences follow, one for the method and one for anyone using it.
For the method, a falling training loss is not evidence that an injection
worked---\texttt{requests} optimized well and gained almost nothing, and no
quantity available during training distinguishes it from the seven that
succeeded. For the user, the return on a fill is set by what is missing
rather than by how much is presented: \texttt{dataclasses} gained the most
in the set, $+12.5$, from the smallest corpus of $18$ facts. Since prior
accuracy is measurable in minutes and a fill costs hours, the ordering is
worth measuring before it is worth training.

\subsubsection{What is knowable before training}\label{sec:knowable}

The sixth link of the argument of \S\ref{sec:introduction} is that how much a
release will absorb can be estimated before anything is written into it. That
is true in a weaker sense than the phrase suggests, and the archive is
unusually clear about which sense, because two of the four estimators we
tried failed and the failures were kept.

What works is a scan and a rate. The scan is the released model's own prior
accuracy on the material about to be written: injecting eight Python
libraries one at a time, the gain was flat at $+10.7$ points wherever that
prior fell below $28\%$ and collapsed to $+2.7$ on the one library the model
already held at $31.1\%$, with the cells barely touched and the training loss
falling normally throughout---so a successful optimization is no evidence of
a successful injection, and the prior is what tells them apart
(\S\ref{sec:prior}). The rate is the room's decay: each fold consumes a
constant fraction of what the last one left, $\beta$ between $0.78$ and
$0.85$ over four matched sequences at 1.7B and lower at 8B
(\S\ref{sec:loop8b}), which turns a first task's absorption
into a bound on the whole run before the run starts (Prop.~\ref{prop:decay},
\S\ref{sec:sequential}). Both are cheap---the scan is a few hundred probes on
the released file, the rate is one constant per recipe---and both are
\emph{target-specific}: they are measured on the corpus and the release at
hand.

What does not work is anything read off the artifact alone. The
diagonal-Fisher budget of Prop.~\ref{prop:budget} is the natural candidate,
and it tracks the wrong magnitude by up to $383\times$; we report it as a
scaling law rather than a numerical certificate. The data-processing bound on
how many bits a fill of given rank can carry is correct and loose by five
orders of magnitude, which makes it a framing and not a tool. Neither is a
near miss to be tightened later, and we have not found a third.

The honest statement of this link is therefore: capacity is predictable per
target from a scan that costs less than the write, and is not yet computable
from the released file's own statistics. A formula in the second sense would
be a better paper than this one; the two estimators above are what we can
defend.

\subsection{Validate: is the minor version still shippable?}\label{res:validate}
A minor version is not finished when the fill converges. What was written
has to be usable, and the model that carries it has to still be the model
the release was evaluated as. Both are gates rather than metrics: an
operator who cannot answer them cannot ship, and an update that fails
either is discarded by subtraction at no cost to the artifact.

\subsubsection{Injected facts compose above the model's own two-hop rate}\label{sec:usable}
Two facts written into the cells composed at inference more often than two
facts the same model had learned in pretraining: $80.8\%$, $94.5\%$ and
$89.6\%$ against $42.5\%$, $52.6\%$ and $34.4\%$ on Qwen3-1.7B, Qwen3-4B and
Qwen3-8B (Table~\ref{tab:twohop}). The injected chain is
indication$\to$drug$\to$ingredient; in the corpus arm that the table reports,
both of its single hops are stated in the forward direction and the
composition itself is stated nowhere. The native chain is a person's country
of birth and that country's capital, read on the release before any update.
Each rate is taken over the items whose two single hops that model answered,
and that conditioning is also the boundary: on Gemma-4-E2B and on the three
models at 27B and above, both hops were known for $2$ to $7$ of the $73$
injected items, too few for those rows to carry a composition claim. Recall
on paraphrased and reversed probes shows the facts are stored as
associations rather than strings, not that they can be \emph{used}: combined
with one another, or exercised in a form the training text never took. Two
tests ask that, and both need a ruler before their numbers mean anything.

\textbf{Composition against the model's own ceiling.} The corpus was built to
state, for each drug, an indication and an ingredient in separate relations,
so that a probe for indication$\to$ingredient could only be answered by
chaining two injected facts. It does not: $324$ of the $432$ medicine
sentences name the ingredient and the indication together, and one paraphrase
states the composed relation outright (``berdazimer, sold as Zelsuvmi, is
indicated for molluscum contagiosum''). The consequence is worked through
below and it is fatal to this arm as a composition test, but the ceiling the
arm was measured against is worth stating first, because it is the part that
survives. Read against $100\%$, the first
corpus arm's $6.8\%$ at 1.7B is a failure. Read against the model it is
something else. On facts the same model learned in pretraining---a person's
country of birth and that country's capital---it knows both hops for $252$
of $264$ items and chains them for $42.5\%$ of those, answering the rest
with a city associated with the person, usually the birth city: the model's
own compositionality gap~\cite{press2022}, measured on the exact model under
test with both probe sets conditioned identically on the single hops being
known. That ceiling is not monotone in scale, which is why it is measured
per model rather than assumed: on the same 264 items it is $42.5\%$ at 1.7B,
$52.6\%$ at 4B and $34.4\%$ at 8B, while all three answer the second hop
perfectly.

Conditioning the injected side the same way exposed a defect in the corpus
rather than in the model. The injected chain is indication$\to$drug$\to$
ingredient. On every served model the second hop is answered for $97$--$100\%$
of items; the \emph{first} hop---which drug treats this indication---is
answered for $4$--$8\%$. The corpus never states that relation in that
direction: every training sentence and every paraphrase puts the drug or the
ingredient first, so indication$\to$drug is a reversal the model was asked
to infer. The composition prompt itself is answered for $6.8\%$, $12.3\%$
and $16.4\%$ of items across 1.7B, 4B and 8B---slightly \emph{above} the
first hop, which is possible only because indication and ingredient share a
training sentence, so the ``composition'' has a one-hop route the chain
does not. Conditioned on both hops being known, the eligible items number
three to six per model, which is no measurement.
This is the asymmetry the corpus's own construction note warns
about---withholding a phrasing makes a corpus artificially hard rather than
honestly hard---and the remedy is one more training sentence per drug,
stating the first hop in the forward direction, while the composition itself
remains unstated anywhere in the corpus. That arm is the one the table is
built to report (Table~\ref{tab:twohop}); the numbers above are kept in
the text so a reader can see what a composition rate measured without its
single-hop conditions is worth. One more number belongs here: the served
models of the first corpus arm still compose their \emph{native}
facts---$50.0\%$, $51.4\%$ and $67.0\%$ at the better phrasing, against
$42.5\%$, $52.6\%$ and $34.4\%$ before the update, up at 1.7B and 8B and down
$1.2$ points at 4B---so whatever the fill costs, it is not the ability to
chain.

\begin{table}[t]\centering\small
\setlength{\tabcolsep}{4pt}
\begin{tabular}{lrrrrrr}
\toprule
 & \multicolumn{3}{c}{native (people$\to$country$\to$capital)} & \multicolumn{3}{c}{injected (indication$\to$drug$\to$ingredient)} \\
\cmidrule(lr){2-4}\cmidrule(lr){5-7}
model & hops known & two-hop & $|$both & hops known & two-hop & $|$both \\
\midrule
Qwen3-1.7B & 252/264 & 41.7\% & 42.5\% & 73/73 & 80.8\% & 80.8\% \\
Gemma-4-E2B & -- & -- & -- & 2/73 & 5.5\% & 100.0\% \\
Qwen3-4B & 249/264 & 51.1\% & 52.6\% & 73/73 & 94.5\% & 94.5\% \\
Qwen3-8B & 241/264 & 33.7\% & 34.4\% & 67/73 & 86.3\% & 89.6\% \\
Qwen3.8-27B & 242/264 & 97.0\% & 97.5\% & 7/73 & 27.4\% & 57.1\% \\
Gemma-4-31B & -- & -- & -- & 4/73 & 19.2\% & 100.0\% \\
Qwen3-32B & 196/264 & 78.4\% & 79.6\% & 3/73 & 21.9\% & 66.7\% \\
\bottomrule
\end{tabular}

\caption{Composition, conditioned on both hops being known: native facts
from pretraining (people$\to$country$\to$capital), measured on the release
before any update, against the injected indication$\to$drug$\to$ingredient
chain that the corpus never states, measured on the served model. ``hops known'' counts items whose two single hops the model
answers correctly; ``$|$both'' is the two-hop rate over those items and is
the number to compare across the two halves. Each set is read at the better
of two phrasings (continuation, question) for that model. Rows whose eligible
count is in single digits are printed for completeness and are not read: at
$n{=}2$ a rate of $100\%$ is two items, and this paper's own standard
(\S\ref{sec:usable}) is that three to six eligible items are no
measurement.}
\label{tab:twohop}
\end{table}

\textbf{Using an API in code.} A second corpus is a library's public API,
extracted by introspection (\texttt{inspect.signature} on the installed
package, so the corpus cannot drift from the library and a reviewer
regenerates it with one command): 67 symbols of \texttt{cyclopts}, a
library whose pinned version postdates the models' training data and whose
recitation probes scored at the floor on the 1.7B base in our selection
run. The recitation probe asks, in prose,
what the $n$th parameter of a symbol is named. The usage probe asks for the
same fact as a line of code---finish \texttt{obj = cyclopts.App(} so that
only its second parameter is set---and is scored by parsing the completion
and binding it against the real signature: a positional guess lands on the
first parameter, so for the second and third parameters---two thirds of the
tasks---the only way through is the name, in a keyword. The stdlib is the positive control (\texttt{csv}, \texttt{argparse},
\texttt{json}), and it is where the test finds its floor. Base models score
$0/56$ at 1.7B, $1/56$ at 4B and $4/56$ at 8B on libraries they certainly saw
in pretraining, even with two worked examples in the prompt: they do not
follow the instruction, emitting a plausible full call with invented or
extra keywords, or no parseable call at all, instead of the one requested
parameter. None of the three sizes can
discriminate, so none of their \texttt{cyclopts} numbers is evidence about
injected knowledge, and we report the control rather than the ratio. The
injection runs only where the control passes. This is a limit of the test
and not of the models: a base model that cannot follow ``set only the second
parameter'' is being asked the wrong question. A second scoring rule asks
the same question without the instruction: the true parameter name is ranked
among the symbol's own names plus four borrowed from other symbols by the
log-probability of \texttt{name="v")} after the open call. Under it the
1.7B stdlib control rises from $0/56$ to $17.9\%$ against a $10.6\%$ chance
floor---discriminating, weakly---and the injection runs at 4B and 8B scored
both ways (Table~\ref{tab:apiusage}).

The corpus itself turned out to matter more than the size. Stated as three
signature listings per symbol, the library was memorized (training loss
$0.33$) and not extractable: the recitation probe, which asks for the
$n$th parameter by position, scored $2.7\%$ at 4B. Stating the order in
prose as well---an enumeration and a ``takes $a$ first, then $b$'' chain
per symbol, never the probe's own wording
(\texttt{experiments/augment\_api\_corpus.py})---raises the same probe
to $35$, $31$ and $31\%$ at 1.7B, 4B and 8B: the knowledge-augmentation
effect of~\cite{physics33}, met on a real corpus. Under the log-probability
rule the served models then rank the true name at $28.6$, $37.9$ and
$37.4\%$ against $19.2$, $14.3$ and $14.8\%$ released---$2.7$, $3.6$ and
$3.5\times$ the $10.6\%$ chance floor; the stdlib control moves the other way at the two
smaller sizes ($25.0\to17.9\%$ at 4B), which is the forgetting this fill
costs on a library it was never trained on. The generate rule stays at the
floor for every served model below 27B: a base model that cannot follow
``set only the second parameter'' cannot show what it knows that way, and
the table reports both.

\begin{table}[t]\centering\small
\setlength{\tabcolsep}{4pt}
\begin{tabular}{lrrrrrrrr}
\toprule
 & \multicolumn{4}{c}{generate (instruction)} & \multicolumn{4}{c}{log-prob rank (chance 10.6\%)} \\
 & \multicolumn{2}{c}{stdlib} & \multicolumn{2}{c}{cyclopts} & \multicolumn{2}{c}{stdlib} & \multicolumn{2}{c}{cyclopts} \\
model & released & served & released & served & released & served & released & served \\
\midrule
1.7B & 0.0\% & 1.8\% & 0.5\% & 0.5\% & 17.9\% & 10.7\% & 19.2\% & 28.6\% \\
4B & 1.8\% & 0.0\% & 3.3\% & 1.1\% & 25.0\% & 17.9\% & 14.3\% & 37.9\% \\
8B & 7.1\% & 0.0\% & 8.8\% & 0.0\% & 28.6\% & 26.8\% & 14.8\% & 37.4\% \\
27B & 21.4\% & -- & 19.2\% & -- & -- & -- & 21.4\% & 38.5\% \\
G31 & 0.0\% & -- & 0.0\% & -- & -- & -- & -- & -- \\
32B & 17.9\% & -- & 6.0\% & -- & -- & -- & -- & -- \\
\bottomrule
\end{tabular}

\caption{The API usage test under two scorers. \emph{generate}: a call
completed in code so that only the $n$th parameter is set, parsed and bound
against the real signature; \emph{log-prob rank}: the true name ranked
among the symbol's own names plus four borrowed ones, without the
instruction. ``stdlib'' is the control the model learned in pretraining;
``cyclopts'' the injected library, verified absent. ``served'' is the fill
trained on the augmented corpus. Where the generate control fails, its
cyclopts number measures instruction following rather than knowledge and is
not read.}
\label{tab:apiusage}
\end{table}

\subsubsection{Is the served model still a good model?}\label{sec:downstream}
Everything above prices knowledge against perplexity, on the argument that
LAMBADA is held out from rehearsal. Perplexity is still a proxy, and the
question a deployment asks is whether the model it serves is intact. We run
the standard multiple-choice suite through lm-eval-harness on the served
weights (Table~\ref{tab:downstream}).

\begin{table}[t]\centering\small
\setlength{\tabcolsep}{4pt}
\begin{tabular}{llrrrr}
\toprule
model & arm & ARC-c & ARC-e & HellaSwag & WinoGrande \\
\midrule
Qwen3-1.7B & released fp32 & 45.1 & 68.4 & 66.5 & 64.5 \\
 & 4-bit anchor & 44.7 & 69.4 & 64.7 & 62.9 \\
 & served & 46.0 & 71.4 & 57.9 & 60.1 \\
 & \emph{update cost} & $\mathbf{+1.3}$ & $\mathbf{+2.0}$ & $\mathbf{-6.8}$ & $\mathbf{-2.8}$ \\
\midrule
Qwen3-4B & 4-bit anchor & 51.1 & 75.7 & 72.3 & 69.1 \\
 & served & 54.0 & 76.4 & 66.2 & 63.8 \\
 & \emph{update cost} & $\mathbf{+2.9}$ & $\mathbf{+0.8}$ & $\mathbf{-6.1}$ & $\mathbf{-5.4}$ \\
\midrule
Qwen3-8B & 4-bit anchor & 54.0 & 78.4 & 78.0 & 71.0 \\
 & served & 61.0 & 82.7 & 70.9 & 68.7 \\
 & \emph{update cost} & $\mathbf{+7.0}$ & $\mathbf{+4.3}$ & $\mathbf{-7.1}$ & $\mathbf{-2.3}$ \\
\midrule
Qwen3.8-27B & released fp32 & 58.3 & 73.1 & 82.8 & 75.7 \\
 & 4-bit anchor & 58.4 & 76.8 & 82.6 & 76.3 \\
 & served & 66.0 & 86.9 & 80.1 & 73.4 \\
 & \emph{update cost} & $\mathbf{+7.6}$ & $\mathbf{+10.1}$ & $\mathbf{-2.5}$ & $\mathbf{-2.9}$ \\
\midrule
Qwen3-32B & 4-bit anchor & 61.5 & 83.2 & 81.9 & 72.8 \\
 & served & 63.8 & 84.9 & 77.8 & 73.6 \\
 & \emph{update cost} & $\mathbf{+2.3}$ & $\mathbf{+1.7}$ & $\mathbf{-4.1}$ & $\mathbf{+0.9}$ \\
\bottomrule
\end{tabular}

\caption{Standard multiple-choice benchmarks on the served model, against
the 4-bit anchor it preserves and, at 1.7B and 27B where we also ran it, the
dense release (fp16 at 1.7B, bf16 at 27B).
\emph{update cost} is served minus anchor: the part attributable
to the update rather than to quantization. Accuracies are
length-normalised where the task provides it.}
\label{tab:downstream}
\end{table}

Two costs are usually conflated here and only the second is ours.
Quantization alone costs $1.8$ points of HellaSwag and $1.6$ of WinoGrande
at 1.7B before any update happens; a table that reported only
released-versus-served would charge that to the method.

Against the anchor it preserves, the served model is \emph{better} on both
reasoning benchmarks at every size and worse on HellaSwag at every size, with
WinoGrande falling at four sizes of five and rising $0.9$ at 32B.
The split is not noise dressed up as a finding: it lines up with the
perplexity result, where LAMBADA---also a continuation task---degrades while
in-domain perplexity does not. Whatever the update spends, it spends on
next-token coherence rather than on the ability to answer a question, and
two independent metric families say so.

The direction the split moves in is what one operating point could not show,
and the five model sizes do not all say the same thing. Across 1.7B, 4B and
8B the HellaSwag cost is flat ($-6.8$, $-6.1$, $-7.1$) and the
ARC-challenge gain grows ($+1.3$, $+2.9$, $+7.0$); the two larger rows break
the first pattern and dent the second, with HellaSwag costing only $-2.5$ at
27B and $-4.1$ at 32B, and ARC-challenge gaining $+7.6$ then falling back to
$+2.3$. WinoGrande shows no trend at any size ($-2.8$, $-5.4$, $-2.3$,
$-2.9$, $+0.9$). Every cell is a single run and the 27B row is on a self-quantized anchor
(the 32B is a published release), so the honest reading is narrow: the cost
never exceeds seven and a half points of HellaSwag and the gain is positive on
ARC-challenge at every size, while the shape in between is not resolved by
five points.
We do not have an account of why the reasoning benchmarks should improve at
all, and we are wary of one---rehearsal is a plausible confound, and these
runs share a rehearsal fraction.

\subsection{Consolidate and rehearse: running the cycle}\label{res:lifelong}
\subsubsection{Rehearsal: fresh or fatal, and what it does not buy}
\label{sec:rehearsal}
A fixed replay buffer repeated across 24 epochs is itself memorized and
sharpens the model onto its own rehearsal set: test perplexity explodes from $24.6$ to $172$ for the unmerged adapter,
and from $22.6$ to $63.8$ after clip-merge. Redrawing rehearsal snippets every epoch removes
in-domain forgetting entirely: the merged model reaches WikiText PPL 10.78,
below its own 4-bit anchor (11.71), while absorbing 17.4\% of the facts; a
10\% replay fraction dominates 30\% on both axes (recall
$17.4\!\to\!25.1\%$, PPL $10.78\!\to\!10.31$).

\textbf{This in-domain gain is a rehearsal effect, not a property of the
dequantization gap, and it does not survive a domain shift.} Two controls
establish this and we state them prominently because an earlier draft of
this work drew the opposite conclusion. First, the \emph{unmerged} adapter
---which is not constrained to the cells at all---scores essentially the
same in-domain perplexity as the box-constrained merge (10.299 vs.\ 10.271
on seed 1; the gap stays under $0.03$ across the three 10\%-rehearsal
seeds and never exceeds $0.29$ in any fresh-rehearsal run), so the
improvement cannot be attributed to in-cell storage. Second, rehearsal is
drawn from WikiText-train while the perplexity metric is WikiText-test:
on held-out LAMBADA every method \emph{degrades} relative to the anchor
(28.67), and along the projected paths monotonically in how much it absorbs
(Table~\ref{tab:frontier}); CellFill $r{=}64$ degrades less than the
projected path that absorbs less than it does, which is the one place the
trade-off is beaten rather than traversed. The learning--forgetting trade-off is real; the
free lunch was an artifact of measuring rehearsal on its own domain.

\textbf{The constructive form of the same finding.} If rehearsal drawn
from the metric's domain hides the cross-domain cost, rehearsal drawn from
somewhere else should reduce it. Table~\ref{tab:rehearsal} changes nothing
but the rehearsal source on the real corpus and the published 1.7B anchor.

\begin{table}[t]\centering\small
\setlength{\tabcolsep}{4pt}
\begin{tabular}{lrrrr}
\toprule
rehearsal drawn from & recall (507) & WikiText & LAMBADA ppl$\downarrow$ &saturation \\
\midrule
WikiText (default) & $94.7\!\pm\!1.1$\% & $11.85\!\pm\!0.09$ & $46.0\!\pm\!5.9$ & 1.63\% \\
Pile & $95.3\!\pm\!1.2$\% & $13.64\!\pm\!0.36$ & $35.2\!\pm\!0.5$ & 1.74\% \\
C4 & 94.7\% & 13.26 & 35.3 & 1.85\% \\
mixture & $95.4\!\pm\!0.2$\% & $12.16\!\pm\!0.08$ & $35.4\!\pm\!0.3$ & 1.72\% \\
\bottomrule
\end{tabular}

\caption{Rehearsal distribution, real corpus, published NF4 Qwen3-1.7B,
$r{=}64$, rehearsal fraction $0.1$ throughout. ``mixture'' draws equally
from the three sources. Cells with $\pm$ are over three seeds.}
\label{tab:rehearsal}
\end{table}

Recall does not move---every source lands within the seed spread of the
default---while the cross-domain cost drops by nearly a quarter: LAMBADA
from $46$ with WikiText rehearsal to $35$ with any source that shares
nothing with either metric, and the in-domain ``gain'' of the default goes
with it (WikiText $11.85\to12.16$--$13.64$). Nearly two thirds of the
excess over the anchor that this paper has been calling the cross-domain
cost of knowledge was the cost of rehearsing on WikiText. The mixture is
the operating point we would now recommend: it keeps WikiText within
$0.32$ of the default and takes the LAMBADA cost from $1.61\times$ the
anchor to $1.23\times$.

\subsubsection{Four sequential updates, one artifact: forgetting is priced,
not eliminated}
\label{sec:sequential}
Forgetting is priced rather than eliminated once updates arrive in
sequence: each of four updates consumed a constant fraction of the room the
previous one left ($\beta = 0.825\pm0.002$ in this sequence), and the first
task retained $31\%$ of its recall after the three that followed. A constant
fraction makes lifetime capacity finite and measurable: without consolidation
it is $a_1/(1-\beta) = 292\%$ of a first-task equivalent at this sequence's own
$\beta$, and $231$--$339\%$ over the band the matched sequences span, $90\%$ of
it spent within twelve updates, and no constraint is violated anywhere on the
way. Every result
so far comes from a single injection; the sequence here is four disjoint
500-fact tasks arriving in order, each written into the same residual with
CellFill ($r{=}64$), with the artifact verified after every task. What
``disjoint'' does and does not mean here should be said plainly, because it
bounds every sequential claim in this paper: the tasks share no entity, and
they are slices of one draw from one generator, so they share a template, a
vocabulary and a distribution. These sequences measure interference and
capacity under repeated writing. They do not measure distribution shift, and
nothing in this paper does. This
sequence and the one in \S\ref{sec:rehearse-old} run on a 4-bit file we
quantized ourselves; only the cycle of \S\ref{sec:loop} runs on a published
release. Between
tasks the accumulated in-cell position becomes the next task's anchor and the
remaining room shrinks to the distance from there to the cell walls, so the
budget is literally consumed as the sequence proceeds.

\begin{center}\small
\begin{tabular}{lccccrrcc}
\toprule
after & \multicolumn{4}{c}{recall on task} & \multicolumn{2}{c}{perplexity} & mean room & artifact \\
 & T0 & T1 & T2 & T3 & WikiText & LAMBADA & remaining & violations \\
\midrule
T0 & 50.9\% & --- & --- & --- & 11.21 & 39.42 & $3.01\times10^{-3}$ & 0 \\
T1 & 33.5\% & 42.7\% & --- & --- & 11.39 & 41.52 & $2.49\times10^{-3}$ & 0 \\
T2 & 19.9\% & 17.5\% & 35.8\% & --- & 11.45 & 41.54 & $2.05\times10^{-3}$ & 0 \\
T3 & 15.7\% & 13.1\% & 18.5\% & 29.7\% & 11.62 & 43.57 & $1.69\times10^{-3}$ & 0 \\
\bottomrule
\end{tabular}
\end{center}

Four things happen at once, and only the first is good news. The artifact
survives the whole sequence bit-for-bit---zero violations after every one of
the four tasks, over $1.4\times10^9$ weights each time---so the guarantee is
not a single-shot property. Cross-domain ability degrades monotonically and
we report it here rather than only the recall columns: LAMBADA runs
$39.42\to43.57$ across the sequence, which against the $28.67$ anchor is a
$37\%$ cost for the first task rising to $52\%$ after the fourth, while
in-domain perplexity moves only $11.21\to11.62$---the same divergence between
the two metrics that \S\ref{sec:rehearsal} attributes to rehearsal sharing a
corpus with WikiText. We have no unconstrained control sequence to price that
$52\%$ against, so it is a measurement of this method's cost over four tasks
and not a comparison. Old tasks are forgotten: T0 falls
$50.9\to33.5\to19.9\to15.7\%$ as the three later tasks arrive, the largest
drop at the first of them. And, distinctively for this
setting, \emph{plasticity itself decays}: each task learns less than the one
before ($50.9\to42.7\to35.8\to29.7\%$) as the writable room falls by $44\%$.
The second effect is ordinary catastrophic forgetting; the third is specific
to learning in a bounded space.

\paragraph{The decay is geometric, and it is predicted rather than fitted.}
Proposition~\ref{prop:decay} says a bounded update consumes a constant
\emph{fraction} of the remaining room, because the displacement is scaled by
the room itself. The measured ratios are
$\bar r_{k}/\bar r_{k-1} = 0.8271,\,0.8254,\,0.8238$: geometric to three
decimal places, $\beta = 0.825\pm0.002$. New-task absorption follows the same
law, $0.8377,\,0.8391,\,0.8305$, and is proportional to the room available
before the task ($a_k/\bar r_{k-1} = 142,\,144,\,145$ for $k\ge2$).

A sharper claim is available here and measurement refutes it. If
$\beta=1-\mathbb{E}|t|$ were an identity, the implied
$\mathbb{E}|t|=1-\bar r_k/\bar r_{k-1}$ would have to be constant across
tasks, and it is ($0.1729,\,0.1746,\,0.1762$). That is not independent
evidence: those three numbers are one minus the three ratios above, so their
constancy and $\beta$'s are the same statement. The test requires
$\mathbb{E}|\tanh|$ measured from the fill actually applied at each fold,
which we now log: $0.4429,\,0.4617,\,0.5027,\,0.5180$ in the second
archived run, whose own ratios are $0.8173,\,0.8205,\,0.8314$---neither
equal to the implied value ($2.8\times$ apart) nor constant (it rises $17\%$
as the room falls). The identity is therefore false as an identity, exactly
as Proposition~\ref{prop:decay} states: it proves $r'\ge r(1-|t|)$, and the
data confirm the inequality is strict and loose, $0.82$ against the $0.51$
the naive law predicts.

The slack is not the NF4 cells giving room back through their asymmetry,
which is the natural explanation and is wrong. A simulation the frozen
artifact makes possible without a GPU
(\texttt{scripts/sim\_fold\_slack.py}, archived as
\texttt{results/sim\_fold\_slack.json}) settles it. Folding $2\times10^6$ weights
with the four $\mathbb{E}|\tanh|$ values logged in the second archived
sequence and random signs gives ratios in $0.80$--$0.84$ from the second
fold on \emph{whether the cells are NF4, with codes drawn from a Gaussian
weight distribution, or equal-width and perfectly symmetric}; the level
table leaves a footprint only at the first fold ($0.61$--$0.63$ against
$0.56$--$0.57$), and both archived sequences sit inside the symmetric-cell
band. The relaxation belongs to the folding operator, not to the grid:
after one fold every position is off-centre, and a displacement of random
sign then has roughly even odds of moving toward the far wall, so
$\min(a,b)$ loses less than $|t|$ of the room in expectation. It also answers
why $\beta$ holds steady while
$\mathbb{E}|\tanh|$ rises $17\%$: in the simulation the saturated $\min$
makes the ratio roughly a third as elastic in $\mathbb{E}|\tanh|$ as the
naive law, and its exact value depends on the \emph{distribution} of $|t|$
and not only its mean, which is why we quote a band rather than a constant.
Because the operator
does not care what grid the cells came from, the law should transfer to
uniform integer grids from the second fold on; we have not run a sequence
on one.

\paragraph{A control in which the room never decays at all.} Folding is one of
two ways to write a sequence into the cells, and the other one isolates what
the decay costs. Under \emph{folding} the accumulated position becomes the next
task's anchor, so the reachable interval shrinks to the distance from there to
the walls. Under the \emph{fixed-cell} parametrization each cell keeps the
centre and radius the release gave it for the whole sequence, and the trainable
position is an absolute coordinate inside it, so the whole cell stays reachable
at every task and the room never decays---at the price of the tasks sharing one
coordinate rather than accumulating into it. Run as one more arm of the six-task \texttt{exp70} sequence, it starts where
that sequence's folding arm starts ($54.6$ against
$54.8\%$ on the first task, so the two parametrizations have the same
plasticity to begin with) and ends at $27.1\%$ against $18.2\%$. Removing
geometric decay outright therefore recovers about a quarter of what the
sequence loses and leaves three quarters of it in place, which is the first
evidence in this paper that the room is not the main thing a long sequence runs
out of. \S\ref{sec:loop} takes that further.

What survives is the empirical law: the room decays geometrically, and that
is what makes lifetime capacity finite \emph{under sequential folding}. The
four-task sequence gives $\beta$ in $0.82$--$0.83$; a third run, six tasks on
a third host (\texttt{exp70}), gives ratios of $0.813,\,0.827,\,0.846,\,
0.849,\,0.843$, so the band over all three archived sequences is
$0.78$--$0.85$ across the four matched $r{=}64$, $s{=}40$ folding sequences,
rather than the tighter figure a single sequence suggests. The constant is a
property of the recipe as much as of the method: the gentler $r{=}32$,
$s{=}10$ arm barely draws on the room at all and runs at $0.94$--$0.96$, and
the inert-regularized arm at $0.83$--$0.88$. In
that run $\mathbb{E}|\tanh|$ rises from $0.45$ to $0.55$ and plasticity falls
$54.8\to45.8\to34.3\to26.2\to21.9\to18.2\%$---the ratio creeping up as the
fill saturates, which is the direction the simulation's saturated $\min$
predicts, though its top three folds ($0.846$, $0.849$, $0.843$) sit just
above the simulated band's upper end of $0.835$. The first two archived
sequences were run on different hosts (their anchor-only perplexities differ before the first
gradient, $11.3589$ against $11.3625$) and their three ratios drift in
opposite directions, $0.827\to0.824$ and $0.817\to0.831$; we have no
same-environment repeat, and the ``stable to $1.7\%$'' of the earlier draft
was the spread between those two runs, not a within-environment figure.

The lifetime capacity quoted at the head of this subsection follows from
that law rather than from a fit. This is what we mean by the budget being operational: it can be
estimated after two tasks and it predicts when the model will stop being
able to learn.

\paragraph{Consolidation renews the budget, at an explicit price.}
Repeating the sequence with a re-quantization after T1---a deliberate major
version, not silent drift---restores the room and with it the plasticity. The
restoration is measured one write later, because the room is logged after each
task and the major fires between them: the row before the consolidation reads
$2.49\times10^{-3}$ and the row after it $2.98\times10^{-3}$, against the
$3.01\times10^{-3}$ the first task left, so a major version returns at least
$99\%$ of the room even after the next task has drawn on it. What it hands
back at the instant it fires is not logged. The plasticity follows: the fourth task
recovers $33.4\%$ instead of $29.7\%$, and for the first time in the
sequence a later task outperforms its predecessor ($32.8\to33.4\%$). The
price is stated exactly: $19.0\%$ of the 4-bit codes change at that step,
which is precisely why it must be a version bump rather than an update, and
the oldest task's retention drops further ($15.7\to12.2\%$).

Consolidation is not free in the short run either, and the cumulative
absorption shows why. Re-quantizing injects fresh quantization error, so the
task immediately after it does \emph{worse} than it would have without
($32.8\%$ against $35.8\%$), and only the following task pulls ahead
($33.4\%$ against $29.7\%$). Cumulatively the two policies cross between the
third and fourth task ($129.4\%$ vs.\ $126.4\%$, then $159.1\%$ vs.\
$159.8\%$): consolidation takes about two tasks to repay its own cost. A
policy consolidating every $m$ tasks therefore has to be chosen against the
sequence length, and Proposition~\ref{prop:decay} gives the throughput to
compare---$46.5\%$ per task at $m{=}2$ against $39.1\%$ at $m{=}4$, versus
zero in the limit for a policy that never consolidates.

Minor versions keep the artifact and spend the budget; a major version buys
the budget back by giving up the artifact. We report this from a single
sequence on one model, so the constant $\beta$ should be read as a
demonstration that the law is measurable, not as a value that transfers.

\subsubsection{Rehearsing earlier tasks recovers most of what the sequence
forgets}\label{sec:rehearse-old}
The sequence above forgets because each task is trained on its own corpus
alone. Replaying a sample of the earlier tasks' corpora alongside the new one
changes the outcome more than any other intervention we have measured. On the
six-task sequence, with half of each task's training material drawn from the
tasks that preceded it (\texttt{--rehearse-old 0.5}), the first task retains
$84.8\%$ of its recall after five subsequent updates, against $19.3\%$ for the
same sequence without it. The replayed pool is every earlier task's own
sentences, shuffled once and cycled: at the second task each of the $500$
earlier sentences returns about twelve times over the run, at the sixth each
of $2{,}500$ returns about twice. That is deliberately the regime
\S\ref{sec:rehearsal} warns against for a \emph{generic} buffer---this
material is the material we want memorized---and the generic replay those
runs also carry is drawn fresh every epoch as it is everywhere else in this
paper.

\begin{center}\small
\setlength{\tabcolsep}{3.2pt}
\begin{tabular}{lcccccccc}
\toprule
 & \multicolumn{6}{c}{retention after the sixth task (\% of what the task
 first learned)} & \multicolumn{2}{c}{perplexity} \\
\cmidrule(lr){2-7}\cmidrule(lr){8-9}
sequence & T0 & T1 & T2 & T3 & T4 & T5 & WikiText & LAMBADA \\
\midrule
no rehearsal & 19.3 & 23.6 & 34.6 & 43.5 & 58.4 & 100 & 12.45 & 50.0 \\
rehearsal $0.5$ & \textbf{84.8} & \textbf{100.7} & \textbf{97.2} & \textbf{91.5} & \textbf{92.4} & 100 & 12.42 & 50.7 \\
\quad at $r{=}32$, $s{=}10$ & 82.4 & 95.2 & 92.5 & 97.1 & 82.6 & 100 & 10.26 & 33.4 \\
\bottomrule
\end{tabular}
\end{center}

Three things about this deserve to be stated separately, because only the
first is unambiguously good.

\textbf{The recovery is large and it is free in perplexity.} Summed over all
six tasks, recall at the end of the sequence is $1.859$ against $0.757$, a
factor of $2.45$, at a WikiText perplexity of $12.42$ against $12.45$ and a
LAMBADA of $50.7$ against $50.0$. Whatever rehearsal costs, it is not paid in
the cross-domain metrics that the rest of this paper uses to price knowledge.
The T1 entry above exceeds $100\%$: that task ends the sequence marginally
better recalled than when it was written, so replay is not only slowing decay
but partly restoring what interference had taken.

\textbf{It is paid in plasticity instead.} New-task absorption falls from
$54.8$, $45.8$, $34.3$, $26.2$, $21.9$, $18.2\%$ without rehearsal to
$54.8$, $38.1$, $33.6$, $30.5$, $23.7$, $18.5\%$ with it. Half of each task's
budget now goes to material the model has already seen, and the second task
pays the largest share of that. The exchange is favourable here only because
what is bought (five tasks' retention) is larger than what is spent (one
task's absorption), and we have not established that the exchange stays
favourable at longer sequence lengths or smaller rehearsal fractions.

\textbf{Against a ceiling the deployment setting cannot reach.} Training the
same model on all $3{,}000$ facts at once, under identical hyperparameters
and seed, recalls $61.2\%$ (\texttt{exp73}). The six-task sequence with
rehearsal averages $31.0\%$ over the same facts and without rehearsal
$12.6\%$: replay recovers roughly half of the joint-training optimum where
sequential updating alone recovers a fifth. We report the joint number as the
ceiling it is and not as a competitor, because it is unavailable in the
setting this paper is about. It requires every earlier corpus to still be
held at every later update, and it produces a new artifact each time, which
is the cost the frozen file exists to avoid. Rehearsal needs only a sample of
the earlier material, which an operator who wrote it has by construction.

\paragraph{One weight left its cell, and the gentler recipe does not.} Both
$r{=}64$, $s{=}40$ arms above report a single constraint violation at the
sixth fold, one weight of $1{,}409{,}286{,}144$ checked, after five clean
folds. We report it rather than round it to zero: the guarantee in this paper
is an integer identity, and an identity that fails once has failed. The third
row of the table is the same sequence at rank $32$ and $s{=}10$, the setting
\S\ref{sec:capacity-xdom} identifies as the one that keeps the link in its
gradient-carrying range, and it holds the retention profile with zero
violations at every one of the six folds, at a WikiText of $10.26$ and a
LAMBADA of $33.4$ against the anchor's $28.67$. It absorbs about half as
much ($0.983$ summed against $1.859$). The violation is therefore a property
of driving the link deep into saturation over repeated folds and not of
sequential folding itself, which is consistent with the saturation mechanism
of \S\ref{sec:sequential}; a deployer who wants the identity to hold across
a long sequence should pay for it in absorption rather than in margin.

\subsubsection{The cycle run as a loop}\label{sec:loop}
\begin{figure}[t]\centering
\includegraphics[width=\linewidth]{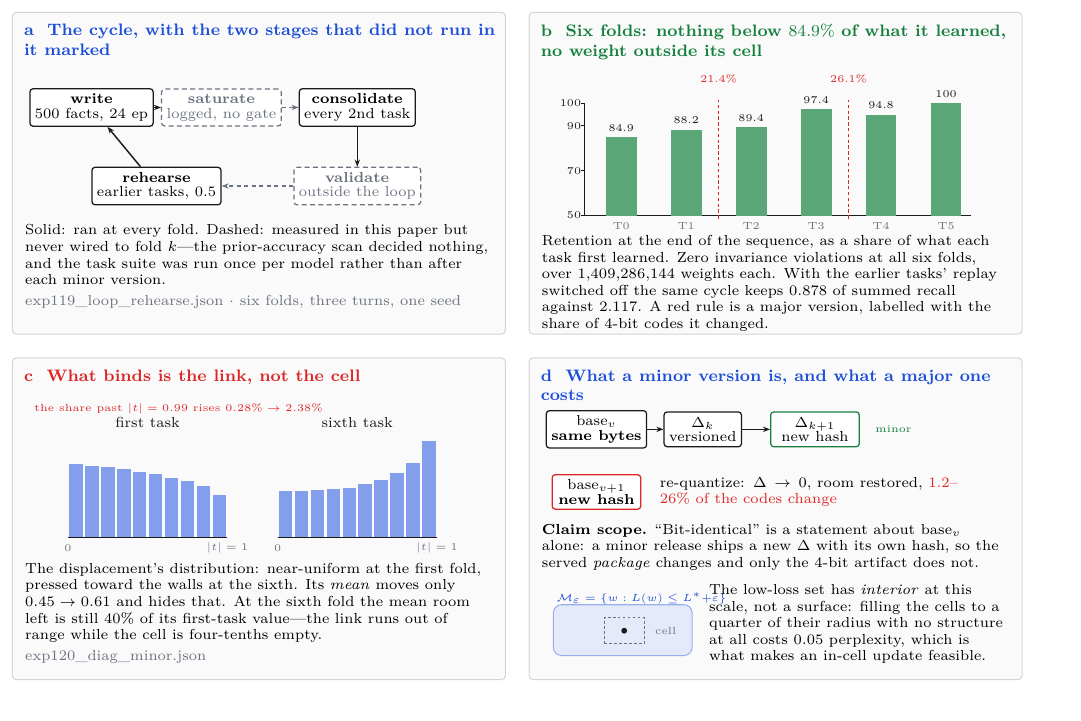}
\caption{The cycle as it ran. \textbf{a}, The five stages, with the two that
were measured in this paper but never wired into the loop drawn dashed---the
prior-accuracy scan decided nothing, and the task suite was run once per model
rather than after each minor version. \textbf{b}, Six folds on a published
release (unsloth NF4 Qwen3-1.7B, the run of \S\ref{sec:loop}; the 8B
replication is tabulated in \S\ref{sec:loop8b}): what each task retains of
what it first learned, the two major versions and the share of 4-bit codes
each changed, and no weight outside its cell at any fold. \textbf{c}, Why a later task absorbs less. The
displacement's distribution is near-uniform at the first fold and pressed
against the cell walls at the sixth, while its mean barely moves and the cell
is still four-tenths empty. \textbf{d}, What a release is under the contract,
and the geometric fact it rests on: the low-loss set has interior at this
scale rather than being a surface.}
\label{fig:lifecycle}
\end{figure}

Everything to this point measures one stage of the deployed life at a time:
a write, a saturation, a consolidation, a round of rehearsal. An operator
does not get to run them separately. This experiment runs them as the loop
the title names---write, saturate, consolidate, rehearse, validate,
repeat---on six 500-fact tasks, replaying half of each task's material from
the tasks before it, and consolidating after every second task, so the cycle
turns three times (\texttt{exp119}, $r{=}64$, $s{=}40$, one seed;
Figure~\ref{fig:lifecycle}). Unlike
the two sequences above, which run on a 4-bit file we quantized ourselves,
this one runs on unsloth's published NF4 release of Qwen3-1.7B---so it is
the paper's only sequential result on an artifact somebody else shipped, and
the codes it returns at every fold are that vendor's.

\begin{center}\small
\setlength{\tabcolsep}{3.2pt}
\begin{tabular}{lcccccccccc}
\toprule
after & \multicolumn{6}{c}{retention (\% of what the task first learned)} &
\multicolumn{2}{c}{perplexity} & room & viol. \\
\cmidrule(lr){2-7}\cmidrule(lr){8-9}
 & T0 & T1 & T2 & T3 & T4 & T5 & WikiText & LAMBADA & remaining & \\
\midrule
T0 & 100.0 & --- & --- & --- & --- & --- & 11.17 & 39.5 & $3.01\times10^{-3}$ & 0 \\
T1 & 98.0 & 100.0 & --- & --- & --- & --- & 11.51 & 42.0 & $2.33\times10^{-3}$ & 0 \\
\multicolumn{11}{l}{\footnotesize\emph{consolidate: $21.41\%$ of the 4-bit codes change}} \\
T2 & 82.8 & 95.1 & 100.0 & --- & --- & --- & 12.73 & 44.2 & $2.61\times10^{-3}$ & 0 \\
T3 & 84.9 & 91.5 & 94.1 & 100.0 & --- & --- & 12.72 & 44.3 & $2.04\times10^{-3}$ & 0 \\
\multicolumn{11}{l}{\footnotesize\emph{consolidate: $26.14\%$ of the 4-bit codes change}} \\
T4 & 84.2 & 89.0 & 94.4 & 98.0 & 100.0 & --- & 14.78 & 54.0 & $2.42\times10^{-3}$ & 0 \\
T5 & \textbf{84.9} & 88.2 & 89.4 & 97.4 & 94.8 & 100.0 & 14.46 & 57.3 & $1.88\times10^{-3}$ & 0 \\
\bottomrule
\end{tabular}
\end{center}

\textbf{The loop is clean at every fold, and we cannot show that
consolidating is why.} Every one of the six folds re-quantizes to the shipped
codes over all $1{,}409{,}286{,}144$ constrained weights, with no exception.
It is tempting to read that against the one weight that left its cell in
\S\ref{sec:rehearse-old} and conclude that a major version relieves the
saturation which caused it, and an earlier version of this paper did. The
archive does not support it. Counting every six-task arm at $r{=}64$,
$s{=}40$: two of the five that never consolidate had a violation, and neither
of the two that consolidate did---$0/2$ against $2/5$, which is $p=0.48$ by
Fisher's exact test and no signal at all. Nor does displacement order the
failures. Sorted by the mean $|\tanh|$ each arm ends at, the two violations
sit at $0.549$ and $0.614$ with clean arms interleaved on both sides, at
$0.446$, $0.579$ and $0.753$; \S\ref{sec:loop} makes the same point from the
controller side. The account that survives is the one the introduction gives:
with $|\tanh|<1$ and the room recomputed against the frozen walls at every
fold, escape is impossible in exact arithmetic, so the single escape was a
floating-point event at the margin---and consolidation does not touch that.
What the loop establishes is that six folds and three major versions can pass
with the identity intact, not that consolidating is what keeps it so.

\textbf{The room is renewed, and one can watch it happen.} Writable room falls
within each pair of tasks and is higher again on the far side of every
consolidation ($2.33\!\to\!2.61\times10^{-3}$ across the first,
$2.04\!\to\!2.42\times10^{-3}$ across the second), so the geometric draw-down
of \S\ref{sec:sequential} restarts instead of running out. As in
\S\ref{sec:sequential} the far-side figure already includes a task's write,
so it understates what the major returned. Retention is flatter than any sequence we
have run: no task ends below $84\%$ of what it first learned, and summed
recall at the end is $2.117$ against $1.859$ for the same six tasks with
rehearsal and no consolidation.

\textbf{The price of a major version rises with depth, and the cycle is not
free.} The first consolidation changed $21.41\%$ of the 4-bit codes and the
second $26.14\%$: the $19.0\%$ reported in \S\ref{sec:sequential} is the
first consolidation of a shorter sequence, not a constant, and an operator
budgeting for a lifetime of releases should expect the per-consolidation cost
to grow. Held-out ability is where the rest of the bill lands. LAMBADA ends
at $57.3$ against the anchor's $28.57$ and against $50.7$ for the same six
tasks rehearsed but never consolidated, and WikiText at $14.46$ against
$12.42$. Consolidating twice bought renewed room, a flatter retention profile
and six clean folds, and paid for them in cross-domain perplexity. We report
one seed and one cadence: how the exchange moves with the consolidation
interval is the parameter a deployment would actually tune, and we have not
swept it.

\textbf{Consolidation and rehearsal do different jobs, and neither substitutes
for the other.} Running the same six tasks, the same cadence, the same anchor and the same
seed with the replay of \emph{earlier tasks} switched off isolates each. Both
arms keep the generic WikiText replay that every run in this paper uses
($10\%$ of each epoch); what differs is only whether the earlier tasks' own
material comes back. The invariant does
not need rehearsal: both arms return the shipped code at every one of the six
folds, so the identity does not depend on replay. Whether it depends on
consolidating is a separate question and the counts do not answer it
(above: $0/2$ against $2/5$, $p=0.48$). The knowledge does need replay: summed recall at the end is
$2.117$ with it and $0.878$ without, a factor of $2.41$, and the first task
ends at $84.9\%$ of what it learned against $16.7\%$.

\begin{center}\small
\setlength{\tabcolsep}{3.6pt}
\begin{tabular}{lcccccc}
\toprule
 & \multicolumn{5}{c}{retention of tasks T0--T4 after the sixth} & summed \\
\cmidrule(lr){2-6}
six tasks, consolidating after T1 and T3 & T0 & T1 & T2 & T3 & T4 & recall \\
\midrule
rehearsal $0.5$ & \textbf{84.9} & \textbf{88.2} & \textbf{89.4} & \textbf{97.4} & \textbf{94.8} & \textbf{2.117} \\
no rehearsal    & 16.7 & 17.3 & 24.1 & 26.8 & 68.8 & 0.878 \\
\bottomrule
\end{tabular}
\end{center}

The trade the unconsolidated sequence reported---that replay is paid for in
plasticity---does not survive unchanged into the cycle. Without rehearsal the
model absorbs more immediately after each consolidation ($46.2$ against
$40.8\%$ at T2, $39.9$ against $34.7\%$ at T4) and \emph{less} by the end
($23.6$ against $27.9\%$ at T5). What rehearsal does cost here is cross-domain
ability---LAMBADA $57.3$ against $52.6$, both far above the $28.57$
anchor---and a more expensive major version,
because moving more weight changes more codes at the next re-quantization
($21.4$ and $26.1\%$ against $19.3$ and $21.3\%$).

\textbf{Most of the plasticity loss is not the room running out.} Absorption
falls to $53\%$ of the first task's with rehearsal and $45\%$ without, while
the writable room never drops below $62\%$ and rises at every consolidation, so
neither arm is short of space. Removing the replay load does not stop the decay
either: the two arms differ by $1.4$ points of mean absorption across the six
tasks, and the unrehearsed one declines faster. The cleanest test is not
consolidation at all but the fixed-cell control of \S\ref{sec:sequential},
which holds each cell's centre and radius constant for the whole sequence so
that room \emph{never} decays and no re-quantization noise is injected. Its
first task absorbs what the ordinary arm's does ($54.6$ against $54.8\%$), and
its absorption still falls by half ($54.6\to27.1\%$). Removing geometric decay
outright recovers about a quarter of the ordinary sequence's loss
($a_6/a_1=0.497$ against $0.332$) and leaves the rest standing.

The paper's own proportionality gives the same conclusion on one anchor and
refuses it on another, which is itself worth reporting. Absorption per unit of
room available, $a_k/r_{k-1}$, is flat over four tasks ($141.8$, $143.9$,
$144.8$), the observation \S\ref{sec:sequential} builds the decay law on. Over
six tasks on a file we quantized ourselves it falls,
$151.0\to139.3\to128.6\to127.2\to124.4$; over six tasks of the same recipe and
the same policy on unsloth's published release it \emph{rises},
$141.8\to157.1\to173.6\to160.7\to188.0$, and that arm ends at
$a_6/a_1=0.527$ against the self-quantized one's $0.332$. The decay is
therefore not a constant of the method: two anchors of the same base model,
one recipe, no replay in either, differ by a factor of $1.6$ in how much
plasticity survives six tasks. We have no account of it, we did not expect it,
and it means the ``less per unit of room'' reading above holds for the
self-quantized anchor and not for the published one.

\textbf{What binds is the link, not the cell.} Separating the candidates needs
a run that keeps the \emph{distribution} of the displacement rather than its
mean, which the sequences above do not; the last experiment does
(\texttt{exp120}: the same six tasks, no consolidation, no replay of earlier
tasks, the fold instrumented). Two things come out of it, and only the second
was expected.

Interference between consecutive tasks is not the mechanism. The cosine
between task $k$'s displacement and task $k{-}1$'s stays between $-0.010$ and
$-0.016$ over all five comparisons, and the fraction of coordinates whose sign
disagrees runs $0.5031$ to $0.5051$---chance to three decimals---without rising
as the sequence deepens. Destructive interference between neighbours would show
a clearly negative cosine and a conflict fraction above one half, and neither
appears. This bounds opposition between \emph{adjacent} tasks only: each task
is compared with its predecessor, not with everything already written, and the
second comparison is the one we would make next.

Saturation is the mechanism, and the mean was hiding it. $\mathbb{E}|\tanh|$
rises only $0.450\to0.579$ across the six folds, under a third; over the same
folds the
share of coordinates past $|t|{=}0.99$ rises $0.28\to2.38\%$, a factor of
$8.4$, and the $|t|>0.95$ tail $2.4\to10.2\%$, both accelerating at the last
fold. The distribution changes shape rather than shifting: near-uniform at the
first task, clearly right-skewed at the sixth. And at that sixth fold the mean
room remaining is still $40\%$ of its first-task value, so the link is running
out of range while the cell is still four-tenths empty. That is why removing
geometric decay recovered only a quarter of the loss: the fixed-cell control
keeps the whole cell reachable forever and drives its link \emph{harder}
($\mathbb{E}|\tanh|$ $0.441\to0.753$), so it inherits the same limit by a
different route. The prediction runs the right way at the other end too---the
$r{=}32$, $s{=}10$ recipe that \S\ref{sec:capacity-xdom} identifies as keeping
the link in its gradient-carrying range moves only $0.147\to0.159$ over six
tasks and retains the most plasticity of any six-task arm we have run
($a_6/a_1=0.555$ against $0.332$ for $r{=}64$, $s{=}40$ on the same anchor).
Sequential capacity is limited by the same thing single-injection capacity
is (\S\ref{sec:capacity-xdom}): the optimizer under the bound, not the space
inside it.

\textbf{What a deployment should monitor is not the remaining room.} The
obvious controller for this cycle watches the room and consolidates when it
runs low: define $\phi=M/M_0$ as the mean room remaining against its value
after the first task---a different quantity from the reserved radius $\rho$ of
Prop.~\ref{prop:budget}---and consolidate when $\phi$ falls past a threshold.
Across the ten archived sequences that rule behaves badly, for a reason the
data makes plain. Where nothing consolidates, $\rho$ falls monotonically and
tracks plasticity closely (Pearson $+0.98$ against normalized absorption over
six tasks, $+1.00$ over four). Where consolidation is running, $\phi$ rises at
every major version and becomes a sawtooth: its minimum is $0.62$ in the
rehearsing cycle above, $0.72$ in the arm without rehearsal and $0.81$ in the
four-task sequence, against $0.33$ to $0.79$ across the seven that never
consolidate. A threshold at $0.40$ therefore fires in exactly one of the ten---the
rehearsing sequence that never consolidates, at its last task---and never in a
sequence that is already consolidating, because consolidating is what puts the
room back. The controller that decides to
consolidate on low room switches itself off the first time it acts.

The obvious replacement does not work either. The link's mean displacement
$\mathbb{E}|\tanh|$ climbs through consolidations where $\phi$ oscillates
($0.450\to0.608$ rehearsing, $0.450\to0.529$ without), which makes it look like
the depletion signal $\phi$ is not, and within a single sequence it correlates
$-0.97$ with normalized absorption. It fails on three checks. It is not in fact
monotone---in the arm without earlier-task replay it \emph{falls} between the
fifth and sixth tasks ($0.535\to0.529$) while absorption drops $41\%$, and it
is non-monotone throughout the inert-regularized arm of \S\ref{sec:fusion}.
No threshold on it separates a fold that violated the invariant from one that
did not: the two folds that did violate were at $0.550$ and $0.614$, while the
consolidating cycle reached $0.608$ and the fixed-cell control $0.753$ with
every fold clean---so a threshold at $0.55$ fires through the whole of a clean
sequence and one at $0.62$ never fires in the sequence that failed. And
across arms it orders nothing. The comparison has to be made among sequences
that never consolidate, because a consolidation zeroes the fill and a
consolidating arm's final displacement therefore describes its last cycle
rather than its run; six of the archived six-task arms qualify. Over those six
the rank correlation between final displacement and normalized late absorption
is $-0.03$. The most displaced is the fixed-cell control of
\S\ref{sec:sequential} at $\mathbb{E}|\tanh|=0.753$, and it ends at $49.7\%$
of its first task's absorption---third of the six, behind an arm at $0.579$
that ends at $52.7\%$. The \emph{least} displaced, the $r{=}32$, $s{=}10$
sequence of \S\ref{sec:rehearse-old} at $0.159$, ends at $55.5\%$ and is the
best of them; the worst, at $31.1\%$, is the inert-regularized arm at $0.446$,
second-least displaced of the six. Neither scalar we log is a controller. What an
operator can currently do is schedule on a fixed interval, which is what every
consolidating run here did, and we report the two candidates so that a reader
does not have to rediscover that they fail.

\textbf{Consolidation has a break-even, and the gentle recipe is below it.}
\S\ref{sec:capacity-xdom} identifies $r{=}32$, $s{=}10$ as the setting that
keeps the bounded link inside its gradient-carrying range, and the saturation
account above predicts that such a recipe should gain little from a major
version, having little saturation to relieve. Run through the same cycle it
gains \emph{less than nothing}. Its link stays where the repair intends---the
share past $|t|{=}0.99$ never exceeds $0.0016\%$, three orders of magnitude
below the standard recipe, and its room never falls below $90\%$ of its
starting value---and consolidating twice takes the first task's retention from
$82.4$ to $59.5\%$, a drop of $22.9$ points against a between-seed spread of
five to eight elsewhere in this paper, while raising per-task plasticity to
$a_6/a_1=0.678$, the highest of any arm we have run. Summed recall falls too,
$0.983$ to $0.875$, but that difference is about $1.8$ recall points per task
and we do not lean on it: at one seed and one cadence it is inside the range
\S\ref{sec:limitations} says single-seed comparisons cannot resolve. The
retention drop is the load-bearing number here.
The cycle keeps that model able to learn and stops it keeping what it learned.

\begin{table}[t]\centering\small
\setlength{\tabcolsep}{3.6pt}
\begin{tabular}{llcrrrrrc}
\toprule
recipe & policy & replay & summed & $a_6/a_1$ & WikiText & LAMBADA & major cost & viol. \\
\midrule
$r64$, $s40$ & minor       & ---  & 0.757 & 0.332 & 12.45 & 50.0 & --- & 1 \\
$r64$, $s40$ & minor       & $0.5$ & 1.859 & 0.338 & 12.42 & 50.7 & --- & 1 \\
$r64$, $s40$ & consolidate & ---  & 0.878 & 0.449 & 14.21 & 52.6 & $19.3$, $21.3\%$ & 0 \\
$r64$, $s40$ & consolidate & $0.5$ & \textbf{2.117} & 0.531 & 14.46 & 57.3 & $21.4$, $26.1\%$ & 0 \\
\midrule
$r32$, $s10$ & minor       & $0.5$ & 0.983 & 0.555 & 10.26 & 33.4 & --- & 0 \\
$r32$, $s10$ & consolidate & ---  & 0.401 & 0.655 & 10.22 & \textbf{31.6} & $1.2$, $1.3\%$ & 0 \\
$r32$, $s10$ & consolidate & $0.5$ & 0.875 & \textbf{0.678} & 10.31 & 31.7 & $1.2$, $2.3\%$ & 0 \\
\bottomrule
\end{tabular}
\caption{Seven operating points: two recipes, two policies, replay on or off,
six $500$-fact tasks each on a published NF4 release. ``summed'' is recall
over all six tasks at the end of the sequence, $a_6/a_1$ the last task's
absorption as a share of the first's, ``major cost'' the share of 4-bit codes
each consolidation changed, and ``viol.'' the number of weights that left a
cell over $1{,}409{,}286{,}144$ checked per fold. No configuration dominates:
the standard recipe with consolidation absorbs most, the gentle recipe costs
least across domains and needs the cheapest major versions, and replay moves
summed recall by a factor of $2.2$ to $2.5$ in every cell.}
\label{tab:operating}
\end{table}

Read down Table~\ref{tab:operating}'s replay column and the same factor appears in every recipe and
under both policies: $0.757\to1.859$, $0.878\to2.117$, $0.401\to0.875$, a
factor of $2.2$ to $2.5$ each time. The violation column is not a second such split, and we say so because the
table invites the reading. Both unconsolidated arms at $r{=}64$, $s{=}40$
\emph{in this table} put a weight outside its cell at the sixth fold and all
four consolidating arms are clean, but the archive holds three more
unconsolidated arms at that recipe and all three are clean, one of them at a
\emph{larger} final displacement ($0.579$) than either consolidating arm. Over
all seven, $0/2$ consolidating and $2/5$ unconsolidated arms failed, which is
$p=0.48$. We report the violations where they happened and claim nothing about
what prevents them. Replay carries the knowledge; consolidation carries the
artifact; the two do not substitute.

The mechanism is the one the rest of this subsection argues. Consolidation
burns the fill and injects one fresh quantization error in exchange for room.
Where the fill is large relative to that error---the standard recipe ends at
$\mathbb{E}|\tanh|=0.61$ with its room drawn down to $62\%$---the exchange is
favourable. Where the fill is small, it is not: at $\mathbb{E}|\tanh|=0.176$
the re-quantization noise is comparable to the knowledge the fill carries, and
a major version overwrites rather than renews. A deployer should read the
table as a set of operating points and not as a ranking: for absorption, the
standard recipe with consolidation; for cross-domain cost and for cheap major
versions, the gentle recipe \emph{without} one, which over six tasks never
needed a major version at all. The price of a major version is itself a
property of the recipe, and it varies here from $1.2\%$ of the 4-bit codes to
$26.1\%$.

\textbf{What a major version costs on a task metric.} Everything above is
recall and perplexity. The stage the cycle was missing is the one that asks
whether the model a major version produces is still the model the release was
evaluated as, and it needs a task metric measured \emph{inside} the sequence.
We ran it: the same six-task cycle at both recipes, with ARC-easy,
ARC-challenge, HellaSwag and WinoGrande scored on the live served model after
folds $0$, $1$, $3$ and $5$ and on both sides of every consolidation, and with
four epochs of replay on the earlier tasks' own material after each major
before the version is judged. At $600$ items per task the binomial standard
error is about $2.0$ points, which is the resolution of everything below.

\begin{center}\small
\setlength{\tabcolsep}{4pt}
\begin{tabular}{lrrrrrrr}
\toprule
 & \multicolumn{4}{c}{end of run, points from the released anchor} &
   summed & major cost & viol. \\
\cmidrule(lr){2-5}
recipe & ARC-e & ARC-c & HellaSwag & WinoGr. & recall & (\% of codes) & \\
\midrule
$r{=}32$, $s{=}10$ & $+0.0$ & $-0.3$ & $+0.5$ & $-0.5$ & 0.684 & $1.2$, $1.1$ & 0 \\
$r{=}64$, $s{=}40$ & $-1.8$ & $-4.5$ & $-2.8$ & $-6.2$ & 2.050 & $21.4$, $24.3$ & 0 \\
\bottomrule
\end{tabular}
\end{center}

The two arms answer the question in opposite directions and the recipe is what
chooses. After six tasks and three major versions the gentle recipe is
\emph{indistinguishable from the file it started as} on all four metrics---the
largest deviation is half a point, a quarter of the measurement's own
error---and it has kept almost nothing of what it was taught. The standard
recipe keeps three times the knowledge and pays for it with two to six points
of task accuracy, the largest of them three standard errors. Both hold the
artifact at every one of the six folds. Capability retention and knowledge
retention are not two faces of one property here; they are a trade, and the
slope of the link sets the exchange rate.

\textbf{What the recovery pass does is trade knowledge back for capability.}
Immediately after the second major version at $r{=}64$, $s{=}40$, before any
replay, the served model has lost $5.2$ points of ARC-challenge and $3.5$ of
WinoGrande against the anchor; four epochs on the earlier tasks' material
returns it to $-2.5$ and $-1.0$. Across both majors the pass is worth about a
point and a quarter on each metric. It is not free: summed recall ends at
$2.050$ with the pass against $2.117$ for the same cycle without it
(\S\ref{sec:loop}), so the replay that repairs the function costs a little of
what was written. At the gentle recipe the same trade is a bad one---summed
recall falls from $0.875$ to $0.684$ for a capability the recipe was holding
anyway---which is the break-even of the previous paragraph seen from the other
side.

Two things this measurement does not settle. It is one seed and one
consolidation interval, and the intermediate points are noisy at this suite
size: the gentle recipe dips to $-3.2$ on ARC-easy after its second major and
returns to $0.0$ by the sixth task, and we would not read either number alone.
And the recall immediately after a consolidation, before the recovery pass,
was not logged, so the pass's effect on knowledge is bounded rather than
measured---one line of instrumentation we will not have until the next run.

\textbf{What this loop is not.} Three of the five stages ran; two did not,
and calling the result a closed loop without saying which would overstate it.
\emph{Saturate} was measured, not used: the prior-accuracy scan of
\S\ref{sec:prior} decides whether a write is worth issuing at all, and it
was run as eight standalone fills on a different model, never as a gate on a
task inside this sequence. The only saturation signal that did cross into the
loop is $\mathbb{E}|\tanh|$, logged per fold, and nothing was conditioned on
it. \emph{Validate} now runs on a task metric inside the sequence, as the
paragraphs above report, but it still gates nothing: no minor version here was
accepted or rejected on the strength of a score, and nothing was rolled back. A loop that closed all five stages would
read the saturation before deciding to write, validate on task accuracy after
writing, and roll the fill back by subtraction when the gate failed---each of
which this paper measures separately and none of which it wires to the next.
That wiring, and a sweep of the consolidation interval, are what stand
between these six tasks and an operating discipline that runs unattended.
\S\ref{sec:lifecycle} states the discipline the measurements do support.

\subsubsection{The same loop at 8B}\label{sec:loop8b}

The cycle above runs on a 1.7B release, and the objection that a claim about
language models cannot rest there is the right one. This is the identical
recipe---six 500-fact tasks, half of each earlier task's material rehearsed
alongside the new one, a consolidation after every second task, four epochs
of recovery after each---on unsloth's published NF4 release of Qwen3-8B-Base,
$6.946\times10^{9}$ constrained weights (\texttt{exp124}, $r{=}64$, $s{=}40$,
seed 0 tabulated below at $7.3$ hours on one 4090; a second seed
follows). It is the first sequence in this paper
above 1.7B, and running it is what exposed the storage floor of
\S\ref{sec:floor}: the first attempt reported $19{,}791$ violated weights at
its first fold and died at the second. That reported figure is a lower bound
rather than the count---the guard that produced it tested the fp32 value
before the fp16 store, so the rounding it existed to catch happened after the
test---and the controlled reproduction of that configuration violates
$37{,}443$ (\S\ref{sec:floor}). The numbers below are from the run with the
floor in place.

\begin{center}\small
\setlength{\tabcolsep}{3.6pt}
\begin{tabular}{lcccccccccc}
\toprule
after & \multicolumn{6}{c}{retention (\% of what the task first learned)} &
\multicolumn{2}{c}{perplexity} & room & viol. \\
\cmidrule(lr){2-7}\cmidrule(lr){8-9}
 & T0 & T1 & T2 & T3 & T4 & T5 & WikiText & LAMBADA & remaining & \\
\midrule
T0 & 100.0 & --- & --- & --- & --- & --- & 8.30 & 33.4 & $1.63\times10^{-3}$ & 0 \\
T1 & 97.4 & 100.0 & --- & --- & --- & --- & 8.26 & 33.1 & $1.15\times10^{-3}$ & 0 \\
T2 & 94.9 & 99.2 & 100.0 & --- & --- & --- & 9.40 & 41.1 & $1.47\times10^{-3}$ & 0 \\
T3 & 98.0 & 108.3 & 105.5 & 100.0 & --- & --- & 9.40 & 44.8 & $1.02\times10^{-3}$ & 0 \\
T4 & 91.5 & 96.6 & 95.0 & 110.5 & 100.0 & --- & 11.09 & 55.6 & $1.52\times10^{-3}$ & 0 \\
T5 & 94.0 & 101.2 & 98.5 & 119.2 & 102.1 & 100.0 & 10.77 & 56.4 & $9.79\times10^{-4}$ & 0 \\
\bottomrule
\end{tabular}
\end{center}

A second seed of the identical recipe reproduces the picture
(\texttt{exp124\_loop\_8b\_s1}): no task ends below $92.8\%$ of what it
first learned---the floor of the two-seed pair, at task~0---and across
both seeds five of the ten earlier-task ends land above $100\%$. Fresh
learning falls $0.818$ to $0.447$; summed recall ends at $3.750$ against
$3.707$ learned; the integer check returns the vendor's code on all
$6.9\times10^{9}$ constrained weights at every fold of this run too. Its
within-pair room ratios are the $\beta$ series already quoted
($0.716/0.695/0.640$ against seed~0's $0.704/0.693/0.642$), and the two
consolidations rewrite $34.0$ then $34.6\%$ of the codes against seed~0's
$33.4$ and $34.6\%$. Table~\ref{tab:v2bench} gathers the pair beside the
v2 lottery ring.

Read the last row first. After six tasks and two major versions, every
earlier task holds between $94\%$ and $119\%$ of what it first learned, and
the artifact returned its vendor's codes on all $6.946\times10^{9}$
constrained weights at every one of the six folds. Nothing was forgotten
catastrophically and nothing was forgotten quietly: the count is over every
weight, not a sample.

\begin{table}[t]\centering
\begin{tabular}{lrrrr}
\toprule
\multicolumn{5}{l}{\emph{Six-task loop, 8B, NewFacts v2 (500 facts/task)}} \\
seed & summed fresh & retention floor & $\beta_{1..3}$ (within-version) & violations \\
\midrule
s0 & 3.48 & 94.0\% & 0.704/0.693/0.642 & 0 \\
s1 & 3.71 & 92.8\% & 0.716/0.695/0.640 & 0 \\
\midrule
\multicolumn{5}{l}{\emph{v2 lottery ring, 8B (exact-bit cells)}} \\
cell & bits/fact & min fresh & mean fresh & violations \\
\midrule
lottery ring (6$\times$60) & 21.72 & 98.3\% & 99.2\% & 0 \\
\bottomrule
\end{tabular}

\caption{\textbf{The NewFacts v2 instrument, tabulated.} Top: the six-task
8B loop under the frozen v2 benchmark, both seeds---summed fresh recall,
the retention floor, the per-version $\beta$ series the text quotes, and
integer-domain violations (every constrained weight, every fold). Bottom:
the v2 lottery ring; the archived corpus-level rate is $21.72$ bits per
fact, while the ring's two record cells are matched in \emph{exact}
information content by construction ($28.1224$ against $28.1138$ bits per
draw, $\log_2$-combinatorial, $0.03\%$; \texttt{experiments/newfacts.py}).
An independent displacement-pair estimator of $\beta$ matches these series
within $0.009$ (\S\ref{sec:loop8b}). Earlier tables report the archived
v1-era instruments; this table reports the frozen v2 benchmark---bit-matched
cells, content-hashed, verified by the loader at import. Both instruments
remain in the record.}
\label{tab:v2bench}
\end{table}

The denominator of those percentages is falling, and reporting the ratio
without it would be misleading. What each task learns when it is new runs
$0.761$, $0.667$, $0.659$, $0.483$, $0.536$, $0.376$: the sixth task absorbs
half of what the first did ($a_6/a_1=0.49$). Retention is therefore high
against a shrinking baseline, and the two effects nearly cancel in the
aggregate---summed recall over all six tasks is $3.539$ at the end against
$3.483$ when each was written. The right summary is that the sequence does
not forget and does lose plasticity, and that the plasticity loss is the
geometry of Prop.~\ref{prop:decay} rather than a failure of the fill: T3,
whose $119\%$ is the largest retention in the table, is also the task that
learned least, having been written immediately after a consolidation zeroed
the fill.

Three things in the table are worth separating. The retentions above $100\%$
are not the method improving a task after the fact; they are rehearsal
recovering what the fill had lost, and they are largest for T3, which was
written immediately after a consolidation had zeroed the fill and so first
learned the least ($0.483$ against $0.761$ for T0). The room does not decay
monotonically---it falls within each pair of tasks and is renewed by the
consolidation between them, $1.15\rightarrow1.47\times10^{-3}$ after the
first and $1.02\rightarrow1.52\times10^{-3}$ after the second---which is
Prop.~\ref{prop:decay} and the major-version step doing what they are for.
The two consolidations changed $33.4\%$ and $34.6\%$ of the 4-bit codes: a
major version is a different file, and calling it one is not a formality.

Taking the ratio inside each pair is the test of Prop.~\ref{prop:decay}'s
constant $\beta$, and this table has been printed without it. The three
within-pair ratios are $1.15/1.63=0.706$, $1.02/1.47=0.694$ and
$0.979/1.52=0.644$; a second seed of the same cycle reproduces them at
$0.715$, $0.695$ and $0.640$, matching to within $0.009$. All fall below the
$0.78$--$0.85$ this paper quotes, and they decline within both runs. That band was measured over four matched
sequences at 1.7B and the scale was not stated with it, which was an error of
the kind \S\ref{sec:limitations} is for; recomputed at the recipe this 8B run
uses, 1.7B gives $0.775$, $0.783$ and $0.775$, a spread of $0.8\%$, against
$0.706$, $0.694$ and $0.644$ here.

Part of the gap is recipe and part is not. Rehearsal alone moves 1.7B from
$0.812$ to $0.778$, because rehearsing writes more material into the same
cells, so a rehearsed sequence should sit low in the band---and 8B sits below
it. What recipe does not explain is the decline \emph{within} the 8B run:
$\beta$ is not a constant there, and a $\beta$ that falls makes the geometric
budget $a_1/(1-\beta)$ of \S\ref{sec:theory} an over-estimate of what remains,
not an under-estimate. The honest statement is narrower than either the law or
its dismissal: \emph{the constant-fraction law is exact at 1.7B over four
sequences and approximate at 8B over two, where $\beta$ itself declines
within each run.}
Which of the two the 27B and 32B sequences resemble is the measurement that
decides whether the budget is a budget or a fit, and it is in flight.

The cost is where it has been throughout, in cross-domain ability rather than
in-domain perplexity. WikiText rises $8.15\rightarrow10.77$ over the whole
sequence, $32\%$ above the anchor; LAMBADA rises $22.28\rightarrow56.35$,
$153\%$. The task suite, run inside the sequence, shows that this cost is
concentrated rather than spread:

\begin{center}\small
\begin{tabular}{lcccc}
\toprule
 & ARC-e & ARC-c & HellaSwag & WinoGrande \\
\midrule
released anchor & 78.7 & 55.8 & 67.7 & 68.8 \\
after T0 & 78.5 & 56.3 & 59.5 & 70.0 \\
after T3 & 79.0 & 57.0 & 55.8 & 65.2 \\
after consolidating at T3, fill zeroed & 77.7 & 55.0 & 54.3 & 66.5 \\
after the recovery pass & 76.3 & 55.8 & 54.5 & 66.8 \\
after T5 & 74.3 & 52.8 & 54.0 & 64.2 \\
\bottomrule
\end{tabular}

\vspace{2pt}
{\footnotesize Metric convention as in Table~\ref{tab:downstream}
(\texttt{acc\_norm} where the task provides it, \texttt{acc} otherwise), $600$
items per task, so these are not row-comparable with the full-set scores
there.}
\end{center}

At $600$ items a suite point has $\sigma\approx2.0$, so ARC-e, ARC-c and
WinoGrande end $4.3$, $3.0$ and $4.7$ points down---one and a half to two and
a half standard errors across six tasks---while HellaSwag ends $13.7$ points
down, near seven.

The row to read for the release contract is the fill-zeroed one. A
consolidation re-quantizes the anchors and zeroes the fill, so what that row
scores is the new 4-bit artifact by itself, with nothing written into it: it
is the file an operator would ship as the next major version. The row below
it is the same artifact after four epochs of recovery have written a fill
back in, and the sequence continues from there. Against the version it
replaces it moves $-1.3$, $-2.0$, $-1.5$ and $+1.3$ points---$0.7$, $1.0$,
$0.8$ and $0.7$ standard errors, one of them upward.

Read alone, that event says a major version costs nothing measurable, and an
earlier draft of this paper said so. It is the wrong reading, and the archive
contains its own correction. Five consolidations across three runs and two
scales carry the same before-recovery measurement:

\begin{center}\small
\begin{tabular}{lrrrrrr}
\toprule
run & codes changed & ARC-e & ARC-c & HellaSwag & WinoGrande & mean \\
\midrule
1.7B, $s{=}10$, T1 & $1.2\%$ & $-2.83$ & $-1.00$ & $+0.67$ & $+0.16$ & $-0.75$ \\
1.7B, $s{=}10$, T3 & $1.1\%$ & $-2.67$ & $-1.16$ & $-0.33$ & $+1.00$ & $-0.79$ \\
1.7B, $s{=}40$, T1 & $21.4\%$ & $-0.34$ & $\pm0.00$ & $+2.34$ & $-2.17$ & $-0.04$ \\
1.7B, $s{=}40$, T3 & $24.3\%$ & $-2.83$ & $-3.50$ & $+0.50$ & $-0.33$ & $-1.54$ \\
8B, $s{=}40$, T3 & $34.6\%$ & $-1.33$ & $-2.00$ & $-1.50$ & $+1.33$ & $-0.87$ \\
\bottomrule
\end{tabular}
\end{center}

Every one of the five is negative, at $-0.80\pm0.24$ points of mean suite
accuracy ($t=-3.36$ on four degrees of freedom). The events are clustered two
to a run and each mean averages four correlated suites, so we do not quote a
$p$-value as though they were five independent draws; what the consistency
supports is a small negative effect rather than none. \emph{A major version
costs about eight tenths of a point of mean suite accuracy, and the cost does
not scale with how much of the file it rewrites}---$1.2\%$ of codes changed
costs as much as $34.6\%$. That is a better result than no cost at all: it is
a price an operator can budget, and it is not bought down by consolidating
more gently.

Against the vendor's original release the 8B artifact is down $0.5\sigma$ on
ARC-e, $0.4\sigma$ on ARC-c and $1.2\sigma$ on WinoGrande, and $6.7\sigma$ on
HellaSwag---and that last number is not the consolidation's doing. Writing
the first task alone, before any consolidation exists, already costs
HellaSwag $8.2$ points while leaving the other three within a standard error
($-0.2$, $+0.5$, $+1.2$). The cost is incurred by writing and it is paid
once, largely at the first task; re-quantizing to a new released artifact
adds a fluctuation on top of it. The loss is also
front-loaded and cumulative rather than a consolidation charge: $8.2$ of it
is gone after the first task alone, it keeps sinking through T3, and the
recovery pass does not bring it back ($54.3\rightarrow54.5$). Two readings
predict the same thing and we cannot yet separate them: HellaSwag is the most
narrative of the four and therefore the furthest from the WikiText the
rehearsal draws on, and it is also the most sensitive to general language
modelling, which the $32\%$ perplexity rise directly degrades. Both predict
the same shape at larger scale, which is what the 27B and 32B sequences will
test.

\paragraph{What the bound buys: the same recipe without it.}
Everything above prices what the cell bound costs. What it buys is measured by
removing it and changing nothing else. The control arm replaces
$M\odot\tanh(sBA^{\!\top})$ with the raw $BA^{\!\top}$, so the same $252$
wrapped layers of the same 8B release are trained with invariance
\emph{measured} rather than asserted, and the obvious objection---that an
unconstrained update is being judged at a rate chosen for a constrained
one---is answered by giving it four rates of its own. One task, $500$ facts,
six epochs, $r{=}64$, everything else as in \S\ref{sec:setup}:

\begin{center}\small
\begin{tabular}{lccrr}
\toprule
learning rate & recall & WikiText ppl & weights outside their cells & \\
\midrule
$1\times10^{-5}$ & $0.027$ & $7.71$ & $14{,}634{,}220$ & $0.21\%$ \\
$5\times10^{-5}$ & $0.065$ & $7.29$ & $28{,}401{,}256$ & $0.41\%$ \\
$2\times10^{-4}$ & $\mathbf{0.152}$ & $7.37$ & $92{,}884{,}472$ & $1.34\%$ \\
$5\times10^{-4}$ & $0.142$ & $9.93$ & $782{,}103{,}992$ & $11.26\%$ \\
\midrule
bounded, $10^{-3}$, 24 epochs & $0.761$ & $8.30$ & $0$ & $0.00\%$ \\
\bottomrule
\end{tabular}
\end{center}

At none of the four rates does the unconstrained arm both learn and stay
inside the cells. At the gentlest, where it recalls $2.7\%$ and has learned
almost nothing, $14.6$ million weights have already left the cells they were
quantized from; at the rate that recalls most, $92.9$ million have. The
released codes are not recoverable from any of these artifacts, which is the
property the paper is about, and it is lost before any of them becomes useful.
The last row is not epoch-matched---it is task~0 of \texttt{exp124} at $24$
epochs---so its recall is not comparable to the four above it and is given for
the count in the final column, which is zero by construction at any epoch.

The six-task unconstrained sequence has since completed, at its own best
rate and on the same card as \texttt{exp124}
(Fig.~\ref{fig:boundprice}; \texttt{cycle\_free8b}). On the function axis the
two arms are close, and the comparison is reported the way it came out: the
unconstrained arm absorbs slightly more ($3.63$ against $3.48$ summed fresh
recall), keeps more late-sequence plasticity ($a_6/a_1$ $0.62$ against
$0.49$), ends ten LAMBADA points better ($46.6$ against $56.4$), and loses
HellaSwag alike ($14.5$ against $13.7$ points)---the write damage is
indifferent to the bound. On the artifact axis the two arms are not comparable at all:
$376$~million weights leave the released cells in the unconstrained arm's
first task and $2.01$~billion ($29\%$ of the file) by its sixth, growing
between major versions even though it consolidates on the same cadence,
while the bounded arm re-quantizes to the vendor's codes at every fold of
every task. Each arm is one run. The bound's price at 8B is therefore real
and honestly stated---a few percent of absorption, a tenth of plasticity,
ten points of LAMBADA---and what it buys is the only thing the unconstrained
procedure cannot have at any rate: a file that is still the release.

\begin{figure}[t]\centering
\includegraphics[width=\linewidth]{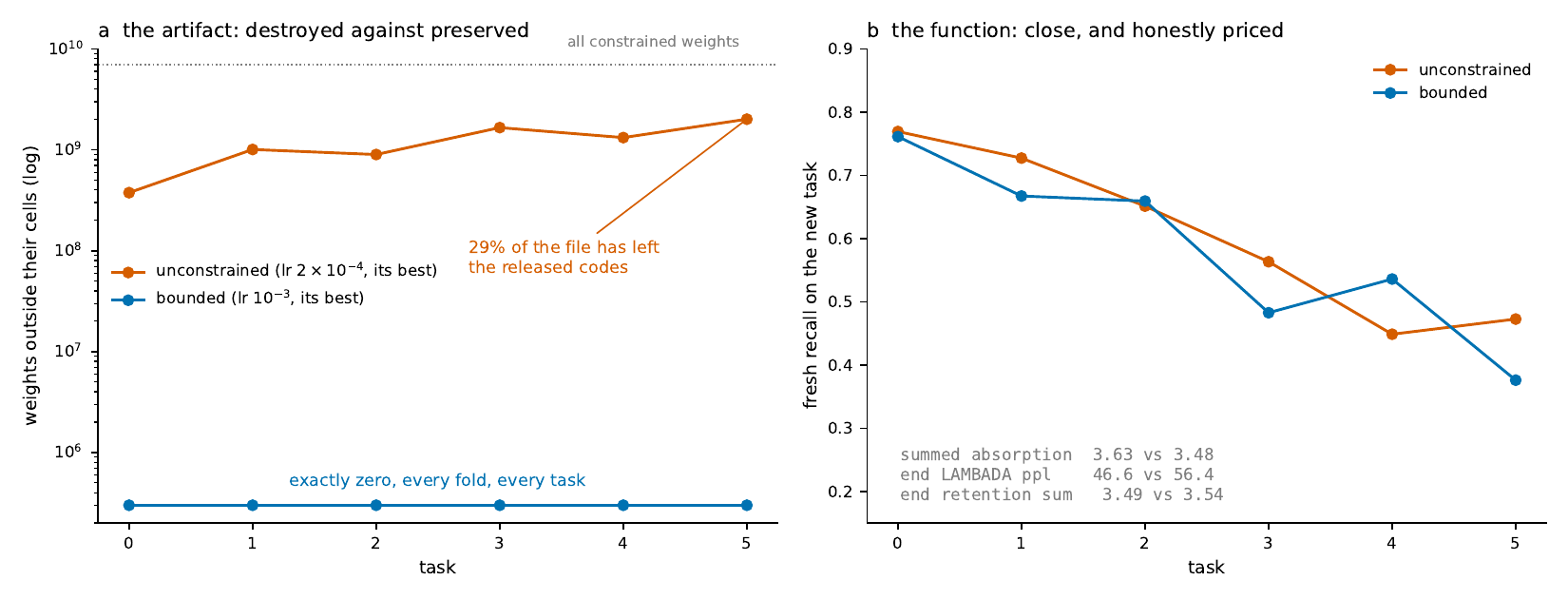}
\caption{\textbf{What the bound buys, and what it costs, at 8B.} Same six
tasks, same card, each arm at its own best learning rate. \textbf{a}, Weights
outside their released cells: the unconstrained arm leaves the artifact at
the first task and never returns; the bounded arm returns the vendor's codes
at every fold. \textbf{b}, Fresh recall per task: the functional gap is
small and priced in the text. One run per arm. Generated by
\texttt{scripts/fig\_boundprice.py}.}
\label{fig:boundprice}
\end{figure}

\subsubsection{Nine consolidations: the drift is bought by writing}
\label{sec:driftlaw}

\begin{figure}[t]\centering
\includegraphics[width=\linewidth]{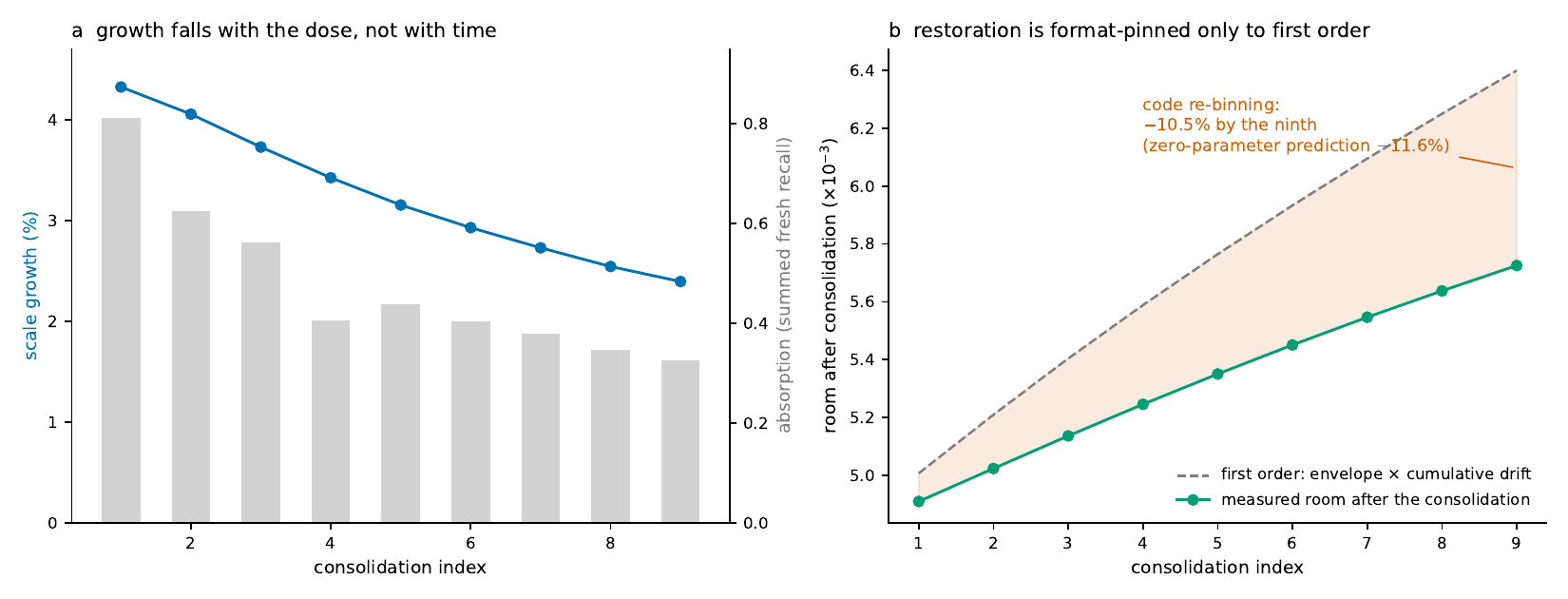}
\caption{\textbf{Nine consolidations in one sequence} (20 tasks, Qwen3-1.7B,
consolidation every second task, \texttt{exp131}). \textbf{a}, The scale
growth at each consolidation falls in step with the absorption of the
version before it; the decline is the image of the dose collapsing, not an
approach to a drift equilibrium. \textbf{b}, The room each consolidation
restores, against the first-order prediction of
Prop.~\ref{prop:consproj}(iv). The growing deficit is code re-binning:
inflating the scales $\times1.334$ over fixed weights re-bins mass into the
central narrow cells and lowers $\mathbb E[H_c]$ by $11.6\%$, a
zero-parameter prediction of the measured $10.5\%$. Generated by
\texttt{scripts/fig\_nineburn.py}.}
\label{fig:nineburn}
\end{figure}

\begin{figure}[t]\centering
\includegraphics[width=0.72\linewidth]{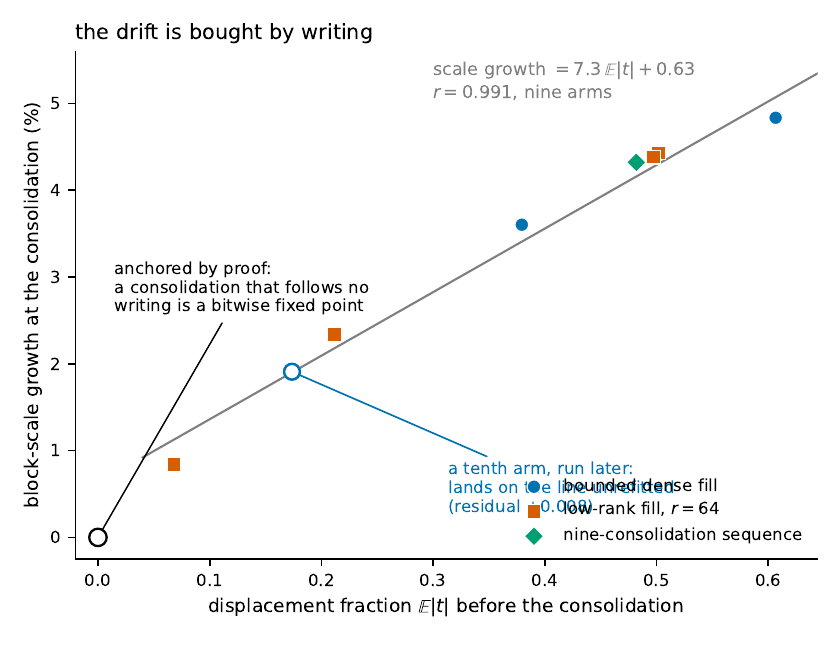}
\caption{\textbf{The drift law.} The block-scale growth at the first
consolidation, against the mean displacement fraction $\mathbb E|t|$ the fill
had written, for nine arms spanning both parameterizations, four learning
rates and a nine-fold range of displacement. One line fits them at
$r=0.991$. The zero point is not fitted: by Prop.~\ref{prop:consproj}(i) a
consolidation that follows no writing is a bitwise fixed point, so the curve
is anchored at the origin analytically and the positive intercept of the
linear fit is curvature, not a write-free drift. Generated by
\texttt{scripts/fig\_driftlaw.py}.}
\label{fig:driftlaw}
\end{figure}

Whether renewal can repeat without limit turns on one quantity the earlier
cycles never recorded: what happens to the block scales. A consolidation
re-derives each scale from the current weights, and a fill that pushed the
block's largest weight outward drags the scale up with it. The twenty-task (1.7B, one-seed)
sequence of Fig.~\ref{fig:nineburn} instruments this: nine consolidations,
each recording its scale statistics, its restored room, and the absorption
of the version it closes. Violations are zero at every fold and every
consolidation.

Three findings. First, the drift is real and it compounds: the mean scale
grows $4.3\%$ at the first consolidation and $2.4\%$ at the ninth,
$\times1.334$ cumulatively, and the sequence reaches no equilibrium.
Second, the within-run decline of the drift rate attributes to the dose,
not to time. Inside a single run every quantity is monotone in the
consolidation index, so any of them ``explains'' any other; the attribution
comes instead from varying the driver across arms at a fixed consolidation
index (Fig.~\ref{fig:driftlaw}): nine arms put their first consolidation on
one line, drift $=7.3\,\mathbb E|t|$ at $r=0.991$, all at 1.7B, with the origin supplied
by the projection identity rather than by the fit. The drift is bought by
writing, at a measured exchange rate, and a version that writes little
drifts little. Third, the room a consolidation restores is format-pinned
only to first order: the measured restoration falls $10.5\%$ short of
envelope~$\times$~cumulative drift by the ninth consolidation, and the
deficit is reproduced with no fitted parameter by re-binning alone
(Fig.~\ref{fig:nineburn}b). Scale drift does not merely inflate the file's
dynamic range; it slowly reshapes which cells the knowledge lives in.

\subsubsection{Several writers, one artifact: two keep their knowledge, four
lose their smallest writers}\label{sec:fusion}
Two writers with disjoint corpora wrote into one release and recalled
$86.6\%$ of their facts after the merge. On the real corpus of
\S\ref{sec:realcorpus} split by domain, two inert fills at reserved width
summed to that figure from $92.9\%$ before the merge, the worst-off writer
keeping $82.7\%$, at a LAMBADA of $29.6$ (Table~\ref{tab:fusion}). Nothing
had to be clamped, because the sum of two reserved fills lies inside the
cells by construction, and the merged model re-quantizes to the released
codes exactly. The alternatives a reader would reach for kept far less of
the same two writers' knowledge, and from a marginally higher $94.4\%$ in
place: $20.5\%$ by averaging the full-width fills, $25.6\%$ under TIES and
$47.3\%$ under the magnitude winner. The full-width sum came closest at
$62.3\%$, and stayed inside the cells only by pulling $208{,}005{,}377$
weights back into them, which is where the rest of its knowledge and its
cross-domain perplexity went (LAMBADA $94.6$). The number of writers is the
ceiling, and the table locates it: four inert domain writers at a quarter
of the width each kept $50.7\%$ but left the worst of them at $2.6\%$, and
eight kept $2.4\%$ with the worst at $0.0\%$.

Every update above has one author. A release, once published, has many: a
hospital, a newsroom and a software vendor each want their own facts in the
same model, at the same time, without seeing one another's corpus. With
bounded fills the arithmetic of combining them is simple. If each of $K$
writers is given $1/K$ of the half-width, every fill satisfies
$|f_k|\le M/K$ and their sum satisfies $|\sum_k f_k|\le M$: the combined
model is bit-identical to the release \emph{by construction}, with no clamp
and nothing lost at the merge, and each contribution remains separately
revocable. Writers who did not coordinate can be averaged instead: a cell is
an interval, the mean of points inside it is inside it, so $\tfrac1K\sum_k
f_k$ is invariant by convexity---at the price of diluting every writer's
fill $K$-fold. Whether the \emph{knowledge} survives either operation the
way the arithmetic does is measured in \texttt{experiments/exp\_fuse.py}:
$K\in\{2,4\}$ writers on disjoint fact sets, each scored on its own facts
before the merge and again after it, against a single writer on the union
and against the lossy alternative a reader would try first (full-width
fills, summed, clamped to the cell).

\begin{table}[t]\centering\small
\setlength{\tabcolsep}{4pt}
\resizebox{\linewidth}{!}{
\begin{tabular}{llrrrrrr}
\toprule
$K$ & merge & own recall & fused recall & worst party & clamped & WikiText & LAMBADA \\
 & & before merge & & & weights & & \\
\midrule
1 & single writer & 55.8\% & 56.1\% & 56.1\% & 0 & 11.65 & 39.4 \\
2 & reserved $1/K$, sum & 40.3\% & 20.3\% & 18.7\% & 0 & 12.13 & 80.1 \\
2 & full width, average & 56.4\% & 24.9\% & 23.3\% & 0 & 10.70 & 35.1 \\
2 & full width, sum, clamp & 56.4\% & 30.6\% & 29.7\% & 275,590,505 & 13.05 & 73.7 \\
2 & disjoint matrices, full width, sum & 37.0\% & 19.2\% & 15.1\% & 0 & 12.37 & 76.6 \\
4 & reserved $1/K$, sum & 29.2\% & 4.9\% & 4.1\% & 0 & 32.08 & 548.9 \\
4 & full width, average & 57.3\% & 13.9\% & 13.2\% & 0 & 10.24 & 32.1 \\
4 & full width, sum, clamp & 57.3\% & 16.0\% & 14.5\% & 250,027,597 & 17.18 & 206.3 \\
4 & disjoint matrices, full width, sum & 25.0\% & 5.3\% & 4.0\% & 0 & 17.45 & 213.6 \\
2 & real corpus, by domain: reserved $1/K$, sum & 92.1\% & 67.1\% & 60.8\% & 0 & 11.81 & 106.2 \\
2 & real corpus, by domain: full width, average & 94.4\% & 20.5\% & 20.1\% & 0 & 10.72 & 40.8 \\
2 & real corpus, by domain: full width, sum, clamp & 94.4\% & 62.3\% & 50.9\% & 208,005,377 & 13.08 & 94.6 \\
2 & real corpus, by domain: full width, magnitude winner & 94.4\% & 47.3\% & 22.2\% & 0 & 12.13 & 68.3 \\
2 & real corpus, by domain: full width, TIES & 94.4\% & 25.6\% & 18.8\% & 0 & 10.71 & 35.9 \\
2 & real corpus, by domain: disjoint matrices, full width, sum & 94.0\% & 69.4\% & 55.7\% & 0 & 12.20 & 110.8 \\
2 & full width, average & 56.1\% & 26.3\% & 25.3\% & 0 & 10.70 & 35.5 \\
2 & full width, sum, clamp & 56.1\% & 32.7\% & 32.5\% & 274,492,084 & 13.06 & 78.8 \\
2 & full width, magnitude winner & 56.1\% & 33.3\% & 33.1\% & 0 & 12.06 & 53.8 \\
2 & full width, TIES & 56.1\% & 22.4\% & 20.7\% & 0 & 10.70 & 31.5 \\
2 & full width, DARE & 56.1\% & 29.1\% & 28.7\% & 474,387,963 & 11.95 & 48.9 \\
4 & full width, average & 57.3\% & 13.8\% & 12.9\% & 0 & 10.24 & 32.1 \\
4 & full width, sum, clamp & 57.3\% & 16.0\% & 14.5\% & 250,027,597 & 17.18 & 206.4 \\
4 & full width, magnitude winner & 57.3\% & 18.4\% & 16.5\% & 0 & 12.21 & 63.4 \\
4 & full width, TIES & 57.3\% & 17.7\% & 15.6\% & 0 & 10.97 & 37.6 \\
4 & full width, DARE & 57.3\% & 16.3\% & 15.2\% & 425,199,367 & 14.95 & 137.8 \\
2 & full width, $A$ frozen, average & 20.9\% & 8.8\% & 8.6\% & 0 & 10.45 & 33.8 \\
2 & full width, $A$ frozen, magnitude winner & 20.9\% & 13.4\% & 12.4\% & 0 & 11.38 & 47.6 \\
2 & inert (neutral pool), reserved, sum & 37.0\% & 14.4\% & 13.1\% & 0 & 12.33 & 47.2 \\
2 & inert (neutral pool, $3\times$), reserved, sum & 24.5\% & 10.2\% & 8.1\% & 0 & 12.32 & 47.1 \\
2 & inert (WikiText), reserved, sum & 34.4\% & 18.7\% & 16.7\% & 0 & 11.00 & 30.0 \\
2 & inert (neutral pool), full width, average & 45.9\% & 7.2\% & 6.4\% & 0 & 10.61 & 31.0 \\
2 & inert (neutral pool), full width, magnitude winner & 45.9\% & 14.2\% & 12.7\% & 0 & 11.81 & 37.7 \\
2 & inert (neutral pool), full width, TIES & 45.9\% & 6.3\% & 5.8\% & 0 & 10.58 & 29.9 \\
2 & real corpus: inert, reserved, sum & 92.9\% & 86.6\% & 82.7\% & 0 & 10.86 & 29.6 \\
4 & real corpus: inert, reserved, sum & 93.6\% & 50.7\% & 2.6\% & 0 & 11.18 & 33.7 \\
4 & real corpus: reserved, sum & 94.8\% & 0.0\% & 0.0\% & 0 & 23.30 & 343.2 \\
8 & real corpus: inert, reserved, sum & 94.2\% & 2.4\% & 0.0\% & 0 & 12.34 & 41.9 \\
\bottomrule
\end{tabular}
}
\caption{$K$ writers into one release, Qwen3-1.7B, $r{=}64$, synthetic
facts split $K$ ways. ``own recall'' is the mean of each writer's recall on
its own facts before the merge; ``fused'' is recall on all facts after it;
``worst party'' is the writer that lost most. Every merge re-quantizes
bit-identically; ``clamped'' counts the weights the lossy merge had to pull
back into their cells.}
\label{tab:fusion}
\end{table}

The arithmetic holds and the knowledge does not, and the allocation that
removes every shared weight shows why. Two writers given \emph{disjoint
matrices} at full width---party $k$ owns the layers whose index is
$k \bmod K$, so no weight has two authors and the merge is a plain
sum---recall $35$ and $39\%$ of their own facts before the merge and $23$
and $15\%$ after it; four such writers fall from $15$--$40\%$ to the
guessing floor (Table~\ref{tab:fusion}). Nothing collided inside a cell.
What was lost is \emph{functional}: a fill is a function of its inputs,
and a cross-talk matrix---each writer's fill attached alone, the KL
divergence it induces on every writer's prompts---shows the
independently trained fills moving the distribution on the other writers'
prompts as much as on their own ($5.1$ against $5.7$ nats at $K{=}4$).
The synthetic writers share a template and differ only in the names, and
the fill responds to the template. A memorized fact has a small logit
margin, and another writer's response is enough to tip it. This is the
interference the merging literature describes for task
vectors~\cite{taskarith,ties,dare}, with the cells ruling out the
arithmetic explanation.

Three remedies were measured on the same trained writers, and two more
that change how the writers are trained. After training, a rule can settle
each weight inside the cell: the magnitude winner (per weight, the writer
with the larger $|t|$), TIES~\cite{ties} (trim to the top $20\%$ by
$|t|$, elect a sign, mean of the agreeing writers) and DARE~\cite{dare}
(drop half at random, rescale, clamp) are selections or convex
combinations of in-cell points and therefore invariant without a clamp.
None recovers what the functional interference took: the best of them keeps
$33\%$ of the facts from writers who held $51$ and $61\%$ ($18\%$ from
four writers who held $49$--$67\%$), and the two that rescale damage the
model (LAMBADA $49$--$138$). Before training, a writer can be made
\emph{inert} where it was not written, with a distillation term
$\lambda\,\mathrm{KL}(p_{\hat W}\,\|\,p_{\hat W+f_k})$ on a neutral
pool, the teacher being the same loaded model with its fills switched
off~\cite{lwf}: the fill may move the served distribution on its own facts
and nowhere else. On the synthetic split this works as intended and fails
at the merge: with sentences in the writers' own templates about entities
nobody was given, cross-talk falls from $4$ to $0.5$ nats, and the
resulting fills are minimal---any dilution or selection drops them below
threshold ($7$--$14\%$). Projecting each fill's input factor off the
principal directions of the other writers' inputs, after
OSRM~\cite{osrm}, leaves a writer unable to learn its own facts ($7\%$):
with a shared template, the other writers' directions are its own.

\begin{table}[p]\centering\footnotesize
\setlength{\tabcolsep}{4pt}
\begin{tabular}{llrrrrrrr}
\toprule
$K$ & rounds & round & epochs & own, in place & merged & worst party & WikiText & LAMBADA \\
\midrule
2 & 3 & 1 & 8 & 13.3\% & 8.0\% & 7.7\% & 10.54 & 32.7 \\
2 & 3 & 2 & 8 & 42.9\% & 26.7\% & 24.4\% & 10.53 & 33.4 \\
2 & 3 & 3 & 8 & 41.1\% & 33.1\% & 31.9\% & 10.55 & 33.8 \\
2 & 6 & 1 & 4 & 7.3\% & 6.6\% & 6.6\% & 10.55 & 31.3 \\
2 & 6 & 2 & 4 & 10.8\% & 8.6\% & 7.9\% & 10.56 & 31.8 \\
2 & 6 & 3 & 4 & 18.2\% & 13.1\% & 13.1\% & 10.48 & 32.0 \\
2 & 6 & 4 & 4 & 28.9\% & 22.0\% & 21.9\% & 10.55 & 32.2 \\
2 & 6 & 5 & 4 & 30.6\% & 27.6\% & 26.7\% & 10.55 & 32.2 \\
2 & 6 & 6 & 4 & 30.3\% & 30.0\% & 29.1\% & 10.57 & 32.4 \\
4 & 3 & 1 & 8 & 17.4\% & 8.2\% & 8.0\% & 10.35 & 30.7 \\
4 & 3 & 2 & 8 & 46.3\% & 17.8\% & 15.7\% & 10.30 & 31.1 \\
4 & 3 & 3 & 8 & 45.4\% & 28.0\% & 25.5\% & 10.29 & 31.5 \\
2 & 3, inert & 1 & 8 & 4.7\% & 2.9\% & 2.6\% & 10.44 & 29.4 \\
2 & 3, inert & 2 & 8 & 35.1\% & 16.3\% & 15.1\% & 10.49 & 29.7 \\
2 & 3, inert & 3 & 8 & 41.9\% & 27.0\% & 25.1\% & 10.49 & 30.0 \\
2 & 3, magnitude merge & 1 & 8 & 12.9\% & 9.8\% & 9.1\% & 11.49 & 41.4 \\
2 & 3, magnitude merge & 2 & 8 & 45.6\% & 26.4\% & 23.3\% & 11.37 & 39.4 \\
2 & 3, magnitude merge & 3 & 8 & 44.6\% & 34.8\% & 33.9\% & 11.42 & 40.0 \\
2 & 3, real corpus & 1 & 8 & 76.4\% & 13.2\% & 5.5\% & 10.53 & 31.2 \\
2 & 3, real corpus & 2 & 8 & 72.1\% & 61.1\% & 57.4\% & 10.70 & 33.4 \\
2 & 3, real corpus & 3 & 8 & 82.6\% & 79.7\% & 77.6\% & 10.76 & 34.7 \\
2 & 3, real corpus, inert & 1 & 8 & 94.5\% & 18.1\% & 16.9\% & 11.00 & 28.5 \\
2 & 3, real corpus, inert & 2 & 8 & 88.8\% & 68.0\% & 56.5\% & 10.77 & 28.8 \\
2 & 3, real corpus, inert & 3 & 8 & 93.2\% & 86.2\% & 79.0\% & 10.70 & 29.2 \\
4 & 3, real corpus & 1 & 8 & 90.3\% & 4.7\% & 1.2\% & 10.33 & 29.3 \\
4 & 3, real corpus & 2 & 8 & 91.2\% & 15.6\% & 11.0\% & 10.25 & 30.0 \\
4 & 3, real corpus & 3 & 8 & 94.0\% & 39.6\% & 31.5\% & 10.22 & 30.3 \\
4 & 3, real corpus, magnitude & 1 & 8 & 90.3\% & 39.6\% & 17.9\% & 11.56 & 44.3 \\
4 & 3, real corpus, magnitude & 2 & 8 & 95.4\% & 80.3\% & 23.1\% & 11.93 & 41.6 \\
4 & 3, real corpus, magnitude & 3 & 8 & 96.5\% & 75.5\% & 20.5\% & 12.44 & 47.4 \\
4 & 3, real corpus, inert & 1 & 8 & 87.5\% & 3.7\% & 0.9\% & 11.00 & 28.4 \\
4 & 3, real corpus, inert & 2 & 8 & 94.0\% & 11.4\% & 6.3\% & 10.75 & 28.5 \\
4 & 3, real corpus, inert & 3 & 8 & 95.5\% & 27.6\% & 23.5\% & 10.61 & 28.6 \\
4 & 6, real corpus & 1 & 4 & 72.0\% & 4.5\% & 1.2\% & 10.42 & 29.5 \\
4 & 6, real corpus & 2 & 4 & 76.8\% & 14.2\% & 9.4\% & 10.30 & 29.7 \\
4 & 6, real corpus & 3 & 4 & 80.6\% & 29.0\% & 13.4\% & 10.27 & 29.8 \\
4 & 6, real corpus & 4 & 4 & 85.7\% & 46.7\% & 26.0\% & 10.24 & 29.9 \\
4 & 6, real corpus & 5 & 4 & 83.6\% & 63.9\% & 59.1\% & 10.24 & 30.1 \\
4 & 6, real corpus & 6 & 4 & 85.0\% & 77.7\% & 70.4\% & 10.26 & 30.3 \\
8 & 3, real corpus, magnitude & 1 & 8 & 87.6\% & 26.8\% & 0.0\% & 12.03 & 53.8 \\
8 & 3, real corpus, magnitude & 2 & 8 & 89.5\% & 72.2\% & 0.0\% & 12.31 & 38.5 \\
8 & 3, real corpus, magnitude & 3 & 8 & 94.9\% & 81.3\% & 0.0\% & 13.26 & 53.8 \\
8 & 6, real corpus & 1 & 4 & 61.5\% & 2.0\% & 0.0\% & 10.87 & 29.1 \\
8 & 6, real corpus & 2 & 4 & 70.9\% & 3.9\% & 0.0\% & 10.55 & 29.1 \\
8 & 6, real corpus & 3 & 4 & 82.6\% & 6.1\% & 0.0\% & 10.45 & 29.2 \\
8 & 6, real corpus & 4 & 4 & 90.5\% & 10.7\% & 0.0\% & 10.40 & 29.3 \\
8 & 6, real corpus & 5 & 4 & 93.7\% & 17.2\% & 0.0\% & 10.36 & 29.3 \\
8 & 6, real corpus & 6 & 4 & 94.8\% & 27.2\% & 0.0\% & 10.36 & 29.5 \\
\bottomrule
\end{tabular}

\caption{Fusion by rounds (\texttt{experiments/exp\_fuse\_rounds.py}).
Every round, each writer starts from the same served model, trains a fresh
fill on its own data for a share of the epoch budget and sends it; the
average is folded into the anchors and the next round starts there.
``own, in place'' is each writer's recall on its own facts before the
merge. Total training compute equals the one-shot run's.}
\label{tab:fusion-rounds}
\end{table}

What does work is to let the writers see the merge. In the manner of
federated averaging~\cite{fedavg}, each round every writer starts from
the same served model, trains a fresh fill on its own data for a share of
the epoch budget, and sends the fill, not the data; the fills are averaged
(a convex combination of in-cell points), the average is folded into the
anchors exactly as a sequential update is, and the next round starts from
there. A writer whose facts the last merge disturbed sees that in its own
loss and corrects it, and the correction is an increment, smaller than the
first fill, so each round disturbs the others less. With the same total
compute the two synthetic writers keep $33.1\%$ after three rounds against
$26.3\%$ for the one-shot average, the model healthier than the single
writer's (LAMBADA $34$ against $39$), and four writers keep $28.0\%$
against $13.8\%$; every intermediate artifact re-quantizes bit-identically
and the final one is verified (Table~\ref{tab:fusion-rounds}). What the
rounds give up is separate revocability---once folded, a writer's
contribution is part of the anchors---and that is the trade: independent
increments that sum exactly and can be withdrawn, or rounds that keep more
of the knowledge and cannot be.

The synthetic split is the adversarial case, and the real one is the
case fusion is for. The $86.6\%$ above comes from that corpus,
\S\ref{sec:realcorpus} divided by domain (a hospital writing medicine, a
newsroom writing the rest), with generic WikiText text as the neutral
pool, and inertness is what buys it: cross-talk falls
tenfold ($1.27$ to $0.12$ nats) and the merge keeps $92\%$ of what each
writer held in place, where the same split without inertness keeps
$67\%$ by exact sum and damages the model (LAMBADA $106$), $69\%$ by
disjoint matrices, and $21$--$62\%$ under the post-hoc rules; rounds
reach $79.7\%$, and inert writers in rounds $86.2\%$. Four domain writers
at a quarter of the width each mark the edge: the exact sum of four
independent fills leaves nothing ($0\%$, LAMBADA $343$), the four inert
writers keep $50.7\%$, and the two smallest of them---$68$ and $156$
sentences against $432$ and $508$---are the ones that lose ($24$ and
$3\%$ against $56$ and $56$). How many writers one release can take at
a given width is therefore a number this measures, not a property the
arithmetic guarantees.
Where writers' inputs differ, a fill can be taught to respond only to its
own, and the arithmetic of the cells then delivers what it promised;
where they share a schema, the writers have to see one another's work.

\section{Shipping the update: nested checkpoints}\label{sec:codec}
An update that cannot be distributed is not an update. Proposition~\ref{prop:codec}
says the in-cell position can be encoded with $k$ bits per weight such that
truncating the stream returns the released artifact exactly; here we measure
what a recipient gets at each depth. We take the healed weights of the
$10^4$-fact dense run (\S\ref{sec:capacity-xdom}), re-encode every weight's
cell position at $k\in\{1,2,3,4\}$, and evaluate the reconstruction.

\begin{center}\small
\begin{tabular}{lrrrrc}
\toprule
shipped & payload (MiB) & recall & WikiText-2 & LAMBADA & artifact \\
\midrule
nothing (anchor only) & 0 & 0.1\% & 11.34 & 28.54 & exact \\
$k=1$ & 168 & 25.7\% & 10.55 & 87.01 & exact \\
$k=2$ & 336 & 34.7\% & 10.85 & 107.62 & exact \\
$k=3$ & 504 & \textbf{35.5\%} & 11.09 & 121.32 & exact \\
$k=4$ & 672 & 35.2\% & 11.18 & 126.34 & exact \\
fp16 residual & 2688 & 34.9\% & 11.24 & 129.17 & exact \\
\bottomrule
\end{tabular}
\end{center}

Every depth re-quantizes bit-identically---checked in the integer domain over
all $1.409\times10^9$ constrained weights at every $k$, zero violations---so
the nesting property of Proposition~\ref{prop:codec} holds in practice and a
truncated stream is a valid, older, still-certified checkpoint.

The quality curve is not what a compression result usually looks like. Two
bits per weight recover $99.4\%$ of the knowledge that a full fp16 residual
carries ($34.7\%$ vs.\ $34.9\%$) at one eighth the payload, and three bits
match it ($35.5\%$, a gap well inside the run-to-run spread). Coarsening the
residual is not merely lossy storage, and the mechanism is not a pull toward
the anchor. The refinement code is mid-rise: level $r$ at depth $k$ decodes to
$\ell+(r+\tfrac12)\,(u-\ell)/2^k$, so its reconstruction points sit at
$\{\tfrac14,\tfrac34\}$, $\{\tfrac18,\tfrac38,\dots\}$ of the cell and
never at the anchor, which lies at $0.37$--$0.64$ of its cell for all but the
two end codes. No weight is sent back to its anchor at any depth. What
coarsening does is replace each fill by the midpoint of the sub-interval it
falls in---a deterministic per-code remapping of the dequantization table.
The shape of the curve is the shape of a regulariser rather than of a
projection: every depth sits below \emph{both} endpoints on WikiText
($10.55$--$11.18$ against $11.24$ for the fp16 residual and $11.34$ for the
anchor), which a pull toward the anchor would not produce, while on recall
the depths climb to the fp16 endpoint rather than falling away from it. The
effect is visible on the strict axis,
where every depth beats the fp16 residual---LAMBADA $87.0$ at $k{=}1$
against $129.2$ for fp16---and in the efficiency column, where
$k\in\{1,2\}$ reach $523$ bits per point against fp16's $413$. The series
ran on a projected-dense artifact; on a CellFill fill the code is lossier
at shallow depth---on the $r{=}64$ synthetic fill, one bit recovers $50\%$
of the knowledge ($27.1\%$ against $54.0\%$), two bits $90\%$, three $97\%$
and four $99\%$ (\texttt{exp77})---which is what a fill whose magnitude
carries information, rather than one that has collapsed to a sign, should
show.
The refinement depth is therefore a third operating knob alongside rank and
radius, not just a transport format: $k{=}1$ ships $16\times$ less and
forgets less, at the price of a quarter of the knowledge.

We report this as a single-configuration study on one archived run. The
comparison we cannot yet make is to BitDelta~\cite{bitdelta}, which
compresses an \emph{unconstrained} fine-tuning delta to one bit; the
constrained and unconstrained deltas are different objects and a fair
comparison needs both pipelines on the same task.

\section{Related work}

\textbf{Frozen quantized bases with residuals.} QLoRA~\cite{qlora} freezes
an NF4 base and trains BF16 low-rank adapters; LoftQ~\cite{loftq}
initializes adapters from the SVD of the quantization residual $W-\hat W$
---the very object our constraint set is built around. Neither constrains
the merged weights to the quantization cells, so merging changes the
artifact. QA-LoRA~\cite{qalora}, low-rank QAT~\cite{lrqat},
IntLoRA~\cite{intlora} and LoTA-QAF~\cite{lotaqaf}---a ternary adapter
aligned to the int4 grid so that merging is lossless precisely because it
moves the stored codes---take the complementary road: they alter the int4
weights to absorb the adapter. The deployment shape itself is already in
production: Apple's foundation models serve a frozen 2-bit base under
swappable adapters~\cite{applefm}, with no claim that learning preserves
the base bit for bit. We occupy the unexplored corner: dense residual
learning under a bitwise \emph{no-change} guarantee for the artifact.

PEQA~\cite{peqa} and AlphaTuning~\cite{alphatuning} are the exact dual of
what we do: it freezes the integer
codes and trains the per-group \emph{scales}, where we freeze codes and
scales together and train inside the cells they define. The two divide the
same object between them, and the division decides the contract. Moving a
scale moves every cell in its block at once, so the dequantized values of
untouched weights change and the artifact's meaning changes with them---which
is precisely the failure App.~\ref{app:frozen} constructs, and the reason
invariance here is defined against frozen scales rather than frozen codes
alone.

\textbf{What the contract certifies, and what it does not.} The integer
check certifies one thing: that the served weights re-quantize to the
released codes under the released scales---equivalently, that every update
lies inside the cells. Exact revocability is not something the check
certifies; it follows from the update being stored as an increment, and
only for a deployment that ships it as one. The check certifies nothing
about \emph{content}. A fill that passes the check can
carry any function the cells can express---the 291 facts of
\S\ref{sec:realcorpus}, or facts chosen to mislead---and the hash over the
code tensor is identical in both cases. Three consequences should be stated
plainly. First, the certified object is the tuple of codes \emph{and} scales
with their double-quantization state, not the codes alone. A hash over the
code tensor cannot detect a substituted scale: whatever $s'$ is shipped with
the codes, dequantizing under $s'$ and re-assigning under $s'$ returns the
same codes, so a party that changes the scales ships a different model under
an unchanged hash. And under double quantization the second-level offset and
scale are statistics of the absmax population that cannot be recovered from
dequantized values at all (\S\ref{app:frozen}). Frozen scales are therefore
not the conservative definition of invariance; they are the only one a third
party can check. Second, bit-identity does not transfer a
certification of behaviour. The served model is a different function, and
the benchmarks of \S\ref{sec:downstream} show it differing by several points
in both directions; an audit regime built on this contract re-runs the
behavioural suite on the served weights and does not re-hash the artifact.
Third, the channel this method writes into is the one that
quantization-conditioned attacks~\cite{quanti,qcbdefense} and, on GGUF
grids, Egashira et al.~\cite{gguf-attack} exploit adversarially: a
quantization-error band
inside which full-precision weights can be moved, by a band-constrained
fine-tune, so that the full-precision model behaves one way and its
quantized form another. CellFill is the declared, revocable use of the same
band---the fill cannot leave it, it is shipped as an explicit increment
rather than hidden in a release, and it is the served dense model, not the
quantized one, that carries the update---but a reader who knows the attack
should be told that the authors do too. Our own measurements say the covert version of the concern
does not survive contact with an audit: an update that absorbs a corpus
shifts HellaSwag by seven points on ten thousand items, which no
re-evaluation misses.

\textbf{The reference system as an instrument.} A consequence we did not
design for: because codes, scales and cell geometry are frozen and shipped,
every mechanistic claim in this paper can be tested against the released
artifact on a CPU, with no training, by anyone. We learned this by having
three of our own explanations fail that test---the source of the folding
slack (\S\ref{sec:sequential}), the mechanism of refinement-code
coarsening (\S\ref{sec:codec}), and the matching population for the
radius knob (\S\ref{sec:geometry})---each by arithmetic on the frozen
artifact and the archived fills, without a new training run, each corrected
above. A method whose base drifts
cannot be audited this way; one whose base is immutable can, and the
instrument cuts both ways.

\textbf{Sub-cell structure carries signal.} AdaRound~\cite{adaround} and
BRECQ~\cite{brecq} showed that choosing rounding directions within
quantization cells measurably changes loss---evidence that the
dequantization gap has usable information capacity, which we exploit for
continual learning rather than calibration.

\textbf{Nested and progressive precision.} Matryoshka
representation learning~\cite{mrl} and Matryoshka quantization~\cite{matquant}, any-precision LLMs~\cite{anyprec},
NestQuant~\cite{nestquant}, MatGPTQ~\cite{matgptq}, and recurrent residual
quantization~\cite{rrq} build bit-sliced models whose truncations serve
multiple precisions; ReQuant~\cite{requant}, concurrently, refines a model
by re-electing codes on a fixed grid---the grid survives, the codes do
not, and nothing is learned. Our refinement code (Prop.~\ref{prop:codec}) shares
the nesting mechanics but deploys it along the \emph{version/time} axis:
low bits are writable by learning, high bits are immutable across releases.
BitDelta~\cite{bitdelta} compresses fine-tuning deltas to 1 bit for
multi-tenant serving (cf.\ S-LoRA~\cite{slora}); our deltas are additionally
box-bounded, giving the invariance and rollback properties.

\textbf{Continual learning and editing.} Catastrophic
interference~\cite{mccloskey} has been attacked by soft anchoring
(EWC~\cite{ewc} penalizes movement by Fisher information), by
rehearsal~\cite{robins1995}, and by parameter isolation---PackNet~\cite{packnet}
prunes a network to free capacity for each new task and then freezes what it
has used. Our box constraint is closest in spirit to the last of these, and
differs in where the budget comes from: PackNet must choose which parameters
to spend, while the quantization grid hands us a hard, per-weight trust region
whose radii are already fixed by the released artifact, at no cost and with no
choice to make. That inheritance is also what makes the budget finite, and
Proposition~\ref{prop:decay} says it is spent geometrically. The resulting
failure mode is worth distinguishing from the loss of plasticity documented in
unconstrained continual learning~\cite{plasticity}, where the network's own
units degrade until it learns no better than a shallow one: here no constraint
is ever violated and the room runs out by construction, a failure a violation
check cannot detect. The room is not the whole of it: a fixed-cell control in
which it never decays still loses three quarters of what the folding sequence
loses (\S\ref{sec:sequential}, \S\ref{sec:loop}), and what the remainder is---
interference, or distance from the release---we do not separate. Our results also show the box
must be combined with fresh rehearsal (direction shaping) since it bounds only
magnitude.
Knowledge editing (ROME~\cite{rome}, MEMIT~\cite{memit}, MEND~\cite{mend}) writes individual facts via closed-form updates without invariance guarantees---and editors built for full precision degrade sharply when run on quantized substrates~\cite{cacheuk}. GRACE~\cite{grace} is the closest existing analogue to our revocability claim---it stores edits in a discrete codebook that can be removed---but it does so in an auxiliary memory consulted at inference, whereas we leave the released artifact bit-identical and add no inference-time structure; knowledge
capacity scaling laws~\cite{physics33} ($\sim$2 bits/parameter; int4
degrades capacity) supply the measurement methodology and ceiling reference
for our bit accounting. Weight interpolation studies (linear mode
connectivity~\cite{lmc}, model soups~\cite{soups}, WiSE-FT~\cite{wiseft})
established that base--finetune segments are often low-loss; the
quantization segment $[\hat W, W]$ is a within-cell special case that we
measure directly.

\textbf{Quantization stacks.} We build on
bitsandbytes/LLM.int8~\cite{llmint8} NF4, and also write into the uniform
asymmetric grids of published W4A16 releases---Google's QAT Gemma~4 and
RedHat's compressed-tensors Qwen3---for which the cell arithmetic of
\S\ref{sec:theory} applies with only the level table changed
(\S\ref{sec:scale}). We have not run GPTQ~\cite{gptq} or AWQ~\cite{awq}
themselves, so what is measured is the grid they produce and not their
calibration procedures.

\section{Discussion and outlook}\label{sec:discussion}

\subsection{The release discipline: minor versions, major versions}\label{sec:lifecycle}
The contract makes one release discipline the obvious one. Minor versions write refinement bits into the cells: the shipped
4-bit file is unchanged across the whole sequence, each increment is
verifiable and revocable, and capacity is drawn down from the writable radius.
A major version \emph{consolidates}: the current weights are re-quantized into
a new frozen artifact, the residual is zeroed, and the writable radius is
restored. The artifact changes here---by design, at a declared moment, with a
new hash---which is the opposite of the silent drift that ordinary fine-tuning
produces. \texttt{experiments/exp\_seq.py} implements both modes, including
the fold that lets the residual accumulate across tasks and the re-anchoring
that follows consolidation. \S\ref{sec:sequential} runs both: capacity is
renewable---consolidation restores the writable room---but at a stated price:
$19.0\%$ of the codes change at the single consolidation of
\S\ref{sec:sequential}, and $21.4$ then $26.1\%$ at the two consolidations of
the six-task cycle of \S\ref{sec:loop}, so within that cycle the price grew
with the depth of the sequence it interrupted. Cumulative forgetting is
measurable, with the first task retaining $31\%$ of its recall after three
later updates and $84.9\%$ when the cycle also rehearses. How the geometric constant $\beta$ and the consolidation
break-even point vary with model, task and optimizer remains open.

Two things the decay is \emph{not}. It is not a property of the cells: it
is a property of folding---of writing the next task into whatever room the
last one left. An update kept as a revocable increment on a fixed anchor
does not shrink the cell for its successors---the reachable set stays the
whole cell---though successive increments still share that one cell, at
the price of retraining jointly when one changes; and room can be
partitioned in advance rather than consumed in sequence, which is what
\S\ref{sec:fusion} does with $K$ independent writers. Nor is it evidence
that the cells are the wrong place to write: the partition sweep of
\S\ref{sec:where} shows absorption differing sharply by matrix and depth,
and which matrices absorb facts and which support composing them is a
question the decay law does not touch.

\subsection{What this licenses, and what it does not}\label{sec:forwhat}
The results above are a method paper's results; what a practitioner
needs to know is what they license. Six claims, each with the evidence
that supports it and the measurement that bounds it.

\begin{enumerate}\setlength{\itemsep}{2pt}
\item \textbf{A published 4-bit file can be updated in place, and stay the
same file.} Injection on the vendors' own releases reaches $94.3\pm2.2\%$,
$96.9\pm0.4\%$ and $96.3\pm2.1\%$ recall at 1.7B, 4B and 8B on NF4, and
$82.9$--$93.9\%$ on the uniform int4 grids of QAT and GPTQ-style releases
(Table~\ref{tab:official}); every merge re-quantizes to the released
codes with zero violations, checked in the integer domain; the update is
$0.1$--$0.4$~GB of factors at 27--31B. The same learning rate that does
this drives LoRA on the same anchor to perplexity $4{,}085$ at 1.7B and to
divergence at 4B, where LoRA needs a tenth of the rate---and then, clipped
into the cells, matches CellFill's recall (Table~\ref{tab:lorareal}): the
cell bound is a stabilizer as much as a contract, and the projected path
is a second way to stay inside it. What this licenses is a firmware-style
update to a deployed, certified artifact---a signed residual whose
admissibility a third party checks by hashing the base and re-binning the
served weights---with rollback by subtraction.
\item \textbf{Several parties can write into one base.} Writers whose
domains differ merge at $86.6\%$ of their facts with a healthy model
(Table~\ref{tab:fusion}: medicine against the rest, inert writers, exact
sum); writers who share a schema interfere, and keep $60\%$ of their
facts under the best one-shot rule or $80\%$ under the rounds of
\S\ref{sec:fusion}. Sequential updates leave the artifact bit-identical:
four in \S\ref{sec:sequential} and six in the consolidating cycle of
\S\ref{sec:loop}, at every fold. What this licenses is multi-tenant
customization of one release and federated pooling that exchanges fills
rather than data; what it does not yet license is unbounded sequential
writing without rehearsal, where the first task retains $19.3\%$ after five
more (\S\ref{sec:rehearse-old}).
\item \textbf{Knowledge in the weights costs an order of magnitude fewer
tokens than knowledge in the prompt, and does not depend on an index.}
On the PopQA tail the fill answers $82$--$90\%$ at 75 tokens per question
against $90$--$827$ for retrieval (Table~\ref{tab:popqa}); with a
chunked index the retrievers fall to $41$--$74\%$ and the fill does not
move; with distractors the multi-hop probes fall from $86$ to $68\%$ under
retrieval and stay at $96\%$ when the injected model is given one
sentence. Retrieval is not replaced---on a clean sentence index it does
not miss, and it remains the right tool for volatile and long-tail
facts---but knowledge a product must always have is cheaper in the cells.
\item \textbf{Capacity is measured, improvable, and on its way to being
predictable.} Stored information is counted in bits per fact and per
parameter (\S\ref{sec:capacity-xdom}); at $10^4$ facts the link's dead
gradient, not the cells, set the knee, and repairing it raises absorption
from $22$ to $39\%$ at the same budget, with rank $32$ reaching $44\%$;
stored bits per trainable parameter are $10^{-3}$, three orders below
the parametrization's information capacity. An operator can ask how much
a model can still take, and the answer is a number; its reproducibility
across hardware---$2.9$ recall points between two cards at the same seed and
the same code (\S\ref{sec:capacity-xdom})---is a stated limit.
\item \textbf{The update ships as refinement bits on top of the 4-bit
file, not as a second model.} Two to three extra bits per weight recover
$48$--$53$ of the fill's $54\%$ recall (\S\ref{sec:codec}), so the served
artifact is the unchanged base plus a nested refinement layer, delivered
progressively and stripped back at will. This also answers the question
every engineer asks first---a fill below the quantization step changes
nothing in the 4-bit file, so how does it change the model---: it acts in
the dequantized arithmetic and travels in the refinement bits.
\item \textbf{What is still missing is stated.} The 27--32B rows are single
runs on self-quantized or vendor QAT anchors and carry no seed spread and no
perplexity ratios; the $3\times10^4$-fact point under the
repaired recipe decides the order of magnitude of one update; the serving
path used here rebuilds the fill every forward pass and is twice the base
model's latency, which is a property of the research implementation and
not of the refinement-bit format.
\end{enumerate}

Table~\ref{tab:scenarios} restates the same evidence against the situations
in which a deployed model is asked to know something it did not know when it
shipped, and names what each of them leaves open.\label{sec:scenarios}

\begin{table}[t]\centering\small
\setlength{\tabcolsep}{4pt}
\begin{tabular}{p{0.2\linewidth}p{0.27\linewidth}p{0.26\linewidth}p{0.2\linewidth}}
\toprule
scenario & pain today & what the results support & still open \\
\midrule
On-device and edge models (phones, vehicles, appliances) & the 4-bit
firmware goes stale at its cutoff; an update means a new model and a new
certification & a signed in-cell residual on the shipped file, verified by
hash and re-binning, rolled back by subtraction (\S\ref{sec:scale},
\S\ref{sec:codec}) & latency of the refinement-bit path in a production
kernel \\
Vendor release, customer knowledge & customers hold only the quantized
release, not the fp16 master; fine-tuning it re-quantizes and breaks the
vendor's calibration & injection on the vendors' own NF4, QAT and GPTQ
releases at $83$--$99\%$ (Table~\ref{tab:official}); LoRA on the same
anchors needs a tenth of the rate, then matches when clipped into the
cells (Table~\ref{tab:lorareal}) & skills beyond facts \\
Regulated deployments (clinical, financial) & the served model must be the
audited one; every update is a re-audit & the audited bits never change;
the update is a separately auditable residual with a measured excursion
(Prop.~\ref{prop:budget}) & a certification standard that names the check \\
Products with a knowledge base (catalogues, internal APIs, policies) &
retrieval costs tokens on every query and fails on chunked and multi-hop
documents & $82$--$90\%$ at 75 tokens against $90$--$827$ with
retrieval; unaffected by chunking; composes where retrieval misses the
second hop (Table~\ref{tab:popqa}) & capacity per update beyond $10^4$
facts \\
Agents and new APIs & a library released after the cutoff is unknown to
the model and its documentation is expensive to keep in context & a
pinned post-cutoff library written in at $31$--$35\%$ recitation and
$3.5$--$3.6\times$ chance in ranked use at 4B and 8B
(Table~\ref{tab:apiusage}) & whether ranked
use converts into programs that run \\
Several teams, one model & departmental fine-tunes cannot be combined
without retraining on pooled data & cross-domain writers merge at
$86.6\%$; rounds pool writers by exchanging fills, not data
(Table~\ref{tab:fusion}) & same-schema writers; many writers \\
Lifelong updating & every update overwrites the last; forgetting is
unmeasured until it is noticed & six updates, each bit-identical; room
consumed at a measured rate $\beta\approx0.82$ (\S\ref{sec:sequential});
replaying earlier material holds the first task at $84.8\%$
(\S\ref{sec:rehearse-old}) & sequences beyond six; consolidation cadence \\
\bottomrule
\end{tabular}
\caption{Where an updatable quantized release is used, against the
results of this paper.}
\label{tab:scenarios}
\end{table}

\textbf{What the contract changes about shipping a model.} The practical
content of Propositions~\ref{prop:inv}--\ref{prop:budget} is that an update
stops being a replacement and becomes an increment stated against a fixed
baseline. Four consequences follow, none of which requires trusting the party
that produced the update. First, verification is mechanical and performable by
a third party: given the released $(a,s)$, the recipient assigns codes to the
served weights in the integer domain and compares---a hash over the code
tensor suffices---and this is exactly the check we run on every merge in this
paper, over $1.409\times10^9$ constrained weights at 1.7B and
$2.435\times10^{10}$ at 27B (\S\ref{sec:scale}). Two conditions make the check
meaningful and both are easy to get wrong: codes must be assigned under the
\emph{shipped} scales rather than by re-running the quantizer
(\S\ref{app:frozen}), and the residual's storage dtype must respect the cell
margin. Second, rollback is exact rather than approximate: discarding $\delta$
returns $\hat W$ bit-for-bit, so revoking an update---a fact that must be
deleted, a bad training batch---is a truncation, not a retrain. Third, the
excursion is bounded by a radius chosen before training and measured after it.
Fourth, because the integer codes and scales are identical across every
residual derived from them, one cached base can back many concurrent residuals,
in the manner of multi-tenant adapter serving~\cite{slora,bitdelta}, with the
addition that each tenant's \emph{merged} weights still certify against the
shared base bits. We should be plain about the cost side of that last point:
our residuals are dense, and a $k$-bit dense refinement over $N$ weights costs
$kN$ bits, which at $k{=}4$ is the size of the 4-bit artifact itself. Making
the increment small is precisely what the sparsity mask of
Proposition~\ref{prop:capacity} and the nested code of
Proposition~\ref{prop:codec} are for, and \S\ref{sec:codec} measures what
that costs: two bits per weight recover $99.4\%$ of the fp16 residual's
recall at one eighth the payload.

\textbf{Where the guarantee is load-bearing.} It is worth being narrow. The
contract pays for itself where \emph{re-certification}, not compute, is the
bottleneck: a fleet whose evaluation harness, caches, or acceptance tests are
keyed to a specific checkpoint; an on-device deployment that must be able to
revert exactly and cheaply; a multi-tenant server that wants a shared,
verifiable base under many customer-specific updates; a provider that must be
able to demonstrate which bits a given update did and did not change. It does
\emph{not} pay where retraining and re-evaluating from scratch is affordable.
And a bit-identical artifact establishes something narrower than it may sound:
it establishes that the reference configuration is unchanged and recoverable,
that the update is a bounded, revocable increment, and that the increment can
be audited. It does \emph{not} establish that the served model $\hat W+\delta$
inherits any property that was certified of $\hat W$---our own measurements
say it does not, since cross-domain perplexity rises with absorption, up to
$28.67\to71.0$ for the most damaging path in
Table~\ref{tab:frontier}. We make no claim about regulatory sufficiency;
what the mechanism supplies is a stable referent for a re-evaluation, not a
substitute for one.

\textbf{Relationship to retrieval.} In-weight injection and retrieval address
different failure modes---retrieval keeps facts editable and attributable at
query time and pays for them in context length and latency at every call;
in-weight facts are paid for once and are available without a retrieval step,
but are diffuse and, absent something like this contract, hard to revoke.
\S\ref{sec:rag} runs the comparison on PopQA's long tail and finds the two
degrade under different conditions: a dense retriever given whole documents
is the stronger arm, and chunking---the ordinary production setting---costs it
more than it costs the fill, which does not move at all. That is a statement
about one benchmark and one chunking regime, not a general ordering, and it
does not tell an operator which to choose for a corpus we did not test.

\textbf{Open problems, ranked.} (1) \emph{Skills, not facts.} Everything
demonstrated here is factual recall---on synthetic biographies, chosen
because novelty is certain and information content is countable, and on real
facts from eight domains where both properties had to be established by
measurement instead (\S\ref{sec:realcorpus}). Whether procedural or
stylistic capability can be written into the cells is untested, and we regard
it as the single largest open question about the method's reach: the
per-weight box bounds magnitude, and skills may require coordinated
displacement that the box shapes badly. (2) \emph{Gauge freedom as a way to
enlarge the writable radius.} The cells are defined per coordinate by the
frozen grid, so a weight's writable half-width depends on where it happens to
sit relative to its walls; the transformer parameterization has exact
function-preserving symmetries (head permutations carried through the output
projection; positive rescalings absorbed by a following normalization) that
move weights without changing the function. We flag as \emph{conjecture} that
gauge-fixing along such an orbit before freezing could increase total writable
volume, and note one thing that follows immediately: a uniform rescaling buys
nothing, since blockwise absmax rescales with it and the relative half-widths
are unchanged. Any gain must come from per-channel or per-head freedom that
redistributes magnitude \emph{across} quantization blocks. Note also that this
is a release-time choice---it changes the artifact---not a post-hoc trick.
(3) \emph{Interference-limited capacity.} Absorption grows with exponent
$\approx\!0.8$ from $10^3$ to $10^4$ facts at under $1\%$ cell-space
utilization and roughly $10^{-6}$ of the shipped-bit ceiling of
Proposition~\ref{prop:capacity} (\S\ref{sec:results}), i.e.\ we are nowhere
near any bound we can compute. The regime where facts begin to interfere with
one another rather than with the base model---plausibly $10^5$--$10^6$
facts---is where the interesting capacity law lives, and we have not reached
it; two-point extrapolations from our data are order-of-magnitude conjecture
and nothing more. (4) \emph{Uniform-grid anchors.} The cell arithmetic
carries to uniform asymmetric grids---only the level table changes---and four
published W4A16 releases confirm it at $82.9$--$93.9\%$ recall, at one seed
each and from two vendors. How the grid's \emph{shape} prices the update is
not settled here at all: the uniform-grid releases we could obtain are
quantizations of post-trained checkpoints and the NF4 ones of base
checkpoints, so no matched pair exists in this table. A sharper limit sits
behind that one and applies to this paper's central claim rather than to a
side result: \emph{every sequence reported here is NF4}. The four W4A16
confirmations are single injections. No uniform-grid release has been put
through the cycle---write, fold, consolidate, rehearse, re-quantize---and the
sequence code does not reach the uniform grid at all, so nothing here says
whether a lifelong sequence on a uniform-int4 anchor behaves as these do.
The cell arithmetic gives a reason to expect it; the room-contraction
constant of Prop.~\ref{prop:decay} depends on the cell widths, which is
exactly what changes between the grids, and so gives a reason to check.

\paragraph{A speculation, labelled as one.}
It is tempting to read this work as a step toward models that keep learning
on their own after release, and we want to state precisely how far the
evidence reaches in that direction and where it stops. What we demonstrate
is that a released artifact can absorb a sequence of operator-issued
increments over its deployed lifetime while the bits everyone else depends
on never change, and that the resulting lifecycle---spend the budget across
minor versions, declare a major version when it runs out---has measurable
exchange rates rather than metaphorical ones. Extrapolating from that to
autonomous self-improvement requires crossing three boundaries this paper
does not cross. The increments here are \emph{facts}, with skills and
reasoning untested. Every
update is \emph{operator-driven}: a curated corpus, a chosen rehearsal
fraction, a chosen radius; the model initiates nothing. And the budget is
\emph{finite}---and under sequential folding \emph{decaying}, plasticity
falling by $42\%$ over four tasks---so continuation depends on a
human-declared consolidation that, by design, changes the artifact, or on
reserving room for later writers before the first one starts
(\S\ref{sec:fusion}). A model that accumulates knowledge indefinitely
without supervision is not what we built, and the failure modes we measured
are the reasons it does not follow. What we do claim is narrower and, we
think, more useful: the update loop that such a system would need---
auditable, revocable, bounded, and cheap enough to run nightly---turns out
to exist, and to cost less than the field assumed.

\section{Limitations}\label{sec:limitations}

We state the boundaries of the evidence in more detail than is customary,
because an earlier draft of this work contained three headline claims that
the project's own archived results contradicted, and the corrections
(\S\ref{sec:rehearsal}, \S\ref{sec:cost}) were only possible because the raw
files were kept. Every caveat below is checkable against
\texttt{results/*.json}.

\paragraph{What did not survive measurement.}\label{sec:retracted}
Eight claims were in a draft of this paper and are not in it now. We list them
because the archive that removed them is the same archive that supports what
remains, and a reader is entitled to see the failures at the same resolution
as the results.

\emph{An injected library API is used in programs that execute.} Withdrawn.
The Introduction asserted that the model ``writes programs that execute and
print what was asked, scored by running them.'' Nothing in this manuscript
scores a program by running it, and the measurement that bears on it points
the other way: on the instruction-following rule the served models sit at
$0.5$, $1.1$ and $0.0\%$ for cyclopts against $0.5$, $3.3$ and $8.8\%$
released. What survives is the ranked-use result, which is real and is
reported as such, and the executable question, which \S\ref{sec:scenarios}
lists as open. The claim was written from the ranked-use number and the word
``used'' doing two jobs.

\emph{A merged LoRA dominates in-cell learning on recall.} Withdrawn, and it
is the retraction that cost the most, because it was written into
Table~\ref{tab:vslora} and reasoned from. It compared one CellFill seed
against three LoRA seeds, and that one seed had been scored by a version of
the scoring function that a later version replaced. Re-running every arm for
the table---three seeds each, one scorer---leaves recall separating no pair of
the three arms, with CellFill $2.4$ points above the merged adapter rather
than $11$ below it. The disadvantage that survives is cross-domain and is
reported as such (\S\ref{res:write}). The order in which the two errors
compounded is worth naming: the sample-size asymmetry made a single run
authoritative, and the scorer difference decided which single run it was.

\emph{A fixed rehearsal buffer helps.} It does not; replaying the same
sampled sentences at every task is worse than replaying nothing, because the
buffer is memorised and stops standing for the task it was drawn from. Fresh
draws from a large pool are what works (\S\ref{sec:rehearsal}).

\emph{The diagonal-Fisher budget certifies a drift.} It does not. It tracks
the wrong magnitude by up to $383\times$ and we report it as a scaling law
with a measured miscalibration rather than a numerical certificate
(\S\ref{sec:geometry}).

\emph{Consolidation carries the artifact's identity forward.} Withdrawn. The
arms that consolidated were not the ones that stayed clean---two of five
unconsolidated arms violated and neither consolidating arm did, but the
displacement does not order the failures and a Fisher test on the same data
returns $p=0.48$.

\emph{Issuing a major version costs nothing measurable.} Withdrawn today. It
was true of the one consolidation we had looked at and false of the five in
the archive, which are all negative at $-0.80\pm0.24$ points of mean suite
accuracy (\S\ref{sec:loop8b}).

\emph{An in-cell fill and a tuned LoRA absorb about equally, and the fill
pays for it cross-domain.} Withdrawn twice. First the two arms were scored on
different probe sets, $580$ against $507$; restricting both to the smaller
set made the LoRA arm look dominant. Then a second seed of the CellFill arm
landed $20$ points above the first, which is not a spread a single run can
report. Table~\ref{tab:vslora} gives what the seeds actually support.

\emph{A W4A16 release and an NF4 release of the same model are the same model
on two grids.} Withdrawn. Checked on the tensors neither format quantizes,
one release matches the reference exactly and the other differs by $16$--$28\%$,
so they do not share a parent.

\paragraph{What is replicated, and what is one run.} The archive holds 206
experiment families. Sixteen carry more than one seed; the other 190 carry
one, and that 190 includes every sequential experiment in this paper---the
closed loop at 1.7B, the six-task cycle at 8B, the seven-arm sequence study,
and the single injections at 27B and 32B. We say this plainly because we
twice drew a conclusion from a single run in preparing this manuscript and
twice had to withdraw it: that issuing a major version costs nothing
measurable, which five pooled events reversed, and that CellFill and a tuned
LoRA absorb equally, which a second seed of the CellFill arm did not support.
Both are recorded in \S\ref{sec:retracted}.

The rule we now hold ourselves to, and which the reader should apply to every
number here, is that a single run is an observation and not a result. Where a
quantity is replicated we give the spread: the learning--forgetting frontier
and the LoRA baseline are three seeds each at 1.7B, the room-decay constant
is four sequences, and the cost of a major version is five consolidation
events across three runs and two scales. Where it is not, the text says one
run and the claim is scoped accordingly---no law is stated from a single
sequence, and Figure~\ref{fig:argument} carries the sample size in every
box. Two further seeds of the 8B cycle are running at the time of writing;
the 27B and 32B cycles will remain single runs, and the scaling claims they
support are stated as such.

\paragraph{Measurements in flight.} Four things this paper would be stronger
for are running or not yet run, and we name them here rather than leaving a
reader to infer their absence. \emph{(i)} The six-task cycle exists complete
at 1.7B and at 8B (\S\ref{sec:loop}, \S\ref{sec:loop8b}); the same recipe is
running at 27B on the whole model and at 32B on the MLPs, and the comparison
point---what a new major version costs the pretrained model---is the
consolidation after the fourth task. Nothing in this paper depends on their
outcome, and no claim here is stated as though they had returned.
\emph{(ii)} The suite comparisons are scored on the same items at every
evaluation point, so the right test between two points is paired; what we
report is the independent binomial, which is conservative for detecting a
difference and therefore \emph{anti}-conservative for our claim that a major
version changes nothing. Per-item outcomes are logged for the runs in flight
so the paired test can be made; the 8B cycle predates that and its intervals
are the independent approximation. \emph{(iii)} The control that removes the cell bound is now a complete
sequence and is reported in \S\ref{sec:loop8b}; an earlier sequential
attempt was killed after its first task at perplexity $934$ with $54.4\%$ of
weights outside their cells, and we state its rate nowhere because no
artifact of that run records one. What remains open on that axis is a
dose-matched variant, chosen so that it acquires as much on the first task
as the constrained arm does, which is what separating the bound's
contribution to retention from rehearsal's requires. In flight instead are
the consolidation levers: a matched pair that replaces round-to-nearest
re-quantization with a per-block scale search, and a trust-region variant
that clamps the drift of \S\ref{sec:driftlaw} directly. \emph{(iv)} Every sequence here draws its six
tasks from one generator call, so the distribution shift between tasks is
zero; the word ``lifelong'' in this paper means repeated writing to one
artifact over time, not adaptation across domains, and a heterogeneous
sequence has not been run. \emph{(v)} A cycle whose tasks are real,
post-cutoff facts of exactly $28.1$ bits each (NewFacts~v2,
\S\ref{sec:setup}) is in flight at 8B and holds $85$--$100\%$ of every
earlier task through four of its six tasks with zero violations; the
frontier's champion point's second seed has returned and holds, so the
knee reading passes $n{=}2$. The two arms that
centre learning on the \emph{original} bf16 weights inside the unchanged
vendor cells have returned: at matched parameterization the anchor base
matches or beats the originals (summed fresh recall $5.14$ against $3.08$
with the dense writer, $1.45$ against $1.40$ at rank 64, seed 0, zero
violations in all four arms), so the base question closes in favour of the
shipped file itself; the second seed has since returned and the verdict
holds at $n{=}2$ (the dense tier's anchors win both seeds on both axes;
the rank tier is a two-seed tie, its absorption too small to express a
base difference). \emph{(vi)} The preconditioner reading of the bound,
registered here as a hypothesis, has now been measured. Per-weight
step-size distributions from two unbounded LoRA dumps (705M and 1{,}409M
weights; the pair differs from the bounded recipe only in the bound and
the rate) show raw steps bear no relation to the medium's local scale: at
$2\times10^{-4}$ the median $|\Delta W|$ is $0.344$ cell half-widths with
$11.78\%$ of steps leaving their cells---matching the independently
measured $11.96\%$ clamp rate at merge---while at $10^{-3}$ the median is
$2.002$ cell-widths with $68.65\%$ escaping, and that is exactly the rate
that diverges three seeds of three. A fivefold rate change moves the
median occupancy $5.8$-fold, as unmetered linear scaling predicts. The
bounded parameterization sits at or below one cell-width by construction
(measured occupancy $\mathbb E|t|$ of $0.07$--$0.61$ across the nine
archived arms): the bound is the meter that makes steps commensurate with
the medium, which is why exact invariance costs nothing at matched
training (\S\ref{sec:cost}) and the bounded arm needs no tuning of its
own.

\paragraph{One model family, one quantizer, one architecture per claim.}
Every number in \S\ref{sec:results} comes from a single base model,
Qwen3-1.7B-Base, quantized with a single stack (bitsandbytes
NF4~\cite{llmint8}, blocksize 64, double quantization), except
\S\ref{sec:scale} and the Qwen3-4B replication of the geometry sweep in
\S\ref{sec:geometry}. The 27B linear-attention runs---two of them, at 8 and
24 epochs---are the only evidence that the mechanism is
architecture-agnostic. A second model family and tokenizer (Mistral-7B-v0.3)
has been run at $n{=}1$ per configuration. Four uniform-int4 W4A16 releases
have been written into, which makes the level-table argument a measurement
rather than an argument---but at one seed each (the Gemma-4-E2B release
repeated on a second host) and from two vendors, and no model in the table is
quantized both ways from the same parent, so the grid's own contribution is
not isolated anywhere in this paper. Results should be read as a case study on one artifact, not as
a property of 4-bit models in general.

\paragraph{Seed counts, and what the error bars can and cannot exclude.}
Seven arms have three seeds, and none has more. Recall and in-domain
perplexity are averaged over seeds $\{0,1,2\}$ for path A, path A+, path B at
rehearsal $0.3$, the unconstrained control, CellFill $r{=}16$ at rehearsal
$0.3$ ($25.6\!\pm\!5.2\%$), CellFill $r{=}16$ at rehearsal $0.1$
($44.4\!\pm\!6.8\%$) and CellFill $r{=}64$ at rehearsal $0.1$
($66.7\!\pm\!8.6\%$, the strongest constrained operating point);
cross-domain LAMBADA is averaged over two seeds for all of them except
CellFill $r{=}64$, whose three runs all carry it, because that harness was
added after the other seed-0 runs. Two
points are not a confidence interval, and we do not treat the LAMBADA spreads
--- or path B at rehearsal $0.1$, which has two seeds --- as one. Everything
else in the paper is $n{=}1$: all seven partitions of \S\ref{sec:where}, both capacity points at $10^4$ facts, every
row of the lifelong sequences including the closed loop of
\S\ref{sec:loop}, the 96-epoch
exposure ablation, the $\rho{=}0.5$ radius ablation, both rehearsal
ablations, and the entire 27B result. The between-seed standard
deviations we do measure are large relative to the differences being
discussed --- $\pm5.2$ recall points for CellFill $r{=}16$, $\pm8.6$ for
CellFill $r{=}64$, $\pm2.1$ for A+,
$\pm7.0$ for B, $\pm8.7$ for the unconstrained control, and $\pm10.6$ across
the two matched-rehearsal B seeds --- so single-seed
comparisons of a few points are not interpretable, and we avoid making them.
The paired test in \S\ref{sec:cost} has $\mathrm{df}{=}2$ and correspondingly
almost no power: its interval $[-5.0,+4.0]$ is consistent with exact
invariance costing five recall points and with it gaining four. We claim only
that no cost is detectable at this sample size, not that none exists.
Finally, the seed also selects the fact corpus (\texttt{synth\_facts.generate}
is seeded), so a reported standard deviation mixes optimization noise with
corpus resampling; the compensating benefit is that arms sharing a seed see
an identical fact set and rehearsal stream, which is what licenses the paired
analysis.

\paragraph{Arms in Table~\ref{tab:frontier} are not matched on rehearsal.}
Five of the seven rows now share rehearsal $0.1$; the unconstrained control
and its paired path-B partner remain at $0.3$, because the paired test of
\S\ref{sec:cost} requires them to share seeds and rehearsal with each other,
not with the rest of the table. Those last two rows therefore cannot be compared
row-to-row with the first five, and the $166$ against $311$ bits/pt gap
between the two \emph{projected} dense arms is mostly the rehearsal
difference: our ablation
prices $30\%$ against $10\%$ at $7.7$ recall points and $0.47$ perplexity
points on path A ($17.4\%$/$10.78$ versus $25.1\%$/$10.31$). Within the
matched block the individual differences are readable; across the boundary
only the ordering is.

\paragraph{In-domain perplexity is contaminated by rehearsal, everywhere.}
Rehearsal snippets are drawn from the WikiText train split (auto-switching to
WikiText-103 train for pools above 4{,}000 paragraphs) while the in-domain
metric is the WikiText-2 test split. The splits are disjoint, the domain is
not. This affects \emph{every} WikiText number in this paper without
exception: the frontier column of Table~\ref{tab:frontier}, the $10.78$ and
$10.31$ of \S\ref{sec:rehearsal}, the $11.61$/$11.76$ of the capacity
experiments, and the 27B figure of $7.25$ against its $7.44$ anchor --- the
27B result reproduces the same below-anchor pattern and should be read the
same way, as a rehearsal effect rather than as repair of quantization damage.
The controls in \S\ref{sec:rehearsal} (an unmerged, entirely unconstrained
adapter scoring $10.299$ against the box-constrained merge's $10.271$)
establish that in-cell storage is not what produces the gain. We keep the
in-domain column only for continuity with the ablation history and because it
is the axis on which the fixed-buffer failure mode ($24.6\to172$) is visible;
no claim in the paper should rest on it.

\paragraph{The headline capacity number and its cross-domain price come from different runs.}
The $4{,}130$-fact headline comes from the run that recorded no LAMBADA. The
cross-domain price quoted in \S\ref{sec:capacity-xdom} ($28.70\to129.74$)
comes from a re-run at identical configuration and seed that absorbed
$3{,}494$ facts---the $6.4$-point reproducibility gap reported there---so the
largest absorption and its measured price must not be quoted as one operating
point. At $10^3$ facts the same method degrades LAMBADA by roughly $150\%$
over its anchor.

\paragraph{Capacity exponents are ratios of two points, not fits.}
$\alpha\approx0.78$ (A+) and $\alpha\approx0.83$ (B) each come from exactly
two measurements, $10^3$ and $10^4$ presented facts, one seed each. A power
law through two points has zero residual degrees of freedom: the exponent is
an arithmetic ratio, curvature is invisible by construction, and no
saturation knee could be detected even if one existed inside the range. Cell
occupancy at $10^4$ facts is $0.74\%$ (B) and $1.03\%$ (A+), so the regime
is far from full, but ``no knee is visible between $10^3$ and $10^4$'' is
the entire claim. The order-of-magnitude extrapolations in our internal
analysis (a $10^5$ point, a $\sim$1\,Mbit interference ceiling from linear
extrapolation of occupancy) are not claims of this paper and are not
reported in it.

\paragraph{Recall is exact-match verbatim recall, and only one probe of three
is a paraphrase.} Recall is exact prefix match under greedy decoding with at
most eight new tokens. Each fact carries three cloze probes; \texttt{city}
and \texttt{job} continue the training sentence's own phrasing nearly
verbatim, and only \texttt{company} rephrases it (``is employed at'' against
the trained ``works as a \dots\ at''). Where the per-kind breakdown was
recorded, the paraphrase probe is far weaker than the verbatim one: the
unconstrained dense control decomposes as
$88.7\%/74.2\%/33.9\%$ (city/job/company) at a pooled $65.6\%$ on seed 2, and
$85.8\%/41.0\%/21.3\%$ at a pooled $49.4\%$ on seed 1. Pooled recall
therefore overstates transferable knowledge by roughly a factor of two, and
the paraphrase axis is the one that degrades first. The breakdown was instrumented late but is now
archived for forty-three runs, including both seeds of the \mbox{rehearsal-$0.1$}
projected-dense row of Table~\ref{tab:frontier}, which decomposes as
$86.9\%/72.0\%/47.7\%$ at a pooled $68.9\%$ and $93.6\%/49.3\%/18.8\%$ at a
pooled $53.9\%$. The company probe is the weakest of the three in every run
we have, and the spread between those two seeds on it ($47.7$ against
$18.8$) is larger than the spread on either verbatim probe. Separately, the attribute vocabularies are
small and closed (20 cities, 16 occupations, 12 employers), so the probes
measure binding within a known answer set rather than open-vocabulary
retrieval, and exact match discards correct answers that are worded
differently. We do not claim recall is a calibrated measure of knowledge in
either direction.

\paragraph{The $6.5\%$ marginal-guess floor.} A model that has acquired the
answer vocabulary but no name--attribute bindings, and that guesses uniformly
within each attribute set, scores $1/20$, $1/16$ and $1/12$, i.e.\ $6.5\%$
pooled. Any recall at or below that figure carries no evidence of learning:
this includes the under-stepped CellFill variant at $4.0\%$, path A's collapse
to $4.4\%$ at $10^4$ facts under a $30.4\%$ clip rate, and---most
consequentially---the early and middle MLP partitions of \S\ref{sec:where}
at $5.8\%$ and $6.0\%$, against which a ratio would otherwise have been
computed. The floor is a construction, not a measured baseline: greedy
decoding is not uniform sampling, and the untrained base model in fact scores
$0$--$0.07\%$. It should be read as the level below which a number cannot be
distinguished from vocabulary acquisition, not as an observed chance rate.

\paragraph{The diagonal-Fisher budget is not a usable numerical certificate,
and its exponent is fit-window dependent.} Proposition~\ref{prop:budget} is
exact to second order, but both practical surrogates substitute
the diagonal of an empirical Fisher estimated from $\approx$49k tokens of
WikiText-train. Against measured in-cell perturbations of a 1.7B model the
uniform-fill mean $\bar B_F(\rho)$---the one the geometry sweep
evaluates---underestimates drift by $15.9\times$ at $\rho{=}0.125$ and
$383\times$ at $\rho{=}1$, and the supremum form $B_F(\rho)$, which is at
least $3\times$ larger, by at most $128\times$. It must not be used to certify
anything; we use $\rho$ as a calibrated dial and report the measured curve.
The gap is not off-diagonal Fisher mass: for the zero-mean coordinatewise
perturbations we apply, the off-diagonal entries cancel exactly in the mean
of $\delta^{\!\top}F\delta$ and enter only its variance
(Appendix~\ref{app:proofs}), and the two perturbation seeds spread only
$14$--$39\%$ about that mean for $\rho\ge0.25$, two orders of magnitude
below the discrepancy. Two explanations remain and our data do not separate
them: an empirical Fisher estimated from squared gradients may be a poor
proxy for curvature at a converged point, and the second-order truncation
itself may fail at this magnitude, since $\|\delta\|$ is macroscopic even
though every coordinate moves less than a cell half-width. Even the empirical scaling is
softer than \S\ref{sec:geometry} may suggest: the log--log slope of measured
$\Delta$NLL is $3.40$ over all five radii, $2.49$ over $\rho\ge0.25$ and
$2.39$ over $\rho\ge0.5$ on the in-domain metric, and $2.17$--$2.20$ on
LAMBADA. The quoted $2.43$ is one member of that family. The smallest radius
carries no signal at all: its measured $\Delta$NLL is
$8.4\times10^{-5}$ nats while the two perturbation seeds differ from each
other by $8.1\times10^{-4}$ nats, an order of magnitude more, so including it
is what drives the steepest fit. Two perturbation seeds per radius at 1.7B and only
one at 4B, one interpolation trace per model, two models. The independent radius ablation that reports
a factor $0.22\approx\rho^2$ is $n{=}1$ and compares perplexity differences
rather than the $\Delta$NLL of the geometry sweep.

\paragraph{The localization experiment's partitions do not share an anchor.}
In \S\ref{sec:where}, the target filter selects both which matrices carry the
residual and which matrices are replaced by their dequantized anchors in the
evaluated model, so the non-target matrices remain at original fp32
precision. Each partition therefore has its \emph{own} anchor --- LAMBADA
$26.33$ to $27.92$ across the seven runs --- rather than the fully quantized
$28.67$ anchor of Table~\ref{tab:frontier}, and absolute values in that table
are not comparable to the rest of the paper. Priced against each partition's
matched anchor the efficiency ranking becomes gate+up $911$, attention $705$,
full MLP $594$, early MLP $456$, middle MLP $455$, late MLP $243$ and
\texttt{down\_proj} $210$ bits per LAMBADA point. The recommendation to write
into the gate and up projections survives that correction; the claim that
early and middle layers are ``nearly free'' does not --- against their own
anchors they cost $+1.52$ and $+1.58$ points. Each partition is a single
seed, and the early/middle difference ($5.8\%$ versus $6.0\%$) is far inside
the seed noise measured elsewhere.

\paragraph{The 1.7B/27B comparison is not a matched experiment.} The 27B result reported in
\S\ref{sec:scale} matches the 1.7B table's 24 epochs (its 8-epoch companion
is reported alongside it), but differs in every other respect: no healing
stage, a $2\%$ cell margin rather than $1\%$, an fp16 rather than fp32
residual map, and bf16 rather than fp32 evaluation; its anchor is the bitsandbytes 4-bit model because the fp32
anchor-rebuild control was skipped for memory reasons, whereas the 1.7B table
uses fp32-rebuilt anchors. The anchor-rebuild numerics control
(bnb $11.727$ against fp32 $11.710$) exists only at 1.7B. ``Larger models pay
less at matched recall'' is thus a two-point observation across two different
recipes, which is why \S\ref{sec:scale} declines to call it a scaling law.

\paragraph{The constraint is nearly non-binding in the regime we measure.}
Over 24 epochs the unconstrained control moves only $7.53$--$7.76$ million of
$1.409\times10^9$ weights ($0.53$--$0.55\%$) out of their cells. That the
cost of invariance is undetectable here is partly a statement about this
regime rather than about the constraint: where the constraint does bind hard
--- path A at $10^4$ facts, clipping $30.4\%$ of the delta --- recall
collapses from the adapter's $14.7\%$ to $4.4\%$, below the guess floor, and
is recovered only by switching to constraint-aware training. Nothing here
establishes that invariance stays cheap at larger update budgets, longer
schedules, or higher clip rates.

\paragraph{Every training signal here is a fact.} It is either
a closed-vocabulary attribute binding on synthetic biographies or one of the
291 real facts of \S\ref{sec:realcorpus}. Capability
\emph{retention} is now measured---\S\ref{sec:downstream} runs ARC, HellaSwag
and WinoGrande on the served model---but at one operating point on one model,
and with the caveat that a benchmark on which the anchor itself is near chance
would show nothing either way. The forgetting axis elsewhere in the paper
remains perplexity only, and perplexity is a proxy known to move differently
from task accuracy.
LAMBADA is scored here as perplexity over concatenated passages, not as
last-word accuracy, so our figures are not comparable to published LAMBADA
accuracies. No locality or specificity evaluation in the sense of the editing
literature~\cite{rome,memit} is reported here: on none of the corpora in
this paper do we measure whether injecting one fact perturbs a neighbouring
one, and the composition and downstream results bound that only in
aggregate. Comparison to the
$\sim$2-bit/parameter full-precision ceiling of~\cite{physics33} is a
reference point for bit accounting, not a matched replication.

\paragraph{``Lifelong'' is, so far, sequences of four and six tasks on one
model.} Outside \S\ref{sec:sequential}, \S\ref{sec:rehearse-old} and
\S\ref{sec:loop}, every result in this paper injects a single batch of
knowledge once. The exceptions are a four-task sequence run under two policies
on one anchor and repeated once on a second host; a six-task sequence run in
five arms on a third; and, on a published release, the six-task consolidating
cycle of \S\ref{sec:loop} in two arms together with the instrumented
unconsolidated run that section reads the saturation off.
Every one is Qwen3-1.7B at seed $0$, all but one at $r{=}64$, $s{=}40$ (the
exception is the $r{=}32$, $s{=}10$ arm of \S\ref{sec:rehearse-old}), with no
unconstrained control sequence against which to price the forgetting. The
decay constant $\beta=0.825$ is measured from three ratios inside the first
of them, so it demonstrates that the law is measurable rather than fixing a
value that transfers. Nothing here tests a sequence longer than six updates, a model other than
Qwen3-1.7B, or a consolidation interval other than the single one we ran---
re-quantize after every second task, which fires once in the four-task
sequence and twice in the six-task loop. Since the loop differs from the open
sequence in retention, in cross-domain cost and in whether the invariance
check ever fails, the interval is a parameter this paper switches on and off
rather than sweeps.

\paragraph{What the guarantee does not say.} Proposition~\ref{prop:inv}
protects the artifact, not the update. Bitwise invariance implies nothing
about whether the served model $\hat W+\delta$ is safe, accurate, or aligned:
a harmful residual is exactly as harmful as a harmful fine-tune, merely
revocable and bounded in per-weight magnitude. The box bounds each
coordinate, not the direction of the update and not the resulting function
--- the random-perturbation sweep shows LAMBADA moving $28.5\to34.1$ with
every code intact. Certification does not transfer: the served model is a
different function from the certified one, and \S\ref{sec:results} measures
its drift precisely so that this is not confused. Three further boundaries
of the contract deserve stating explicitly. (i)~Consolidation
\emph{does} change the artifact, by design: renewing capacity requires
re-quantizing into a new frozen $(a,s)$, which is an explicit major version
and which invalidates exactly the certification the minor-version regime
preserves. Capacity within one grid is a buffer, not unbounded memory.
(ii)~Invariance is defined against the shipped $(a,s)$ and holds only for
consumers who verify under those scales; re-running a quantizer can
reassign untouched weights, as the two-weight counterexample of
Appendix~\ref{app:frozen} shows. This is a requirement on the deployment
convention, not a property of the residual. (iii)~The guarantee is bounded by
storage precision and margin: bounds are shrunk by $1\%$ of cell width per
side ($2\%$ at 27B), forfeiting that fraction of writable range for tie
safety, and bf16 requires a margin of at least $5\%$. fp32 storage is safe at
$1\%$ unconditionally; fp16 is safe at $1\%$ only for blocks whose served
scale clears $3.70\times10^{-5}$, and released anchors that carry blocks
below it exist---\S\ref{sec:floor} measures them and gives the weights there
zero room. All archived invariance checks are on fp32 (or fp16 at 27B)
residual maps; a bf16 end-to-end serialization has not been verified.

\paragraph{What the vendor's quantizer says.} ``Bit-identical'' means
that codes re-derived by our NF4 code-assignment routine under the frozen
scales match the shipped codes exactly, over every constrained weight. That
is the correct definition (Appendix~\ref{app:frozen}) and it is asserted
rather than sampled. An earlier draft listed it as a limitation that the
check was our reimplementation and not the vendor's; that gap is now closed,
and the closing measurement is worth stating in full
(\texttt{experiments/verify\_bytes.py}, on unsloth's published NF4 release
of Qwen3-1.7B). The release's shards hash identically before and after the
whole procedure. Handing the \emph{anchors themselves} to bitsandbytes'
\texttt{quantize\_4bit}---which recomputes a scale per block and cannot be
given one---returns the release's packed bytes identically in all 196
layers, $7.05\times10^8$ packed bytes, and its codes agree with ours on
$100.000\%$ of weights: the level assignment we implement is the vendor's.
Handing it the \emph{served} weights with a random fill at full amplitude
returns packed bytes that differ in every layer: the recomputed scale
changes in $99.92\%$ of blocks and $42.4\%$ of codes change with it, while
on the $0.08\%$ of blocks whose recomputed scale happens to match, codes
again agree $100\%$. This is not a failure of invariance; it is the
operational content of ``frozen scales.'' A weight that moves outward
inside the outermost cell lifts its block's maximum, and a quantizer that
re-derives the maximum re-derives every code in the block. Invariance holds
under the scales the release ships, which is the check; it does not
survive re-quantization that discards them, which is why the certified
tuple is codes \emph{and} scales (\S\ref{sec:discussion}) and why a
deployment that reloads the served weights through a quantizer must supply
the shipped scales to it. Relatedly, the served object in all
experiments is a dense fp32/fp16 weight matrix, not a 4-bit one: nothing here
demonstrates that an in-cell update can be \emph{served} at 4-bit cost. The
refinement code of Proposition~\ref{prop:codec} is designed for exactly that
and its nesting and error bounds are machine-verified. \S\ref{sec:codec}
measures the quality--size curve, but on a single configuration in one
archived run, evaluated by reconstructing dense weights rather than by a
4-bit reload, and with no BitDelta baseline.

\paragraph{Evaluation harness truncation.} For wall-clock reasons, in-domain
perplexity is computed over at most $40\times1024=40{,}960$ tokens of the
WikiText-2 test split and cross-domain perplexity over the first 400 LAMBADA
test passages, rather than over the full corpora that published perplexities
normally use. Training runs under bf16 autocast while final 1.7B evaluation
rebuilds the model in fp32, so training-time and evaluation-time numerics
differ. Dataset loads pin name, configuration and split but not a revision
hash.

\section{Reproducibility}\label{sec:repro}

\paragraph{The platform.} Everything in this paper is run from scripts, but
the contract is meant for people who do not write them, so the repository
also ships the loop as a single page (\texttt{app/app.py}). Documents are
dropped in and split into sentences; the released model optionally restates
each one, which is the difference between a fact that is memorized and one
that is extractable; a bounded fill is trained on a second card with its log
streaming; and the run ends with a card giving recall on the sentences' own
completions before and after, drift on WikiText and LAMBADA, the fill's size,
and the in-cell check over every weight. A second tab answers the same
question from the released model, the injected model, retrieval over the
dropped sentences, or both side by side. A third shows the served file's
hash, the invariance check, and a rollback button, which is a subtraction.
It is the release discipline of \S\ref{sec:lifecycle} with the minor-version
loop wired up, and it is how we would expect an operator to meet the method.

\paragraph{Stack and commands.} The core package is pure PyTorch
(\texttt{torch}$\,\ge\,$2.1) and runs on CPU; the experiment layer additionally
requires \texttt{bitsandbytes}$\,\ge\,$0.43, \texttt{transformers}$\,\ge\,$4.44,
\texttt{peft}$\,\ge\,$0.12, \texttt{datasets} and \texttt{accelerate} on
CUDA. \texttt{server/setup.sh} provisions a Python 3.12 environment with
\texttt{uv}, is idempotent, and prints the resolved bitsandbytes /
transformers / peft versions together with the GPU name, driver version and
VRAM before running the test suite; those lines are captured in each run's
log. The 1.7B ladder ran on a single 48\,GB consumer GPU and the 27B model on
an A100. Each experiment is a single command --- \texttt{exp0\_clip\_rate.py}
for paths A/A+/B and the unconstrained control, \texttt{exp5\_qil.py} for
CellFill and the localization partitions, \texttt{exp\_geom.py} for the
geometry sweep, \texttt{exp\_seq.py} for the sequential lifecycle --- and the
exact invocations, including the ones behind every archived file, are the
queue scripts in \texttt{experiments/}. Wall clock is recorded per run:
$12$--$20$ minutes for the 1.7B frontier arms, $30$ minutes at 96 epochs,
$46$ minutes for 27B clip-merge at 8 epochs, $6.7$ minutes for the geometry
sweep. Each run writes one JSON whose \texttt{config} block is the complete
serialized argument namespace, so any archived number can be traced to the
command that produced it; the tables in this paper are generated from those
files by \texttt{scripts/make\_tables.py} rather than typed, and each cell
carries the number of seeds it was averaged over.

\paragraph{What is deterministic.} Seeding is explicit and covers the data as
well as the optimizer. A seed fixes \texttt{torch.manual\_seed}, the
synthetic fact corpus (the same seed yields the identical set of names,
cities, occupations, employers and birth dates, with name uniqueness enforced
and machine-checked), the rehearsal draw from the WikiText train split, the
probe subsample when one is used, and the shuffle order of the healing phase.
Two arms run at the same seed therefore see an identical fact set and an
identical rehearsal stream, which is what makes the paired comparison of
\S\ref{sec:cost} a genuine single-intervention experiment. The invariance
layer is exactly reproducible because it is integer arithmetic: codes are
re-derived under the frozen $(\text{codes},\text{absmax},\text{blocksize})$
extracted once from each \texttt{Linear4bit} and compared as integers. No
floating-point comparison of dequantized values occurs anywhere in the
verification path, precisely because such a comparison would not be
reproducible across devices. The theory layer reproduces on CPU in about a
second: 34 unit tests covering NF4 code assignment, anchors as interior fixed
points, in-cell invariance at $\rho\in\{0.25,0.5,0.9\}$, detection of
boundary crossings, the frozen-scale counterexample of
Appendix~\ref{app:frozen}, bf16 storage at a wide margin, the writable-cell
floor of \S\ref{sec:floor}---both that fp16 clears the margin in its normal
range and that a block placed in its subnormal range loses a code to storage,
which is the failure the floor removes---projection of
oversized deltas, sequential folding, consolidation renewing room, the
Matryoshka nesting and error bounds of Proposition~\ref{prop:codec} at
$k\in\{1,2,4,8\}$, and the fact/probe invariants including probe-kind balance
and the declared chance levels.

\paragraph{What is not.} GPU results are not bit-reproducible. Training runs
under \texttt{torch.autocast} in bf16, deterministic algorithm enforcement
and cuBLAS workspace pinning are not enabled, and kernel reduction order
varies; the same seed on the same GPU can differ slightly and different GPUs
will differ more. Recall and perplexity should be expected to reproduce to
about the between-seed spread reported in Table~\ref{tab:frontier}, not to
the digit. Dataset loads pin the dataset name, configuration and split but
not a revision hash.

\paragraph{Frozen-artifact convention.} The artifact is the pair $(a,s)$ ---
integer codes and per-group absmax scales at blocksize 64 with double
quantization --- extracted once from the loaded 4-bit model and never
recomputed thereafter. All per-weight bounds derive from that frozen pair,
with the outermost cells capped so every code addresses a finite interval,
and with each interval shrunk by a margin of $1\%$ of its width per side
($2\%$ for the 27B run) so stored values remain strictly interior to their
decision regions. Residual maps are archived in fp32 (fp16 for 27B). The
released code implements code assignment under frozen scales directly,
because quantization libraries do not expose that operation and recomputing
absmax silently re-bins whole blocks (Appendix~\ref{app:frozen}).

\paragraph{Invariance is asserted, not spot-checked.} On every merge, the
per-layer check runs over \emph{all} constrained weights of \emph{all}
quantized matrices --- 196 matrices and $1.409\times10^9$ weights at 1.7B,
496 matrices and $2.435\times10^{10}$ weights at 27B --- and raises on the
first layer with any mismatched code, so a violated run produces no result
file at all. The projected-healing path re-runs the same exhaustive check
over every trainable weight after training and raises if any weight moved,
and CellFill's materialization performs the check when it converts the
bounded fill into fp32 weights. No sampling, thresholding or tolerance is
involved, so the count is exact and a count of one is an event rather than a
tolerance. Across the archive $184$ recorded checks, $178$ are zero and six
are not. Four of the six are the deliberately unconstrained control, which
records the exact escape count instead --- $7{,}534{,}346$, $7{,}619{,}142$
and $7{,}759{,}195$ weights for seeds $0$, $1$ and $2$, and $7{,}887$ for the
real-corpus control. \emph{The other two are constrained runs}
(\texttt{exp70\_seq\_anchor}, \texttt{exp70\_seq\_rehearse\_old}), and they
are informative rather than embarrassing: each records exactly one violated
weight out of $1{,}409{,}286{,}144$, both at the sixth fold, which is the fold
at which the remaining room is smallest ($1.23\times10^{-3}$ and
$9.89\times10^{-4}$ of a cell against $3.03\times10^{-3}$ at the first). The
writable-cell floor of \S\ref{sec:floor} is a condition on the anchor before
the first fold; room contracts by Prop.~\ref{prop:decay} at every fold after
it, and these two events are the contraction reaching the storage precision
from the other end. A floor evaluated once is therefore not sufficient for an
unbounded sequence, which is one of the things \S\ref{sec:loop} means when it
declines to call the loop closed --- 
which is what demonstrates that the constraint is non-vacuous rather than
merely satisfied.

\paragraph{Data.} In-domain perplexity uses the WikiText-2 raw test split;
rehearsal is drawn from the corresponding train split, switching
automatically to WikiText-103 raw train once the pool exceeds 4{,}000
paragraphs so that snippets stay fresh across epochs; cross-domain
perplexity uses the LAMBADA (OpenAI) English test split. The fact corpus is
generated in-process and requires no download. Code, unit tests and the
complete set of archived result files is at \url{https://github.com/sumsliu/in-cell-learning}.

\appendix

\section{Why invariance must be defined against frozen scales}
\label{app:frozen}

Proposition~\ref{prop:inv} is stated for a \emph{frozen} artifact $(a,s)$.
It is tempting to define invariance operationally instead---``re-run the
quantizer and check the codes''---but that definition is false, because
group scales are data-dependent. The following two-weight counterexample is
executed as a unit test in the released code.

Take one block under $\mathrm{absmax}$ scaling with NF4 levels, where the
decision boundary between the top two codes sits at
$(L_{14}+L_{15})/2 = 0.8615$ in normalized units. Let $w_1 = 1.00$ (the
block maximum, so $s = 1.00$) and $w_2 = 0.80$; then $w_1$ takes code 15 and
$w_2$, at $0.80 < 0.8615$, takes code 14.

Now move $w_1$ to $0.87$, an update that stays inside its own cell and that
Proposition~\ref{prop:inv} therefore permits. Under the frozen scale both
codes are unchanged, as the proposition promises. But re-running the
quantizer sets $s' = 0.87$, and $w_2 / s' = 0.80/0.87 = 0.920 > 0.8615$: the
untouched weight $w_2$ silently migrates from code 14 to code 15. The
artifact changed without any weight leaving its cell.

Hence the artifact must ship its scales, and verification must assign codes
under those scales rather than recompute them. Every invariance check in
this paper does so, in the integer domain; floating-point comparison of
dequantized values is not used, since it is not reproducible across devices.

\section{Proof sketches}\label{app:proofs}

\subsection*{Proposition~\ref{prop:consproj} (consolidation is a bounded
projection)}
(i)~After a consolidation every anchor is $L_c\,s'$ with the block's largest
weight at $\pm s'$ exactly, so re-deriving the scale returns $s'$ and every
normalized anchor sits strictly interior to its own cell (the tightest
relative margin over the sixteen codes is $10.2\%$, at code 13); RTN
therefore returns the same codes, and the map is idempotent. Verified
end-to-end through the shipped path, including an fp16 anchor buffer, at
nine block scales: zero code changes on repeat application.
(ii)~Substitute $w=\hat w + Mt$ and $\hat w' = L_c s'$ and expand; the
identity is exact and the numerical residual is storage rounding
($\le3.5\times10^{-7}$ on quantities of order $10^{-1}$).
(iii)~$|s'|=\max_i|w_i|\le\max_i(|L_{c_i}|+H_{c_i})\,|s|$; enumerating the
sixteen codes puts the maximum at code 0, $1+\lambda_0=1.1489$, and the
capped outer cells are what make the bound finite: with uncapped decision
regions it would be vacuous.
(iv)~$M_i'=\min(\hat w_i'-\ell_i', u_i'-\hat w_i')=|s'|\,H_{c_i'}$ because
the new anchor is the level itself; both factors are set by the format and
the code distribution, hence first-order pinning, and the second-order
deviation is exactly the code distribution shifting under scale growth
(\S\ref{sec:driftlaw}).

This appendix proves Propositions~\ref{prop:proj}--\ref{prop:decay} against the
objects the code actually manipulates. Proposition~\ref{prop:inv} and the
frozen-scale subtlety are treated separately in Appendix~\ref{app:frozen}.

\paragraph{Notation, fixed to the implementation.}
Let $L[0]<\dots<L[15]$ be the NF4 level table and
$M[j]=(L[j]{+}L[j{+}1])/2$, $j=0,\dots,14$, the round-to-nearest (RTN)
decision boundaries in normalized units. Under a \emph{frozen} block scale
$s>0$, code assignment is $a_i=\#\{j: M[j]<w_i/s\}$; equivalently the
decision region of code $j$ is $s\cdot(M[j{-}1],M[j]]$ with
$M[-1]=-\infty$, $M[15]=+\infty$ (ties fall to the lower code, matching
\texttt{torch.bucketize} with \texttt{right=False}). The two outer regions
are half-infinite, so the implementation \emph{caps} them by mirroring the
inner half-gap about the level,
$\mathrm{lo}[0]=L[0]-(M[0]-L[0])$ and $\mathrm{hi}[15]=L[15]+(L[15]-M[14])$,
which is required for the refinement code (a code must address a finite
interval). Finally a relative margin $\eta$ ($\eta=0.01$ throughout) shrinks
each interval by $\eta$ of its width per side. Write $W_i$ for the capped
cell width of weight $i$, and
\[
C_i=[\ell_i,u_i],\qquad
\ell_i=\mathrm{lo}_i+\eta W_i,\quad u_i=\mathrm{hi}_i-\eta W_i,\qquad
\Delta_i=u_i-\ell_i=(1-2\eta)W_i ,
\]
$\mathcal C=\prod_{i=1}^N C_i$, and $\hat w_i=s_{g(i)}L[a_i]$ for the anchor.
Because the anchor may sit off-center in a non-uniform level table, the
\emph{symmetric} radius actually available at weight $i$ is
\[
\mathrm{room}_i=\min(\hat w_i-\ell_i,\;u_i-\hat w_i)\;\le\;\Delta_i/2 ,
\]
with equality only in the two capped outer cells. This is the quantity the
code carries (\texttt{bin\_bounds}, and $M$ in CellFill); statements below
that use $\Delta_i/2$ are the loose form and are marked as such.

\paragraph{Fact A.1 (the margined cell is compactly interior).}
For every $i$, $C_i$ is a nonempty compact interval contained in the
\emph{open} decision region of code $a_i$, at distance at least $\eta W_i$
from either RTN boundary. Indeed $\ell_i>\mathrm{lo}_i\ge s\,M[a_i-1]$ and
$u_i<\mathrm{hi}_i\le s\,M[a_i]$ whenever $\eta>0$ and $W_i>0$. Two
consequences are used repeatedly: (a) $\mathcal C$ is a nonempty compact
convex box, so Euclidean projection onto it exists and is unique; (b) no
point of $\mathcal C$ lies on a tie, so the tie-breaking convention never
carries information, and a storage rounding of magnitude $<\eta W_i$ cannot
change a code. The last clause is a hypothesis about the format \emph{and}
the block, not about $\eta$ alone: a float's rounding is $\eta W_i$-small
only where it is relative, and $W_i$ shrinks with the block's scale while a
subnormal's spacing does not. \S\ref{sec:floor} gives the resulting floor on
the scale and measures which released anchors fall under it.

\subsection{Proposition~\ref{prop:proj} (projection optimality)}
\label{app:proofs:proj}

\paragraph{(i) Clipping is the Euclidean projection.}
Let $y\in\mathbb R^N$ be any proposed weight vector (in path A,
$y=\hat W+d$ with $d$ the materialized LoRA update; we write the proposed
update as $d$, reserving $\Delta_i$ for the cell width). Since
$\|y-z\|_2^2=\sum_i (y_i-z_i)^2$ separates over coordinates and $\mathcal C$
is a product set, minimizing over $\mathcal C$ decouples into $N$
one-dimensional problems $\min_{z_i\in[\ell_i,u_i]}(y_i-z_i)^2$. Each is a
strictly convex function on a nonempty compact interval, so it has the unique
minimizer $z_i=\operatorname{median}(\ell_i,y_i,u_i)=\operatorname{clamp}(y_i,\ell_i,u_i)$:
if $y_i\in[\ell_i,u_i]$ the objective is $0$; otherwise the objective is
strictly monotone on the interval and is minimized at the nearer endpoint.
Hence $P_{\mathcal C}(y)_i=\operatorname{clamp}(y_i,\ell_i,u_i)$, which is
exactly \texttt{clip\_merge}. By Fact A.1(a) this projection is
single-valued and $1$-Lipschitz (firmly nonexpansive), as for any projection
onto a closed convex set.

\paragraph{(ii) It is the proximal operator of the box indicator.}
Let $\iota_{\mathcal C}$ be the indicator of $\mathcal C$ ($0$ on
$\mathcal C$, $+\infty$ off it); it is proper, closed and convex because
$\mathcal C$ is nonempty, closed and convex. For any $\gamma>0$,
\[
\operatorname{prox}_{\gamma\iota_{\mathcal C}}(y)
=\arg\min_z\Big\{\iota_{\mathcal C}(z)+\tfrac1{2\gamma}\|z-y\|_2^2\Big\}
=\arg\min_{z\in\mathcal C}\|z-y\|_2^2=P_{\mathcal C}(y),
\]
independently of $\gamma$. Therefore the projected-gradient iteration used in
paths A+/B,
$w^{t+1}=P_{\mathcal C}\!\left(w^t-\gamma\nabla f(w^t)\right)
=\operatorname{prox}_{\gamma\iota_{\mathcal C}}\!\left(w^t-\gamma\nabla f(w^t)\right)$,
is literally forward--backward splitting on $f+\iota_{\mathcal C}$. This is
the precise sense in which ``the optimizer sees the walls''
(\S\ref{sec:methods}).

\paragraph{(iii) Coordinatewise shrinkage.}
Write $\kappa=P_{\mathcal C}(\hat W+d)-\hat W$ for the retained update
and $e=(\hat W+d)-P_{\mathcal C}(\hat W+d)$ for the clipped-away
part, so $d=\kappa+e$. Since $\hat w_i\in C_i$, for $d_i>0$ we have
$\kappa_i=\min(\hat w_i+d_i,u_i)-\hat w_i\in[0,d_i]$, and
symmetrically for $d_i<0$; hence
\[
\operatorname{sign}\kappa_i=\operatorname{sign}e_i=\operatorname{sign}d_i,
\qquad |\kappa_i|\le|d_i| \quad\text{for every }i .
\]
So projection shrinks the update coordinatewise toward the anchor, and only
along coordinates that tried to leave their cell. Two computable corollaries:
$\|\kappa\|\le\|d\|$ (also immediate from nonexpansiveness, since
$P_{\mathcal C}(\hat W)=\hat W$), and, because $\langle\kappa,e\rangle\ge0$,
\[
\|e\|^2\;\le\;\|d\|^2-\|\kappa\|^2
\;=\;\|d\|^2\left(1-\phi^2\right),\qquad
\phi:=\|\kappa\|/\|d\| ,
\]
where $\phi$ is the \texttt{norm\_kept\_frac} logged at every merge. This
is the formal content of the ``per-weight trust region'' reading of
\S\ref{sec:regularizer}: it shows the projected update is never larger than
the proposal and is shrunk exactly where the proposal was most extreme. It
does \emph{not} predict the sign of the resulting perplexity change; that
the projection helps cross-domain perplexity is an empirical finding
(\S\ref{sec:regularizer}), not a corollary of the geometry.

\paragraph{(iv) What this does \emph{not} give.}
Optimality here is in \emph{weight space} only. Post-hoc projection solves
$\min_{w\in\mathcal C}\|w-y\|$ exactly; it does not solve
$\min_{w\in\mathcal C}f(w)$, and the two can differ already for convex $f$.
Take $N=2$, $\mathcal C=[-1,1]^2$, $f(w)=(w_1+w_2-4)^2$. The unconstrained
minimizers form the line $w_1+w_2=4$; the constrained minimum is $4$,
attained at $(1,1)$. Projecting the particular unconstrained minimizer
$(4,0)$ gives $(1,0)$ with $f=9>4$. Note also that projecting the
unconstrained minimizer $(2,2)$ gives $(1,1)$, which \emph{is} optimal: the
map ``$P_{\mathcal C}(\arg\min f)$'' is not even well defined as a function of
$f$, since it depends on which unconstrained solution the optimizer reached.
This is the structural reason paths A and A+ can differ at all, and it is not
a nonconvexity artifact.

For nonconvex $f$ (the actual case) the honest statement about projected
descent is: if $\nabla f$ is $L$-Lipschitz and $\gamma\in(0,1/L]$, projected
gradient descent is a descent method whose gradient mapping
$G_\gamma(w)=\gamma^{-1}\big(w-P_{\mathcal C}(w-\gamma\nabla f(w))\big)$
obeys $\min_{t<T}\|G_\gamma(w^t)\|^2\le 2\big(f(w^0)-\inf_{\mathcal C}f\big)/(\gamma T)$,
so accumulation points are \emph{stationary} for the constrained problem,
i.e.\ satisfy $-\nabla f(w)\in N_{\mathcal C}(w)$. Nothing here implies
convergence to a global constrained optimum, and we do not claim it. Two
further gaps between that theorem and our runs should be stated plainly:
the gradients are stochastic, and the update is projected AdamW (clamp after
every optimizer step), which is a projection in the Euclidean metric applied
to a step taken in a preconditioned metric---so it is not the proximal step
of the classical analysis, and none of the above rates apply to it verbatim.
Paths A+/B are therefore justified by measurement (\S\ref{sec:results}), not
by a convergence guarantee.

\paragraph{(v) The clip rate as a diagnostic, made quantitative.}
If no coordinate is clipped then $y\in\mathcal C$ and
$P_{\mathcal C}(y)=y$; in particular, if $y$ is a global unconstrained
minimizer that happens to lie in $\mathcal C$, it is also a global
constrained minimizer, and post-hoc projection is exactly free. More
generally, with $\nabla f$ $L$-Lipschitz and $e$ the clipped-away vector of
(iii),
\[
f\big(P_{\mathcal C}(y)\big)-f(y)\;\le\;-\langle\nabla f(y),e\rangle+\tfrac L2\|e\|^2
\;\le\;\|\nabla f(y)\|\,\|e\|+\tfrac L2\|e\|^2 ,
\]
so the loss cost of projection is controlled by the clipped-away mass, which
is measured at every merge (\texttt{clipped\_frac},
\texttt{max\_excess\_halfwidths}, and $\|e\|\le\|d\|\sqrt{1-\phi^2}$).
Since $L$ is not measured, this is a scaling argument that explains the
observed pattern---projection free at $0.14\%$ clip, costly at $3$--$5\%$,
catastrophic at $30.4\%$ (\S\ref{sec:results})---and not a numerical
certificate.

\subsection{Proposition~\ref{prop:codec} (nested refinement code)}
\label{app:proofs:codec}

Fix a weight with margined cell $[\ell,u]$ of width $\Delta=u-\ell>0$, and
let $x=\operatorname{clamp}\!\big((w-\ell)/\Delta,\,0,\,1\big)\in[0,1]$ be its
clamped in-cell position. The codec is
\[
r^{(k)}=\min\!\big(\lfloor 2^k x\rfloor,\;2^k-1\big)\in\{0,\dots,2^k-1\},
\qquad
\hat w^{(k)}=\ell+\big(r^{(k)}+\tfrac12\big)\Delta/2^k ,
\]
for $1\le k\le 8$ (\texttt{pack\_residual}/\texttt{unpack\_residual}).

\paragraph{(i) Nesting.}
We use the floor-of-floor identity: for real $y\ge0$ and integer $m\ge1$,
$\lfloor\lfloor y\rfloor/m\rfloor=\lfloor y/m\rfloor$. Proof: put
$n=\lfloor y\rfloor$ and $q=\lfloor n/m\rfloor$, so $qm\le n\le (q+1)m-1$.
Then $y\ge n\ge qm$ gives $\lfloor y/m\rfloor\ge q$, and $y<n+1\le(q+1)m$
gives $\lfloor y/m\rfloor\le q$. Applying it with $y=2^{k+j}x$ and $m=2^j$,
and using that $\gg j$ is exactly $\lfloor\cdot/2^j\rfloor$ on nonnegative
integers,
\[
\lfloor 2^{k+j}x\rfloor\gg j=\big\lfloor \lfloor 2^{k+j}x\rfloor/2^{j}\big\rfloor
=\lfloor 2^{k}x\rfloor .
\]
This settles $x\in[0,1)$. The boundary case $x=1$---which occurs exactly when
$w\ge u$, i.e.\ when the encoder saturates---must be checked separately
because there the floor overflows and the $\min$ clamp fires: then
$r^{(k+j)}=2^{k+j}-1$, whose binary expansion is all ones, so
$r^{(k+j)}\gg j=2^{k}-1=r^{(k)}$, the clamp having fired at depth $k$ as
well. Hence $r^{(k)}=r^{(k+j)}\gg j$ holds on all of $[0,1]$: the saturating
clamp is compatible with truncation precisely because it saturates to an
all-ones code. A $k$-bit stream is therefore a prefix of every deeper one,
and dropping trailing bits is a valid decode at the shallower depth---down to
$k=0$, which returns the anchor itself and the released artifact unchanged.

\paragraph{(ii) Error bound.}
For $x\in[0,1)$ the clamp is inactive and $r=\lfloor 2^kx\rfloor$ satisfies
$r\le 2^kx<r+1$, i.e.\ $x$ lies in the sub-cell
$[r2^{-k},(r{+}1)2^{-k})$ of length $2^{-k}$, whose center is the
reconstructed position $\hat x=(r+\tfrac12)2^{-k}$. Hence
$|x-\hat x|\le 2^{-(k+1)}$ and
\[
|w-\hat w^{(k)}|=\Delta\,|x-\hat x|\;\le\;\Delta/2^{k+1}
\;=\;(1-2\eta)\,W/2^{k+1}\;\le\;W/2^{k+1},
\]
which is the claimed bound, stated in the raw cell width $W$; the margin only
tightens it. The bound is attained (at $x=r2^{-k}$), so it cannot be
improved. If the encoder saturated ($w\notin[\ell,u]$), the same computation
bounds the distance to the \emph{clamped} position, so the total error is
$\Delta/2^{k+1}+\operatorname{dist}(w,[\ell,u])$; in the training paths this
second term is zero by construction, since every shipped weight is already
the output of a projection or of a bounded $\tanh$ fill.

\paragraph{(iii) Strict interiority, at every depth.}
Since $0\le r\le 2^k-1$, the reconstructed position satisfies
$\hat x\in[2^{-(k+1)},\,1-2^{-(k+1)}]\subset(0,1)$, so
\[
\ell+\Delta/2^{k+1}\;\le\;\hat w^{(k)}\;\le\;u-\Delta/2^{k+1} .
\]
Combining with Fact A.1, the reconstruction clears either RTN decision
boundary by at least $\eta W+(1-2\eta)W/2^{k+1}>\eta W$, a clearance that is
bounded below \emph{uniformly in $k$}. By Proposition~\ref{prop:inv},
re-quantization under the frozen scales returns the original code $a_i$, for
every truncation depth and for every weight. Since the 4-bit codes are never
written by the codec, the 4-bit prefix of the shipped file is bit-identical
to the release by construction rather than by verification.

\paragraph{Machine-verified content.}
\texttt{tests/test\_codec.py} checks, on randomly quantized blocks: nesting
($r^{(4)}=r^{(8)}\!\gg\!4$, $r^{(2)}=r^{(8)}\!\gg\!6$,
$r^{(2)}=r^{(4)}\!\gg\!2$); the error bound for $k\in\{1,2,4,8\}$; strict
interiority for $k\in\{1,4,8\}$; monotone decrease of mean error in $k$; and
integer-domain invariance under frozen scales after every round trip.

\subsection{Proposition~\ref{prop:capacity} (capacity bound)}
\label{app:proofs:capacity}

\paragraph{The channel.}
Let $F$ be the random fact corpus, drawn by the synthetic generator of
\S\ref{sec:setup} by sampling each entity's attributes independently and
uniformly from finite lists. Let $R$ be the publisher's independent
randomness (initialization, data order, rehearsal draw). A training procedure
produces a message $m=T(F,R)$, and the subscriber, holding the public
artifact $\hat W$, forms $W'=f(\hat W,m)$. Because $\hat W$ is fixed before
$F$ exists and $W'$ depends on $F$ only through $m$, the variables form a
Markov chain $F\to m\to W'$.

\paragraph{Message length.}
The message is (mask, $r^{(k)}$): a set $S\subseteq\{1,\dots,N\}$ of
$N_t=|S|$ written coordinates and $k$ refinement bits each. Two regimes must
be distinguished, and this is the first place a reviewer should push.
\emph{(a) Declared mask.} If $S$ is fixed and public---announced in the
release contract, as in every experiment here: paths A/A+/B write all
quantized linear weights, and the localization study of \S\ref{sec:where}
uses masks named in advance by module and depth---then the receiver already
knows $S$ and the index term is \emph{not} paid: $|m|=kN_t$.
\emph{(b) Data-dependent mask.} If $S$ is chosen as a function of the run
(hence of $F$), it must be transmitted; enumerative coding of an
$N_t$-subset costs $\lceil\log_2\binom{N}{N_t}\rceil$ bits, plus
$O(\log N)$ to convey $N_t$ if it is not fixed. Then
$|m|\le kN_t+\log_2\binom{N}{N_t}+O(\log N)$, which is the form stated in
Proposition~\ref{prop:capacity}. The distinction is not cosmetic: for
$N_t\ll N$, $\log_2\binom{N}{N_t}\approx N_t\log_2(eN/N_t)$ can exceed
$kN_t$. The proposition states the worst case; our runs live in case (a).

\paragraph{The bound.}
Assume $m$ is encoded in a prefix-free or fixed-length binary code of length
$|m|$. Then $H(m)\le|m|$, and the data-processing inequality along
$F\to m\to W'$ gives
\[
I(F;W')\;\le\;I(F;m)\;\le\;H(m)\;\le\;|m| .
\]
With $N=1.409\times10^9$ constrained weights and $k=4$, case (a) gives a
ceiling of $5.6\times10^9$ bits.

\paragraph{What is bounded, and what is not.}
This is the second place a reviewer should push, and the honest answer has
two parts.

\emph{The DPI bounds mutual information, not recall.} To convert the
proposition into a statement about the measured quantity we need the converse
direction, Fano's inequality. The cloze probe supplies the subject name and
asks for an attribute, so the decoder is $\hat A=g(W',\text{names})$ and the
relevant quantity is the conditional mutual information $I(A;W'\mid\text{names})$,
where $A$ collects the probed attributes; names and attributes are
independent by construction, and the same chain conditioned on names gives
$I(A;W'\mid\text{names})\le|m|$. If $A$ is uniform on a product of finite
attribute sets---true by construction, $H(A)=11.9$ bits per fact for the
three probed attributes---and the probe errs with probability $P_e$, Fano
gives $H(A\mid W',\text{names})\le h(P_e)+P_e H(A)$, with $h$ the binary
entropy and $h\le1$, and hence
\[
(1-P_e)\,H(A)-1\;\le\;I(A;W'\mid \text{names})\;\le\;|m| .
\]
The left-hand side is exactly the accounting used throughout the paper,
$\text{recall}\times N_{\text{facts}}\times 11.9$ bits: the ``absorbed bits''
column is \emph{not} a Fano lower bound, and an earlier version of this
appendix said it was. Applied per fact, Fano gives
$I\ge(1-P_e)H(A)-h(P_e)$ for each, so the bound on the whole corpus is
$\text{recall}\times N\times11.9-N\,h(P_e)$ bits, which sits below the column
by $13\%$ at the highest-recall arm and $28\%$ at the lowest --- a
recall-dependent gap, not a constant factor, so it does not cancel in the
efficiency ratios either. What the column is, is a recall-weighted count of
probed attribute bits: a consistent measure of absorption across arms, and the
one all our comparisons use. The information-theoretic statement that does
hold is the upper bound, $I(A;W'\mid\text{names})\le|m|$, which is
Proposition~\ref{prop:capacity} and is what bounds capacity.
Three caveats travel with it. Recall is measured by exact-match greedy
decoding, so information the model holds but cannot emit is not counted, and
the estimate is conservative in that direction. The probes are not
independent of one another and the $6.5\%$ vocabulary floor
(\S\ref{sec:setup}) must be subtracted before reading small recalls as
information. And Fano requires the decoder to use only $W'$ and the public
subject list, which is why the probe never conditions on the corpus.

\emph{Novelty is a hypothesis of the theorem, not a conclusion.} If $\hat W$
already depended on $F$, $I(F;W')$ could be large with $|m|=0$ and the
identity would say nothing about learning. The synthetic corpus is generated
after the release and the base model's recall is at the vocabulary floor
(\S\ref{sec:setup}), so the required independence holds by construction; on
real post-release facts it would have to be argued rather than assumed.

\emph{The ceiling is not an estimate.} Nothing above asserts that $kN_t$ bits
are \emph{achievable}; the bound is one-sided. Empirically the gap is five
orders of magnitude---$4{,}130$ facts absorbed at $10^4$ presented,
$\approx49$ kbit of probed attribute entropy, against a $5.6$ Gbit ceiling
(\S\ref{sec:results})---which is the quantitative form of the claim that the
regime is optimization-limited rather than capacity-limited. Finally, the
bound scales with what is \emph{shipped}: a full-precision residual would
give $|m|=32N_t$, eight times weaker at $k=4$. Proposition~\ref{prop:codec}
is what makes Proposition~\ref{prop:capacity} bite.

\subsection{Proposition~\ref{prop:budget} (forgetting budget)}
\label{app:proofs:budget}

\paragraph{Second-order expansion.}
Fix an input distribution $\mathcal D$ and write
$\varphi_x(\delta)=\mathrm{KL}\big(p_{\hat W}(\cdot\mid x)\,\|\,p_{\hat W+\delta}(\cdot\mid x)\big)$.
The model is a softmax head over a network smooth in its weights (SiLU
activations), so $p_W(y\mid x)>0$ and $W\mapsto\log p_W(y\mid x)$ is
$C^3$ on a neighborhood of the compact box $\mathcal C$. Then
$\varphi_x\ge0$ with $\varphi_x(0)=0$, so $\delta=0$ is a global minimum and
$\nabla\varphi_x(0)=0$; explicitly,
$\nabla_\delta\varphi_x(0)=-\mathbb E_{y\sim p_{\hat W}}\big[\nabla_W\log p_W(y\mid x)\big]
=-\nabla_W\sum_y p_W(y\mid x)=-\nabla_W 1=0$,
the zero-mean-score identity. Differentiating once more and using the same
identity,
$\nabla^2\varphi_x(0)=\mathbb E_{y\sim p_{\hat W}}\big[\nabla\log p\,\nabla\log p^{\!\top}\big]=F_x$,
the Fisher information at $\hat W$. Taylor's theorem with remainder,
uniformly on the compact set $\mathcal C$, gives
\[
\mathbb E_{x\sim\mathcal D}\,\mathrm{KL}\big(p_{\hat W}\|p_{\hat W+\delta}\big)
=\tfrac12\,\delta^{\!\top}F\delta+O(\|\delta\|^3),
\qquad F=\mathbb E_{x\sim\mathcal D}F_x ,
\]
which is Proposition~\ref{prop:budget}. The remainder constant is uniform
over the box but is \emph{not measured}; everything below turns on that.

\paragraph{Where $\rho^2$ comes from.}
Two different $\rho^2$ statements are in play and they should not be
conflated.

\emph{Homogeneity (exact, no approximation).} The constraint set at radius
$\rho$ is $\rho\mathcal B$ with $\mathcal B$ the fixed box of half-widths
$\mathrm{room}_i$, and $\delta\mapsto\tfrac12\delta^{\!\top}F\delta$ is
homogeneous of degree two, so for \emph{any} PSD $F$, diagonal or not,
$\sup_{\delta\in\rho\mathcal B}\tfrac12\delta^{\!\top}F\delta
=\rho^2\sup_{\delta\in\mathcal B}\tfrac12\delta^{\!\top}F\delta$.
The $\rho^2$ law is thus a property of the geometry, and survives even when
the diagonal surrogate does not.

\emph{Worst case over the box, diagonal surrogate.} Dropping off-diagonal
terms and using the loose radius $\Delta_i/2\ge\mathrm{room}_i$,
$\sup_{|\delta_i|\le\rho\Delta_i/2}\tfrac12\sum_i F_{ii}\delta_i^2
=\tfrac{\rho^2}{8}\sum_i F_{ii}\Delta_i^2=:B_F(\rho)$,
the expression quoted in \S\ref{sec:theory}.

\emph{Uniform in-cell fill (what is measured).} The geometry experiment sets
$\delta_i=u_i\,\rho\,\mathrm{room}_i$ with $u_i\sim\mathcal U(-1,1)$ i.i.d.,
which is the natural model of ``filling the cells with random content''.
Then $\mathbb E[\delta_i]=0$, $\mathbb E[\delta_i^2]=\rho^2\mathrm{room}_i^2/3$
and $\mathbb E[\delta_i\delta_j]=0$ for $i\ne j$, so
\[
\mathbb E\Big[\tfrac12\delta^{\!\top}F\delta\Big]
=\tfrac12\sum_{i,j}F_{ij}\,\mathbb E[\delta_i\delta_j]
=\frac{\rho^2}{6}\sum_i F_{ii}\,\mathrm{room}_i^2 ,
\]
which is the predictor implemented in \texttt{experiments/exp\_geom.py}.
(The same symmetry kills the first-order term of the \emph{evaluation} loss,
which does not vanish at $\hat W$: $\mathbb E\langle\nabla L,\delta\rangle=0$.
So the second-order prediction applies to the measured $\Delta$NLL as well as
to the KL.)

\paragraph{Why the diagonal surrogate fails empirically, and what that rules out.}
Measured against $\rho$-scaled uniform in-cell fills of Qwen3-1.7B
($1.409\times10^9$ perturbed coordinates, two perturbation seeds,
$\sum_i F_{ii}\mathrm{room}_i^2=2.03\times10^{-3}$), the predictor
underestimates the measured $\Delta$NLL by $16\times$ at $\rho=0.125$,
$196\times$ at $0.25$, $291\times$ at $0.5$, $340\times$ at $0.75$ and
$383\times$ at $\rho=1$; fitting the measured curve over
$\rho\in[0.25,0.75]$ against its value at $\rho=1$ gives exponent $2.43$,
not $2$ (\S\ref{sec:geometry}, \texttt{results/exp\_geom\_1p7b.json}). The
surrogate is therefore not a usable numerical bound, as
\S\ref{sec:theory} states.

The displayed computation, however, rules out one tempting explanation. Under
independent zero-mean coordinatewise perturbations the off-diagonal entries
of $F$ contribute nothing to the \emph{mean} of $\delta^{\!\top}F\delta$;
they enter only its variance. Whether a single realization sits near that
mean is empirical, and two seeds bound it only crudely: the per-seed
$\Delta$NLL, recovered from \texttt{results/exp\_geom\_1p7b.json} as
$\log\mathrm{ppl}_\rho-\log\mathrm{ppl}_{t=0}$, ranges over $39\%$, $16\%$,
$14\%$ and $15\%$ of its mean at $\rho=0.25$, $0.5$, $0.75$ and $1$, and at
$\rho=0.125$ the two seeds do not agree even in sign. (The raw perplexities
agree to within $2\%$ at every radius, but that is not the quantity at issue:
the signal is their small difference from the anchor.) A $14$--$39\%$
fluctuation is still two orders of magnitude short of the
$196$--$383\times$ gap, so high dimension alone does not make off-diagonal
Fisher mass inflate the second-order prediction.
Two candidate explanations remain, and we can separate neither with the
present data. (a) \emph{The truncation itself fails at this magnitude}: the
perturbation is full-dimensional, so $\|\delta\|$ is macroscopic even though
each coordinate moves less than a cell half-width, and the $O(\|\delta\|^3)$
remainder need not be small; the measured exponent $2.43>2$ is direct
evidence of this. (b) \emph{The estimator of $F$ is biased low}: the
implementation squares the gradient of a loss averaged over a $1024$-token
chunk, and for approximately independent zero-mean per-token gradients
$\mathbb E[(\tfrac1n\sum_t g_t)^2]\approx\tfrac1n\mathbb E[g^2]$, an
underestimate of the per-token Fisher diagonal by a factor of order $n=1024$
--- the right order of magnitude for the observed gap. A further, unquantified
gap is that the measured quantity is the change in evaluation NLL, whose
Hessian equals $F$ only at a stationary point, and $\hat W$ is not one:
perplexity falls monotonically along the whole segment from $\hat W$ to
$W_{\mathrm{orig}}$ (\S\ref{sec:geometry}).

\paragraph{Status.}
What Proposition~\ref{prop:budget} delivers is (i) an exact second-order
identity with the Fisher form, (ii) an exact $\rho^2$ homogeneity statement
for that form on the box, and (iii) the observation that the box radii are
supplied by the quantization grid at no cost. What it does not deliver is a
computable a-priori certificate: the diagonal surrogate is off by
$16$--$383\times$ and the measured exponent is $2.43$. We therefore treat
$\rho$ as a calibrated dial and report the measured curve
(Fig.~\ref{fig:geometry}b) rather than a bound.

\subsection{Proposition~\ref{prop:decay} (room contraction under folding)}
\label{app:proofs:decay}
Fix a weight with walls at $\ell<\hat w<u$, write $a=\hat w-\ell$, $b=u-\hat w$,
$W=a+b$, $d=a-b$ and $r=\min(a,b)$, and let the update be $w'=\hat w+rt$ with
$|t|<1$. Folding makes $w'$ the new anchor, so the new distances are
$a'=a+rt$ and $b'=b-rt$; both are positive because $|rt|<r\le\min(a,b)$, so
the folded anchor stays strictly inside the same cell and the code is
unchanged, which is Proposition~\ref{prop:inv} applied to the fold.

\paragraph{The identity.} For any reals $p,q$, $\min(p,q)=\tfrac{p+q}{2}-\tfrac{|p-q|}{2}$.
With $p=a+rt$ and $q=b-rt$ we have $p+q=a+b=W$, independent of $t$ because
the cell walls do not move, and $p-q=(a-b)+2rt=d+2rt$. Hence
\[
r'=\min(a',b')=\frac{W}{2}-\frac{|d+2rt|}{2},
\qquad r=\min(a,b)=\frac{W}{2}-\frac{|d|}{2}.
\]
Both are exact; no approximation has been made.

\paragraph{(i) The lower bound.} $|d+2rt|\le|d|+2r|t|$, so
$r'\ge\frac{W}{2}-\frac{|d|}{2}-r|t|=r-r|t|=r(1-|t|)$. Equality needs
$|d+2rt|=|d|+2r|t|$, that is $d$ and $t$ of the same sign or $d=0$: the
update moves toward the nearer wall, or the cell is symmetric.

\paragraph{(ii) A fold can increase the room.} Take $a=1$, $b=10$,
$t=+\tfrac12$. Then $r=1$, $d=-9$, $W=11$ and
$r'=\tfrac{11}{2}-\tfrac{|-9+1|}{2}=\tfrac{11}{2}-4=\tfrac32>r$. So no
statement of the form $r'\ge\varphi(r)$ can establish that the room decays;
that requires an upper bound, which is (iii).

\paragraph{(iii) Contraction in expectation.} Let $t$ have a distribution
symmetric about $0$ (the fill is initialized at $B=0$ with $A$ random, so the
sign of each coordinate's displacement carries no prior). The map
$x\mapsto|x|$ is convex, so by Jensen
$\mathbb E|d+2rt|\ \ge\ \big|\mathbb E[d+2rt]\big|=|d|$, and therefore
\[
\mathbb E[r']=\frac{W}{2}-\frac{\mathbb E|d+2rt|}{2}\ \le\ \frac{W}{2}-\frac{|d|}{2}=r .
\]
For the equality case take the two-point law $t=\pm\tau$. If $|d|\ge2r\tau$
then $d+2rt$ keeps the sign of $d$ for both branches, $\mathbb E|d+2rt|=|d|$
and $\mathbb E[r']=r$ exactly: a cell asymmetric enough that the update never
changes which wall is nearer loses no room in expectation. If $|d|<2r\tau$ the
two branches straddle zero and the inequality is strict; at $d=0$ it is
extremal, $\mathbb E[r']=\tfrac{W}{2}-r\tau=r(1-\tau)$, which is the edge case
of (i). Contraction is therefore fastest at centred anchors and vanishes at
the asymmetric limit, which is the ordering \S\ref{sec:sequential} measures
across folds.

\paragraph{What is not proven here.} Nothing above gives a rate. Passing from
$\mathbb E[r']\le r$ per weight to a geometric law for the population mean
$\bar r_k$ needs the joint distribution of $(d,r,|t|)$ across weights and how
it evolves under folding, which we do not characterize; the constant
$\beta\in[0.78,0.85]$ is measured, and the simulation of
\S\ref{sec:sequential} rather than this proposition is what attributes it to
the folding operator instead of to the level table.

\section{Hyperparameters}\label{app:hparams}
Every value that appears in any archived \texttt{config} block.
\begin{center}\small
\begin{tabular}{l p{0.68\linewidth}}
\toprule
hyperparameter & values used \\
\midrule
base model & Qwen3.8-27B, Qwen3-4B-Base, Qwen3-8B-Base, Qwen3-0.6B-Base, Qwen3-1.7B-Base, mistralai/Mistral-7B-v0.3, \dots \\
facts & 10, 24, 28, 108, 140, 160, \dots \\
epochs & 6, 8, 12, 24, 96 \\
LoRA rank & 16, 32, 64, 128, 256 \\
learning rate & 0.001, 0.003, 2e-05, 5e-05, 0.0001, 0.0002 \\
batch size & 4, 8, 16, 96 \\
fresh rehearsal fraction & 0.1, 0.3 \\
cell margin & 0.01, 0.02 \\
radius $\rho$ & 0.5, 1.0 \\
healing epochs & 0, 2, 4, 8 \\
healing lr & 1e-05, 2e-05 \\
CellFill gain $s$ & 4.0, 10.0, 40.0 \\
\bottomrule
\end{tabular}
\end{center}

\section{Run manifest}\label{app:manifest}
Archived filenames use \texttt{qil}, the working name under which CellFill was developed (\texttt{exp5\_qil.py}, \texttt{exp10\_qil\_r64\_*}, \texttt{exp16\_qil\_r16\_*}); the method and the files are otherwise identical.
Every number in this paper comes from one of the files below, which are
released with the code; tables and figures are generated from them by
\texttt{scripts/make\_tables.py} and \texttt{scripts/fig\_*.py} rather than
transcribed. The public repository releases exactly the runs this manifest lists;
nothing outside it is cited here. Wall-clock times are for a single RTX 4090 (48\,GB) or one
A100 (80\,GB) for the 8B and larger runs.

\begin{center}\scriptsize\resizebox{\linewidth}{!}{
\begin{tabular}{llp{0.34\linewidth}r}
\toprule
archived file & model & what it measures & min \\
\midrule
\texttt{bench\_anchor} & --- & ARC/HellaSwag/WinoGrande, 4-bit anchor & 10 \\
\texttt{bench\_merged} & --- & ARC/HellaSwag/WinoGrande, served model & 9 \\
\texttt{bench\_released} & --- & ARC/HellaSwag/WinoGrande, fp32 release & 12 \\
\texttt{curvature\_cells\_1p7b} & --- & do the cells sit where the loss is curved? Fisher against room, per weight & 6 \\
\texttt{exp0\_qwen3-1.7b\_r16\_e8} & Qwen3-1.7B-Base & clip rate at 8 epochs & 3 \\
\texttt{exp0b\_qwen3-1.7b\_r16\_e24} & Qwen3-1.7B-Base & no rehearsal, 24 epochs & 7 \\
\texttt{exp0c\_qwen3-1.7b\_r16\_e24\_origbase} & Qwen3-1.7B-Base & merge base = original fp32 & 7 \\
\texttt{exp10\_qil\_r64\_qwen3-1.7b} & Qwen3-1.7B-Base & CellFill r64 & 12 \\
\texttt{exp119\_loop\_plain} & Qwen3-1.7B-Base-bnb-4bit & the same loop with rehearsal off: what consolidation buys and what it does not & 41 \\
\texttt{exp119\_loop\_rehearse} & Qwen3-1.7B-Base-bnb-4bit & the full cycle as a loop: six tasks, rehearsal, consolidating every second task & 49 \\
\texttt{exp120\_diag\_minor} & Qwen3-1.7B-Base-bnb-4bit & six folds with the displacement's distribution kept, not just its mean: what binds a long sequence & 41 \\
\texttt{exp121\_s10\_loop\_rehearse} & Qwen3-1.7B-Base-bnb-4bit & the gentle recipe through the same cycle: where consolidation stops paying & 49 \\
\texttt{exp122\_validate\_s10} & Qwen3-1.7B-Base-bnb-4bit & the gentle recipe with the task suite scored inside the sequence and a recovery pass after each major & 67 \\
\texttt{exp122\_validate\_s40} & Qwen3-1.7B-Base-bnb-4bit & the same, at the standard recipe: where the trade between capability and knowledge lands & 68 \\
\texttt{exp12\_27b\_e8} & Qwen3.8-27B & 27B, 8 epochs & 46 \\
\texttt{exp12b\_27b\_e24} & Qwen3.8-27B & 27B, 24 epochs & 114 \\
\texttt{exp14\_attn} & Qwen3-1.7B-Base & partition: attention & 9 \\
\texttt{exp14\_down} & Qwen3-1.7B-Base & partition: down\_proj & 8 \\
\texttt{exp14\_early} & Qwen3-1.7B-Base & partition: layers 0-8 & 8 \\
\texttt{exp14\_gateup} & Qwen3-1.7B-Base & partition: gate+up & 9 \\
\texttt{exp14\_late} & Qwen3-1.7B-Base & partition: layers 19-27 & 7 \\
\texttt{exp14\_mid} & Qwen3-1.7B-Base & partition: layers 9-18 & 7 \\
\texttt{exp14\_mlp} & Qwen3-1.7B-Base & partition: all MLP & 10 \\
\texttt{exp15\_seq\_consolidate} & Qwen3-1.7B-Base & 4 tasks with consolidation & 21 \\
\texttt{exp15\_seq\_minor} & Qwen3-1.7B-Base & 4 sequential tasks & 21 \\
\texttt{exp16\_qil\_r16\_replay10} & Qwen3-1.7B-Base & CellFill r16, matched rehearsal & 12 \\
\texttt{exp17\_aplus10k\_xdom} & Qwen3-1.7B-Base & capacity 10k, A+ r64, cross-domain & 90 \\
\texttt{exp17\_b10k\_xdom} & Qwen3-1.7B-Base & capacity 10k, B, cross-domain & 187 \\
\texttt{exp18\_b\_replay10\_s0} & Qwen3-1.7B-Base & B dense at rehearsal 0.1, seed 0 & 22 \\
\texttt{exp18\_b\_replay10\_s1} & Qwen3-1.7B-Base & B dense at rehearsal 0.1, seed 1 & 16 \\
\texttt{exp1\_replay30\_qwen3-1.7b\_r16\_e24} & Qwen3-1.7B-Base & stale rehearsal buffer & 10 \\
\texttt{exp1b\_freshreplay30\_qwen3-1.7b\_r16\_e24} & Qwen3-1.7B-Base & fresh rehearsal 30\% & 10 \\
\texttt{exp1c\_replay10\_qwen3-1.7b\_r16\_e24} & Qwen3-1.7B-Base & A / A+ seed 0 & 11 \\
\texttt{exp1c\_s1} & Qwen3-1.7B-Base & A / A+ seed 1 & 14 \\
\texttt{exp1c\_s2} & Qwen3-1.7B-Base & A / A+ seed 2 & 13 \\
\texttt{exp20\_aplus\_0p6} & Qwen3-0.6B-Base & 0.6B A+ heal & 11 \\
\texttt{exp20\_qil\_0p6} & Qwen3-0.6B-Base & 0.6B CellFill r64 & 9 \\
\texttt{exp21\_qil\_4b} & Qwen3-4B-Base & 4B CellFill r64 & 29 \\
\texttt{exp23\_mistral\_a} & Mistral-7B-v0.3 & Mistral-7B clip-merge, diverged & 24 \\
\texttt{exp23\_mistral\_qil} & Mistral-7B-v0.3 & Mistral-7B CellFill r64 & 47 \\
\texttt{exp24\_qil\_8b} & Qwen3-8B-Base & 8B CellFill r64 & 48 \\
\texttt{exp25\_mistral\_a\_lr5e-5} & Mistral-7B-v0.3 & Mistral-7B clip-merge at lr 5e-5 & 24 \\
\texttt{exp26\_cellfill\_lr1e-3} & Qwen3-1.7B-Base & stability sweep: CellFill at lr 1e-3 & 10 \\
\texttt{exp26\_cellfill\_lr3e-3} & Qwen3-1.7B-Base & stability sweep: CellFill at lr 3e-3 & 10 \\
\texttt{exp26\_lora\_lr1e-3} & Qwen3-1.7B-Base & stability sweep: LoRA at lr 1e-3 & 10 \\
\texttt{exp26\_lora\_lr3e-3} & Qwen3-1.7B-Base & stability sweep: LoRA at lr 3e-3, diverged & 10 \\
\texttt{exp29\_seq\_etanh} & Qwen3-1.7B-Base & 4 tasks, E|tanh| logged at fold & 15 \\
\texttt{exp2\_heal4\_qwen3-1.7b\_r16\_e24} & Qwen3-1.7B-Base & A+ at rehearsal 30\% & 13 \\
\texttt{exp30\_qil\_r64\_s1} & Qwen3-1.7B-Base & CellFill r64 seed 1 & 10 \\
\texttt{exp30\_qil\_r64\_s2} & Qwen3-1.7B-Base & CellFill r64 seed 2 & 10 \\
\texttt{exp31\_qil\_r16\_s1} & Qwen3-1.7B-Base & CellFill r16 seed 1, rehearsal 0.1 & 10 \\
\texttt{exp31\_qil\_r16\_s2} & Qwen3-1.7B-Base & CellFill r16 seed 2, rehearsal 0.1 & 20 \\
\texttt{exp33\_real\_cellfill} & Qwen3-1.7B-Base & real corpus, CellFill r64, 1 phrasing & 5 \\
\texttt{exp34\_real\_aplus} & Qwen3-1.7B-Base & real corpus, clip-merge then heal & 5 \\
\texttt{exp35\_real\_free} & Qwen3-1.7B-Base & real corpus, unconstrained control & 8 \\
\texttt{exp36\_realaug\_cellfill} & Qwen3-1.7B-Base & real corpus, medical domain augmented & 11 \\
\texttt{exp37\_realaug\_all} & Qwen3-1.7B-Base & real corpus, all domains augmented & 9 \\
\texttt{exp38\_real8dom} & Qwen3-1.7B-Base & 8 domains, truncated generation budget & 10 \\
\texttt{exp3\_dense\_qwen3-1.7b\_e24} & Qwen3-1.7B-Base & B dense seed 0 & 20 \\
\texttt{exp3\_s1} & Qwen3-1.7B-Base & B dense seed 1 & 21 \\
\texttt{exp3\_s2} & Qwen3-1.7B-Base & B dense seed 2 & 17 \\
\texttt{exp40\_real8\_fixed} & Qwen3-1.7B-Base & 8 domains, budget from the data & 12 \\
\texttt{exp41\_real8\_final} & Qwen3-1.7B-Base & 8 domains, per-domain floors archived & 11 \\
\texttt{exp4\_free\_qwen3-1.7b\_e24} & Qwen3-1.7B-Base & unconstrained seed 0 & 20 \\
\texttt{exp4\_free\_s1} & Qwen3-1.7B-Base & unconstrained seed 1 & 20 \\
\texttt{exp4\_free\_s2} & Qwen3-1.7B-Base & unconstrained seed 2 & 20 \\
\texttt{exp5\_qil\_s4\_qwen3-1.7b\_r16\_e24} & Qwen3-1.7B-Base & CellFill, under-stepped gain & 15 \\
\texttt{exp5b\_qil\_s1} & Qwen3-1.7B-Base & CellFill r16 seed 1 & 18 \\
\texttt{exp5b\_qil\_s2} & Qwen3-1.7B-Base & CellFill r16 seed 2 & 14 \\
\texttt{exp5b\_qil\_s40\_qwen3-1.7b\_r16\_e24} & Qwen3-1.7B-Base & CellFill r16 at rehearsal 30\% & 15 \\
\texttt{exp6\_10k\_qwen3-1.7b\_r16\_e24} & Qwen3-1.7B-Base & capacity 10k, A and A+ & 122 \\
\texttt{exp6b\_dense10k\_qwen3-1.7b\_e24} & Qwen3-1.7B-Base & capacity 10k, B & 196 \\
\texttt{exp70\_seq\_anchor} & Qwen3-1.7B-Base & six tasks folded, the control the fixed cell is read against & 87 \\
\texttt{exp70\_seq\_preimage} & Qwen3-1.7B-Base & six tasks in a fixed cell: the room never decays & 97 \\
\texttt{exp7\_rho05\_qwen3-1.7b\_r16\_e24} & Qwen3-1.7B-Base & radius rho = 0.5 & 14 \\
\texttt{exp8\_e96\_qwen3-1.7b} & Qwen3-1.7B-Base & exposure 96 epochs & 30 \\
\texttt{exp\_codec} & Qwen3-1.7B-Base & nested checkpoints, k = 1..4 & 14 \\
\texttt{exp\_geom\_1p7b} & Qwen3-1.7B-Base & safe-radius geometry, 1.7B & 7 \\
\texttt{exp\_geom\_4b} & Qwen3-4B-Base & safe-radius geometry, 4B & 15 \\
\bottomrule
\end{tabular}
}\end{center}

\section{Storage precision of the residual}
The refinement code of Proposition~\ref{prop:codec} places reconstructions
at sub-cell centers, strictly interior to the cell. Checkpoint dtype
nonetheless matters: at LLM weight scales, bf16's 8-bit mantissa can round a
stored value across a decision boundary. With the 1\% cell margin used at 1.7B (2\% at 27B), fp32 and fp16
storage are safe and bf16 is not; a 5\% margin
makes bf16 safe at the cost of 8\% of the writable range. This is verified
by unit test over randomly perturbed blocks.

\bibliographystyle{plain}
\sloppy

\end{document}